\def\eqref#1{equation~(\ref{#1})}
\def\Eqref#1{Equation~(\ref{#1})}
\def\1{\bm{1}}
\def\rd{{\textnormal{d}}}
\def\rp{{\textnormal{p}}}
\def\rq{{\textnormal{q}}}
\def\rr{{\textnormal{r}}}
\def\ry{{\textnormal{y}}}
\def\rvu{{\mathbf{i}}}
\def\rvu{{\mathbf{u}}}
\def\rvx{{\mathbf{x}}}
\def\rvy{{\mathbf{y}}}
\def\rvz{{\mathbf{z}}}
\def\rmA{{\mathbf{A}}}
\def\rmC{{\mathbf{C}}}
\def\rmD{{\mathbf{D}}}
\def\rmE{{\mathbf{E}}}
\def\rmF{{\mathbf{F}}}
\def\rmG{{\mathbf{G}}}
\def\rmI{{\mathbf{I}}}
\def\rmM{{\mathbf{M}}}
\def\rmS{{\mathbf{S}}}
\def\rmX{{\mathbf{X}}}
\def\rmZ{{\mathbf{Z}}}
\def\vzero{{\bm{0}}}
\def\vmu{{\bm{\mu}}}
\def\vtheta{{\bm{\theta}}}
\def\ve{{\bm{e}}}
\def\vx{{\bm{x}}}
\def\evtheta{{\theta}}
\def\mA{{\bm{A}}}
\def\mI{{\bm{I}}}
\def\mX{{\bm{X}}}
\DeclareMathAlphabet{\mathsfit}{\encodingdefault}{\sfdefault}{m}{sl}
\SetMathAlphabet{\mathsfit}{bold}{\encodingdefault}{\sfdefault}{bx}{n}
\def\gN{{\mathcal{N}}}
\def\gO{{\mathcal{O}}}
\def\gS{{\mathcal{S}}}
\def\gX{{\mathcal{X}}}
\def\gY{{\mathcal{Y}}}
\def\sP{{\mathbb{P}}}
\newcommand{\E}{\mathbb{E}}
\newcommand{\R}{\mathbb{R}}
\DeclareMathOperator*{\argmin}{arg\,min}
\DeclareMathOperator*{\argtop}{arg\,top}
\DeclareMathOperator{\sign}{sign}
\DeclareMathOperator{\Tr}{Tr}
\newcommand{\bzero}{\bm{0}}
\newcommand{\cov}{\bm{\Sigma}}
\newcommand{\rveps}{\bm{\varepsilon}}
\newcommand{\hry}{\hat{\ry}}
\newcommand{\hrvy}{\hat{\rvy}}
\newcommand{\hvtheta}{\hat{\vtheta}}
\newcommand{\hevtheta}{\hat{\evtheta}}
\newcommand{\tk}{\tilde{k}}
\newcommand{\try}{\tilde{\ry}}
\newcommand{\trvy}{\tilde{\rvy}}
\newcommand{\tvtheta}{\tilde{\vtheta}}
\newcommand{\tlambda}{\tilde{\lambda}}
\newcommand{\bgS}{\bar{\gS}}
\def\ubar#1{\underline{#1}}
\def\abar#1{\overline{#1}}
\newcommand{\crvy}{\check{\rvy}}
\newcommand{\svtheta}{\vtheta^\star}
\newcommand{\sevtheta}{\evtheta^\star}
\newcommand{\sk}{k^\star}
\newcommand{\sevthetas}{\evtheta^{\star 2}}
\newcommand{\svthetat}{\vtheta^{\star \top}}
\newcommand{\hvthetap}{\hvtheta_{\tn{post}}}
\newcommand{\deq}{\overset{\mathrm{d}}{=}}
\newcommand{\XX}{\rmX\rmX^\top}
\newcommand{\XXinv}{\pts{\XX}^{-1}}
\newcommand{\XXXty}{\rmX^\top\XXinv\trvy}
\newcommand{\XXXhy}{\rmX^\top\XXinv\hrvy}
\newcommand{\XXXcovXXX}{\XXinv \rmX \cov \rmX^\top \XXinv}
\newcommand{\ZcovZ}{\rmZ \cov \rmZ^\top}
\newcommand{\ZcovtZ}{\rmZ \cov^2 \rmZ^\top}
\newcommand{\yynorm}{\norm{\hrvy-\trvy}_2^2}
\newcommand{\var}{\|\cov^{\frac{1}{2}}\svtheta\|_2^2}
\newcommand{\varbar}{\|\cov^{\frac{1}{2}}\bar{\vtheta}\|_2^2}
\newcommand{\tserror}{\E_\rvx \pts{\rvx^\top\hvtheta-\rvx^\top\tvtheta}^2}
\newcommand{\rtserror}{\E_{\svtheta,\rvx} \pts{\rvx^\top\hvtheta-\rvx^\top\tvtheta}^2}
\newcommand{\SU}{\mathsf{SU}}
\newcommand{\CN}{\mathsf{CN}}
\newcommand{\eg}{\textit{e.g.,} }
\newcommand{\ie}{\textit{i.e.,} }
\newcommand{\iid}{\textit{i.i.d.}~}
\newcommand{\pts}[1]{\left(#1\right)}
\newcommand{\mts}[1]{\left[#1\right]}
\newcommand{\bts}[1]{\left\{#1\right\}}
\newcommand{\cond}[4]{\left\{ \begin{matrix}
      #1 &\; #2 \\
      #3 &\; #4
    \end{matrix}\right.\,}
\newcommand{\condd}[6]{\left\{ \begin{matrix}
      #1 &\; #2 \\
      #3 &\; #4 \\
      #5 &\; #6
    \end{matrix}\right.\,}
\newcommand{\condds}[6]{\left\{ \begin{matrix}
      #1 &\; #2 \\[10pt]
      #3 &\; #4 \\[10pt]
      #5 &\; #6
    \end{matrix}\right.\,}
\newcommand{\conddd}[8]{\left\{ \begin{matrix}
      #1 &\; #2 \\
      #3 &\; #4 \\
      #5 &\; #6 \\
      #7 &\; #8
    \end{matrix}\right.\,}
\newcommand{\sgn}[1]{\sign\left(#1\right)}
\newcommand{\tn}[1]{\textnormal{#1}}
\newcommand{\diag}[1]{\textnormal{diag}\left(#1\right)}
\newcommand{\norm}[1]{\left\|#1\right\|}
\newcommand{\risk}[1]{L(#1)}
\newcommand{\myquad}[1][1]{\hspace*{#1em}\ignorespaces}
\newtheorem{theorem}{Theorem}
\newtheorem{assumption}[theorem]{Assumption}
\newtheorem{corollary}[theorem]{Corollary}
\newtheorem{definition}[theorem]{Definition}
\newtheorem{lemma}[theorem]{Lemma}
\newcommand\footnoteref[1]{\protected@xdef\@thefnmark{\ref{#1}}\@footnotemark}
\begin{document}

\doparttoc %
\faketableofcontents %

\begin{center}
    {\bf{\LARGE{Task Shift: From Classification to Regression\\in Overparameterized Linear Models}}}
    
    \vspace*{.2in}
    
    \renewcommand{\thefootnote}{\fnsymbol{footnote}}
    {\large{
    \begin{tabular}{ccc}
    Tyler LaBonte$^{1}$\footnote{\label{foot:foot1}Equal contribution; co-first author.} & Kuo-Wei Lai$^2$\footnoteref{foot:foot1} & Vidya Muthukumar$^{2,1}$
    \end{tabular}}}
    
    \vspace*{.2in}
    
    \renewcommand{\thefootnote}{\arabic{footnote}}
    \setcounter{footnote}{0}
    \begin{tabular}{c}
        $^1$H. Milton Stewart School of Industrial and Systems Engineering, Georgia Institute of Technology \\
        $^2$School of Electrical and Computer Engineering, Georgia Institute of Technology
        \\ \texttt{\{tlabonte, klai36, vmuthukumar8\}@gatech.edu}
    \end{tabular}
\vspace*{.2in}
\end{center}

\begin{abstract}
Modern machine learning methods have recently demonstrated remarkable capability to generalize under \emph{task shift}, where latent knowledge is transferred to a different, often more difficult, task under a similar data distribution.
We investigate this phenomenon in an overparameterized linear regression setting where the task shifts from classification during training to regression during evaluation.
In the zero-shot case, wherein no regression data is available, we prove that task shift is impossible in both sparse signal and random signal models for any Gaussian covariate distribution.
In the few-shot case, wherein limited regression data is available, we propose a simple postprocessing algorithm which asymptotically recovers the ground-truth predictor.
Our analysis leverages a fine-grained characterization of individual parameters arising from minimum-norm interpolation which may be of independent interest.
Our results show that while minimum-norm interpolators for classification cannot transfer to regression \emph{a priori}, they experience surprisingly structured attenuation which enables successful task shift with limited additional data.
\end{abstract}

\section{Introduction}
The fields of modern statistics and machine learning aim to develop models which generalize to a plethora of application-specific tasks.
For example, tasks in computer vision range from classifying images into discrete categories to object detection~\citep{ren2015faster}, segmentation~\citep{ronneberger2015unet}, and pose estimation~\citep{cao2017realtime}, while tasks in language modeling could be as basic as next-token prediction~\citep{vaswani2017attention}, or involve summarization~\citep{liu2019text} or machine translation~\citep{bahdanau2015neural}.
In statistics, basic estimation tasks involve either classification or regression; in the latter we wish to predict real-valued quantities and performance is measured via a continuous error metric.
The traditional perspective on task shift establishes a clear hierarchy in difficulty, \eg a statistical estimator which achieves a certain error rate for a regression task will typically achieve an equal or better rate on the corresponding classification task\footnote{This is most directly used in applying logistic regression procedures to classification tasks, but also works for, \eg least-squares regression~\citep{kline2005revisiting,rifkin2003regularized}.}.
Similarly, in empirical machine learning, the most difficult task is considered to be representation learning.
Indeed, learned representations are commonly finetuned on simpler downstream tasks and observed to generalize in a \emph{zero-shot} or \emph{few-shot} sense, \ie when finetuning data is unavailable or limited, respectively~\citep{bengio2013representation}.

Perhaps more surprising are recent trends in modern machine learning which appear to go in the other direction: using models trained on an ``easier'' task to successfully solve a ``harder'' task.
Specifically, large language models (LLMs) have shown a remarkable ability to generalize \emph{in-context} --- without explicit finetuning --- to completing prompt-response pairs despite being trained only on the more basic next-token prediction task~\citep{brown2020language}.
From a statistical perspective, a particularly intriguing observation is that LLMs trained on next-token prediction can successfully solve linear regression tasks by computing the ordinary least-squares estimate in-context~\citep{zhang2024trained}.
With this high-level motivation, we propose the following statistical learning problem formulation:

\emph{Can estimators trained on a classification task generalize, in a zero-shot or few-shot sense, to the regression task on the same data distribution?}

\begin{figure}
    \centering
    \begin{subfigure}[b]{0.22\textwidth}
        \hspace{1.5mm}
        \includegraphics[width=0.9\linewidth]{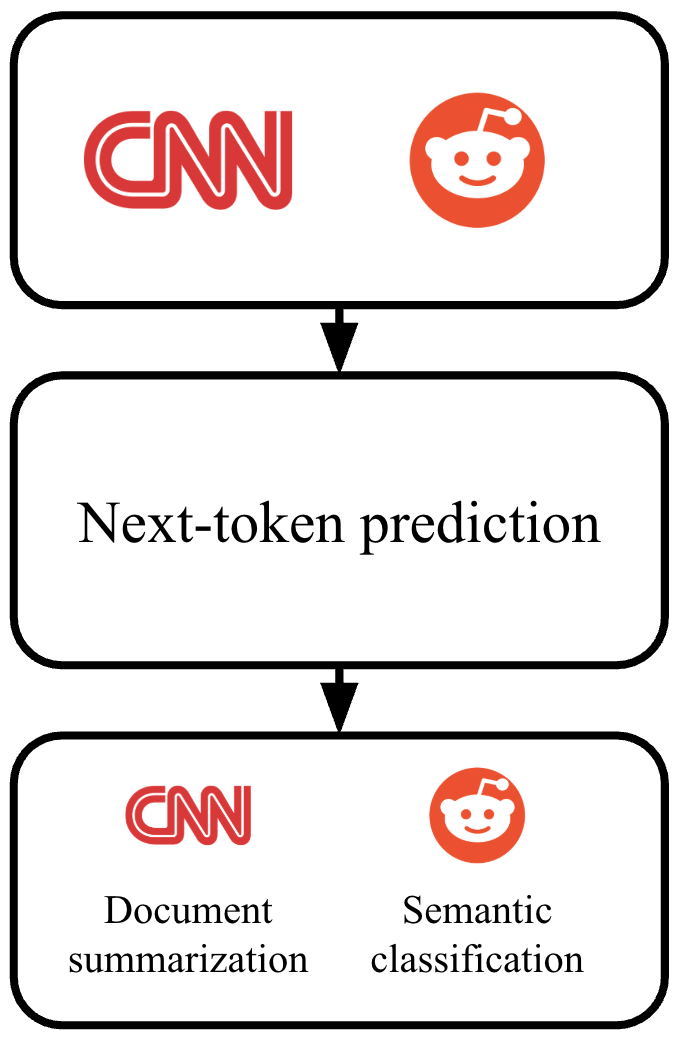}
        \subcaption{Language modeling}
    \end{subfigure}
    \begin{subfigure}{0.22\textwidth}
        \hspace{1.5mm}
        \includegraphics[width=0.9\linewidth]{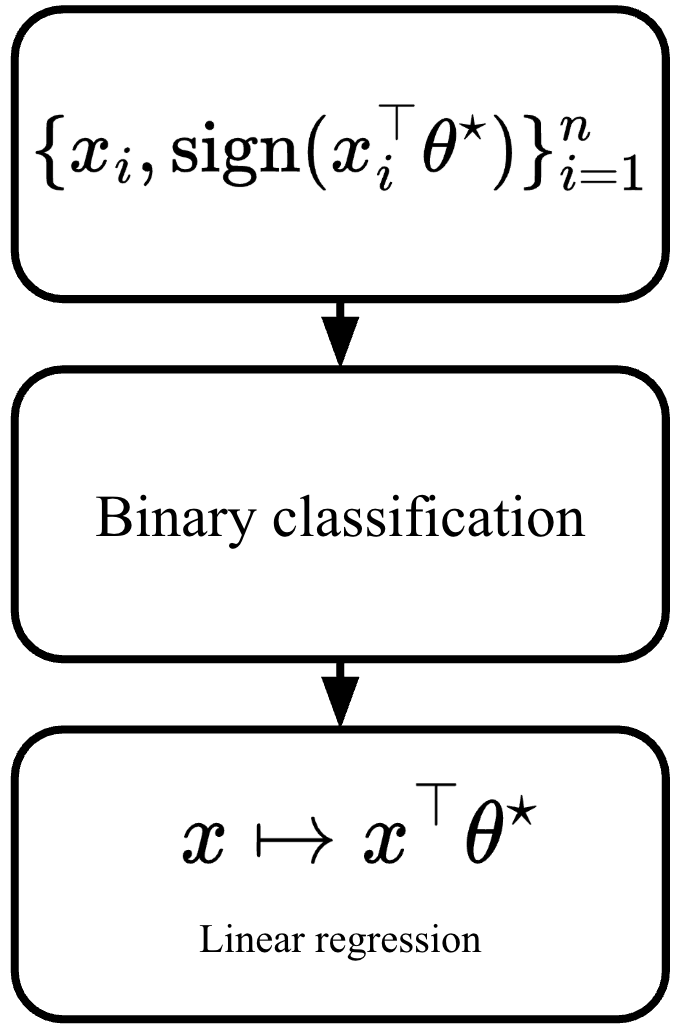}
        \subcaption{Statistical estimation}
    \end{subfigure}
    \caption{\textbf{Task shift in language modeling and statistical estimation.} In our task shift setting, latent knowledge is transferred between tasks under a similar conditional distribution or ground-truth signal. Task shift is compelling when the aim is to shift to a fundamentally harder task, with little to no data available from the new task.
    }
    \label{fig:taskshift}
\end{figure}

\paragraph{Our contributions.} We consider linear binary classification on data $\bts{\pts{\rvx_i, \hat{y}_i = \sgn{\rvx_i^\top \svtheta}}}_{i=1}^n$ and investigate whether an estimator trained on the classification task can generalize to the corresponding regression task, \ie predict the regression label $\rvx^\top \svtheta$ of a new datum $\rvx$.
We consider the \emph{overparameterized} regime wherein the dimensionality $d$ of the data greatly exceeds the number of training examples $n$, and we study the minimum $\ell_2$-norm interpolator (MNI) on the binary labels $\{\hat{y}_i\}_{i=1}^n$, which we denote by $\hvtheta$.
We define the \emph{task shift error} as the difference between the regression risk of the classification MNI $\hvtheta$ and the regression risk of the regression MNI, which we denote by $\tvtheta$.
We show the following results with high probability over the training data:
\begin{itemize}
    \item Classification data attenuates the signal $\svtheta$ even in the most favorable possible situation for either task in minimum $\ell_2$-norm interpolation, \ie maximally anisotropic data covariance, sublinearly sparse signal, and the existence of benign overfitting of noise. 
    Therefore, the classification MNI $\hvtheta$ \emph{does not successfully generalize in a zero-shot sense to regression data}, except when effective signal magnitude is equal to a specific, pre-defined constant.
    See Theorem \ref{thm:zero_shot_general_vacuous} for a formal statement of this result.
    \item We also produce an ``ansatz'' prediction of task shift error for more general signal models under a simplifying assumption on the regression labels (Theorem \ref{thm:benign}).
    Corollary~\ref{thm:benign_lower} then shows a fundamental \emph{tradeoff between regression bias and task shift error} --- they cannot be simultaneously statistically consistent.
    Moreover, we show in Theorem \ref{thm:benign_dense} that while a ``dense'' random signal is known to suffer from poor bias, it achieves vanishing task shift error if the covariance matrix has large effective rank compared to $n$, \ie its eigenvalues decay sufficiently slowly.
    \item Finally, we consider a $t$-sparse ground truth $\svtheta$ and propose a simple postprocessing algorithm utilizing few-shot regression data.
    We show the attenuation of the classification MNI is surprisingly structured, culminating in Theorem \ref{thm:support_identification} which proves the support of $\svtheta$ can be recovered simply by the $t$ largest elements (in absolute value) of $\hvtheta$.
    Our postprocessing algorithm ensures $\gO\pts{\frac{t}{m}}$ regression error with $m$ noisy regression examples or exact recovery from $t$ noiseless examples --- in other words, \emph{successful few-shot generalization from classification to regression}.
\end{itemize}

Our techniques build on the recent literature on benign overfitting of minimum $\ell_2$-norm interpolators in both regression~\citep{bartlett2020benign} and classification tasks~\citep{muthukumar2021classification,wang2023benign}.
We are especially inspired by the separation in statistical consistency derived in~\cite{muthukumar2021classification}, which showed that for certain anisotropic ensembles, classification may generalize while regression does not.
We substantially develop their tools to provide a fine-grained characterization of the individual magnitudes $\{|\hat{\theta}_j|\}_{j = 1}^d$.
Surprisingly, this characterization can enable the success of our few-shot algorithm even when the minimum $\ell_2$-norm interpolator would not generalize for either classification or regression.

\subsection{Related work}\label{sec:relatedwork}

The formulation of task shift in this paper, particularly our focus on the shift from classification to regression tasks, shares both similarities and differences with several prominent areas of research in machine learning. Compared to transfer learning~\citep{pan2009survey}, task shift similarly aims to generalize knowledge from one ``job'' to another. However, while transfer learning emphasizes preserving useful features for similar or downstream tasks, task shift focuses on generalizing from a simpler task (\eg classification) to a more complex one (\eg regression). Task shift is also related to the concept of distribution shift~\citep{moreno2012unifying}. In task shift, the source data distribution remains unchanged, but the conditional distributions of the labels differ at test time. Furthermore, our classification-to-regression setup is closely connected to the one-bit compressive sensing problem~\citep{boufounos2008one}. However, while one-bit compressive sensing focuses on optimal estimators or algorithms based on a known measurement process, our work emphasizes unique properties of the $\ell_2$-inductive bias.

The theoretical analysis in this work builds directly on the literature on benign overfitting. \cite{bartlett2020benign}~\cite{tsigler2023benign} characterized benign overfitting for regression estimators, while~\cite{muthukumar2021classification} provided a survival and contamination analysis for sparse signals in both classification and regression settings. Our analysis substantially develops insights from these works to estimate the support of a sparse signal even when neither classification nor regression generalizes.

We discuss additional related work in Appendix~\ref{app:expanded_related_work}.

\section{Preliminaries} \label{sec:preliminaries}
We now detail our classification-to-regression task shift setting and introduce our proposed estimators and assumptions on data covariance.
We also recap the definitions of \emph{effective dimension} quantities extensively used in analysis of interpolating estimators, \eg by~\cite{bartlett2020benign}.

\paragraph{Notation.}
We use uppercase bold symbols to denote matrices (\eg $\mX$), lowercase bold symbols to denote vectors (\eg $\vx$), and lowercase italicized symbols to denote scalars (\eg $x$).
We denote random variables using non-italicized symbols, \eg $\rmX$ for a random matrix and $\rvx$ for a random vector.
Let $\sP_\rvx$ and $\E_\rvx$ denote a probability and expectation with respect to a random vector $\rvx$, respectively.

Let $\deq$ denote equality in distribution.
Let $\gS^\mathsf{c}$ denote the complement of a set $\gS$.
Let $\rvx\sim\gN(\vmu, \cov)$ denote that $\rvx$ is sampled from a multivariate Gaussian distribution with mean $\vmu$ and covariance matrix $\cov$.
Let $\diag{\alpha_1,\dots,\alpha_n}$ denote an $n\times n$ matrix with $\alpha_1,\dots,\alpha_n$ on the diagonal and zeroes elsewhere.
Let $\mu_1(\mA),\mu_2(\mA),\dots$ denote the eigenvalues of a matrix $\mA$ in non-increasing order.
Let $\norm{\mA}\coloneqq \mu_1(\mA)$ and $\Tr(\mA)\coloneqq \sum_i \mu_i(\mA)$ denote the operator norm and trace of $\mA$.
For clarity, we use the shorthand $[t]\coloneqq \{1,2,\dots,t\}$ for $t\in\mathbb{N}$.
Finally, we use $C,c, c_1,\ldots$ to denote constants not depending on $n,d$ which can change from line to line.
\subsection{Minimum-norm interpolating estimators}
We consider a noiseless linear regression problem where data undergoes centered Gaussian featurization\footnote{We expect many of our results to generalize to the case of independent, sub-Gaussian features by building on~\cite{cao2021risk,tsigler2023benign}.} in $d$ dimensions such that $\rvx \sim \gN(\bzero, \cov)$.
We denote the ground-truth regressor, also called \emph{signal}, by $\svtheta\in \R^d$, so the regression and classification label for a datum $\rvx$ are $\try \coloneqq \rvx^\top\svtheta \in \R$ and $\hry \coloneqq \sgn{\rvx^\top\svtheta} \in \bts{\pm 1}$, respectively.

We assume noiseless labels in the classification dataset for ease of exposition,  but our zero-shot lower bounds and few-shot upper bounds can be easily extended to handle classification label noise\footnote{For example, if $\sP\big(\hat{y} =  \sgn{\rvx^\top \svtheta}\mid \rvx \big)= 1 - \nu^\star$, we require $\nu^\star < 0.5$, \ie the sign is preserved on average.}.

We assume access to a classification dataset $\bts{(\rvx_i, \hry_i)}_{i=1}^n$ where $d \gg n$ and $\lim_{n \to \infty} \frac{d}{n} = \infty$, \ie the data is heavily overparameterized.
We denote the data matrix by $\rmX \coloneqq [\rvx_1, \dots, \rvx_n]^\top \in \R^{n \times d}$ and the regression and classification label vectors by $\trvy \coloneqq [\try_1,\dots,\try_n]^\top \in \R^n$ and $\hrvy \coloneqq [\hry_1,\dots,\hry_n]^\top \in \bts{\pm 1}^n$.
We study the \emph{minimum-norm interpolator} (MNI), which is obtained directly via the implicit bias of gradient descent on the squared loss and also enjoys close links to the implicit bias of popular classification losses~\citep{hsu2022proliferation}.
The MNIs on regression and classification labels are defined by $\tvtheta \coloneqq \argmin \bts{\norm{\vtheta}_2:\rmX\vtheta=\trvy}$ and $\hvtheta \coloneqq \argmin \bts{\norm{\vtheta}_2:\sgn{\rmX\vtheta}=\hrvy}$, respectively.
Since $\rmX$ is full rank almost surely under Gaussian design, both estimators have simple closed forms: $\tvtheta = \XXXty$ and $\hvtheta = \XXXhy$.

The excess risk of any linear estimator $\vtheta\in \R^d$ is given by $\risk{\vtheta} \coloneqq \E_{\rvx} \pts{\rvx^\top\vtheta - \rvx^\top\svtheta}^2.$
The central goal of this paper is to bound the excess \emph{regression} risk of the MNI \emph{classifier} $\risk{\hvtheta}$ --- and, later, postprocessed variants of this classifier.
We say $\hvtheta$ \emph{achieves task shift} if it is regression-consistent, \ie$\lim_{n,d\to\infty}\risk{\hvtheta}=0$.
\subsection{Effective rank and covariance structure}
As shown in~\cite{bartlett2020benign, muthukumar2021classification, tsigler2023benign}, the performance of the regression MNI on regression labels or classification MNI on classification labels can be characterized by two notions of \emph{effective rank} of the spectrum of the data covariance matrix $\cov$. We let $\lambda_1,\dots,\lambda_d$ denote the eigenvalues of $\cov$ in non-increasing order (\ie $\lambda_j\coloneqq \mu_j(\cov)$ for all $j \in [d]$).
\begin{definition}[Effective rank]\label{def:effective_rank}
    For an index $k\geq 0$, two notions of the \emph{effective rank} of $\cov$ are
    \begin{equation*}
    r_k(\cov) \coloneqq \frac{\sum_{j=k+1}^d \lambda_j}{\lambda_{k+1}}
     \quad \tn{and}\quad 
    R_k(\cov) \coloneqq \frac{\pts{\sum_{j=k+1}^d \lambda_j}^2}{\sum_{j=k+1}^d \lambda_j^2}.
    \end{equation*}
\end{definition}
These are essentially two notions of effective dimension of a ``tail'' component of the covariance matrix restricted to eigenvalues $\lambda_{k+1},\ldots,\lambda_d$.
For a constant $b>1$, denote $\sk \coloneqq \min \{k\geq 0: r_k(\cov) \geq bn\}$
where the minimum of the empty set is defined as $\infty$.
In other words, $\sk$ is the minimal index after which the spectrum has large (first) effective rank compared to $n$.
We will make the following assumptions on $\cov$.
\begin{assumption}\label{asm:rank}
    We assume that $\cov$ is diagonal and positive definite for any $d, n < \infty$. We also assume that $\sk \leq \frac{n}{c}$ for some universal constant $c > 0$.
\end{assumption}
The diagonal assumption is without loss of generality for our zero-shot lower bounds but required for our few-shot upper bounds.
The assumption on effective rank essentially requires a ``long tail'' on the covariance and ensures that the data is actually high dimensional in nature.

We situate our results in two covariance ensembles considered in~\cite{bartlett2020benign, muthukumar2021classification} which enable us to state precise rates.
\begin{definition}[Spiked covariance matrix]\label{def:spiked}
    A \emph{spiked} covariance matrix $\cov$ is parameterized by $p \in (1,\infty)$, $r \in [0,1)$, and $q \in (0, p-r)$.
    We set the data dimension to $d = n^p$, the length of the ``spike'' to $s = n^r$, and $a = n^{-q}$ to be a parameter controlling the ratio of the eigenvalues. 
    Then, the ensemble is defined by
    \begin{equation*}
        \lambda_j \coloneqq
        \begin{cases}
            \frac{ad}{s} & j\in[s] \\
            \frac{(1-a)d}{d-s} & \tn{otherwise.}
        \end{cases}
    \end{equation*}
    This ensemble satisfies Assumption~\ref{asm:rank} with $\sk = s$ when $q<1-r$ and $\sk=0$ when $q>1-r$.
\end{definition}

\begin{definition}[Polynomial decay covariance matrix]\label{def:polynomial}
    A \emph{polynomial decay} covariance matrix $\cov$ is parameterized by $p\in(1, \infty)$ and $u, v\geq 0$ such that
    \begin{equation*}
        \lambda_j \coloneqq j^{-u} \ln^{-v}(j+1)
    \end{equation*}
    and $d=n^p$. We consider two versions of the covariance. We set $u=1$, $v=2$ to study an instance of the regime wherein benign overfitting is achieved, and we set $u\in[0,1)$, $v=0$ to study a case wherein it is not achieved~\citep{bartlett2020benign}.
    These parameterizations satisfy Assumption~\ref{asm:rank}, but unlike the spiked covariance matrix, characterizing $\sk$ is rather delicate.
    In particular, if $\sk$ is nonzero we may only be able to characterize its order (see Appendix~\ref{app:poly_cov}).
\end{definition}

Formally, we take limits over a sequence of covariance ensembles $\{\cov_n\}_{n=1}^\infty$ (\eg Theorem~\ref{thm:zero_shot_general_vacuous}, Corollaries~\ref{cor:diff_cov} and \ref{thm:benign_lower}), but we drop the subscript $n$ for clarity.

\section{Zero-shot task shift} \label{sec:zeroshot}

In this section, we study task shift performance in the zero-shot setting, wherein no regression data is available.
We show that task shift is impossible under sparse and random signal models with maximally favorable data covariance.
While our negative results are perhaps expected --- since the magnitude of the regression labels is irrevocably lost in the classification task --- our analysis leads to some unexpected conclusions.
Specifically, in Section \ref{sec:zeroshot_sparse}, we show that the attenuation of the classification MNI is surprisingly structured (which enables recovery of the true signal up to a magnitude factor), and in Section \ref{sec:zeroshot_dense}, we show that the nature of the failure of zero-shot task shift is closely linked to the index $\sk$.
\subsection{The case of sparse signal} \label{sec:zeroshot_sparse}
We first consider a signal with \emph{sublinear sparsity}.
\begin{assumption}[$t$-sparse signal model]\label{asm:k_sparse}
    Denote the support of the signal $\svtheta$ by $\gS\coloneqq\{j\in[d]\mid\sevtheta_j \neq 0\}$.
    We assume $\svtheta$ is t\emph{-sparse}, that is $|\gS| = t \ll n^{\frac{1}{2}-\epsilon} \ll d$, where $\epsilon\in(0, \frac{1}{4})$. We write $\sevtheta_j\coloneqq\frac{a_j}{\sqrt{\lambda_j}}$ for $j \in \gS$ with $a_j \neq 0$.
    Moreover, we assume the total signal strength $\var=\sum_{j\in\gS} a_j^2$ is constant for all $n$.\footnote{The assumption on total signal strength is only necessary for limits over over $\{\cov_n\}_{n=1}^\infty$, \ie it is not required for our few-shot results in Section~\ref{sec:fewshot}.}
\end{assumption}
We note that the sparsity assumption is justified since in its absence, regression generalization is information-theoretically impossible in the overparameterized regime~\citep{wainwright2009information,aeron2010information}.
For the regression MNI,~\cite{tsigler2023benign} showed one must have approximate sparsity in the direction of the top eigenvalues to ensure low bias, \ie sparsity is required for statistical consistency.

Rather than the standard bias-variance decomposition, we directly investigate the relative preservation of the true signal (\emph{survival}) and the pollution of false signal (\emph{contamination}), which were shown in~\cite{muthukumar2021classification} to tightly characterize regression and classification tasks.
We define these quantities formally below, using shorthand notation $\rmA\coloneqq\XX = \sum_{j=1}^d \lambda_j \rvz_j\rvz_j^\top$ where $\rvz_j \coloneqq \rmX\ve_j/\sqrt{\lambda_j}$ and $\rvz_j\sim\gN(\bzero, \rmI_n)$ is an independent isotropic Gaussian vector for $j\in[d]$.
\begin{definition}[Survival and contamination]\label{def:su_cn}
    Under Assumptions~\ref{asm:rank} and~\ref{asm:k_sparse}, we define the \emph{survival} $\SU_j$ for all $j \in \gS$ and \emph{contamination} $\CN$ by
    \begin{align*}
        \SU_j &\coloneqq \frac{\hevtheta_j}{\sevtheta_j} = \frac{\ve_j^\top \XXXhy}{\frac{a_j}{\sqrt{\lambda_j}}} = \frac{\lambda_j}{a_j}\rvz_j^\top\rmA^{-1}\hrvy,\\
        \CN &\coloneqq \sqrt{\sum_{j\in\gS^\mathsf{c}}\lambda_j\hevtheta_j^2}.
    \end{align*}
\end{definition}
Intuitively, we desire survival to be close to one and contamination to be small.
In the remainder of this section, we substantially generalize the analysis in~\cite{muthukumar2021classification,wang2023benign}, which assumed $t=1$ and spiked covariance, to $t$-sparse and general covariance models.
Using Assumption~\ref{asm:k_sparse} and Gaussian design, we may write
\begin{align*}
    \trvy &\coloneqq \rmX\svtheta = \sum_{j\in\gS}\sqrt{\lambda_j}\sevtheta_j\rvz_j = \sum_{j\in\gS} a_j\rvz_j, \\
    \hrvy &\coloneqq \sgn{\rmX\svtheta} = \text{sign}\Big(\sum_{j\in\gS} a_j\rvz_j\Big).
\end{align*}
This expression enables us to perform careful leave-$t$-out analyses and decouple the survival and contamination terms in the following key lemma.
\begin{lemma}\label{lem:k_sparse_SU_CN_bounds}%
Define $\bgS \coloneqq \gS \cup [\sk]$ and denote by $\{\tlambda_j\}_{j=1}^{d - |\bgS|}$ the diagonal entries of the matrix $\cov_{-\bgS}$, \ie $\cov$ with rows and columns indexed by $\bgS$ left out, and $\rmA_{-\bgS} = \sum_{j=1}^{d- |\bgS|} \tlambda_j \rvz_j\rvz_j^\top$.
Under Assumptions~\ref{asm:rank} and~\ref{asm:k_sparse} and for large enough $n$, we have for all $j \in \gS$,
     \begin{align*}
         \SU_j &\leq \sqrt{\frac{2}{\pi\var}} \pts{\frac{ \pts{1 + \frac{c_1t}{n^{\frac{1}{2}-\epsilon}}}\lambda_{j}\Tr\pts{\rmA_{-\bgS}^{-1}}}{1 + \pts{1 - \frac{c_2}{n^{\frac{1}{2}-\epsilon}}}\lambda_{j}\Tr\pts{\rmA_{-\bgS}^{-1}}}} \leq \sqrt{\frac{2}{\pi\var}} \cdot \frac{\lambda_j\pts{\frac{c_3n}{\tlambda_{1}r_{0}(\cov_{-\bgS})}}}{1+\lambda_j\pts{\frac{n}{c_4\tlambda_{1}r_{0}(\cov_{-\bgS})}}}, \\
         \SU_j &\geq \sqrt{\frac{2}{\pi\var}} \pts{\frac{ \pts{1 - \frac{c_5t}{n^{\frac{1}{2}-\epsilon}}}\lambda_{j}\Tr\pts{\rmA_{-\bgS}^{-1}}}{1 + \pts{1 + \frac{c_6}{n^{\frac{1}{2}-\epsilon}}}\lambda_{j}\Tr\pts{\rmA_{-\bgS}^{-1}}}} \geq \sqrt{\frac{2}{\pi\var}} \cdot \frac{\lambda_j\pts{\frac{n}{c_7\tlambda_{1}r_{0}(\cov_{-\bgS})}}}{1+\lambda_j\pts{\frac{c_8n}{\tlambda_{1}r_{0}(\cov_{-\bgS})}}},
     \end{align*}
    \color{black}
    with probability at least $1-cte^{-n^{2\epsilon}}$. When $n,d\to\infty$, the limit converges as
    \begin{align*}
        \lim_{n,d\to\infty} \SU_j = \sqrt{\frac{2}{\pi\var}}\cdot \frac{\lambda_j\Tr\pts{\rmA_{-\bgS}^{-1}}}{1 + \lambda_j\Tr\pts{\rmA_{-\bgS}^{-1}}}
    \end{align*}
    almost surely.
    Moreover, we have
    \begin{align*}
        \CN &\leq c t \sqrt{\pts{\frac{\sk}{n} + \frac{n}{R_0\pts{\cov_{-\bgS}}}}\ln\pts{n}}
    \end{align*}
    with probability at least $1-\frac{ct}{n}$. 
\end{lemma}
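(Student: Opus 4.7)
The plan is to exploit the observation that $\hrvy = \sgn{\sum_{k \in \gS} a_k \rvz_k}$ depends only on $\{\rvz_k\}_{k \in \gS}$, and that $\gS$ sits inside the ``extended support'' $\bgS = \gS \cup [\sk]$. By the Sherman--Morrison--Woodbury (SMW) identity applied with the rank-$|\bgS|$ update $\sum_{k \in \bgS} \lambda_k \rvz_k \rvz_k^\top$, I can expand $\rmA^{-1}$ in terms of $\rmA_{-\bgS}^{-1}$ plus a low-rank correction, so that the resulting ``denominator'' matrix $\rmA_{-\bgS}$ is independent of both $\hrvy$ and $\{\rvz_k\}_{k \in \bgS}$. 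This isolation is the key enabler for standard Gaussian concentration tools.

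\emph{Survival.} For $j \in \gS$, iterated Sherman--Morrison yields, modulo a rank-$|\bgS|$ correction,
\[
    \rvz_j^\top \rmA^{-1} \hrvy \;=\; \frac{\rvz_j^\top \rmA_{-\bgS}^{-1} \hrvy}{1 + \lambda_j \rvz_j^\top \rmA_{-\bgS}^{-1} \rvz_j} + \text{(error)}.
\]
I would then use the Gaussian decomposition $\rvz_j = (a_j/\var)\rvs + \rvw_j$, where $\rvs \coloneqq \sum_{k \in \gS} a_k \rvz_k$ and $\rvw_j$ is uncorrelated with (hence independent of) $\rvs$, combined with Stein's identity $\E[s_i \sgn{s_i}] = \sqrt{2\var/\pi}$, to obtain the conditional mean
\[
    \E\!\left[\rvz_j^\top \rmA_{-\bgS}^{-1} \hrvy \;\middle|\; \rmA_{-\bgS}\right] = a_j\sqrt{\tfrac{2}{\pi\var}}\,\Tr\pts{\rmA_{-\bgS}^{-1}}.
\]
The denominator quadratic form $\rvz_j^\top \rmA_{-\bgS}^{-1} \rvz_j$ concentrates around $\Tr(\rmA_{-\bgS}^{-1})$ by Hanson--Wright, and the numerator concentrates around its mean by Gaussian concentration of $\rvw_j^\top \rmA_{-\bgS}^{-1} \hrvy$ (conditional on $\hrvy$ and $\rmA_{-\bgS}$) together with Gaussian--Lipschitz concentration of the nonlinear term $\rvs^\top \rmA_{-\bgS}^{-1} \sgn{\rvs}$. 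The rank-$|\bgS|$ SMW correction is controlled by noting that the cross-terms $\rvz_k^\top \rmA_{-\bgS}^{-1} \rvz_l$ for distinct $k,l\in\bgS$ concentrate to zero; the sparsity $t \ll n^{1/2-\epsilon}$ then ensures the aggregate correction is at most $O(t/n^{1/2-\epsilon})$. Finally, the simplification $\Tr(\rmA_{-\bgS}^{-1}) = \Theta(n/(\tlambda_{1} r_0(\cov_{-\bgS})))$ follows from the random matrix estimates of~\cite{bartlett2020benign} applied to the tail covariance.

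\emph{Contamination.} Write $\CN^2 = \sum_{j \in \gS^c} \lambda_j^2 (\rvz_j^\top \rmA^{-1} \hrvy)^2$ and split $\gS^c = ([\sk]\setminus\gS) \cup \bgS^c$. For a ``tail'' index $j \in \bgS^c$, $\rvz_j$ is independent of both $\hrvy$ and $\rmA_{-j}$. A per-index leave-one-out identity and Hanson--Wright (conditioning on $\hrvy,\rmA_{-j}$) yield $(\rvz_j^\top \rmA^{-1} \hrvy)^2 \leq C \|\rmA^{-1}\|^2 \|\hrvy\|_2^2 \ln(n)/n$ up to lower-order terms, and summing with $\|\rmA^{-1}\| \leq C/(\tlambda_1 r_0(\cov_{-\bgS}))$ (again from~\cite{bartlett2020benign}) yields a contribution of order $n\ln(n)/R_0(\cov_{-\bgS})$ to $\CN^2$. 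For the at most $\sk$ ``spike'' indices in $[\sk]\setminus\gS$, the large eigenvalues $\lambda_j$ are offset by a separate per-index bound that exploits $|\hat{y}_i|=1$, giving an $O(\sk\ln(n)/n)$ contribution. Combining both pieces and taking square roots yields the stated bound, with the factor of $t$ entering through union bounds over the sparse support.

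\emph{Main obstacle.} The principal technical challenge is uniformly controlling the rank-$|\bgS|$ SMW correction in the survival computation. Concretely, one must show that the $|\bgS|\times|\bgS|$ matrix $\rmD_\bgS^{-1} + \rmU_\bgS^\top \rmA_{-\bgS}^{-1} \rmU_\bgS$ (with $\rmU_\bgS = [\rvz_k]_{k\in\bgS}$ and $\rmD_\bgS = \diag{\lambda_k:k\in\bgS}$) is well-approximated by $\rmD_\bgS^{-1} + \Tr(\rmA_{-\bgS}^{-1})\rmI$, which requires simultaneous Hanson--Wright control of $\Theta(|\bgS|^2)$ quadratic forms. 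Since $|\bgS| \leq t + \sk$ may grow with $n$, the sparsity assumption $t \ll n^{1/2-\epsilon}$ together with Assumption~\ref{asm:rank} on $\sk$ must be invoked delicately to produce the claimed $O(t/n^{1/2-\epsilon})$ slack rather than a polynomially worse rate. Prior analyses~\citep{muthukumar2021classification,wang2023benign} handle only $t=1$ with a spiked covariance, so extending the concentration argument to general sparse signals under anisotropic ensembles is the principal new technical ingredient.
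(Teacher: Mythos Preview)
Your overall architecture is sound, but there are two concrete gaps where the paper's route diverges from yours in an essential way.

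\textbf{Survival: block-SMW over $\bgS$ is the wrong decoupling.} You propose to leave out all of $\bgS = \gS \cup [\sk]$ at once via a rank-$|\bgS|$ Woodbury update, and you correctly flag the resulting $|\bgS|\times|\bgS|$ matrix $\rmD_\bgS^{-1} + \rmU_\bgS^\top \rmA_{-\bgS}^{-1}\rmU_\bgS$ as the main obstacle. The problem is that Assumption~\ref{asm:rank} only guarantees $\sk \leq n/c$, so $|\bgS|$ can be $\Theta(n)$; showing this matrix is well-approximated by its diagonal then requires controlling $\Theta(n^2)$ cross-terms simultaneously, and the $O(t/n^{1/2-\epsilon})$ slack you want does not obviously follow. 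The paper sidesteps this entirely: it leaves out \emph{only} the $t$ support indices $\gS$ (iteratively, via $t$ rank-one Sherman--Morrison steps, giving the quantities $Q_{s_\ell} = \rvz_{s_1}^\top \rmA_{-s_1:s_\ell}^{-1}\hrvy$), which already suffices to make $\rmA_{-\gS}$ independent of $\hrvy$. The role of $[\sk]$ is completely different --- it is never ``left out'' via Woodbury but instead enters only through the operator-monotonicity sandwich $\rmA_{-\gS}^{-1} \preceq \rmA_{-\bgS}^{-1}$ and a trace comparison $\Tr(\rmA_{-\gS}^{-1}) \geq (1-c/n)^{\sk}\Tr(\rmA_{-\bgS}^{-1})$ (their Lemma~\ref{lem:trace_lower_bound}), exploiting that $\rmA_{-\bgS}$ has all eigenvalues within constants (Lemma~\ref{lem:eig_constant}). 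So the two purposes you conflate --- independence from $\hrvy$ (needs only $\gS$) and eigenvalue regularity (needs $[\sk]$) --- are handled separately, and only $t$ Woodbury iterations are ever performed.

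\textbf{Contamination: per-index Hanson--Wright costs a factor-$d$ union bound.} Your plan applies Hanson--Wright to each $(\rvz_j^\top \rmA^{-1}\hrvy)^2$ for $j \in \bgS^c$ and then sums. That requires a union bound over $d - |\bgS| = \Theta(d)$ indices, which cannot deliver the claimed $1 - ct/n$ probability. The paper instead writes $\CN^2$ as a \emph{single} quadratic form: after $t$ Sherman--Morrison steps one has $\hevtheta_j = \sqrt{\lambda_j}\rvz_j^\top \rmA_{-\gS}^{-1}\crvy_{s_t}$ for all $j \in \gS^c$ simultaneously, whence $\CN^2 = \crvy_{s_t}^\top \tilde{\rmC}\,\crvy_{s_t}$ with $\tilde{\rmC} = \rmA_{-\gS}^{-1}\bigl(\sum_{j \in \gS^c}\lambda_j^2 \rvz_j\rvz_j^\top\bigr)\rmA_{-\gS}^{-1}$. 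A $t$-fold triangle inequality on $\crvy_{s_t}$ (giving the factor $t$) plus Hanson--Wright applied $t+1$ times, together with a trace bound $\Tr(\tilde{\rmC}) \leq c(\sk/n + n/R_0(\cov_{-\bgS}))$ via \cite{bartlett2020benign}, yields the result with only $O(t)$ concentration events.
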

Lemma~\ref{lem:k_sparse_SU_CN_bounds} is proved in Appendix~\ref{app:su_cn_bound_proof}.
In addition to being useful for the results of this section, Lemma~\ref{lem:k_sparse_SU_CN_bounds} is also used for our postprocessing algorithm which has access to few-shot data in Section~\ref{sec:fewshot}.
Equipped with bounds of survival and contamination, we can now relate the classification MNI excess risk with $\{\SU_j\}_{j \in \gS}$ and $\CN$. Under Assumptions~\ref{asm:rank} and~\ref{asm:k_sparse}, we can write the excess risk of the classification MNI as
\begin{align}
    \risk{\hvtheta} &= \E_{\rvx}\mts{\pts{\rvx^\top\hvtheta - \rvx^\top\svtheta}^2}\nonumber \\
    &=\sum_{j=1}^d\lambda_j\pts{\hevtheta_j - \sevtheta_j}^2\nonumber\\
    &=\sum_{j\in\gS}\lambda_j\sevthetas_j\pts{\frac{\hevtheta_j}{\sevtheta_j} - 1}^2 + \sum_{j\in \gS^\mathsf{c}} \lambda_j\hevtheta_j^2\nonumber\\
    &= \sum_{j\in\gS} a_j^2\pts{\SU_j - 1}^2 + \CN^2\label{eq:risk_su_cn},
\end{align}
where we substitute the expressions for $\sevtheta_j$ from Assumption~\ref{asm:k_sparse} and $\{\SU_j\}_{j \in \gS}$ and $\CN$ from Definition~\ref{def:su_cn}. 
Note that this decomposition is valid for any estimator of $\svtheta$ (not just the classification MNI $\hvtheta$).
Substituting the bounds on survival and contamination from Lemma~\ref{lem:k_sparse_SU_CN_bounds} yields our main result of this section.
\begin{theorem}\label{thm:zero_shot_general_vacuous}
    Define $b_j \coloneqq \underset{n,d\to\infty}{\lim}\lambda_j\Tr\pts{\rmA_{-\bgS}^{-1}}$.
    Under Assumptions~\ref{asm:rank} and~\ref{asm:k_sparse}, for any covariance $\cov$ satisfying $\underset{n,d\to\infty}{\lim}\frac{t^2\cdot \sk \cdot \ln\pts{n}}{n}=\underset{n,d\to\infty}{\lim}\frac{t^2\cdot n \cdot \ln\pts{n}}{R_0\pts{\cov_{-\bgS}}}=0$, we have
    \begin{align*}
        &\lim_{n,d\to\infty}\risk{\hvtheta} = \sum_{j\in\gS}a_j^2\pts{\sqrt{\frac{2}{\pi \var}}\frac{b_j}{1+b_j} - 1}^2
    \end{align*}
    almost surely.
\end{theorem}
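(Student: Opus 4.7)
The plan is to feed the asymptotic survival/contamination estimates of Lemma~\ref{lem:k_sparse_SU_CN_bounds} into the exact decomposition already derived in~(\ref{eq:risk_su_cn}),
\[
\risk{\hvtheta} = \sum_{j\in\gS} a_j^2 \pts{\SU_j - 1}^2 + \CN^2,
\]
and take limits term-by-term. Only two quantities need to be controlled: the contamination $\CN^2$ must vanish, and each survival $\SU_j$ must converge to the target $\sqrt{2/(\pi\var)} \cdot b_j/(1+b_j)$.

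For the contamination, Lemma~\ref{lem:k_sparse_SU_CN_bounds} gives $\CN^2 \leq c t^2 \pts{\tfrac{\sk}{n} + \tfrac{n}{R_0(\cov_{-\bgS})}} \ln n$ on an event of probability at least $1 - ct/n$, and the two scaling hypotheses in the theorem statement are exactly what is needed to send this upper bound to zero. For the survival, the lemma already provides $\SU_j \to \sqrt{2/(\pi\var)} \cdot \lambda_j \Tr(\rmA_{-\bgS}^{-1}) / (1 + \lambda_j \Tr(\rmA_{-\bgS}^{-1}))$ almost surely for each $j \in \gS$. Substituting $b_j \coloneqq \lim_{n,d \to \infty} \lambda_j \Tr(\rmA_{-\bgS}^{-1})$ and using continuity of $x \mapsto x/(1+x)$ (with the convention $\infty/(1+\infty) = 1$ if some $b_j$ diverges) yields $\SU_j \to \sqrt{2/(\pi\var)} \cdot b_j/(1+b_j)$. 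A union bound over the $t$ events preserves the joint almost-sure statement because the lemma's failure probability $cte^{-n^{2\epsilon}}$ inflates to only $ct^2 e^{-n^{2\epsilon}}$, which remains Borel--Cantelli summable under $t \ll n^{1/2-\epsilon}$ (Assumption~\ref{asm:k_sparse}). Finally, the finite sum in~(\ref{eq:risk_su_cn}) is continuous in the $\SU_j$'s and the total signal strength $\var = \sum_{j \in \gS} a_j^2$ is bounded, so the almost-sure limits combine to produce the claimed identity.

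\textbf{Main obstacle.} The delicate step is upgrading the contamination bound from ``with probability $1-ct/n$'' to almost-sure convergence: the $1/n$ failure rate is not Borel--Cantelli summable on its own. I expect this to be handled by sharpening the tail inside the proof of Lemma~\ref{lem:k_sparse_SU_CN_bounds} (the underlying matrix concentration inequalities admit a tunable confidence parameter, so a polynomial-in-$n$ or exponential failure rate is available with only constant modifications to the bound), or alternatively by taking the limit along a sparse deterministic subsequence $n_k$ along which Borel--Cantelli does apply and interpolating using monotonicity of the bound in the confidence level. The survival side is essentially routine because its failure probability is already exponentially small in $n^{2\epsilon}$, and the algebraic manipulation mapping the lemma's expression into $b_j/(1+b_j)$ is immediate from the definition of $b_j$.
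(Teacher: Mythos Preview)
Your proposal is correct and follows essentially the same route as the paper: both feed the survival and contamination estimates of Lemma~\ref{lem:k_sparse_SU_CN_bounds} into the exact decomposition~\eqref{eq:risk_su_cn} and take limits term-by-term, with the theorem's two scaling hypotheses killing the contamination upper bound and the lemma's almost-sure survival limit supplying each $\SU_j \to \sqrt{2/(\pi\var)}\,b_j/(1+b_j)$. You are in fact more careful than the paper on one point: the paper's proof simply writes $\lim_{n,d\to\infty}\CN^2 = 0$ without commenting on the $1-ct/n$ probability, whereas you correctly flag that this tail is not Borel--Cantelli summable and sketch the standard remedies (tightening the Hanson--Wright confidence parameter inside the lemma's proof, or a subsequence argument).
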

Theorem~\ref{thm:zero_shot_general_vacuous} is proved in Appendix~\ref{app:proof_zero_shot_general_vacuous}. It shows that, even for data covariances that satisfy benign overfitting of noise in linear regression~\citep{bartlett2020benign}, perfect survival of signal is required for consistent task shift.
Notably, it is not possible for the classification MNI $\hvtheta$ to satisfy regression consistency for all possible magnitudes of the ground truth, \ie for all possible values of $\var$. 
This is because the coefficients $\{b_j\}_{j\in\gS}$ in Theorem~\ref{thm:zero_shot_general_vacuous} clearly do not depend on $\svtheta$.
We can, however, ask the more specialized question (posed in one-bit compressive sensing, \eg\cite{plan2012robust}) of whether it is possible to generalize on all signals of a specific magnitude.
Theorem~\ref{thm:zero_shot_general_vacuous} shows this will be the case if and only if $b_j \to \infty$ for all $j \in \gS$ and $\var = \frac{2}{\pi}$: a positive result in the flavor of one-bit compressive sensing.

As a corollary of Theorem~\ref{thm:zero_shot_general_vacuous}, we present characterizations of zero-shot task shift for the spiked covariance model (Definition~\ref{def:spiked}).
We present similar results for the more delicate polynomial decay covariance model (Definition~\ref{def:polynomial}) in Appendix~\ref{app:poly_cov}~(Corollary~\ref{cor:poly_cov}).
\begin{corollary}\label{cor:diff_cov} %
    Suppose Assumptions~\ref{asm:rank} and~\ref{asm:k_sparse} are satisfied with $t \ll \min\Big\{\sqrt{\frac{n}{\sk\cdot\ln\pts{n}}}, \sqrt{\frac{n^{p-1}}{\ln\pts{n}}}\Big\}$.
    Under the spiked covariance model (Definition~\ref{def:spiked}),
    \begin{itemize}
         \item For $q < 1-r$, we have
         \begin{equation*}
              \lim_{n,d\to\infty}\risk{\hvtheta} = \sum_{j\leq s,j\in\gS}a_j^2\pts{\sqrt{\frac{2}{\pi \var}} - 1}^2+\sum_{j>s,j\in\gS}a_j^2.
         \end{equation*}
        This implies regression consistency \emph{if and only if} the signal magnitude is fixed at $\var = \frac{2}{\pi}$ and $a_j = 0$ for all $j \in \gS \cap \{s+1,\dots,d\}$, \ie the signal is only supported within the ``spike''.
        The latter condition is also required for regression consistency of the regression MNI~\citep{tsigler2023benign}.
         \item For $q > 1-r$, we have
         \begin{align*}
             \lim_{n,d\to\infty}\risk{\hvtheta} = \sum_{j\in\gS}a_j^2.
         \end{align*}
        As in the case of the regression MNI~\citep{muthukumar2021classification}, regression consistency would not be possible unless we had zero signal, \ie $\svtheta = \bm{0}$.
    \end{itemize}
\end{corollary}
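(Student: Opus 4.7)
The strategy is to apply Theorem~\ref{thm:zero_shot_general_vacuous} after (i) identifying $\sk$ and $\bgS$ in each regime of $q$, (ii) computing the limits $b_j = \lim_{n,d\to\infty}\lambda_j\Tr(\rmA_{-\bgS}^{-1})$, and (iii) verifying the two technical conditions under the stated sparsity bound. For the spiked model, $r_0(\cov) = \Tr(\cov)/\lambda_1 = d/\lambda_1 = s/a = n^{r+q}$, which is $\geq bn$ precisely when $q+r \geq 1$. Combined with $r_s(\cov) = d-s \asymp n^p \gg n$, this yields $\sk = s$ when $q < 1-r$ and $\sk = 0$ when $q > 1-r$.

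For $q < 1-r$, $\bgS = \gS \cup [s]$ and $\cov_{-\bgS}$ retains only the tail eigenvalues $\tlambda_j \asymp 1$. The sandwich in Lemma~\ref{lem:k_sparse_SU_CN_bounds} implies $\Tr(\rmA_{-\bgS}^{-1}) \asymp n/(\tlambda_1 r_0(\cov_{-\bgS})) = n/\Tr(\cov_{-\bgS}) \asymp n/d$, so $b_j \asymp \lambda_j n^{1-p}$. For $j \in \gS$ with $j \leq s$, $\lambda_j = n^{p-q-r}$ yields $b_j \asymp n^{1-q-r} \to \infty$, hence $b_j/(1+b_j) \to 1$ and Theorem~\ref{thm:zero_shot_general_vacuous} contributes $a_j^2\pts{\sqrt{2/(\pi\var)} - 1}^2$; for $j \in \gS$ with $j > s$, $\lambda_j \asymp 1$ gives $b_j \asymp n^{1-p} \to 0$ and a full $a_j^2$ contribution. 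For $q > 1-r$, $\bgS = \gS$ and $\cov_{-\bgS}$ retains the spike (with at most $t$ indices excised, perturbing $\Tr(\cov_{-\bgS})$ by $O(t\cdot n^{p-q-r}) = o(d)$), so once more $\Tr(\rmA_{-\bgS}^{-1}) \asymp n/d$. Every $\lambda_j$ for $j \in \gS$ satisfies $\lambda_j \leq n^{p-q-r}$, hence $b_j \leq O(n^{1-q-r}) \to 0$ since $q+r > 1$, giving the $a_j^2$ contribution uniformly. The ``if and only if'' in the first bullet follows because the limiting expression is a sum of nonnegative squared terms, vanishing iff every summand does.

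To confirm the hypotheses of Theorem~\ref{thm:zero_shot_general_vacuous}: the bound $t \ll \sqrt{n/(\sk\ln n)}$ gives $t^2\sk\ln(n)/n \to 0$ directly. For $t^2 n\ln(n)/R_0(\cov_{-\bgS}) \to 0$, in the $q<1-r$ regime the tail block of $\cov_{-\bgS}$ has $\asymp d$ eigenvalues all of size $\asymp 1$, so $R_0(\cov_{-\bgS}) \asymp d = n^p$ and $t \ll \sqrt{n^{p-1}/\ln n}$ suffices; in the $q>1-r$ regime a short case split on whether the spike or the tail dominates $\sum \tlambda_j^2$ shows $R_0(\cov_{-\bgS}) \gtrsim n^{\min\{p,\, 2q+r\}}$, which is $\gg t^2 n \ln n$ under the stated sparsity.

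The main obstacle is converting the nonasymptotic sandwich of Lemma~\ref{lem:k_sparse_SU_CN_bounds} into the almost-sure limit required by Theorem~\ref{thm:zero_shot_general_vacuous}. In the spiked case this is tractable because each eigenvalue block of $\cov_{-\bgS}$ is flat, so the multiplicative constants $c_1,\dots,c_8$ in the lemma collapse to $1$ in the limit; combining this with summability of the $1-cte^{-n^{2\epsilon}}$ failure probabilities yields almost-sure convergence along $\{\cov_n\}_{n=1}^\infty$ via Borel--Cantelli.
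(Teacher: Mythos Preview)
Your proposal is correct and follows essentially the same route as the paper: verify the two vanishing-limit hypotheses of Theorem~\ref{thm:zero_shot_general_vacuous} using the explicit form of $\sk$ and $R_0(\cov_{-\bgS})$ in the spiked model, then evaluate $b_j/(1+b_j)$ case-by-case via the asymptotics $\Tr(\rmA_{-\bgS}^{-1}) \asymp n/\sum_{k>\sk,\,k\in\gS^{\mathsf c}}\lambda_k$ (the paper phrases this as computing $\lim \frac{\sum_{k>\sk,\,k\in\gS^{\mathsf c}}\lambda_k}{\lambda_j n}$, which is just $1/b_j$). Your treatment of $R_0(\cov_{-\bgS})$ in the $q>1-r$ regime is in fact slightly more careful than the paper's, which simply asserts $R_0(\cov_{-\bgS}) = d - |\bgS|$ without noting that $\cov_{-\bgS}$ still contains spike eigenvalues when $\sk=0$; your case split is the right way to handle this, and the conclusion goes through once you also invoke the $t \ll n^{1/2-\epsilon}$ bound from Assumption~\ref{asm:k_sparse}.
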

Corollary~\ref{cor:diff_cov} is proved in Appendix~\ref{app:diff_cov}. 

\subsection{The case of random signal} \label{sec:zeroshot_dense}
We now provide results for \emph{random signal models} which may be dense.
In this section, we study a general random signal model and introduce a simplifying ansatz which enables upper and lower bounding the task shift error.
In Appendix~\ref{app:dense} (Theorem~\ref{thm:benign_dense}), we show a more specific ``dense'' random signal model --- wherein $\svtheta$ has similar magnitude in all dimensions --- which does not require the simplifying ansatz.

The key idea is to interpret classification labels as regression labels under a dependent noise model and explicitly connect to characterizations of the regression MNI~\citep{bartlett2020benign,tsigler2023benign}.
For regression labels, one usually has additive sub-Gaussian noise, which means that $\try - \rvx^\top\svtheta$ is sub-Gaussian and conditionally independent given $\rvx$.
But for classification labels, we have
 \begin{equation*}
     \hry - \rvx^\top\svtheta \coloneqq \sgn{\rvx^\top\svtheta}-\rvx^\top\svtheta = \frac{\sgn{\rvx^\top\svtheta}-\rvx^\top\svtheta}{\rvx^\top\svtheta} \rvx^\top \svtheta,
 \end{equation*}
which is clearly dependent on $\rvx$. We will write $\hrvy-\trvy\coloneqq \rmD\rmX \svtheta$, where $\rmD\coloneqq\diag{\tn{d}_1,\dots,\tn{d}_n}$ and $\tn{d}_i\coloneqq\frac{\sgn{\rvx^\top\svtheta}-\rvx^\top\svtheta}{\rvx^\top\svtheta}$, to make this relationship explicit.

We begin with a decomposition of the regression risk of the classification MNI, proved in Appendix \ref{app:decomposition}.
\begin{lemma} \label{lem:decomposition}
    The regression risk of the classification MNI $\hvtheta$ can be decomposed as
     \begin{equation*}
         \risk{\hvtheta} = \risk{\tvtheta} + \tserror.
     \end{equation*}
\end{lemma}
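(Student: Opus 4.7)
My plan is a Pythagorean-type expansion of $\risk{\hvtheta}$ centered at $\tvtheta$. Adding and subtracting $\rvx^\top\tvtheta$ inside the squared difference and expanding yields
\begin{align*}
\risk{\hvtheta}
&= \E_\rvx\pts{\rvx^\top(\hvtheta - \tvtheta) + \rvx^\top(\tvtheta - \svtheta)}^2 \\
&= \tserror + \risk{\tvtheta} + 2(\hvtheta - \tvtheta)^\top \cov (\tvtheta - \svtheta),
\end{align*}
so the lemma reduces to vanishing of the cross term.

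To attack the cross term I would use the closed forms from Section~\ref{sec:preliminaries}. Since $\rmA = \rmX\rmX^\top$ is invertible almost surely, $\hvtheta = \rmX^\top\rmA^{-1}\hrvy$ and $\tvtheta = \rmX^\top\rmA^{-1}\rmX\svtheta$, and consequently $\hvtheta - \tvtheta = \rmX^\top\rmA^{-1}(\hrvy - \trvy) \in \tn{row}(\rmX)$ while $\tvtheta - \svtheta = -(\rmI - \rmX^\top\rmA^{-1}\rmX)\svtheta \in \tn{null}(\rmX)$. Substituting,
\[
(\hvtheta - \tvtheta)^\top \cov (\tvtheta - \svtheta) = -(\hrvy - \trvy)^\top \rmA^{-1}\rmX\,\cov\,(\rmI - \rmX^\top\rmA^{-1}\rmX)\svtheta.
\]
The key structural fact is $\rmX(\rmI - \rmX^\top\rmA^{-1}\rmX) = \vzero$, reflecting the $\ell_2$-orthogonality of the row and null spaces of $\rmX$.

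The main obstacle I anticipate is that the cross term sits in the $\cov$-weighted inner product rather than the Euclidean one, so the identity $\rmX(\rmI - \rmX^\top\rmA^{-1}\rmX) = \vzero$ does not directly annihilate the factor $\rmX \cov (\rmI - \rmX^\top\rmA^{-1}\rmX)$ appearing above. To close the argument, I would leverage the noise representation $\hrvy - \trvy = \rmD\rmX\svtheta$ introduced immediately before the statement, which recasts the cross term as a quadratic form in $\svtheta$ whose cancellation one can try to read off from the diagonal structure of $\cov$ (Assumption~\ref{asm:rank}) together with the coordinate-wise form of $\rmD$. Alternatively, since Section~\ref{sec:zeroshot_dense} opens by introducing a ``simplifying ansatz'' for the random signal model, I would interpret this decomposition as the precise content of that ansatz and use it as a working identity for the upper and lower bounds on task shift error in the random signal regime.
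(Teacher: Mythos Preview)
Your expansion and identification of the cross term $(\hvtheta-\tvtheta)^\top\cov(\tvtheta-\svtheta)$ match the paper's setup exactly, and your diagnosis of the obstacle---that the row/null orthogonality of $\rmX$ is Euclidean rather than $\cov$-weighted---is the right one. Where you stop, the paper proceeds by writing the cross term as $\svthetat\rmE\rmD\rmX\svtheta$ with $\rmE=(\rmX^\top(\rmX\rmX^\top)^{-1}\rmX-\mI)\,\cov\,\rmX^\top(\rmX\rmX^\top)^{-1}$, observes $\rmX\rmE=\bzero$ so that $(\rmE\rmD\rmX)^2=\bzero$, and concludes from this nilpotency that the quadratic form vanishes. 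You did not locate this device, and neither of your proposed workarounds closes the gap: the diagonal structure of $\cov$ alone does not force the cancellation, and the ansatz $\rmD=\alpha\mI$ is a \emph{separate} hypothesis invoked only for the lower bound of Theorem~\ref{thm:benign}, whereas the decomposition is applied unconditionally (in particular for the upper bound there).

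That said, your hesitation is well placed. Nilpotency of a non-symmetric matrix does \emph{not} force its quadratic form to vanish (consider $\bigl(\begin{smallmatrix}0&1\\0&0\end{smallmatrix}\bigr)$), and indeed with $n=1$, $\rvx_1=(1,1)^\top$, $\cov=\diag{\lambda_1,\lambda_2}$, $\svtheta=(2,1)^\top$ one computes the cross term to equal $\lambda_1-\lambda_2$. So the paper's final step---and apparently the lemma as stated---share the very gap your proposal correctly senses but does not repair.
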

Clearly, $\risk{\tvtheta}$ is the \emph{regression error} of the regression MNI: when there is no noise in regression labels this is equivalent to the \emph{bias}.
Likewise, we refer to $\E_\rvx (\rvx^\top\hvtheta-\rvx^\top\tvtheta)^2$ as the \emph{task shift error}, which can be interpreted as the ``variance'' under our dependent noise model.
 
The bias is a standard term, characterized as follows. 
\begin{lemma}\label{lem:bias}
    For any $\cov$, there exists a constant $c\geq 1$ such that the following hold.
    
    \paragraph{Upper bound.}~\emph{\citep[Lemma 35]{bartlett2020benign}}. For any $\svtheta$ (not necessarily random), we have
    \begin{equation*}
        \risk{\tvtheta} \leq \sqrt{c} \norm{\cov} \norm{\svtheta}_2^2 \max\pts{\sqrt{\frac{r_0(\cov)}{n}}, \frac{r_0(\cov)}{n}}
    \end{equation*}
    with probability at least $1-e^{-\frac{n}{c}}$.

    \paragraph{Lower bound.}~\emph{\citep[Lemma 8]{tsigler2023benign}}. Suppose random signal $\svtheta$ is generated from the ground truth $\bar{\vtheta}$ by $\theta^\star_j = \tn{r}_j \bar{\theta}_j$ where each $\tn{r}_j$ is an independent Rademacher random variable.
    We have 
    \begin{equation*}
        \E_{\svtheta} \risk{\tvtheta} \geq \frac{1}{c} \sum_{j=1}^d \frac{\lambda_j \bar{\theta}_j^2}{\pts{1+\frac{n\lambda_j}{\sum_{k=1}^d \lambda_k}}^2}
    \end{equation*}
    with probability at least $1 - ce^{-\frac{n}{c}}$.
\end{lemma}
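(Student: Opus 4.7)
The plan is to observe that both bounds in Lemma~\ref{lem:bias} are essentially direct applications of existing results from the benign overfitting literature, specialized to the noiseless regression setting of the present paper. Since regression labels are noiseless (i.e.\ $\trvy = \rmX\svtheta$ exactly), the usual variance term of the regression MNI vanishes and the excess risk $\risk{\tvtheta}$ reduces to a pure bias term. I would therefore first rewrite the regression MNI as $\tvtheta = \rmX^\top\XXinv\rmX\,\svtheta = \rmP\svtheta$, where $\rmP$ denotes the orthogonal projection onto the row space of $\rmX$. This gives
\begin{equation*}
    \risk{\tvtheta} = (\tvtheta - \svtheta)^\top\cov(\tvtheta - \svtheta) = \svtheta^\top(\rmI - \rmP)\cov(\rmI - \rmP)\svtheta,
\end{equation*}
so that only the component of $\svtheta$ orthogonal to the row space of $\rmX$ contributes to the error.

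For the upper bound, I would invoke Lemma~35 of~\cite{bartlett2020benign} directly on the quantity above. That lemma controls $\svtheta^\top(\rmI - \rmP)\cov(\rmI - \rmP)\svtheta$ via concentration of the spectrum and block structure of the Gaussian Gram matrix $\rmA = \XX$; the resulting bound scales as $\norm{\cov}\norm{\svtheta}_2^2$ times $\max\pts{\sqrt{r_0(\cov)/n},\; r_0(\cov)/n}$, with failure probability at most $e^{-n/c}$. The only check required is that our Assumption~\ref{asm:rank} (diagonal, positive definite $\cov$ with $\sk \le n/c$) meets the mild spectral hypotheses of~\cite{bartlett2020benign}, which is immediate from the effective rank condition together with the Gaussian design.

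For the lower bound, I would apply Lemma~8 of~\cite{tsigler2023benign}, which is designed precisely for signals of the form $\theta^\star_j = \tn{r}_j \bar\theta_j$ with independent Rademacher $\tn{r}_j$. The key mechanism is that Rademacher symmetry zeros out all cross-terms $\E[\tn{r}_j \tn{r}_k] = 0$ for $j \neq k$, so the expected bias decomposes into a diagonal sum in which coordinate $j$ contributes a weight proportional to $\lambda_j \bar\theta_j^2 / \pts{1 + n\lambda_j/\sum_k \lambda_k}^2$, as in the statement. As in the upper bound, I would only have to verify that our Gaussian design and Assumption~\ref{asm:rank} satisfy the regularity conditions used in~\cite{tsigler2023benign}, which again is automatic.

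The main obstacle, such as it is, is notational rather than mathematical: the cited lemmas are stated for independent sub-Gaussian coordinates, possibly nondiagonal $\cov$, with explicit rotations to the eigenbasis, whereas our setting is already in the diagonal Gaussian case. Beyond the bookkeeping required to align the conventions and verify that the Gaussian design is a special case of the sub-Gaussian hypotheses used there, no new ideas are needed, and Lemma~\ref{lem:bias} follows by direct invocation of the two references.
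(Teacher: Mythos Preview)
Your proposal is correct and matches the paper's treatment: Lemma~\ref{lem:bias} is not proved in the paper but is simply a restatement of the cited results from~\cite{bartlett2020benign} and~\cite{tsigler2023benign}, specialized to the noiseless Gaussian setting. Your additional commentary on the bias-only decomposition and the Rademacher diagonalization is accurate and even goes slightly beyond what the paper records, but no argument beyond direct citation is needed or given.
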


We now provide a characterization of the task shift error under the simplifying ansatz that $\rmD = \alpha \mI$ for some $\alpha \neq 0$.
The interpretation for this assumption is that all regression labels have the same magnitude (say, equal to $R$), which would result in $\rd_i = \frac{1}{R} - 1 =: \alpha$; clearly $\alpha \neq 0$ except in the special case where $R = 1$.
Thus, we are considering regression problems that are, in essence, a scaled version of classification.
From a technical perspective, it is difficult to obtain closed-form bounds on the task shift error without the simplifying ansatz, as dependencies which arise for general $\rmD$ may invalidate certain concentration arguments.
Nevertheless, as we expect generic regression problems to be even harder than scaled classification, providing a lower bound even for this simpler setting is meaningful.
Our next theorem does precisely this, via an extension of benign overfitting techniques to our dependent noise model.
\begin{theorem} \label{thm:benign}
    For any $\cov$, there exist constants $c,c_1\geq 1$ such that the following hold.
    
    \paragraph{Upper bound.} If $\sk<\frac{n}{c_1}$, then for any $\svtheta$ (not necessarily random), we have
    \begin{equation*}
         \tserror \leq \\ cn\pts{\frac{\sk}{n}+\frac{n}{R_{\sk}(\cov)}}\var
    \end{equation*}
    with probability at least $1-18e^{-\frac{n}{c}}$.
    
    \paragraph{Lower bound.} Suppose $\rmD = \alpha \mI$ and $\svtheta$ is any random signal such that $\E\theta^{\star 2}_j\geq \sigma^2$ for all $j\in [d]$.
    If $\sk<\frac{n}{c_1}$, then
     \begin{equation*}
         \rtserror \geq \frac{\alpha^2\sigma^2}{c} \pts{\sum_{j=1}^{\sk} \lambda_j + \frac{n}{R_{\sk}(\cov)}\sum_{j=\sk+1}^d\lambda_j}
     \end{equation*}
    with probability at least $1-14e^{-\frac{n}{c}}$.
    On the other hand, if $\sk\geq \frac{n}{c_1}$, then
    \begin{equation*}
        \rtserror \geq \frac{\alpha^2\sigma^2}{c}
    \end{equation*}
    with probability at least $1-10e^{-\frac{n}{c}}$.
\end{theorem}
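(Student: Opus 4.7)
The foundational observation is that the task shift error can be rewritten as a quadratic form in the ``effective noise'' $\hrvy - \trvy$: using $\hvtheta - \tvtheta = \rmX^\top(\XX)^{-1}(\hrvy - \trvy)$,
\begin{equation*}
    \tserror = (\hrvy - \trvy)^\top \mM (\hrvy - \trvy), \qquad \mM \coloneqq (\XX)^{-1}\rmX\cov\rmX^\top(\XX)^{-1}.
\end{equation*}
Here $\mM$ is exactly the matrix whose trace governs the variance term in the benign overfitting analysis of~\citep{bartlett2020benign,tsigler2023benign}. The twist is that $\hrvy - \trvy = \rmD\rmX\svtheta$ depends intricately on $\rmX$, so off-the-shelf variance bounds do not apply; most of the work is in handling or circumventing this dependence.

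For the \textbf{upper bound}, since $\mM\succeq 0$ we have $(\hrvy - \trvy)^\top \mM (\hrvy - \trvy) \leq \|\mM\|_{\mathrm{op}}\|\hrvy - \trvy\|_2^2 \leq \Tr(\mM)\|\hrvy - \trvy\|_2^2$. The first factor is controlled by invoking the benign overfitting variance bound, which gives $\Tr(\mM) \lesssim \sk/n + n/R_{\sk}(\cov)$ with probability $1 - \mathcal{O}(e^{-n/c})$ whenever $\sk < n/c_1$. For the second factor, the pointwise identity $(\sgn{y}-y)^2 \leq (1+|y|)^2 \leq 2 + 2y^2$ gives $\|\hrvy - \trvy\|_2^2 \leq 2n + 2\|\trvy\|_2^2 = 2n + 2\svtheta^\top\XX\svtheta$, and Hanson--Wright concentration of the quadratic form $\svtheta^\top\XX\svtheta$ around its mean $n\var$ delivers $\|\hrvy - \trvy\|_2^2 \lesssim n\var$ with the required probability (in the regime where $\var$ is not parametrically small). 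Multiplying the two bounds yields the stated result.

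For the \textbf{lower bound} under the ansatz $\rmD = \alpha\mI$, we observe $\hrvy = (1+\alpha)\trvy$, hence $\hvtheta = (1+\alpha)\tvtheta$ and $\hvtheta - \tvtheta = \alpha\tvtheta$. Writing $\mP \coloneqq \rmX^\top(\XX)^{-1}\rmX$ for the projection onto the row span of $\rmX$, we get $\tvtheta = \mP\svtheta$ and therefore $\tserror = \alpha^2\svtheta^\top\mP\cov\mP\svtheta$. Averaging over random $\svtheta$ with $\E[\svtheta\svtheta^\top]\succeq\sigma^2\mI$ gives $\rtserror \geq \alpha^2\sigma^2\Tr(\mP\cov\mP) = \alpha^2\sigma^2\Tr(\cov\mP)$ using $\mP^2 = \mP$. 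Since $\cov$ is diagonal and the $j$-th column of $\rmX$ is $\sqrt{\lambda_j}\rvz_j$, one has $\Tr(\cov\mP) = \sum_j \lambda_j^2\rvz_j^\top(\XX)^{-1}\rvz_j$. Sherman--Morrison gives $\rvz_j^\top(\XX)^{-1}\rvz_j = \xi_j/(1+\lambda_j\xi_j)$ with $\xi_j \coloneqq \rvz_j^\top\rmA_{-j}^{-1}\rvz_j$, and the standard Bartlett--Tsigler concentration shows $\lambda_j\xi_j \gtrsim 1$ for $j\leq\sk$ and $\xi_j \asymp n/\sum_{k>\sk}\lambda_k$ for $j > \sk$. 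Summing these two regimes and using the definition $R_{\sk}(\cov) = (\sum_{j>\sk}\lambda_j)^2/\sum_{j>\sk}\lambda_j^2$ yields exactly $\Tr(\cov\mP) \gtrsim \sum_{j\leq\sk}\lambda_j + (n/R_{\sk}(\cov))\sum_{j>\sk}\lambda_j$, completing the benign-regime bound. When $\sk \geq n/c_1$ the tail is too short to carry an effective projection, but precisely for this reason $\mP$ captures a constant fraction of the top covariance mass, giving the weaker constant lower bound $\alpha^2\sigma^2/c$ via an analogous but simpler Sherman--Morrison argument restricted to the top block.

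The principal obstacle I anticipate is not the Sherman--Morrison bookkeeping (which is standard from the benign overfitting toolkit) but cleanly propagating the $\var$-dependence through the upper bound: the elementary inequality $(\sgn{y}-y)^2\leq 2+2y^2$ introduces an additive $n$ term that is only absorbed into $n\var$ under an implicit assumption like $\var = \Omega(1)$, and making the bound uniformly sharp across small- and large-signal regimes would require separating the cases $|\trvy_i| \lessgtr 1$ and using tighter Gaussian tail estimates on the set where $\trvy_i$ is close to zero.
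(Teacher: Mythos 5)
Your proposal is correct and follows essentially the same route as the paper. For the upper bound, both proofs bound $(\hrvy-\trvy)^\top\rmC(\hrvy-\trvy)\leq\Tr(\rmC)\,\yynorm$ and then control the two factors separately via benign overfitting concentration (Lemma~\ref{lem:benign_c}) and a sub-Gaussian/sub-exponential concentration argument for $\yynorm$. Your pointwise inequality $(\sgn{y}-y)^2\leq 2+2y^2$ is cruder than the paper's exact computation of $\E_\rmX\yynorm = n+n\psi^2-\sqrt{8/\pi}\,n\psi$, but both lead to the same $n\var$ bound. Your concern about the additive $n$ being absorbed only when $\var=\Omega(1)$ is well-founded: the paper's own proof of Lemma~\ref{lem:benign_labels} explicitly says ``Treating $\psi^2$ as a constant, there exists a constant $c_3\geq 1$ such that $1+\psi^2\leq c_3\psi^2$,'' which is exactly the implicit assumption you flagged. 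The assumption is consistent with Assumption~\ref{asm:k_sparse} where $\var$ is held constant in $n$, but Theorem~\ref{thm:benign} is stated ``for any $\svtheta$'' without making this explicit, so the caveat is a real one shared by both proofs.

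For the lower bound, your observation that $\rmD=\alpha\mI$ forces $\hrvy=(1+\alpha)\trvy$ and hence $\hvtheta-\tvtheta=\alpha\tvtheta=\alpha\mP\svtheta$ is a slightly cleaner derivation than the paper's, which computes $(\hrvy-\trvy)^\top\rmC(\hrvy-\trvy)=\svthetat\rmX^\top\rmD\rmC\rmD\rmX\svtheta=\alpha^2\svthetat\rmX^\top\rmC\rmX\svtheta$ directly. But the two routes arrive at the identical quantity $\Tr(\cov\mP)=\Tr(\rmX^\top\rmC\rmX)=\sum_j\lambda_j^2\rvz_j^\top\rmA^{-1}\rvz_j$, and the subsequent Sherman--Morrison and Bartlett--Tsigler bookkeeping (Lemma~\ref{lem:benign_xcx} in the paper) is the same. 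One small point to be careful about: the theorem's hypothesis $\E\theta^{\star 2}_j\geq\sigma^2$ only delivers $\E[\svtheta\svthetat]\succeq\sigma^2\mI$ if the coordinates of $\svtheta$ are independent (or at least uncorrelated); the paper also silently promotes the coordinate-wise bound to a PSD ordering in its proof, and the corollaries where this is instantiated all use independent-coordinate signals, so this is not a defect specific to your write-up but worth stating if you were to formalize it.
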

Theorem~\ref{thm:benign} is proved in Appendix~\ref{app:benign}.
An interesting consequence of Theorem~\ref{thm:benign} is that there do exist covariance ensembles for which the task shift error decays to zero --- implying that the classification and regression MNIs would generalize equivalently on a regression task!
However, these are ensembles for which $\sk=0$, and therefore regression bias would stay constant.
Ultimately, our results imply a fundamental tradeoff between bias error and task shift error for random signals, stated below.
\begin{corollary} \label{thm:benign_lower}
    For any sequence $\{\cov_n\}_{n=1}^\infty$, denote $\kappa\coloneqq\lim_{n,d\to\infty} \sk$.
    Suppose random signal $\svtheta$ is generated from the ground truth $\bar{\vtheta}$ by $\theta^\star_j = \tn{r}_j \bar{\theta}_j$, where each $\tn{r}_j$ is an independent Rademacher random variable, such that $\varbar$ is constant for all $n$.
    Assume that $\bar{\theta}_j^2\geq 1$ for all $1\leq j \leq d$.
    Then, under the same conditions as the lower bound of Theorem~\ref{thm:benign}, the almost sure limits of bias and task shift error are characterized in two distinct regimes:
    \begin{enumerate}
        \item $\kappa = 0$: %
        the limiting bias is nonzero, \ie
        \begin{equation*}
            \lim_{n,d\to\infty} \E_{\svtheta} \risk{\tvtheta} \geq \frac{\varbar}{c}.
        \end{equation*}
        \item $\kappa > 0$: %
        the limiting task shift error is nonzero, \ie
        \begin{equation*}
            \lim_{n,d\to\infty} \rtserror \geq \frac{\alpha^2}{c}.
        \end{equation*}
    \end{enumerate}
\end{corollary}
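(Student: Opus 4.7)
The plan is to apply the two lower bounds of Lemma~\ref{lem:bias} and Theorem~\ref{thm:benign} to the two regimes of $\kappa$, exploiting the structure of the Rademacher prior on $\svtheta$. A unifying preliminary observation is that under the generation $\theta^\star_j = \tn{r}_j \bar\theta_j$, we have $\theta^{\star 2}_j = \bar\theta_j^2 \ge 1$ almost surely, so the hypothesis $\E\theta^{\star 2}_j \ge \sigma^2$ of Theorem~\ref{thm:benign}'s lower bound is satisfied with $\sigma^2 = 1$; moreover $\var = \varbar$ deterministically. These reductions let us apply both lemmas without further reasoning about the randomness in $\svtheta$.

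For Case 1 ($\kappa = 0$), integer-valuedness of $\sk$ forces $\sk = 0$ for all sufficiently large $n$, which by definition of $\sk$ means $r_0(\cov) \ge bn$. Hence $\sum_{k=1}^d \lambda_k \ge bn\,\lambda_1 \ge bn\,\lambda_j$ for every $j$, so that $n\lambda_j/\sum_k\lambda_k \le 1/b$, and each denominator in Lemma~\ref{lem:bias}'s lower bound is bounded above by $(1 + 1/b)^2$. This yields
\[
\E_{\svtheta}\risk{\tvtheta} \;\ge\; \frac{1}{c(1+1/b)^2}\sum_{j=1}^d \lambda_j \bar\theta_j^2 \;=\; \frac{\varbar}{c'},
\]
and taking the almost-sure limit after absorbing $(1+1/b)^2$ into a renamed constant gives the first bullet.

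For Case 2 ($\kappa > 0$), integer-valuedness of $\sk$ forces $\sk \ge 1$ eventually. I would split further according to whether $\sk$ eventually reaches the threshold $n/c_1$. In the ``large-$\sk$'' subcase $\sk \ge n/c_1$, Theorem~\ref{thm:benign}'s second lower bound gives directly $\rtserror \ge \alpha^2\sigma^2/c = \alpha^2/c$. In the ``small-$\sk$'' subcase $1 \le \sk < n/c_1$, Theorem~\ref{thm:benign}'s first lower bound gives
\[
\rtserror \;\ge\; \frac{\alpha^2}{c}\left(\sum_{j=1}^{\sk} \lambda_j \;+\; \frac{n}{R_{\sk}(\cov)}\sum_{j>\sk}\lambda_j\right),
\]
and the task reduces to showing the parenthetical is bounded below uniformly in $n$. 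The plan here is to combine $\lambda_{\sk} > (bn)^{-1}\sum_{j\ge \sk}\lambda_j$ (obtained from $r_{\sk-1}(\cov) < bn$) with $R_{\sk}(\cov) \ge r_{\sk}(\cov) \ge bn$ and the trace bound $\Tr(\cov) \le \varbar$ (forced by $\bar\theta_j^2 \ge 1$) to lower-bound the two summands simultaneously, absorbing the resulting constant into the final $c$.

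The main obstacle is precisely this small-$\sk$ subcase: ruling out the possibility that both terms in the parenthetical vanish. Concretely, if the head mass $\sum_{j\le \sk}\lambda_j$ were to vanish, then essentially all of $\Tr(\cov)$ must sit in the tail, and one must then argue from the definition of $\sk$ that this constrains $R_{\sk}(\cov)$ close to order $n$, thereby keeping the second summand of constant order. This is the step where the effective-rank tradeoffs developed by~\cite{bartlett2020benign,tsigler2023benign} would be pushed hardest and where the analysis is most delicate; everything else in the corollary is a bookkeeping step on top of the two already-proved lower bounds.
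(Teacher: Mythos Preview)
Your Case~1 argument and the ``large-$\sk$'' branch of Case~2 are both correct and coincide with the paper's proof. The divergence, and the genuine gap, is in the ``small-$\sk$'' branch.

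You try to lower-bound the entire parenthetical $\sum_{j\le \sk}\lambda_j + \tfrac{n}{R_{\sk}(\cov)}\sum_{j>\sk}\lambda_j$ via effective-rank inequalities. This does not work: the inequality $R_{\sk}(\cov)\ge r_{\sk}(\cov)\ge bn$ goes the wrong way (it \emph{upper}-bounds the second summand by $\tfrac{1}{b}\sum_{j>\sk}\lambda_j$), and the trace bound $\Tr(\cov)\le\varbar$ is also only an upper bound. Nothing in your toolkit prevents $R_{\sk}(\cov)\gg n$ simultaneously with a vanishing head mass, so your stated plan cannot close the argument. You correctly flag this as the obstacle, but the resolution you sketch (``constraining $R_{\sk}$ close to order $n$ from the definition of $\sk$'') is not available in general.

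The paper's resolution is much simpler and avoids the second summand entirely: since $\sk\ge 1$ in this branch, the first summand already satisfies $\sum_{j\le \sk}\lambda_j \ge \lambda_1$. If $\lambda_1\to 0$ then every eigenvalue tends to zero and $\cov\to\bm{0}$, contradicting the standing positive-definiteness assumption on the covariance sequence. Hence $\lambda_1$ is bounded away from zero along the sequence, and the first summand alone furnishes the constant lower bound. So the missing idea is to drop the tail term and use only $\lambda_1$ together with non-degeneracy of $\cov$; no effective-rank gymnastics are needed.
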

The proof of Corollary \ref{thm:benign_lower} is in Appendix \ref{app:benign_lower}.

\section{Few-shot task shift} \label{sec:fewshot}

In the previous sections, we demonstrated that task shift from classification to regression without any regression information is generally unachievable. 
Therefore, in this section, we investigate task shift in the \emph{few-shot} setting, where limited regression information is available.
We propose a simple two-step approach to recover a sparse signal $\svtheta$. In the first step, we leverage the structured attenuation of the classification MNI to recover the support of $\svtheta$.
Second, we perform least-squares regression with reduction to the dimensionality of the support to recover the magnitude of $\svtheta$.
Our results require a diagonal covariance matrix (Assumption~\ref{asm:rank}).

\begin{algorithm}[t]
\caption{Support recovery}\label{alg:recover_support}
\begin{algorithmic}
\Require $\rmX$, $\hrvy$, $(t \text{ or } \{\lambda_j\}_{j=1}^d)$
\State $\hvtheta \gets \XXXhy$
\State $\gS \gets \emptyset$
\If {$t$ is known}
    \State $\gS \gets \argtop t(|\hvtheta|)$%
\ElsIf {$\cov$ is known}
    \For {$j\in [d]$}
        \If {$|\hevtheta_j| \geq \lambda_j^{-\frac{1}{2}}$}
        \State $\gS \gets \gS\cup \{j\}$
        \EndIf
    \EndFor
\EndIf
\\\Return $\gS$
\end{algorithmic}
\end{algorithm}
\begin{algorithm}[h!]
\caption{Least-squares on recovered support}\label{alg:post_processing}
\begin{algorithmic}
\Require $\rmX$, $\hrvy$,  $\{\rvx^\prime_i,y^\prime_i\}_{i=1}^m$
\State $\hvtheta \gets \XXXhy$
\State $\gS \gets$ from Algorithm~\ref{alg:recover_support}
\State $\hvtheta_{\text{post}} \gets \argmin_{\vtheta\in\R^d} \sum_{i=1}^m\pts{\rvx_i^{\prime \top}\vtheta - y^\prime_i}^2$\\
\myquad[5]\text{  s.t. } $\evtheta_j = 0\quad  \forall j\in\gS^\mathsf{c}$\\ 
\end{algorithmic}
\end{algorithm}

\subsection{Support recovery via attenuation}
The survival and contamination bounds of Lemma~\ref{lem:k_sparse_SU_CN_bounds} show that while the classification MNI $\hvtheta$ attenuates the sparse signal $\svtheta$, it does so in a highly structured manner.
This suggests it is possible to distinguish the support components of $\svtheta$ using the relative magnitudes of entries of $\hvtheta$.
If the true signal is supported within the top $\sk$ indices of the covariance spectrum, the survival is bounded below by a constant; in contrast, contamination decays to zero with $n$ at a faster rate than the survival terms~\citep{muthukumar2021classification}.
Surprisingly, even when the signal is supported outside the top $\sk$ indices, its decay rate may still be slower than the non-support components.

With this in mind, we propose Algorithm~\ref{alg:recover_support} assuming we either know the sparsity level $t$ or the covariance spectrum $\{\lambda_j\}_{j=1}^d$ (equivalently $\cov$ by assumption).
Below, we state the general-purpose support recovery guarantee for Algorithm \ref{alg:recover_support}.

\begin{theorem}\label{thm:support_identification}
    Under Assumptions~\ref{asm:rank} and~\ref{asm:k_sparse}, suppose $\gS\subseteq [\sk]$, \ie $\svtheta$ is only supported in the top $\sk$ indices of the covariance spectrum.
    Denote by $\{\tlambda_j\}_{j=1}^{d - \sk}$ the diagonal entries of the matrix $\cov_{-[\sk]}$, \ie $\cov$ with the first $\sk$ rows and columns left out.
    Algorithm \ref{alg:recover_support} recovers the support of the true regressor with probability at least $1-ctde^{-n^{2\epsilon}}$ if either (1) $t$ is known and the additional conditions $\lambda_j \ll \frac{\lambda_q \cdot n^{1-2\epsilon}}{t^2}$ and $\lambda_j \ll \frac{\pts{\tlambda_1 r_0\pts{\cov_{-[\sk]}}}^2}{t^2\cdot\pts{n^{1+2\epsilon}}\cdot\lambda_\ell}$ hold for all $j\in\gS$, $q \in \gS^\mathsf{c} \cap [\sk]$, $\ell >\sk$, or (2) $\cov$ is known.
\end{theorem}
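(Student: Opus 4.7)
The plan is to derive tight per-coordinate estimates of $|\hevtheta_j|$ on and off the support, then convert these estimates into the separation required by each of the two selection rules.

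\emph{Step 1: Lower bound on $|\hevtheta_j|$ for $j\in\gS$.} Since $\gS \subseteq [\sk]$, Lemma~\ref{lem:k_sparse_SU_CN_bounds} applies directly. Recalling $\SU_j = \sqrt{\lambda_j}\,\hevtheta_j/a_j$, the lemma gives a two-sided control of the survival on the event where its concentration inequalities hold, which translates into
\begin{equation*}
|\hevtheta_j| \;\gtrsim\; \frac{|a_j|}{\sqrt{\lambda_j}} \cdot \frac{\lambda_j\Tr(\rmA_{-\bgS}^{-1})}{1+\lambda_j\Tr(\rmA_{-\bgS}^{-1})}
\end{equation*}
with probability at least $1-cte^{-n^{2\epsilon}}$. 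Under Assumption~\ref{asm:rank}, the denominator stays bounded and we obtain a usable lower bound in terms of $|a_j|$ and $\lambda_j$.

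\emph{Step 2: Upper bound on $|\hevtheta_j|$ for $j\notin\gS$.} I would use a leave-one-out argument specific to the non-support coordinates. Since $\rmA = \rmA_{-j} + \lambda_j \rvz_j\rvz_j^\top$, Sherman--Morrison gives
\begin{equation*}
\hevtheta_j \;=\; \sqrt{\lambda_j}\,\rvz_j^\top \rmA^{-1}\hrvy \;=\; \frac{\sqrt{\lambda_j}\,\rvz_j^\top \rmA_{-j}^{-1}\hrvy}{1+\lambda_j\,\rvz_j^\top \rmA_{-j}^{-1}\rvz_j}.
\end{equation*}
The crucial structural fact (absent in pure regression analyses) is that, because $\hrvy = \sgn{\sum_{k\in\gS} a_k\rvz_k}$ depends only on $\{\rvz_k\}_{k\in\gS}$, for $j\notin\gS$ the vector $\rvz_j$ is independent of both $\rmA_{-j}$ and $\hrvy$. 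Conditioning on $(\rmA_{-j},\hrvy)$, the numerator is a centered Gaussian with variance $\hrvy^\top \rmA_{-j}^{-2}\hrvy$ (bounded by $n\|\rmA_{-j}^{-1}\|^2$), and the denominator concentrates around $1+\lambda_j\Tr(\rmA_{-j}^{-1})$ via Hanson--Wright. Plugging in the benign-overfitting estimates $\|\rmA_{-j}^{-1}\|\lesssim 1/(\tlambda_1 r_0(\cov_{-[\sk]}))$ and $\Tr(\rmA_{-j}^{-1})\asymp n/(\tlambda_1 r_0(\cov_{-[\sk]}))$ that already underlie Lemma~\ref{lem:k_sparse_SU_CN_bounds}, one obtains a bound roughly of the form
\begin{equation*}
|\hevtheta_j| \;\lesssim\; \frac{n^{\epsilon}\sqrt{n\lambda_j}}{\tlambda_1 r_0(\cov_{-[\sk]}) + \lambda_j n}
\end{equation*}
with probability at least $1-ce^{-n^{2\epsilon}}$, and a union bound over the $d$ non-support coordinates supplies the $d$ factor in the failure probability.

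\emph{Step 3: Converting to support recovery.} For branch~(1), in which $t$ is known, Algorithm~\ref{alg:recover_support} returns the top-$t$ indices, so it suffices to show $\min_{j\in\gS}|\hevtheta_j| > \max_{j\notin\gS}|\hevtheta_j|$. Comparing the lower bound of Step~1 against the upper bound of Step~2 in the two sub-cases $q\in\gS^{\mathsf{c}}\cap[\sk]$ and $\ell>\sk$ produces exactly the two dominance conditions on $\lambda_j$ stated in the theorem (the factors $t^2$ and $n^{\pm 2\epsilon}$ track the polynomial slack in Lemma~\ref{lem:k_sparse_SU_CN_bounds} and Step~2). For branch~(2), in which $\cov$ is known, the threshold $|\hevtheta_j|\geq \lambda_j^{-1/2}$ is equivalent to checking $\lambda_j\hevtheta_j^2\geq 1$; applying Step~1 and Step~2 directly shows that this inequality holds for $j\in\gS$ and fails for $j\notin\gS$, and no additional eigenvalue condition is needed because $\lambda_j$ enters symmetrically on both sides.

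\emph{Main obstacle.} The challenging part is Step~2: the contamination bound in Lemma~\ref{lem:k_sparse_SU_CN_bounds} controls only the aggregate $\sum_{j\notin\gS}\lambda_j\hevtheta_j^2$, whereas support recovery requires a per-coordinate upper bound that is sharp enough to compare with the signal magnitude. The leave-one-out/Sherman--Morrison identity together with the independence of $\rvz_j$ from $\hrvy$ for $j\notin\gS$ is what makes this possible, and carefully controlling $\hrvy^\top\rmA_{-j}^{-2}\hrvy$ (in which $\hrvy$ is data-dependent) is the most delicate ingredient.
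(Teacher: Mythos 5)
Your proposal is essentially correct and arrives at the same per-coordinate conclusion as the paper (Lemma~\ref{lem:recover_support} plus its conversion into the two selection rules), but Step~2 uses a genuinely different decorrelation device. The paper iterates Sherman--Morrison $t$ times to peel off all the support indices from the Gram matrix, obtaining $\hevtheta_j = \sqrt{\lambda_j}\rvz_j^\top\rmA_{-s_1:s_t}^{-1}\crvy_{s_t}$; because $\rmA_{-s_1:s_t}$ is then independent of $\hrvy$, Hanson--Wright can be applied directly to the bilinear form, at the cost of a triangle inequality over $t$ correction terms which injects a factor of $t$ into the bound. You instead remove \emph{only} $j$ via a single Sherman--Morrison step and then \emph{condition on} $(\rmA_{-j},\hrvy)$: since $\rvz_j$ is independent of that pair (this uses $j\notin\gS$, exactly as you note), the numerator $\rvz_j^\top\rmA_{-j}^{-1}\hrvy$ is conditionally Gaussian with variance $\hrvy^\top\rmA_{-j}^{-2}\hrvy$, and the Cauchy--Schwarz bound $\hrvy^\top\rmA_{-j}^{-2}\hrvy\leq n\|\rmA_{-j}^{-1}\|^2$ sidesteps the fact that $\rmA_{-j}$ and $\hrvy$ are correlated through $\{\rvz_k\}_{k\in\gS}$. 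This is cleaner and in fact avoids the $t$ factor the paper pays. Working out the two sub-cases $j\in[\sk]\cap\gS^{\mathsf{c}}$ and $j>\sk$, your rough final expression $n^{\epsilon}\sqrt{n\lambda_j}/\big(\tlambda_1 r_0(\cov_{-[\sk]})+\lambda_j n\big)$ reproduces exactly the paper's two bounds.

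Two small points should be tightened. First, you lean on ``$\Tr(\rmA_{-j}^{-1})\asymp n/(\tlambda_1 r_0(\cov_{-[\sk]}))$'' as an estimate ``underlying Lemma~\ref{lem:k_sparse_SU_CN_bounds}''; the paper actually proves this only for the fully leave-out matrix $\rmA_{-\bgS}$ (Lemma~\ref{lem:eig_constant}) and its Lemma~\ref{lem:trace_lower_bound} is stated for removing a contiguous top block $[k]$ plus tail indices, not an arbitrary $j\in[\sk]$. Your version is still correct, but you need a short separate argument: the upper bound follows from $\mu_n(\rmA_{-j})\geq\mu_n(\rmA_{-[\sk]\cup\{j\}})\gtrsim\tlambda_1 r_0$, and the lower bound follows from Weyl's inequality applied to the rank-$(\sk-1)$ perturbation $\sum_{k\in[\sk]\setminus\{j\}}\lambda_k\rvz_k\rvz_k^\top$ together with Assumption~\ref{asm:rank}'s $\sk\leq n/c$. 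Second, your Step~3 is a correct sketch, but the two dominance conditions in the statement require an explicit case split (comparison of the lower bound for $j\in\gS$ against the $[\sk]\cap\gS^{\mathsf{c}}$ bound and separately against the $j>\sk$ bound), and for branch~(2) one should note that the non-support bound decays as $o(\lambda_j^{-1/2})$ precisely because $t\ll n^{1/2-\epsilon}$, which is what makes a threshold proportional to $\lambda_j^{-1/2}$ discriminating without further eigenvalue conditions.
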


\begin{figure*}[t]
    \begin{subfigure}[b]{\textwidth}
        \centering
        \begin{subfigure}[b]{0.32\textwidth}
            \includegraphics[scale=0.33]{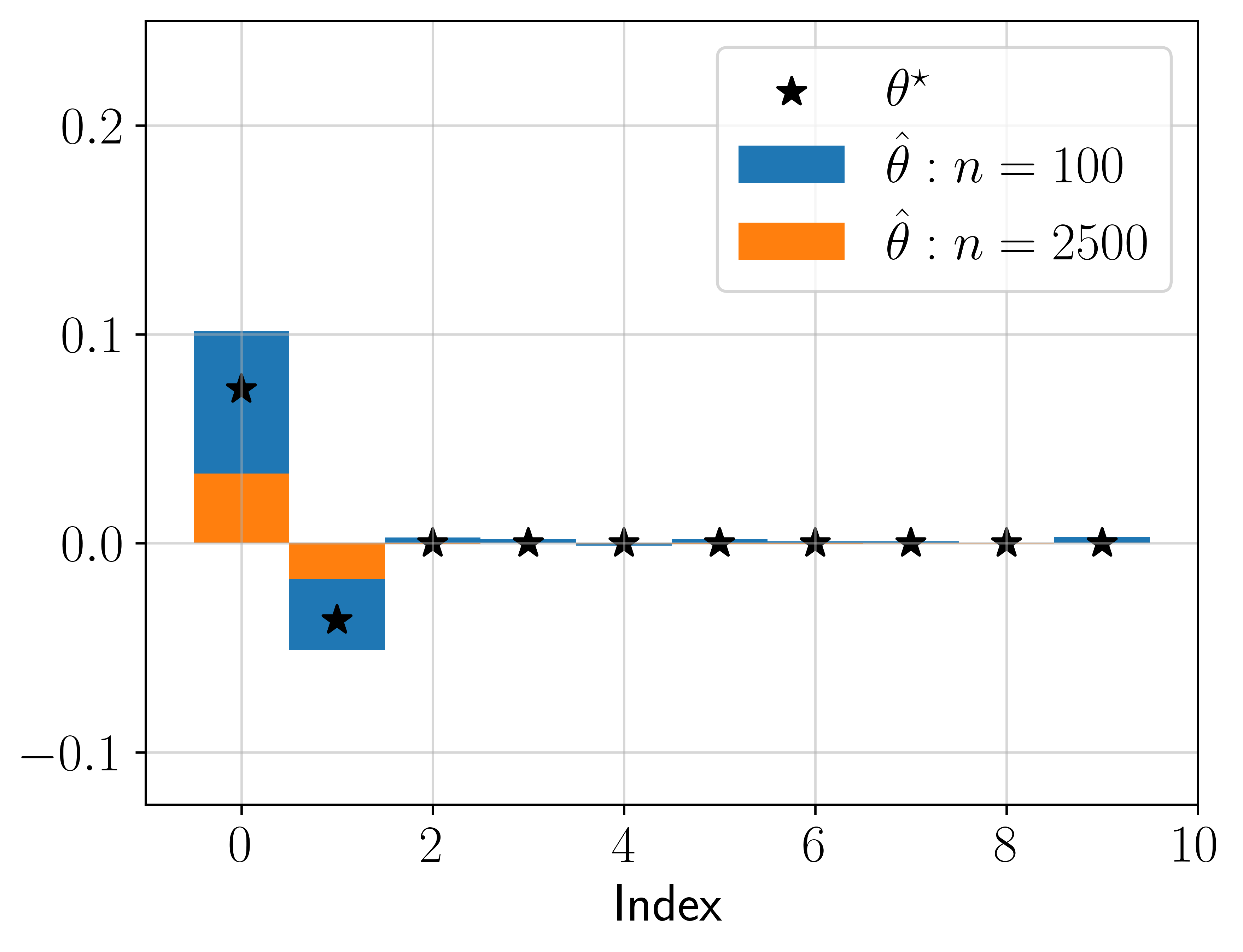}
            \subcaption{Support recovery}
        \end{subfigure}
        \hfill
        \begin{subfigure}[b]{0.32\textwidth}
            \includegraphics[scale=0.33]{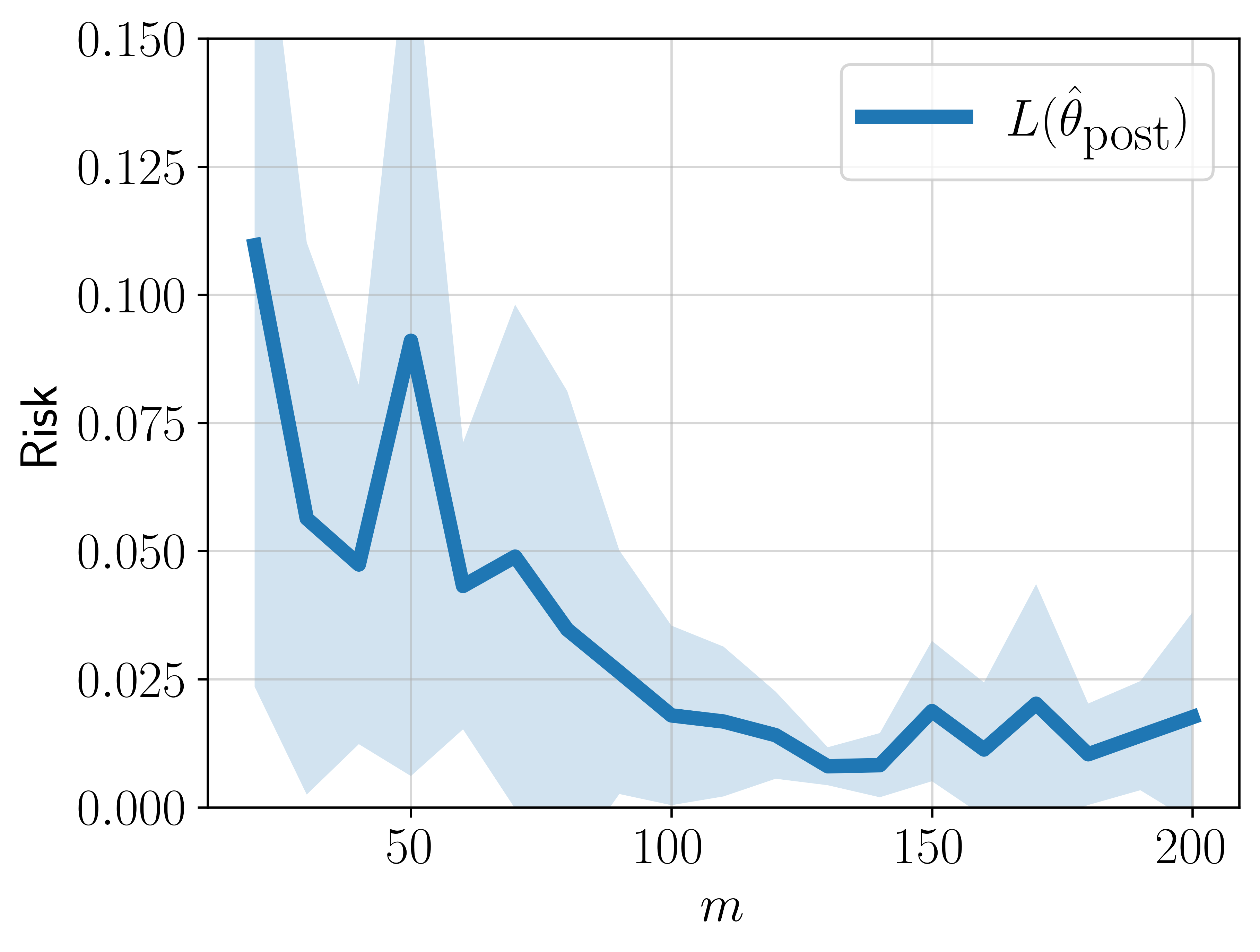}
            \subcaption{Few-shot least squares}
        \end{subfigure}
        \hfill
        \begin{subfigure}[b]{0.32\textwidth}
            \includegraphics[scale=0.33]{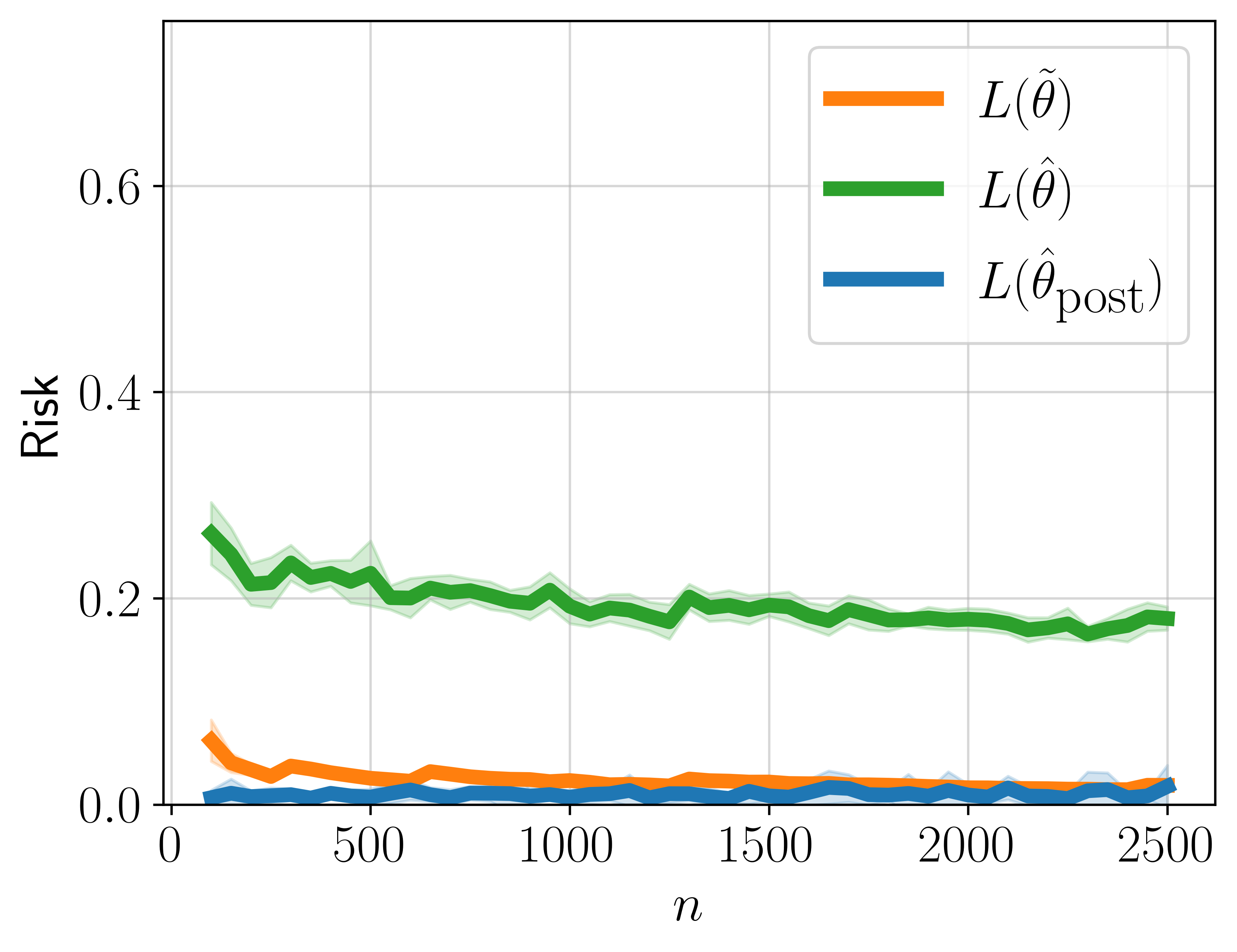}
            \subcaption{Regression risk}
        \end{subfigure}
        \hfill
        \caption*{\textbf{(i) Task shift for spiked covariance when regression generalizes.} We set $p=1.5$, $q=0.5$, and $r=0.25$ so that $q< 1-r$. In this regime, both our task shift estimator $\hvthetap$ and the regression MNI $\tvtheta$ generalize.}
        \label{fig:spiked_r25}
    \end{subfigure}
    \addtocounter{subfigure}{-3}
    \begin{subfigure}[b]{\textwidth}
        \centering
        \begin{subfigure}[b]{0.32\textwidth}
            \includegraphics[scale=0.33]{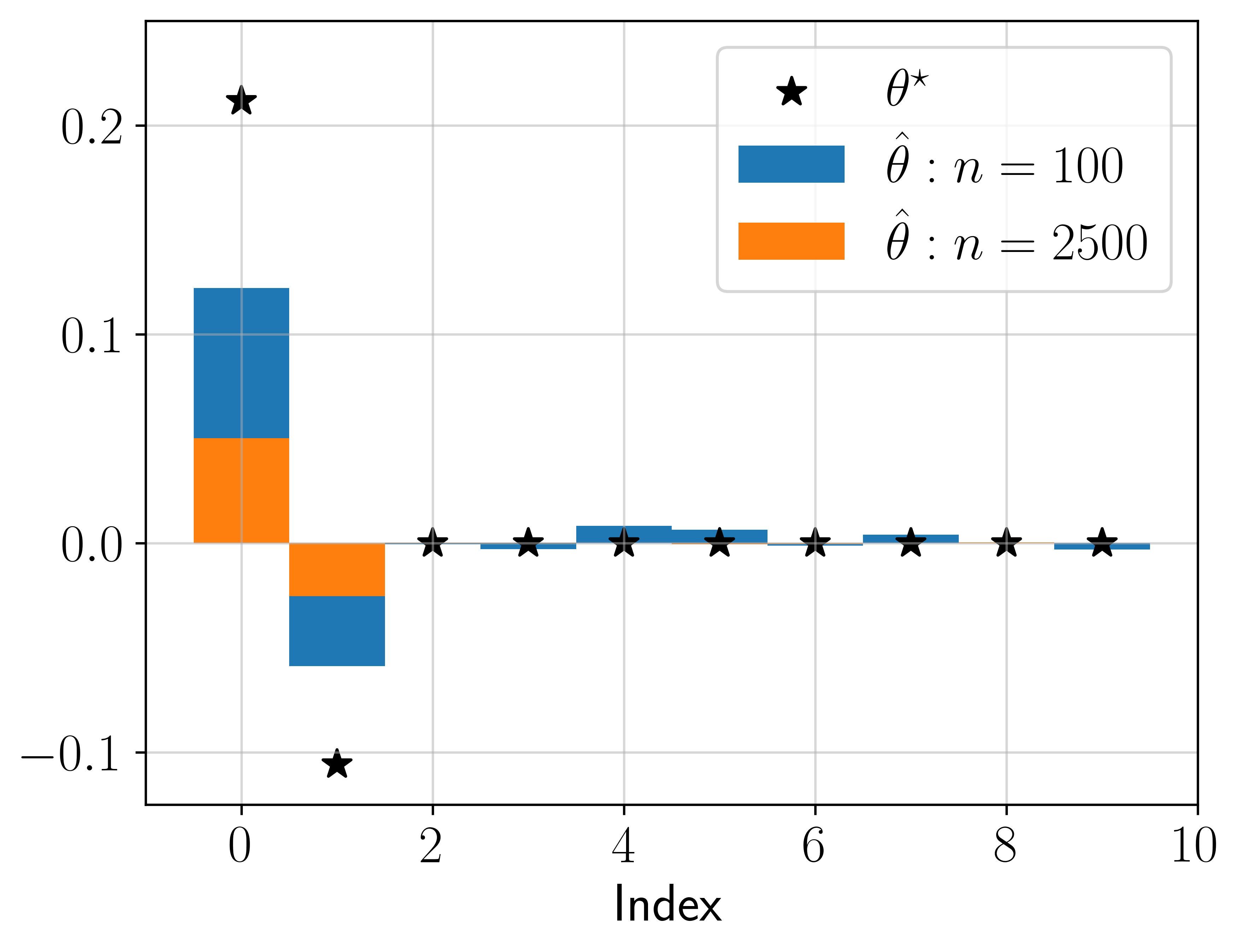}
            \subcaption{Support recovery}
        \end{subfigure}
        \hfill
        \begin{subfigure}[b]{0.32\textwidth}
            \includegraphics[scale=0.33]{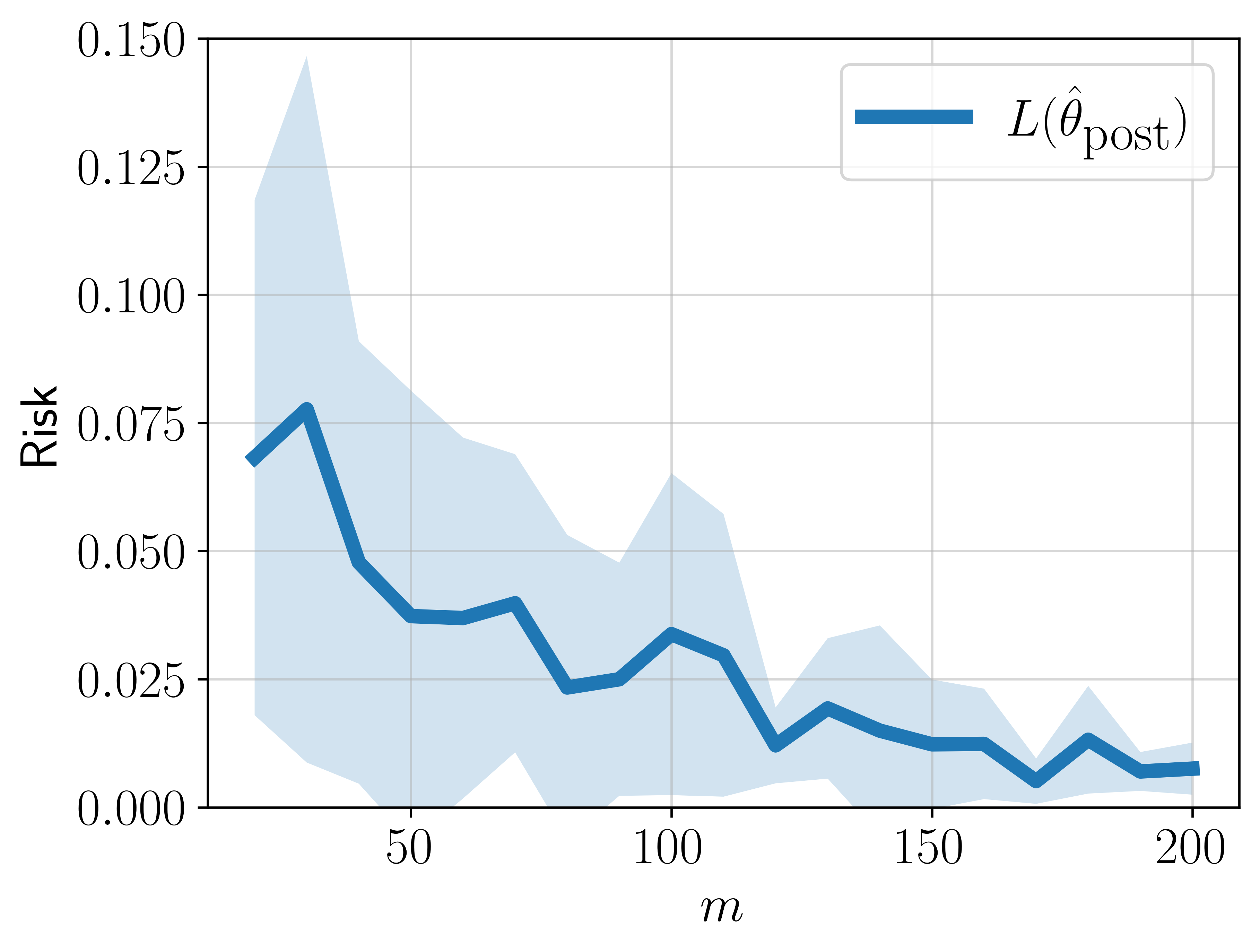}
            \subcaption{Few-shot least squares}
        \end{subfigure}
        \hfill
        \begin{subfigure}[b]{0.32\textwidth}
            \includegraphics[scale=0.33]{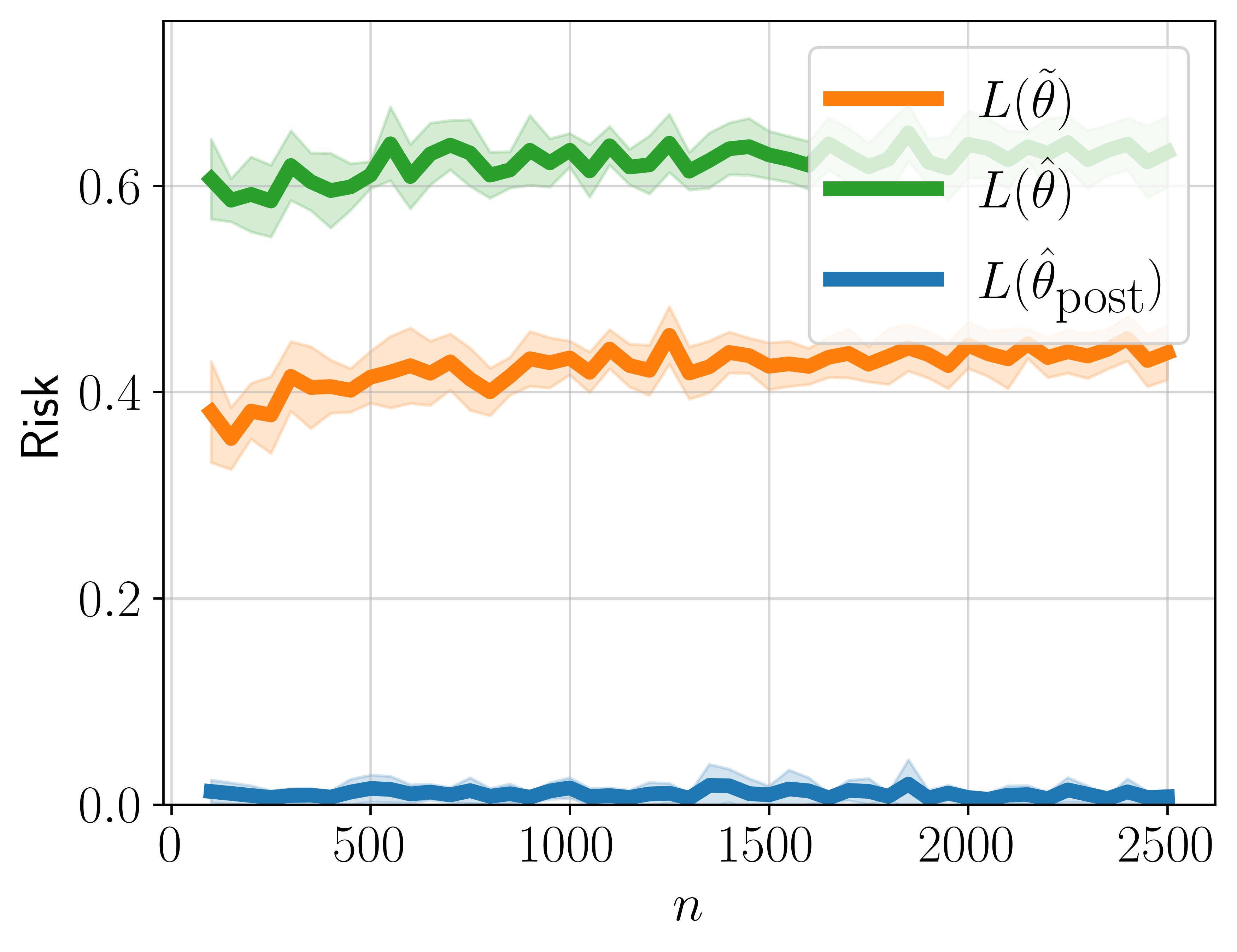}
            \subcaption{Regression risk}
        \end{subfigure}
        \hfill
        \caption*{\textbf{(ii) Task shift for spiked covariance when regression does not generalize.} We set $p=1.5$, $q=0.5$, and $r=0.55$ so that $q> 1-r$. In this regime, our task shift estimator $\hvthetap$ generalizes, but the regression MNI $\tvtheta$ does not. Note that $\sk=0$ here, so $\svtheta$ is necessarily supported outside the ``spike''.}
        \label{fig:spiked_r55}
    \end{subfigure}
    \caption{\textbf{Postprocessing achieves task shift even when minimum $\ell_2$-norm interpolation fails for both classification and regression.} The left column demonstrates the survival of $t$-sparse signal support components in the classification MNI $\hvtheta$ while non-support components decay quickly. The middle column shows the $\gO\pts{\frac{t}{m}}$ regression error of least-squares with reduction to $t$ dimensions using $m$ regression samples under standard Gaussian noise. Finally, the right column displays the regression risk of the classification MNI, regression MNI, and our postprocessed predictor. The signal $\svtheta$ is $2$-sparse with $a_1=1$ and $a_2=-0.5$ (see Assumption~\ref{asm:k_sparse}). The middle column fixes $n=2500$. We plot the mean and standard deviation over $10$ draws of the training dataset $\rmX$. See Appendix \ref{app:simulation} for additional simulations.
    }
    \label{fig:spike}
\end{figure*}

The first case of Theorem~\ref{thm:support_identification} utilizes Lemma~\ref{lem:k_sparse_SU_CN_bounds} to ensure the lower bound of support components of $\hvtheta$ is larger than the upper bound of non-support components of $\hvtheta$.
The two additional conditions in the first case of Theorem~\ref{thm:support_identification} are necessary to avoid scenarios wherein the support components of $\svtheta$ decay very quickly with $n$ (recall that $\sevtheta_j\coloneqq a_j\lambda_j^{-\frac{1}{2}}$, so a larger $\lambda_j$ implies faster decay).
Roughly, the first condition states that support eigenvalues should not be much larger than non-support eigenvalues in the top $\sk$ indices, and the second condition states that the top $\sk$ eigenvalues should not be much larger than the squared sum of the ``tail'' eigenvalues.
We analyze these conditions for specific covariance ensembles in Appendix~\ref{app:recover_support_proof}.
On the other hand, the second case of Theorem~\ref{thm:support_identification} does not require any additional assumptions, as we show $|\hevtheta_j|$ is lower bounded by $\lambda_j^{-\frac{1}{2}}$ if $j\in \mathcal{S}$ but decays at a faster rate if $j\notin \mathcal{S}$.

We provide the proof of Theorem~\ref{thm:support_identification} in Appendix~\ref{app:recover_support_proof}. We include extensions to cases wherein $\svtheta$ is supported outside the top $\sk$ indices of the covariance spectrum for spiked and polynomial decay models --- surprisingly including isotropic covariance, despite isotropy not being conducive to generalization even in classification tasks~\citep{muthukumar2021classification}.

\subsection{Least-squares on recovered support}
In this section, we leverage few-shot regression data $\{(\rvx_i^\prime,y^\prime_i)\}_{i=1}^m$ to recover the magnitude of the $t$-sparse signal $\svtheta$.
Since the support of $\svtheta$ has already been recovered, we employ a straightforward least-squares estimation technique considering only the $t$ components of each regression datum which lie in the support. 
Algorithm \ref{alg:post_processing} describes this method in detail.

We recall that the few-shot regression dataset is allowed to be noisy, \ie for some $\sigma^2\geq 0$, we may have $y^\prime=\rvx_i^{\prime \top} \svtheta + \rveps$ where $\rveps\sim\gN(\vzero,\sigma^2\mI)$.
Therefore, provided that $m \gg t$, Algorithm \ref{alg:post_processing} enjoys the standard least-squares guarantee of $\gO(\frac{t\sigma^2}{m})$ regression risk, or zero regression error with only $t$ noiseless samples.
Either of these imply the desired regression consistency, \ie task shift is achieved.

In Figure \ref{fig:spike}, we demonstrate the performance of our postprocessing procedure, combining Algorithm \ref{alg:recover_support} and Algorithm \ref{alg:post_processing}.
Notably, our task shift estimator $\hvtheta_{\tn{post}}$ generalizes even for covariance ensembles wherein minimum $\ell_2$-norm interpolation fails, \ie the regression MNI is statistically inconsistent with respect to regression labels.
In fact, Appendix~\ref{app:simulation} demonstrates that this success persists even when the classification MNI is inconsistent for classification tasks --- including the case of isotropic covariance.

A remaining question is whether we can recover the magnitudes of the support of $\hvtheta$ without \emph{any} few-shot regression data.
Lemma \ref{lem:k_sparse_SU_CN_bounds} implies that if the effective signal strength $\var$ is known, then a simple scaling procedure is sufficient.
Specifically, for all $j \in \gS$ we may set $\hevtheta_{\tn{post},j} = \hevtheta_j \sqrt{\frac{\pi}{2}\var}$, and $\hevtheta_{\tn{post},j} = 0$ otherwise.
Lemma \ref{lem:k_sparse_SU_CN_bounds} then directly implies that $\hvtheta_{\tn{post}}\to\svtheta$ as $n,d\to\infty$ as long as $b_j \to 1$ for all $j \in \gS$.
But while this approach is specialized to Gaussian covariates and noiseless classification data, Algorithm~\ref{alg:post_processing} is more robust to modeling assumptions, and we believe it can handle even unknown label noise and sub-Gaussian covariates by building on~\cite{cao2021risk,wang2023benign}.

\section{Discussion}
Our results paint a pessimistic picture for zero-shot task shift (perhaps as expected), but an optimistic one for the few-shot case.
Our key insight is that the attenuation of the classification MNI is surprisingly structured, which suggests one can get more ``mileage'' out of the MNI than previously known, including for few-shot task shift to regression.
A principal open question is whether there exist alternative formulations (\eg shifting from multiclass classification to regression) more conducive to zero-shot task shift.
More close-ended questions include providing a successful few-shot procedure in the absence of any sparsity or data covariance assumption and studying task shift for minimum $\ell_p$-norm estimators where $p \neq 2$.

While we use task shift in large language models (LLMs) only as a motivating example for our theoretical investigation, our work could inform future analyses of of LLMs and in-context learning. In particular, our survival and contamination analysis may be extended to the neural tangent kernel regime via recent frameworks for kernel interpolation, \eg\cite{mallinar2022benign,tsigler2023benign, kaushik2024new}.
Furthermore, recent work has characterized linear attention as high-dimensional linear regression under a specific data embedding~\citep{lu2024asymptotic}, which could be analyzed in our framework to explain few-shot task shift in linear Transformers.

\paragraph{Acknowledgments.}
We thank Jacob Abernethy for the compute assistance and anonymous reviewers for helpful feedback.
T.L.~acknowledges support from the DoD NDSEG Fellowship. K.L.~acknowledges support from an ARC-ACO Fellowship provided by Georgia Tech.
V.M.~acknowledges support from the NSF (awards
CCF-223915 and IIS-2212182), Adobe Research, and Amazon Research.

\clearpage

\printbibliography

\clearpage

\onecolumn
\appendix
\addcontentsline{toc}{section}{Appendix} %
\part{Appendix} %
\parttoc %

\newpage

\section{Expanded related work}\label{app:expanded_related_work}

We organize related work under four verticals.
\paragraph{Task shift \emph{vis-a-vis} transfer learning.}
Our problem formulation differs from the popular \emph{transfer learning} paradigm, which utilizes a pretrained representation on downstream tasks, \eg via finetuning or knowledge transfer.
Numerous works have analyzed the generalization of transfer learning, including for high-dimensional linear regression~\citep{dar2022deep} and general function classes~\citep{tripuraneni2020transfer}.
In the transfer learning literature, improved sample complexity guarantees are often provided as compared to learning each of the tasks from scratch; our few-shot results in Section \ref{sec:fewshot} also have this flavor at a high level.
In our \emph{task shift} setting, the ground-truth signal does not change between the tasks --- only the nature of the task in terms of its metric, \ie test loss function, changes.
Task shift is particularly compelling when the aim is to shift to a fundamentally harder task, as in this work where we transfer a classification estimator to a regression task.
Conversely, recent work formulating regression as a multi-class classification problem showed that despite discretization, the classification loss can aid in feature learning~\citep{stewart2023regression}.
Earlier, multi-label classification problems (\ie prediction of a vector-valued discrete output) were modeled as binary classification utilizing a sparsity assumption on the labels~\citep{hsu2009multi}.

\paragraph{Task shift \emph{vis-a-vis} distribution shift.}
Though ultimately very different, task shift shares some similarities with \emph{distribution shift}, wherein the training distribution $p$ and test distribution $q$ over the feature space $\gX$ and label space $\gY$ differ.
Task shift is not directly related to the common settings of \emph{covariate shift}, wherein $p(x)\neq q(x)$ but $p(y\mid x)=q(y\mid x)$, or \emph{label shift}, wherein $p(y)\neq q(y)$ but $p(x\mid y)=q(x\mid y)$.
In particular, recent results leveraging benign overfitting and random matrix theory to analyze covariate shift~\citep{tripuraneni2021overparameterization, mallinar2024minimum, lejeune2024monotonic} are generally inapplicable in our setting.
Task shift is more closely related to \emph{concept drift}, where $p(x) = q(x)$ but $p(y\mid x) \neq q(y\mid x)$, though concept drift is typically studied in the context of temporal changes~\citep{moreno2012unifying}.
Task shift also has similarities to generalized settings including the ``generalized target shift'' of~\cite{zhang2013domain}, in which $p(y)\neq q(y)$ and $p(x\mid y)$ changes with constraints, and the ``generalized label shift'' of~\cite{tatchet2020domain}, in which $p(y)\neq q(y)$ and $p(g(x)\mid y) = q(g(x)\mid y)$ for some representation $g$.
While these works assume that $\gX$ and $\gY$ remain constant during the distribution shift, in task shift we study different or even disjoint label spaces $\gY_{\tn{train}}$ and $\gY_{\tn{test}}$.
Similar models have been proposed, such as Disjoint Label Space Transfer Learning~\citep{chang2019disjoint}, wherein $\gY_{\tn{train}}$ and $\gY_{\tn{test}}$ are completely disjoint but share a common representation, and Open Set Label Shift~\citep{garg2022open}, wherein $p(y)\neq q(y)$ and new classes may arrive during test-time as long as $p(x\mid y)$ is constant for existing classes,
Yet, our task shift formulation goes one step further in that the objective function also changes.

\paragraph{One-bit compressive sensing.}
Our problem formulation for zero-shot estimation is deeply connected to the \emph{one-bit compressive sensing} problem~\citep{boufounos2008one}.
In this setting, the data matrix $\rmX$ is designed, the ground-truth signal $\svtheta \in \R^d$ is unknown but $t$-sparse, and the objective is signal recovery from $n \ll d$ quantized measurements of the form $\hat{y}_i = \sgn{\rvx_i^\top\svtheta}$. 
The most prominent difference between our frameworks is that the focus in one-bit compressive sensing is designing the \emph{optimal estimation procedure} with knowledge of the measurement process.
This procedure involves either solving a convex program to maximize the average margin on training data subject to a $\ell_1$-norm constraint~\citep{plan2012robust,awasthi2016learning,chinot2022adaboost} or combinatorial optimization routines~\citep{gopi2013one}.
In contrast, we have no control over the design of the estimator and assume \emph{de facto} the $\ell_2$-inductive bias, which due to the implicit bias of gradient descent is one of the most commonly observed in machine learning~\citep{soudry2018implicit,ji2019implicit}.
Accordingly, while one-bit compressive sensing tends to consider isotropic or near-isotropic ensembles, we consider a gamut of anisotropic ensembles which can be more favorable to the $\ell_2$-inductive bias\footnote{While, surprisingly, our few-shot guarantees hold even for the isotropic ensemble, the sample complexity is suboptimal compared to one-bit compressive sensing, requiring $n = d^{\frac{1}{p}}$ for some $p > 1$.}.
Moreover, one-bit compressive sensing allows the design of biases which leak information about not only the signal direction but also its magnitude.
For example, one can design \emph{known} biases $\bm{\tau}\in \R^n$ and modify the measurements to $\hat{y}_i = \sgn{\rvx_i^\top\svtheta + \tau_i}$~\citep{knudson2016one,dirksen2021non}.
We have no such flexibility in our framework --- any error terms that may arise would be of the form of unknown regression or classification label noise, and would therefore worsen our zero-shot lower bounds.
Nevertheless, our results in Theorem~\ref{thm:zero_shot_general_vacuous} show that even under the weaker $\ell_2$-inductive bias, positive results in the flavor of one-bit compressive sensing are possible: specifically, we can estimate the signal correctly \emph{only if} its total magnitude is known and given to be $\var = \frac{2}{\pi}$.

\paragraph{The $\ell_2$-inductive bias and benign overfitting.}
The minimum $\ell_2$-norm interpolator has been shown, for certain ``effectively high dimensional'' data covariances, to overfit noise in a \emph{benign} manner, meaning that the extra error arising from such interpolation can decay to zero as $n, d \to \infty$~\citep{bartlett2020benign, tsigler2023benign}.
Despite the possible absence of noise in our setting, we find effective dimension useful --- indeed, we propose an interpretation of the difference between regression and classification labels as effective noise.
The classical shortcoming with minimum $\ell_2$-norm solutions is not overfitting noise, but instead their propensity to \emph{attenuate} signal~\citep{chen2001atomic}; even for $1$-sparse $\svtheta = \ve_1$ and $d \gg n$ we would have $\hat{\theta}_1 \to 0$ for isotropic data covariance~\citep{hastie2022surprises}.
Statistically speaking, minimum $\ell_2$-norm solutions suffer from a constant bias and therefore inconsistent regression error even when trained on regression labels.
Despite this,~\cite{muthukumar2021classification, cao2021risk, wang2022binary} showed that for sufficiently anisotropic ensembles, one can achieve \emph{classification consistency on classification labels} when \emph{regression would not be consistent on regression labels}.
The key insight, developed primarily for $1$-sparse signal and spiked covariance models, was that the relative magnitude of the true feature $\hat{\theta}_1$ (quantified through a metric called \emph{survival}) is preserved with respect to the total magnitude of the ``false'' features $\{\hat{\theta}_j\}_{j = 2}^d$ (quantified through a metric called \emph{contamination}).

We substantially develop this insight to show that the relative feature magnitudes can be used to estimate the support of a sparse signal even when neither classification nor regression generalizes --- including isotropic covariance, the worst-case data model for minimum $\ell_2$-norm solutions --- which in turn enables few-shot regression consistency.
Our support recovery procedure is also generally applicable; it only requires a diagonal covariance and either the sparsity level $t$ or or the covariance spectrum $\{\lambda_j\}_{j=1}^d$ to be known.

\section{Key lemma: general survival and contamination bounds}\label{app:su_cn_bound_proof}
In this section, we provide the proof of Lemma~\ref{lem:k_sparse_SU_CN_bounds}: our extension of survival and contamination bounds to $t$-sparse signal $\svtheta$ and general covariance matrices $\cov$. In contrast to the $1$-sparse $\svtheta$ result of~\cite{muthukumar2021classification}, we consider a more general $t$-sparse $\svtheta$ setting. To overcome this technical obstacle, we utilize the leave-$k$-out technique of~\cite{wang2023benign} to complete the proof. For bounds on the survival term $\SU_{s_\ell}$, where $s_\ell$ is the $\ell^{th}$ element in $\gS$ for $\ell\in[t]$, we show the result when $\ell=1$ without loss of generality. Before we begin the proof of Lemma~\ref{lem:k_sparse_SU_CN_bounds}, we define the following notation for $Q_{s_\ell}$ and $\tilde{Q}_{s_\ell}$:
\begin{align}
    Q_{s_\ell} &\coloneqq \rvz_{s_1}^\top \rmA_{-s_1:s_\ell}^{-1}\hrvy\label{eq:Q_def}  \\
    \tilde{Q}_{s_\ell} &\coloneqq \rvz_{s_1}^\top \rmA_{-s_1:s_\ell}^{-1}\rvz_{s_1}\label{eq:tQ_def},
\end{align}
 where we denote shorthand notation $\rmA\coloneqq \XX$ and the \emph{leave-$\ell$-out} matrices corresponding to $\rmA_{-s_1:s_\ell} \coloneqq \sum_{j=1,j\neq s_1,\cdots,s_\ell}^d\lambda_j\rvz_j\rvz_j^\top$.  We also need the following auxiliary lemmas which are generalized from lemmas in~\cite{wang2023benign}. Note that lemmas in~\cite{wang2023benign} only apply to leave-\emph{top $k$}-out matrices, while we generalize them into leave-\emph{discrete $t$}-out matrices. Proofs of these lemmas can be found in Appendix~\ref{app:su_cn_auxiliary}.
\begin{lemma}[Generalization of Lemma 15 in~\cite{wang2023benign}]\label{lem:Q_k_bounds} Under Assumptions~\ref{asm:rank} and~\ref{asm:k_sparse}, for large enough $n$, we have
    \begin{align*}
        Q_{s_t} &\leq \sgn{a_{s_1}}\sqrt{\frac{2\beta_{s_1}}{\pi}}\Tr\pts{\rmA_{-s_1:s_t}^{-1}} + 2c_1\norm{\rmA_{-s_1:s_t}^{-1}}_2 \cdot n^{\frac{1}{2}+\epsilon}, \\
        Q_{s_t} &\geq \sgn{a_{s_1}}\sqrt{\frac{2\beta_{s_1}}{\pi}}\Tr\pts{\rmA_{-s_1:s_t}^{-1}} - 2c_1\norm{\rmA_{-s_1:s_t}^{-1}}_2 \cdot n^{\frac{1}{2}+\epsilon},
    \end{align*}
    where $\beta_{s_\ell} \coloneqq \frac{\lambda_{s_\ell}\sevthetas_{s_\ell}}{\sum_{j\in\gS} \lambda_j\sevthetas_j} = \frac{a_{s_\ell}^2}{\sum_{j\in\gS} a_j^2}$ for $\ell\in[t]$, with probability at least $1-2e^{-n^{2\epsilon}}$.
\end{lemma}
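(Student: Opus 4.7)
}
The plan is to reparameterize so that $\hrvy$ becomes the sign of a single standard Gaussian vector, then orthogonally decompose $\rvz_{s_1}$ into a component parallel to that vector and a component independent of it. This reduces the $t$-sparse setup to a one-dimensional signal-plus-noise problem to which we can apply concentration inequalities, mirroring the $1$-sparse argument of~\cite{wang2023benign} but accounting for the extra $t-1$ directions.

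First, I define $\rvu \coloneqq \frac{1}{\sqrt{\sum_{j\in\gS} a_j^2}} \sum_{\ell=1}^t a_{s_\ell} \rvz_{s_\ell}$. Since the $\rvz_{s_\ell}$ are i.i.d.\ $\gN(\vzero,\rmI_n)$, we have $\rvu \sim \gN(\vzero,\rmI_n)$ and $\hrvy = \sgn{\rvu}$. Computing the Gaussian regression coefficient $\mathrm{Cov}(\rvz_{s_1},\rvu) = a_{s_1}/\sqrt{\sum_{j\in\gS} a_j^2}$, we obtain the orthogonal decomposition $\rvz_{s_1} = \sgn{a_{s_1}}\sqrt{\beta_{s_1}}\,\rvu + \rvv$, where $\rvv \sim \gN(\vzero,(1-\beta_{s_1})\rmI_n)$ is independent of $\rvu$. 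Crucially, $M \coloneqq \rmA_{-s_1:s_t}^{-1}$ is independent of $\rvz_{s_1},\ldots,\rvz_{s_t}$, hence also of $\rvu$ and $\rvv$. Substituting,
\begin{equation*}
Q_{s_t} \;=\; \rvz_{s_1}^\top M \sgn{\rvu} \;=\; \sgn{a_{s_1}}\sqrt{\beta_{s_1}}\cdot \rvu^\top M \sgn{\rvu} \;+\; \rvv^\top M \sgn{\rvu}.
\end{equation*}

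Second, I would show the signal term concentrates around $\sqrt{2/\pi}\,\Tr(M)$. Splitting $\rvu^\top M \sgn{\rvu} = \sum_i M_{ii}|u_i| + \sum_{i\neq j} M_{ij} u_i \sgn{u_j}$, the diagonal part is a sum of independent sub-Gaussian summands with mean $M_{ii}\E|u_i| = \sqrt{2/\pi}\,M_{ii}$, so a Bernstein/sub-exponential tail bound yields deviation $\lesssim \|M\|_2\,n^{1/2+\epsilon}$ with probability at least $1-e^{-cn^{2\epsilon}}$ (using $\sum_i M_{ii}^2 \leq n\|M\|_2^2$). For the off-diagonal part, condition on $\sgn{\rvu}$: since $\E[u_i \mid \sgn{u_j}] = 0$ for $i\neq j$, it is a mean-zero quadratic form in conditionally-independent half-Gaussians and concentrates around zero at scale $\|M\|_F \lesssim \|M\|_2\sqrt{n}$ by Hanson--Wright, again at probability $1-e^{-cn^{2\epsilon}}$. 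Multiplying by $\sgn{a_{s_1}}\sqrt{\beta_{s_1}} \leq 1$ gives the desired scaling, with the key identity $\sqrt{\beta_{s_1}}\cdot\sqrt{2/\pi} = \sqrt{2\beta_{s_1}/\pi}$.

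Third, bound the noise term. Conditional on $\rvu$ and $M$, $\rvv^\top M \sgn{\rvu}$ is Gaussian with mean zero and variance $(1-\beta_{s_1})\|M\sgn{\rvu}\|_2^2 \leq \|M\|_2^2\,n$, so a standard Gaussian tail bound gives $|\rvv^\top M \sgn{\rvu}| \leq \|M\|_2\, n^{1/2+\epsilon}$ with probability at least $1-e^{-cn^{2\epsilon}}$. Taking a union bound over the three deviation events and absorbing constants into a universal $c_1$ yields both the upper and lower bounds with probability at least $1 - 2e^{-n^{2\epsilon}}$ for $n$ large enough.

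\paragraph{Main obstacle.} The delicate step is concentrating $\rvu^\top M \sgn{\rvu}$ around $\sqrt{2/\pi}\,\Tr(M)$: the integrand is not Lipschitz in $\rvu$, so the standard Gaussian concentration-for-Lipschitz machinery does not apply directly. The diagonal/off-diagonal split bypasses this, but requires carefully handling the coupling between $\rvu$ and $\sgn{\rvu}$ in the off-diagonal term; conditioning on $\sgn{\rvu}$ and invoking Hanson--Wright on the resulting quadratic form is what makes the bound come out at the correct $n^{1/2+\epsilon}$ rate rather than a looser $n$-type rate. The rest of the argument is a straightforward generalization of the $1$-sparse analysis: the only real change is that the effective ``signal direction'' $\rvu$ now absorbs contributions from all $t$ support coordinates, which is exactly what produces the $\beta_{s_1}$ factor.
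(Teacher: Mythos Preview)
Your approach is correct but genuinely different from the paper's. The paper applies the polarization (``parallelogram'') identity and Hanson--Wright \emph{directly} to the bilinear form $Q_{s_t}=\rvz_{s_1}^\top \rmA_{-s_1:s_t}^{-1}\hrvy$: since each row of $(\rvz_{s_1}\pm\hrvy)$ depends only on that row's $\{z_{j,i}\}_{j\in\gS}$, the entries are i.i.d.\ mean-zero sub-Gaussian and Hanson--Wright gives the $\|M\|_2 n^{1/2+\epsilon}$ deviation in one shot. The paper then computes the expectation $\E[\sgn{\sum_{j\in\gS}a_j z_j}\,z_{s_1}]=\sgn{a_{s_1}}\sqrt{2\beta_{s_1}/\pi}$ by an explicit double Gaussian integral. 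Your orthogonal decomposition $\rvz_{s_1}=\sgn{a_{s_1}}\sqrt{\beta_{s_1}}\,\rvu+\rvv$ instead \emph{reduces} the $t$-sparse problem to the $1$-sparse quantity $\rvu^\top M\sgn{\rvu}$ plus an independent Gaussian noise term; this is more modular and makes the origin of $\beta_{s_1}$ (the squared projection coefficient onto the signal direction) geometrically transparent, at the cost of three concentration steps rather than one.

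One small imprecision: after conditioning on \emph{all} of $\sgn{\rvu}$, the $u_i$ are half-Gaussians with mean $\sqrt{2/\pi}\,\sgn{u_i}\neq 0$, so the off-diagonal sum is not literally a mean-zero quadratic form as you describe. The cleanest fix---and what the paper effectively does---is to skip the diagonal/off-diagonal split and apply polarization to $\rvu^\top M\sgn{\rvu}$ directly: each entry of $\rvu\pm\sgn{\rvu}$ is an i.i.d.\ mean-zero sub-Gaussian function of $u_i$ alone, so Hanson--Wright yields the desired $\|M\|_2 n^{1/2+\epsilon}$ deviation around $\E[\rvu^\top M\sgn{\rvu}]=\sqrt{2/\pi}\,\Tr(M)$ immediately.
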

\begin{lemma}[Generalization of Lemma 16 in~\cite{wang2023benign}]\label{lem:Q_1_bounds} We have that $Q_{s_1}$ is tight up to an additive factor in $Q_{s_t}$ as
    \begin{align*}
        \pts{1 - \sgn{a_{s_1}}\frac{c_4 t}{c_3n^{\frac{1}{2}-\epsilon}-1}}Q_{s_t} \leq Q_{s_1} \leq  \pts{1 + \sgn{a_{s_1}}\frac{c_4 t}{c_3n^{\frac{1}{2}-\epsilon}-1}}Q_{s_t}
    \end{align*}
    with probability at least $1-cte^{-n^{2\epsilon}}$.
\end{lemma}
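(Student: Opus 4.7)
The plan is to express $Q_{s_1} - Q_{s_t}$ as a telescoping sum of rank-one Sherman--Morrison corrections, bound each correction by Gaussian concentration, and compare the total to $|Q_{s_t}|$ using Lemma~\ref{lem:Q_k_bounds}. Concretely, since $\rmA_{-s_1:s_{\ell-1}} = \rmA_{-s_1:s_\ell} + \lambda_{s_\ell}\rvz_{s_\ell}\rvz_{s_\ell}^\top$, writing $M_\ell \coloneqq \rmA_{-s_1:s_\ell}^{-1}$ the Sherman--Morrison identity gives
\begin{equation*}
Q_{s_{\ell-1}} - Q_{s_\ell} \;=\; -\frac{\lambda_{s_\ell} \pts{\rvz_{s_1}^\top M_\ell \rvz_{s_\ell}}\pts{\rvz_{s_\ell}^\top M_\ell \hrvy}}{1 + \lambda_{s_\ell}\rvz_{s_\ell}^\top M_\ell \rvz_{s_\ell}},
\end{equation*}
so summing over $\ell = 2, \ldots, t$ yields a telescoping decomposition of $Q_{s_1} - Q_{s_t}$.

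For each correction, I would exploit that $\rvz_{s_1}$ and $\rvz_{s_\ell}$ are independent of $M_\ell$ (both indices are excluded from $\rmA_{-s_1:s_\ell}$) and of each other. First, Hanson--Wright applied to the bilinear form $\rvz_{s_1}^\top M_\ell \rvz_{s_\ell}$ yields $|\rvz_{s_1}^\top M_\ell \rvz_{s_\ell}| \lesssim \norm{M_\ell}_F \cdot n^\epsilon$ with probability at least $1 - 2e^{-n^{2\epsilon}}$. Second, $\rvz_{s_\ell}^\top M_\ell \hrvy$ has the structure of $Q_{s_t}$ with $s_1$ and $s_\ell$ interchanged, so Lemma~\ref{lem:Q_k_bounds} applies verbatim to give $|\rvz_{s_\ell}^\top M_\ell \hrvy| \lesssim \sqrt{2\beta_{s_\ell}/\pi}\,\Tr\pts{M_\ell} + \norm{M_\ell}_2 \cdot n^{\frac{1}{2}+\epsilon}$. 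Third, Hanson--Wright on the quadratic form gives $1 + \lambda_{s_\ell}\rvz_{s_\ell}^\top M_\ell \rvz_{s_\ell} \geq 1 + (1-o(1))\lambda_{s_\ell}\Tr\pts{M_\ell}$. Combining these, and using Lemma~\ref{lem:Q_k_bounds} once more to identify the leading term $|Q_{s_t}| = \Theta\pts{\sqrt{\beta_{s_1}}\Tr\pts{M_t}}$, each correction satisfies
\begin{equation*}
|Q_{s_{\ell-1}} - Q_{s_\ell}| \;\lesssim\; \frac{\norm{M_\ell}_F}{\Tr\pts{M_\ell}} \cdot n^\epsilon \cdot \sqrt{\frac{\beta_{s_\ell}}{\beta_{s_1}}} \cdot |Q_{s_t}|.
\end{equation*}

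Under Assumption~\ref{asm:rank}, $\norm{M_\ell}_F / \Tr\pts{M_\ell} = O(1/\sqrt{n})$, so each step contributes $O(n^{\epsilon - 1/2})$ relative error; summing the $t-1$ corrections yields $|Q_{s_1} - Q_{s_t}| \leq \delta |Q_{s_t}|$ with $\delta$ of the form $\tfrac{c_4 t}{c_3 n^{1/2-\epsilon}-1}$ (the additive $-1$ in the denominator is inherited from carrying the $1$ in the Sherman--Morrison denominator $1 + \lambda_{s_\ell}\Tr\pts{M_\ell}$ through the calculation, and the $\beta_{s_\ell}/\beta_{s_1}$ ratios are absorbed into $c_3, c_4$ since $\svtheta$ is fixed and the support is finite). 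Because $\sgn{Q_{s_t}} = \sgn{a_{s_1}}$ by Lemma~\ref{lem:Q_k_bounds}, the absolute bound $|Q_{s_1} - Q_{s_t}| \leq \delta |Q_{s_t}|$ unfolds into the two-sided interval $\pts{1 - \sgn{a_{s_1}}\delta} Q_{s_t} \leq Q_{s_1} \leq \pts{1 + \sgn{a_{s_1}}\delta} Q_{s_t}$ stated in the lemma. A union bound over the $t-1$ Sherman--Morrison concentration events (each succeeding with probability at least $1 - 2e^{-n^{2\epsilon}}$) gives the overall failure probability $cte^{-n^{2\epsilon}}$.

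The main obstacle is that $\hrvy$ depends on every support vector $\rvz_{s_1}, \ldots, \rvz_{s_t}$, so at the Sherman--Morrison step for index $s_\ell$ the vector $\rvz_{s_\ell}$ appears both explicitly (as the update direction) and implicitly through $\hrvy$. The crucial observation making the proof go through is that $\rvz_{s_\ell}^\top M_\ell \hrvy$ has exactly the same dependence structure as $Q_{s_t} = \rvz_{s_1}^\top M_t \hrvy$ in Lemma~\ref{lem:Q_k_bounds}---the coordinate-wise Gaussian integration $\E\mts{\ervz_{s_\ell,i}\,\sgn{\sum_k a_{s_k} \ervz_{s_k,i}}} = \sgn{a_{s_\ell}}\sqrt{2\beta_{s_\ell}/\pi}$ is symmetric under the relabeling $s_1 \leftrightarrow s_\ell$. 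This allows Lemma~\ref{lem:Q_k_bounds} to be reused as a black box for each $\ell$, rather than re-deriving its Gaussian concentration machinery inside the telescoping sum.
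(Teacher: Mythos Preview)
Your telescoping Sherman--Morrison decomposition and the split into the bilinear factor $\rvz_{s_1}^\top M_\ell \rvz_{s_\ell}$ and the factor $\rvz_{s_\ell}^\top M_\ell \hrvy$ match the paper exactly, as does the use of Lemma~\ref{lem:Q_k_bounds} to lower bound $|Q_{s_t}|$ and identify its sign. The gap is in your treatment of $\rvz_{s_\ell}^\top M_\ell \hrvy$. You claim Lemma~\ref{lem:Q_k_bounds} applies ``verbatim'' under the relabeling $s_1 \leftrightarrow s_\ell$, but that relabeling would produce $\rvz_{s_\ell}^\top M_t \hrvy$ (same leave-\emph{all}-$t$-out matrix), not $\rvz_{s_\ell}^\top M_\ell \hrvy$. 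For $\ell < t$ the matrix $M_\ell = \rmA_{-s_1:s_\ell}^{-1}$ still contains $\rvz_{s_{\ell+1}},\ldots,\rvz_{s_t}$, which also sit inside $\hrvy = \sgn{\sum_{k\in\gS} a_k\rvz_k}$. The proof of Lemma~\ref{lem:Q_k_bounds} needs the matrix to be independent of \emph{both} vectors in the bilinear form in order to pull it out of the expectation and invoke Hanson--Wright; here that independence fails. Your ``crucial observation'' about the coordinate-wise integral $\E\mts{\ervz_{s_\ell,i}\,\sgn{\sum_k a_{s_k}\ervz_{s_k,i}}}$ addresses the dependence between $\rvz_{s_\ell}$ and $\hrvy$, but not the dependence between $M_\ell$ and $\hrvy$.

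The paper sidesteps this entirely: it never invokes concentration on $\rvz_{s_\ell}^\top M_\ell \hrvy$. Instead it groups the $\ell$-th correction as $T_1\cdot T_2$ with $T_1 = |\rvz_{s_1}^\top M_\ell \rvz_{s_\ell}|/|Q_{s_t}|$ and $T_2 = |\rvz_{s_\ell}^\top M_\ell \hrvy|/(\rvz_{s_\ell}^\top M_\ell \rvz_{s_\ell})$, and bounds the numerator of $T_2$ by the crude deterministic inequality $|\rvz_{s_\ell}^\top M_\ell \hrvy| \leq \norm{\rvz_{s_\ell}}_2 \norm{M_\ell}_2 \norm{\hrvy}_2$, which needs no independence. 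With $\norm{\rvz_{s_\ell}}_2 \lesssim \sqrt{n}$, $\norm{\hrvy}_2 = \sqrt{n}$, and Hanson--Wright on the denominator (legitimate since $\rvz_{s_\ell}$ \emph{is} independent of $M_\ell$), one gets $T_2 \leq c_4$ after comparing $\norm{M_\ell}_2$ and $\Tr\pts{M_\ell}$ to those of $\rmA_{-\pts{s_1:s_t}\cup[\sk]}^{-1}$ via Lemmas~\ref{lem:trace_lower_bound} and~\ref{lem:eig_constant}. The factor $T_1$ is handled exactly as you propose, giving $T_1 \leq (c_3 n^{\frac{1}{2}-\epsilon}-1)^{-1}$, and summing the $t-1$ terms yields the statement. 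Your route can be repaired by further peeling $M_\ell$ down to $M_t$ via more Sherman--Morrison steps before applying Lemma~\ref{lem:Q_k_bounds}, but that is no longer ``verbatim'' and the paper's norm bound is simpler.
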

\begin{lemma}[Generalization of Lemma 21 in~\cite{wang2023benign}]\label{lem:cn_const}
    Define $\crvy_{s_\ell} \coloneqq \crvy_{s_{\ell-1}} - \frac{\lambda_{s_\ell}\rvz_{s_\ell}^\top\rmA_{-s_1:s_\ell}^{-1}\crvy_{s_{\ell-1}}}{1+\lambda_{s_\ell}\rvz_{s_\ell}^\top\rmA_{-s_1:s_\ell}^{-1}\rvz_{s_\ell}}\rvz_{s_\ell}$ and $\crvy_{s_0} = \hrvy$ for $\ell\in[t]$. Then,
    \begin{align*}
        \left|\frac{\lambda_{s_\ell}\rvz_{s_\ell}^\top\rmA_{-s_1:s_\ell}^{-1}\crvy_{s_{\ell-1}}}{1+\lambda_{s_\ell}\rvz_{s_\ell}^\top\rmA_{-s_1:s_\ell}^{-1}\rvz_{s_\ell}}\right| \leq c
    \end{align*}
    with probability at least $1-ct^2e^{-n^{2\epsilon}}$.
\end{lemma}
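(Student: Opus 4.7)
My plan is to collapse the target ratio into a quantity that is already controlled by Lemmas~\ref{lem:Q_k_bounds} and~\ref{lem:Q_1_bounds}. The main idea is that the recursion defining $\crvy_{s_\ell}$ has a clean closed form, namely
$$\crvy_{s_\ell} = \rmA_{-s_1:s_\ell}\,\rmA^{-1}\hrvy \quad \text{for all } \ell \in \{0,1,\ldots,t\},$$
which I would prove by induction on $\ell$. The base case is trivial since $\rmA_{-\emptyset}=\rmA$. For the inductive step, write $\rmA_{-s_1:s_{\ell-1}} = \rmA_{-s_1:s_\ell} + \lambda_{s_\ell}\rvz_{s_\ell}\rvz_{s_\ell}^\top$ and apply the Sherman--Morrison identity to $\rmA_{-s_1:s_{\ell-1}}^{-1}$; one verifies that $\rmA_{-s_1:s_\ell}\rmA_{-s_1:s_{\ell-1}}^{-1}\crvy_{s_{\ell-1}}$ coincides with the defining update for $\crvy_{s_\ell}$, and the induction hypothesis closes the argument.

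\textbf{Collapsing the ratio.} With the closed form in hand, the identity $\rmA_{-s_1:s_\ell}^{-1}\rmA_{-s_1:s_{\ell-1}} = \rmI + \lambda_{s_\ell}\rmA_{-s_1:s_\ell}^{-1}\rvz_{s_\ell}\rvz_{s_\ell}^\top$ yields
$$\rvz_{s_\ell}^\top\rmA_{-s_1:s_\ell}^{-1}\crvy_{s_{\ell-1}} = \pts{1+\lambda_{s_\ell}\rvz_{s_\ell}^\top\rmA_{-s_1:s_\ell}^{-1}\rvz_{s_\ell}}\,\rvz_{s_\ell}^\top\rmA^{-1}\hrvy,$$
so the denominator of the target ratio cancels exactly, leaving
$$\frac{\lambda_{s_\ell}\rvz_{s_\ell}^\top\rmA_{-s_1:s_\ell}^{-1}\crvy_{s_{\ell-1}}}{1+\lambda_{s_\ell}\rvz_{s_\ell}^\top\rmA_{-s_1:s_\ell}^{-1}\rvz_{s_\ell}} = \lambda_{s_\ell}\rvz_{s_\ell}^\top\rmA^{-1}\hrvy = a_{s_\ell}\SU_{s_\ell}$$
by Definition~\ref{def:su_cn}. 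The problem is thus reduced to bounding $|a_{s_\ell}\SU_{s_\ell}|$ by a universal constant with high probability, uniformly in $\ell\in[t]$.

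\textbf{Finishing and main obstacle.} For the last step I would repeat the survival analysis used inside Lemma~\ref{lem:k_sparse_SU_CN_bounds}: one more application of Sherman--Morrison (after relabeling $s_\ell$ as the ``first'' index of the subset $\gS$) expresses $\lambda_{s_\ell}\rvz_{s_\ell}^\top\rmA^{-1}\hrvy$ as a ratio of the leave-one-out quantities in~\eqref{eq:Q_def} and~\eqref{eq:tQ_def}, and then Lemmas~\ref{lem:Q_k_bounds} and~\ref{lem:Q_1_bounds} give $|\SU_{s_\ell}| \leq \sqrt{2/(\pi\var)}\,(1+o(1))$ with probability at least $1-cte^{-n^{2\epsilon}}$ for each fixed $\ell$. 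Since $a_{s_\ell}^2 \leq \var$ by Assumption~\ref{asm:k_sparse}, the product satisfies $|a_{s_\ell}\SU_{s_\ell}| \leq \sqrt{2/\pi}$, which is absorbed into the constant $c$, and a union bound over $\ell\in[t]$ promotes the failure probability to the stated $ct^2e^{-n^{2\epsilon}}$. The principal obstacle is spotting the telescoping identity for $\crvy_{s_\ell}$: without it, $\crvy_{s_{\ell-1}}$ depends on $\rvz_{s_\ell}$ through $\hrvy$, so the numerator is not a clean bilinear form in independent pieces and the intricate factor in the denominator has no obvious partner; with it, that factor pairs perfectly against a hidden factor in the numerator, and the whole expression reduces to an object already controlled in the survival analysis.
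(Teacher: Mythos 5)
Your proposal is correct, and the route is genuinely different from — and cleaner than — the paper's. The paper proves Lemma~\ref{lem:cn_const} by induction on $\ell$: for the base case $\ell=1$ it bounds the ratio directly via Hanson--Wright and the auxiliary trace/eigenvalue lemmas, and for the inductive step it unrolls $\crvy_{s_{\ell-1}}$ into $\hrvy$ minus a sum of $\ell-1$ rank-one corrections, applies the triangle inequality, invokes the induction hypothesis on the coefficients, and then controls the remaining cross terms $|\rvz_{s_\ell}^\top\rmA_{-s_1:s_\ell}^{-1}\rvz_{s_j}| / (\rvz_{s_\ell}^\top\rmA_{-s_1:s_\ell}^{-1}\rvz_{s_\ell})$ by $\mathcal{O}(n^{-(1/2-\epsilon)})$. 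Your approach sidesteps all of that by observing the deterministic telescoping identity $\crvy_{s_\ell} = \rmA_{-s_1:s_\ell}\rmA^{-1}\hrvy$ (which I verified by the Sherman--Morrison induction you outline; it is also implicit in the paper's own Equation~(\ref{eq:htheta_nonsupport2}), where they use the $\ell=t$ case). The consequence $\rmA_{-s_1:s_\ell}^{-1}\rmA_{-s_1:s_{\ell-1}} = \mI + \lambda_{s_\ell}\rmA_{-s_1:s_\ell}^{-1}\rvz_{s_\ell}\rvz_{s_\ell}^\top$ then cancels the denominator exactly, leaving $\lambda_{s_\ell}\rvz_{s_\ell}^\top\rmA^{-1}\hrvy = a_{s_\ell}\SU_{s_\ell}$. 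Equivalently, one can apply Lemma~\ref{lem:benign_woodbury} to $\rmA_{-s_1:s_{\ell-1}}^{-1}\rvz_{s_\ell}$ and recognize the target as $\lambda_{s_\ell}\rvz_{s_\ell}^\top\rmA_{-s_1:s_{\ell-1}}^{-1}\crvy_{s_{\ell-1}}$, then collapse by the closed form in one line.

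Two small points worth being explicit about. First, there is a potential circularity you correctly avoid: the \emph{contamination} bound in Lemma~\ref{lem:k_sparse_SU_CN_bounds} depends on Lemma~\ref{lem:cn_const}, but the \emph{survival} bound does not --- it rests only on Lemmas~\ref{lem:Q_k_bounds} and~\ref{lem:Q_1_bounds} and the trace/eigenvalue lemmas, all proved independently --- so invoking only the survival part is legitimate and should be stated as such. Second, for the final step, the cleanest way to get a uniform constant is to note that the fraction $\frac{\lambda_j (c_3 n / \tlambda_1 r_0(\cov_{-\bgS}))}{1 + \lambda_j (n / c_4 \tlambda_1 r_0(\cov_{-\bgS}))}$ in the survival upper bound is uniformly $\leq c_3 c_4$ regardless of $\lambda_j$; combined with $\SU_j \geq 0$ from the lower bound and $|a_{s_\ell}| \leq \sqrt{\var}$ from Assumption~\ref{asm:k_sparse}, this yields $|a_{s_\ell}\SU_{s_\ell}| \leq c_3 c_4 \sqrt{2/\pi}$, a dimension-free constant, without appeal to any $o(1)$ asymptotics. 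Your union bound over $\ell\in[t]$ then matches the stated failure probability $ct^2 e^{-n^{2\epsilon}}$. In short: the approach buys a much shorter, algebraically transparent proof that reduces the lemma to quantities already controlled by the survival analysis, at the modest cost of needing to articulate the closed form for $\crvy_{s_\ell}$ explicitly.
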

Now we can prove Lemma~\ref{lem:k_sparse_SU_CN_bounds}.
\begin{proof}(Lemma~\ref{lem:k_sparse_SU_CN_bounds})
    We start with the survival upper and lower bound. By the survival definition in Definition~\ref{def:su_cn}, we can write $\SU_{s_1}$ as
    \begin{align}
        \SU_{s_1} &= \frac{\lambda_{s_1}}{a_{s_1}}\rvz_{s_1}^\top\rmA^{-1}\hrvy\nonumber\\
        &= \frac{\lambda_{s_1}}{a_{s_1}}\rvz_{s_1}^\top\pts{\rmA_{-s_1:s_1}^{-1} - \frac{\lambda_{s_1}\rmA_{-s_1:s_1}^{-1}\rvz_{s_1}\rvz_{s_1}^\top\rmA_{-s_1:s_1}^{-1}}{1+ \lambda_{s_1}\rvz_{s_1}^\top\rmA_{-s_1:s_1}^{-1}\rvz_{s_1}}}\hrvy\nonumber\\
        &= \frac{\lambda_{s_1}}{a_{s_1}}\frac{\rvz_{s_1}^\top\rmA_{-s_1:s_1}^{-1}\hrvy}{1+\lambda_{s_1}\rvz_{s_1}^\top\rmA_{-s_1:s_1}^{-1}\rvz_{s_1}} \nonumber \\
        &= \frac{\lambda_{s_1}}{a_{s_1}}\frac{Q_{s_1}}{1+\lambda_{s_1}\tilde{Q}_{s_1}} \nonumber \\
        &= \frac{\lambda_{s_1}}{|a_{s_1}|}\frac{\sgn{a_{s_1}}Q_{s_1}}{1+\lambda_{s_1}\tilde{Q}_{s_1}}\label{eq:su_term},
    \end{align}
    where we apply the Sherman-Morrison-Woodbury identity in the second equality. Next, recall that $\tilde{Q}_{s_\ell} \coloneqq \rvz_{s_1}^\top \rmA_{-s_1:s_\ell}^{-1}\rvz_{s_1}$ in~\Eqref{eq:tQ_def}, and by the Hanson-Wright inequality (Lemma~\ref{lem:hanson_wright_ineq}), we have 
    \begin{align*}
        \tilde{Q}_{s_1} &\leq \Tr\pts{\rmA_{-s_1:s_1}^{-1}} + c_1\norm{\rmA_{-s_1:s_1}^{-1}}_2 \cdot n^{\frac{1}{2}+\epsilon}\\
        &\leq \Tr\pts{\rmA_{-\pts{s_1:s_t}\cup[\sk]}^{-1}} + c_1\norm{\rmA_{-\pts{s_1:s_t}\cup[\sk]}^{-1}}_2 \cdot n^{\frac{1}{2}+\epsilon},
    \end{align*}
    where the second inequality follows because $\rmA_{-s_1:s_1}\succeq\rmA_{-\pts{s_1:s_t}\cup[\sk]}$ and therefore $\rmA_{-s_1:s_1}^{-1} \preceq \rmA_{-\pts{s_1:s_t}\cup[\sk]}^{-1}$. On the other hand,
    \begin{align*}
        \tilde{Q}_{s_1} &\geq \Tr\pts{\rmA_{-s_1:s_1}^{-1}} - c_1\norm{\rmA_{-s_1:s_1}^{-1}}_2 \cdot n^{\frac{1}{2}+\epsilon}\\
        &\geq \pts{1-\frac{c}{n}}^{\sk+t-1}\Tr\pts{\rmA_{-\pts{s_1:s_t}\cup[\sk]}^{-1}} - c_1\norm{\rmA_{-\pts{s_1:s_t}\cup[\sk]}^{-1}}_2 \cdot n^{\frac{1}{2}+\epsilon},
    \end{align*}
    where the second inequality follows from the trace lower bound in Lemma~\ref{lem:trace_lower_bound}. Next, we need to derive the bounds for $\sgn{a_{s_1}}Q_{s_1}$ term in~\Eqref{eq:su_term}. Note that we need to adjust the bounds of $Q_{s_1}$ in Lemma~\ref{lem:Q_1_bounds} according to the sign of $a_{s_1}$. Considering the sign of $a_{s_1}$, from Lemma~\ref{lem:Q_1_bounds}, we can have
    \begin{align*}
        \pts{1 - \frac{c_4 t}{c_3n^{\frac{1}{2}-\epsilon}-1}}\sgn{a_{s_1}}Q_{s_t} \leq \sgn{a_{s_1}}Q_{s_1} \leq  \pts{1 + \frac{c_4 t}{c_3n^{\frac{1}{2}-\epsilon}-1}}\sgn{a_{s_1}}Q_{s_t}.
    \end{align*}
    From Lemma~\ref{lem:Q_k_bounds}, we can further bound the $\sgn{a_{s_1}}Q_{s_t}$ term as
    \begin{align*}
        \sgn{a_{s_1}}Q_{s_t} &\leq \sqrt{\frac{2\beta_{s_1}}{\pi}}\Tr\pts{\rmA_{-s_1:s_t}^{-1}} + 2c_1\norm{\rmA_{-s_1:s_t}^{-1}}_2 \cdot n^{\frac{1}{2}+\epsilon}, \\
        \sgn{a_{s_1}}Q_{s_t} &\geq \sqrt{\frac{2\beta_{s_1}}{\pi}}\Tr\pts{\rmA_{-s_1:s_t}^{-1}} - 2c_1\norm{\rmA_{-s_1:s_t}^{-1}}_2 \cdot n^{\frac{1}{2}+\epsilon}.
    \end{align*}
    As a result, we can apply the bounds of $\tilde{Q}_{s_1}$ and the bounds of $\sgn{a_{s_1}}Q_{s_1}$ above to get the upper and lower bound of $\SU_{s_1}$ from~\Eqref{eq:su_term}. We then obtain the $\SU_{s_1}$ upper bound as
    \begin{align}
        (\ref{eq:su_term}) &\leq \pts{\frac{\lambda_{s_1}}{|a_{s_1}|}} \frac{ \pts{1 + \frac{c_4t}{c_3n^{\frac{1}{2}-\epsilon}-1}}\pts{\sqrt{\frac{2\beta_{s_1}}{\pi}}\Tr\pts{\rmA_{-s_1:s_t}^{-1}} + 2c_1\norm{\rmA_{-s_1:s_t}^{-1}}_2 \cdot n^{\frac{1}{2}+\epsilon}}}{1 + \lambda_{s_1}\pts{\pts{1-\frac{c}{n}}^{\sk+t-1}\Tr\pts{\rmA_{-\pts{s_1:s_t}\cup[\sk]}^{-1}} - c_1\norm{\rmA_{-\pts{s_1:s_t}\cup[\sk]}^{-1}}_2 \cdot n^{\frac{1}{2}+\epsilon}}}\nonumber\\
        &= \sqrt{\frac{2}{\pi\var}} \pts{\frac{ \pts{1 + \frac{c_4t}{c_3n^{\frac{1}{2}-\epsilon}-1}}\lambda_{s_1}\pts{\Tr\pts{\rmA_{-s_1:s_t}^{-1}} + c_2\norm{\rmA_{-s_1:s_t}^{-1}}_2 \cdot n^{\frac{1}{2}+\epsilon}}}{1 + \lambda_{s_1}\pts{\pts{1-\frac{c}{n}}^{\sk+t-1}\Tr\pts{\rmA_{-\pts{s_1:s_t}\cup[\sk]}^{-1}} - c_1\norm{\rmA_{-\pts{s_1:s_t}\cup[\sk]}^{-1}}_2 \cdot n^{\frac{1}{2}+\epsilon}}}},\label{eq:su_upper_bound}
    \end{align}
    and the lower bound of $\SU_{s_1}$ as
    \begin{align}
        (\ref{eq:su_term}) &\geq \pts{\frac{\lambda_{s_1}}{|a_{s_1}|}}\frac{ \pts{1 - \frac{c_4t}{c_3n^{\frac{1}{2}-\epsilon}-1}}\pts{\sqrt{\frac{2\beta_{s_1}}{\pi}}\Tr\pts{\rmA_{-s_1:s_t}^{-1}} - 2c_1\norm{\rmA_{-s_1:s_t}^{-1}}_2 \cdot n^{\frac{1}{2}+\epsilon}}}{1 + \lambda_{s_1}\pts{\Tr\pts{\rmA_{-\pts{s_1:s_t}\cup[\sk]}^{-1}} + c_1\norm{\rmA_{-\pts{s_1:s_t}\cup[\sk]}^{-1}}_2 \cdot n^{\frac{1}{2}+\epsilon}}}\nonumber\\
        &=  \sqrt{\frac{2}{\pi\var}} \pts{\frac{ \pts{1 - \frac{c_4t}{c_3n^{\frac{1}{2}-\epsilon}-1}}\lambda_{s_1}\pts{\Tr\pts{\rmA_{-s_1:s_t}^{-1}} - c_2\norm{\rmA_{-s_1:s_t}^{-1}}_2 \cdot n^{\frac{1}{2}+\epsilon}}}{1 + \lambda_{s_1}\pts{\Tr\pts{\rmA_{-\pts{s_1:s_t}\cup[\sk]}^{-1}} + c_1\norm{\rmA_{-\pts{s_1:s_t}\cup[\sk]}^{-1}}_2 \cdot n^{\frac{1}{2}+\epsilon}}}}\label{eq:su_lower_bound}.
    \end{align}
    In the equalities, we use $\frac{1}{|a_{s_1}|}\sqrt{\frac{2\beta_{s_1}}{\pi}}=\frac{1}{|a_{s_1}|}\sqrt{\frac{2a_{s_1}^2}{\pi\sum_{j\in\gS}a_j^2}}=\sqrt{\frac{2}{\pi\var}}$. Next, we need to upper bound $\Tr\pts{\rmA_{-s_1:s_t}^{-1}}$ and $\norm{\rmA_{-s_1:s_t}^{-1}}_2$ and lower bound $\Tr\pts{\rmA_{-s_1:s_t}^{-1}}$ terms; we achieve this by relating $\rmA_{-s_1:s_t}$ to $\rmA_{-\pts{s_1:s_t}\cup[\sk]}$. Lemma~\ref{lem:trace_lower_bound} provides that
    \begin{align*}
        \Tr\pts{\rmA_{-\pts{s_1:s_t}\cup[\sk]}^{-1}}\geq \Tr\pts{\rmA_{-s_1:s_t}^{-1}} \geq \pts{1-\frac{c}{n}}^{\sk}\Tr\pts{\rmA_{-\pts{s_1:s_t}\cup[\sk]}^{-1}},
    \end{align*}
    and we also have $\rmA_{-s_1:s_t}^{-1} \preceq \rmA_{-\pts{s_1:s_t}\cup[\sk]}^{-1}$. As a result,~\Eqref{eq:su_upper_bound} and~\Eqref{eq:su_lower_bound} now become
    \begin{align}
        \SU_{s_1} &\leq \sqrt{\frac{2}{\pi\var}} \pts{\frac{ \pts{1 + \frac{c_4t}{c_3n^{\frac{1}{2}-\epsilon}-1}}\lambda_{s_1}\pts{\Tr\pts{\rmA_{-\pts{s_1:s_t}\cup[\sk]}^{-1}} + c_2\norm{\rmA_{-\pts{s_1:s_t}\cup[\sk]}^{-1}}_2 \cdot n^{\frac{1}{2}+\epsilon}}}{1 + \lambda_{s_1}\pts{\pts{1-\frac{c}{n}}^{\sk+t-1}\Tr\pts{\rmA_{-\pts{s_1:s_t}\cup[\sk]}^{-1}} - c_1\norm{\rmA_{-\pts{s_1:s_t}\cup[\sk]}^{-1}}_2 \cdot n^{\frac{1}{2}+\epsilon}}}},\label{eq:su_upper_bound2}\\
        \SU_{s_1} &\geq \sqrt{\frac{2}{\pi\var}} \pts{\frac{ \pts{1 - \frac{c_4t}{c_3n^{\frac{1}{2}-\epsilon}-1}}\lambda_{s_1}\pts{\pts{1-\frac{c}{n}}^{\sk}\Tr\pts{\rmA_{-\pts{s_1:s_t}\cup[\sk]}^{-1}} - c_2\norm{\rmA_{-\pts{s_1:s_t}\cup[\sk]}^{-1}}_2 \cdot n^{\frac{1}{2}+\epsilon}}}{1 + \lambda_{s_1}\pts{\Tr\pts{\rmA_{-\pts{s_1:s_t}\cup[\sk]}^{-1}} + c_1\norm{\rmA_{-\pts{s_1:s_t}\cup[\sk]}^{-1}}_2 \cdot n^{\frac{1}{2}+\epsilon}}}}\label{eq:su_lower_bound2}.
    \end{align}
    Next, Lemma~\ref{lem:eig_constant} shows that all eigenvalues of $\rmA_{-\pts{s_1:s_t}\cup[\sk]}^{-1}$ are identical up to a constant such that
    \begin{align*}
        \frac{1}{cn} \leq \frac{\norm{\rmA_{-[\sk]\cup\pts{s_1:s_t}}^{-1}}_2}{\Tr\pts{\rmA_{-[\sk]\cup\pts{s_1:s_t}}^{-1}}} \leq \frac{c}{n}.
    \end{align*}
    By dividing $\Tr\pts{\rmA_{-[\sk]\cup\pts{s_1:s_t}}^{-1}}$ in the numerator and denominator of~\Eqref{eq:su_upper_bound2}, we can have the upper bound of $\SU_{s_1}$ as
    \begin{align}
        \SU_{s_1} &\leq \sqrt{\frac{2}{\pi\var}} \pts{\frac{ \pts{1 + \frac{c_4t}{c_3n^{\frac{1}{2}-\epsilon}-1}}\lambda_{s_1}\pts{\Tr\pts{\rmA_{-\pts{s_1:s_t}\cup[\sk]}^{-1}} + c_2\norm{\rmA_{-\pts{s_1:s_t}\cup[\sk]}^{-1}}_2 \cdot n^{\frac{1}{2}+\epsilon}}}{1 + \lambda_{s_1}\pts{\pts{1-\frac{c}{n}}^{\sk+t-1}\Tr\pts{\rmA_{-\pts{s_1:s_t}\cup[\sk]}^{-1}} - c_1\norm{\rmA_{-\pts{s_1:s_t}\cup[\sk]}^{-1}}_2 \cdot n^{\frac{1}{2}+\epsilon}}}}\nonumber\\
        &= \sqrt{\frac{2}{\pi\var}} \pts{\frac{ \pts{1 + \frac{c_4t}{c_3n^{\frac{1}{2}-\epsilon}-1}}\lambda_{s_1}\Tr\pts{\rmA_{-\pts{s_1:s_t}\cup[\sk]}^{-1}}\pts{1 + \frac{c_2\norm{\rmA_{-\pts{s_1:s_t}\cup[\sk]}^{-1}}_2 \cdot n^{\frac{1}{2}+\epsilon}}{\Tr\pts{\rmA_{-\pts{s_1:s_t}\cup[\sk]}^{-1}}}}}{1 + \lambda_{s_1}\Tr\pts{\rmA_{-\pts{s_1:s_t}\cup[\sk]}^{-1}}\pts{\pts{1-\frac{c}{n}}^{\sk+t-1} - \frac{c_1\norm{\rmA_{-\pts{s_1:s_t}\cup[\sk]}^{-1}}_2 \cdot n^{\frac{1}{2}+\epsilon}}{\Tr\pts{\rmA_{-\pts{s_1:s_t}\cup[\sk]}^{-1}}}}}}\nonumber\\
        &\leq  \sqrt{\frac{2}{\pi\var}} \pts{\frac{ \pts{1 + \frac{c_4t}{c_3n^{\frac{1}{2}-\epsilon}-1}}\lambda_{s_1}\Tr\pts{\rmA_{-\pts{s_1:s_t}\cup[\sk]}^{-1}}\pts{1 + \frac{c_3}{n^{\frac{1}{2}-\epsilon}}}}{1 + \lambda_{s_1}\Tr\pts{\rmA_{-\pts{s_1:s_t}\cup[\sk]}^{-1}}\pts{\pts{1-\frac{c}{n}}^{\sk+t-1} - \frac{c_4}{n^{\frac{1}{2}-\epsilon}}}}}\nonumber\\
        &\leq \sqrt{\frac{2}{\pi\var}} \pts{\frac{ \pts{1 + \frac{c_5t}{n^{\frac{1}{2}-\epsilon}}}\lambda_{s_1}\Tr\pts{\rmA_{-\pts{s_1:s_t}\cup[\sk]}^{-1}}}{1 + \pts{1 - \frac{c_6}{n^{\frac{1}{2}-\epsilon}}}\lambda_{s_1}\Tr\pts{\rmA_{-\pts{s_1:s_t}\cup[\sk]}^{-1}}}}\label{eq:su_upper_bound3},
    \end{align}
    where we apply the bounds in Lemma~\ref{lem:eig_constant} in the second inequality, and in the last inequality we introduce some new constants since $t\ll n^{\frac{1}{2}-\epsilon}$ and $\epsilon\in(0, \frac{1}{4})$. We repeat the same derivation for the lower bound of $\SU_{s_1}$. From~\Eqref{eq:su_lower_bound2}, we have
    \begin{align}
        \SU_{s_1} &\geq \sqrt{\frac{2}{\pi\var}} \pts{\frac{ \pts{1 - \frac{c_4t}{c_3n^{\frac{1}{2}-\epsilon}-1}}\lambda_{s_1}\pts{\pts{1-\frac{c}{n}}^{\sk}\Tr\pts{\rmA_{-\pts{s_1:s_t}\cup[\sk]}^{-1}} - c_2\norm{\rmA_{-\pts{s_1:s_t}\cup[\sk]}^{-1}}_2 \cdot n^{\frac{1}{2}+\epsilon}}}{1 + \lambda_{s_1}\pts{\Tr\pts{\rmA_{-\pts{s_1:s_t}\cup[\sk]}^{-1}} + c_1\norm{\rmA_{-\pts{s_1:s_t}\cup[\sk]}^{-1}}_2 \cdot n^{\frac{1}{2}+\epsilon}}}}\nonumber\\
        &= \sqrt{\frac{2}{\pi\var}} \pts{\frac{ \pts{1 - \frac{c_4t}{c_3n^{\frac{1}{2}-\epsilon}-1}}\lambda_{s_1}\Tr\pts{\rmA_{-\pts{s_1:s_t}\cup[\sk]}^{-1}}\pts{\pts{1-\frac{c}{n}}^{\sk} - \frac{c_2\norm{\rmA_{-\pts{s_1:s_t}\cup[\sk]}^{-1}}_2 \cdot n^{\frac{1}{2}+\epsilon}}{\Tr\pts{\rmA_{-\pts{s_1:s_t}\cup[\sk]}^{-1}}}}}{1 + \lambda_{s_1}\Tr\pts{\rmA_{-\pts{s_1:s_t}\cup[\sk]}^{-1}}\pts{1 + \frac{c_1\norm{\rmA_{-\pts{s_1:s_t}\cup[\sk]}^{-1}}_2 \cdot n^{\frac{1}{2}+\epsilon}}{\Tr\pts{\rmA_{-\pts{s_1:s_t}\cup[\sk]}^{-1}}}}}}\nonumber\\
        &\geq \sqrt{\frac{2}{\pi\var}} \pts{\frac{ \pts{1 - \frac{c_4t}{c_3n^{\frac{1}{2}-\epsilon}-1}}\lambda_{s_1}\Tr\pts{\rmA_{-\pts{s_1:s_t}\cup[\sk]}^{-1}}\pts{\pts{1-\frac{c}{n}}^{\sk} - \frac{c_3}{n^{\frac{1}{2}-\epsilon}}}}{1 + \lambda_{s_1}\Tr\pts{\rmA_{-\pts{s_1:s_t}\cup[\sk]}^{-1}}\pts{1 + \frac{c_4}{n^{\frac{1}{2}-\epsilon}}}}}\nonumber\\
        &\geq \sqrt{\frac{2}{\pi\var}} \pts{\frac{ \pts{1 - \frac{c_5t}{n^{\frac{1}{2}-\epsilon}}}\lambda_{s_1}\Tr\pts{\rmA_{-\pts{s_1:s_t}\cup[\sk]}^{-1}}}{1 + \pts{1 + \frac{c_6}{n^{\frac{1}{2}-\epsilon}}}\lambda_{s_1}\Tr\pts{\rmA_{-\pts{s_1:s_t}\cup[\sk]}^{-1}}}}\label{eq:su_lower_bound3}.
    \end{align}
    Finally, Lemma~\ref{lem:eig_constant} also implies the bounds of $\Tr\pts{\rmA_{-\pts{s_1:s_t}\cup[\sk]}^{-1}}$ such that
    \begin{align*}
        \Tr\pts{\rmA_{-\pts{s_1:s_t}\cup[\sk]}^{-1}} &= \sum_{i=1}^n\frac{1}{\mu_i\pts{\rmA_{-\pts{s_1:s_t}\cup[\sk]}}} \leq \frac{cn}{\tlambda_{1} r_{0}\pts{\cov_{-\pts{s_1:s_t}\cup[\sk]}}},\\
        \Tr\pts{\rmA_{-\pts{s_1:s_t}\cup[\sk]}^{-1}} &\geq \frac{n}{c\tlambda_{1} r_{0}\pts{\cov_{-\pts{s_1:s_t}\cup[\sk]}}},
    \end{align*}
    where we denote $\{\tlambda_j\}_{j=1}^{d - |\pts{s_1:s_t}\cup[\sk]|}$ the diagonal entries of the leave-$t$ and $\sk$-out covariance operator $\cov_{-\pts{s_1:s_t}\cup[\sk]}$.
    By substituting the bounds of $\Tr\pts{\rmA_{-\pts{s_1:s_t}\cup[\sk]}^{-1}}$ into~\Eqref{eq:su_upper_bound3} and~\Eqref{eq:su_lower_bound3}, the survival proof is done. Next, we show the convergence of $\SU_{s_1}$. From~\Eqref{eq:su_upper_bound3}, since $t \ll n^{\frac{1}{2}-\epsilon}$ and $\epsilon\in(0, \frac{1}{4})$, for $n,d\to\infty$, we have
    \begin{align*}
        \lim_{n,d\to\infty} \SU_{s_1} &\leq \lim_{n,d\to\infty}\sqrt{\frac{2}{\pi\var}} \pts{\frac{ \pts{1 + \frac{c_5t}{n^{\frac{1}{2}-\epsilon}}}\lambda_{s_1}\Tr\pts{\rmA_{-\pts{s_1:s_t}\cup[\sk]}^{-1}}}{1 + \pts{1 - \frac{c_6}{n^{\frac{1}{2}-\epsilon}}}\lambda_{s_1}\Tr\pts{\rmA_{-\pts{s_1:s_t}\cup[\sk]}^{-1}}}}\\
        &= \sqrt{\frac{2}{\pi\var}}\frac{\lambda_{s_1}\Tr\pts{\rmA_{-\pts{s_1:s_t}\cup[\sk]}^{-1}}}{1+\lambda_{s_1}\Tr\pts{\rmA_{-\pts{s_1:s_t}\cup[\sk]}^{-1}}}.
    \end{align*}
    Similarly, for the lower bound of $\SU_{s_1}$, from~\Eqref{eq:su_lower_bound3}, we have
    \begin{align*}
        \lim_{n,d\to\infty} \SU_{s_1} &\geq \lim_{n,d\to\infty}\sqrt{\frac{2}{\pi\var}} \pts{\frac{ \pts{1 - \frac{c_5t}{n^{\frac{1}{2}-\epsilon}}}\lambda_{s_1}\Tr\pts{\rmA_{-\pts{s_1:s_t}\cup[\sk]}^{-1}}}{1 + \pts{1 + \frac{c_6}{n^{\frac{1}{2}-\epsilon}}}\lambda_{s_1}\Tr\pts{\rmA_{-\pts{s_1:s_t}\cup[\sk]}^{-1}}}}\\
        &= \sqrt{\frac{2}{\pi\var}}\frac{\lambda_{s_1}\Tr\pts{\rmA_{-\pts{s_1:s_t}\cup[\sk]}^{-1}}}{1+\lambda_{s_1}\Tr\pts{\rmA_{-\pts{s_1:s_t}\cup[\sk]}^{-1}}}.
    \end{align*}
    Therefore, we can conclude the convergence of $\SU_{s_1}$ by its matching upper and lower bounds.
    
    Next, we prove the contamination upper bound, and the proof follows the proof idea of Lemma 5 in~\cite{wang2023benign} closely. We start with the classification MNI with the indices not supported in $\svtheta$. For $j\in\gS^\mathsf{c}$, we have
    \begin{align}
        \hevtheta_j &= \sqrt{\lambda_j}\rvz_j^\top\rmA^{-1}\hrvy\nonumber\\
        &= \sqrt{\lambda_j}\rvz_j^\top\pts{\rmA_{-s_1:s_1}^{-1} - \frac{\lambda_{s_1}\rmA_{-s_1:s_1}^{-1}\rvz_{s_1}\rvz_{s_1}^\top\rmA_{-s_1:s_1}^{-1}}{1+\lambda_{s_1}\rvz_{s_1}^\top\rmA_{-s_1:s_1}^{-1}\rvz_{s_1}}}\hrvy\nonumber\\
        &= \sqrt{\lambda_j}\rvz_j^\top\rmA_{-s_1:s_1}^{-1}\underbrace{\pts{\hrvy - \frac{\lambda_{s_1}\rvz_{s_1}^\top\rmA_{-s_1:s_1}^{-1}\hrvy}{1+\lambda_{s_1}\rvz_{s_1}^\top\rmA_{-s_1:s_1}^{-1}\rvz_{s_1}}\rvz_{s_1}}}_{\eqcolon\crvy_{s_1}}\nonumber\\
        &= \sqrt{\lambda_j}\rvz_j^\top\rmA_{-s_1:s_1}^{-1}\crvy_{s_1}\nonumber\\
        &= \sqrt{\lambda_j}\rvz_j^\top\rmA_{-s_1:s_t}^{-1}\crvy_{s_t}\label{eq:htheta_nonsupport2},
    \end{align}
    where we apply the Sherman-Morrison-Woodbury identity recursively and also denote $\crvy_{s_\ell} \coloneqq \crvy_{s_{\ell-1}} - \frac{\lambda_{s_\ell}\rvz_{s_\ell}^\top\rmA_{-s_1:s_\ell}^{-1}\crvy_{s_{\ell-1}}}{1+\lambda_{s_\ell}\rvz_{s_\ell}^\top\rmA_{-s_1:{s_\ell}}^{-1}\rvz_{s_\ell}}\rvz_{s_\ell}$ and $\crvy_{s_0} = \hrvy$ for $\ell \in [t]$. Next, we take the square of $\hevtheta_j$ and according to the $\CN$ definition in Definition~\ref{def:su_cn}, we have $\hevtheta_j^2 = \lambda_j \crvy_{s_t}^\top \rmA_{-s_1:s_t}^{-1}\rvz_j\rvz_j^\top\rmA_{-s_1:s_t}^{-1}\crvy_{s_t}$ and therefore
    \begin{align}
        \CN^2 &= \sum_{j\in\gS^\mathsf{c}}\lambda_j\hevtheta_j^2\nonumber\\
        &= \sum_{j\in\gS^\mathsf{c}}\lambda_j^2\crvy_{s_t}^\top \rmA_{-s_1:s_t}^{-1}\rvz_j\rvz_j^\top\rmA_{-s_1:s_t}^{-1}\crvy_{s_t}\nonumber\\
        &= \crvy_{s_t}^\top\underbrace{\rmA_{-s_1:s_t}^{-1}\pts{\sum_{j\in\gS^\mathsf{c}}\lambda_j^2\rvz_j\rvz_j^\top}\rmA_{-s_1:s_t}^{-1}}_{\coloneqq \tilde{\rmC}}\crvy_{s_t}\nonumber.
    \end{align}
    Next, we apply the triangle inequality such that $\sqrt{\pts{\rvx-\rvy}^\top\rmM\pts{\rvx-\rvy}} \leq \sqrt{\rvx^\top\rmM\rvx} + \sqrt{\rvy^\top\rmM\rvy}$ $t$ times to decompose $\crvy_{s_t}$ and get
    \begin{align}
        \CN &= \sqrt{\crvy_{s_t}^\top \tilde{\rmC}\crvy_{s_t}} \leq \sqrt{\hrvy^\top\tilde{\rmC}\hrvy} + \sum_{\ell=1}^t \sqrt{\pts{\frac{\lambda_{s_\ell}\rvz_{s_\ell}^\top\rmA_{-s_1:s_\ell}^{-1}\crvy_{s_{\ell-1}}}{1+\lambda_{s_\ell}\rvz_{s_\ell}^\top\rmA_{-s_1:s_\ell}^{-1}\rvz_{s_\ell}}}^2\rvz_{s_\ell}^\top\tilde{\rmC}\rvz_{s_\ell}}\label{eq:cn_upper_bound}.
    \end{align}
    Since $\gS=\{s_1,\cdots,s_t\}$, $\hrvy$ and all $\rvz_{s_\ell}$ are independent to $\tilde{\rmC}$, we can apply the Hanson-Wright inequality in Lemma~\ref{lem:hanson_wright_ineq} to obtain
    \begin{align*}
        \hrvy^\top\tilde{\rmC}\hrvy &\leq \Tr\pts{\tilde{\rmC}}\pts{1+\frac{1}{c}}\ln\pts{n} \quad \tn{and} \\
        \rvz_{s_\ell}^\top\tilde{\rmC}\rvz_{s_\ell} &\leq \Tr\pts{\tilde{\rmC}}\pts{1+\frac{1}{c}}\ln\pts{n},
    \end{align*}
    with probability at least $1-\frac{1}{n}$.
    Substitute these inequalities into~\Eqref{eq:cn_upper_bound}, we get
    \begin{align*}
        \CN &\leq  \pts{1+\sum_{\ell=1}^t\left|\frac{\lambda_{s_\ell}\rvz_{s_\ell}^\top\rmA_{-s_1:s_\ell}^{-1}\crvy_{s_{\ell-1}}}{1+\lambda_{s_\ell}\rvz_{s_\ell}^\top\rmA_{-s_1:s_\ell}^{-1}\rvz_{s_\ell}}\right|}\sqrt{\Tr\pts{\tilde{\rmC}}\pts{1+\frac{1}{c}}\ln\pts{n}}\\
        &\leq \pts{1+tc_2}\sqrt{\Tr\pts{\tilde{\rmC}}\pts{1+\frac{1}{c}}\ln\pts{n}},
    \end{align*}
    where we apply Lemma~\ref{lem:cn_const} in the last inequality. It remains to upper bound $\Tr\pts{\tilde{\rmC}}$ to complete the proof of the upper bound of $\CN$. Then we can use Lemma 11 in~\cite{bartlett2020benign} to show
    \begin{align}
        \Tr\pts{\tilde{\rmC}} &= \Tr\pts{\rmA_{-s_1:s_t}^{-1}\pts{\sum_{j\in\gS^\mathsf{c}}\lambda_j^2\rvz_j\rvz_j^\top}\rmA_{-s_1:s_t}^{-1}}\nonumber\\
        &= \sum_{j\in\gS^\mathsf{c}}\lambda_j^2\rvz_j^\top \rmA_{-s_1:s_t}^{-2}\rvz_j\nonumber\\
        &= \sum_{j\leq \sk, j\in\gS^\mathsf{c}}\lambda_j^2\rvz_j^\top \rmA_{-s_1:s_t}^{-2}\rvz_j + \sum_{j> \sk, j\in\gS^\mathsf{c}}\lambda_j^2\rvz_j^\top \rmA_{-s_1:s_t}^{-2}\rvz_j\nonumber\\
        &\leq c\pts{\frac{\sk}{n} + n\frac{\sum_{j=\sk+1, j\in\gS^\mathsf{c}}^d \lambda_j^2}{\pts{\sum_{j=\sk+1,j\in\gS^\mathsf{c}}^d\lambda_j}^2}}.\nonumber%
    \end{align}
    The proof of the upper bound is completed.
\end{proof}

\section{Zero-shot task shift in the case of sparse signal}
In this section, we provide proofs from Section~\ref{sec:zeroshot_sparse} concerning our sparse signal model.
In Section~\ref{app:proof_zero_shot_general_vacuous}, we leverage Lemma~\ref{lem:k_sparse_SU_CN_bounds} to prove the convergence of the regression risk of the classification MNI for general covariances.
The results of Section~\ref{app:proof_zero_shot_general_vacuous} are left in terms of the inverse leave-$t$-out Gram matrix; in the following sections, we provide more precise derivations for specific covariance ensembles.
In Section~\ref{app:diff_cov}, we provide closed-form expressions for the limiting regression risk of the classification MNI under the spiked covariance model.
Finally, in Section~\ref{app:poly_cov}, we provide corresponding expressions for the more delicate case of the polynomial decay covariance model.

\subsection{Characterization of zero-shot task shift for general covariance}\label{app:proof_zero_shot_general_vacuous}
In this section, we provide the proof of Theorem~\ref{thm:zero_shot_general_vacuous}.

\begin{proof} (Theorem~\ref{thm:zero_shot_general_vacuous})

The excess risk expression of~\Eqref{eq:risk_su_cn} in terms of $\{\SU_j\}_{j \in \gS}$ and $\CN$ gives us
    \begin{align*}
        \risk{\hvtheta} &= \sum_{j\in\gS} a_j^2\pts{\SU_j - 1}^2 + \CN^2.
    \end{align*}
    We now characterize the limiting regression risk $\lim_{n,d\to\infty}\risk{\hvtheta}$. Since the covariance matrix is benign and satisfies $\lim_{n,d\to\infty}\frac{t^2 \cdot \sk \cdot \ln\pts{n}}{n}=\lim_{n,d\to\infty}\frac{t^2\cdot n \cdot \ln\pts{n}}{R_0\pts{\cov_{-\bgS}}}=0$, we can apply Lemma~\ref{lem:k_sparse_SU_CN_bounds}.
    We then have
    \begin{align*}
        \lim_{n,d\to\infty} \CN^2 \leq \lim_{n,d\to\infty} t^2\pts{\frac{\sk}{n} + \frac{n}{R_0\pts{\cov_{-\bgS}}}}\ln\pts{n} = 0.
    \end{align*}
    Since $\CN^2 \geq 0$ by definition, we have $\lim_{n,d \to \infty} \CN^2 = 0$.
    Hence, we have $\lim_{n,d \to \infty} \risk{\hvtheta}  = \sum_{j\in\gS} a_j^2\pts{\SU_j - 1}^2$. 
    Then, Lemma~\ref{lem:k_sparse_SU_CN_bounds} tells us that
    \begin{align*}
        \lim_{n,d\to\infty} \SU_j = \sqrt{\frac{2}{\pi\var}}\cdot \frac{\lambda_j\Tr\pts{\rmA_{-\bgS}^{-1}}}{1 + \lambda_j\Tr\pts{\rmA_{-\bgS}^{-1}}}.
    \end{align*}
    Denoting $b_j \coloneqq \lim_{n,d\to\infty}\lambda_j\Tr\pts{\rmA_{-\bgS}^{-1}}$ as shorthand and putting it all together, we have
    \begin{align*}
        \lim_{n,d\to\infty}\risk{\hvtheta} = \lim_{n,d\to\infty} \sum_{j\in\gS} a_j^2\pts{\SU_j - 1}^2 = \sum_{j\in\gS} a_j^2\pts{\lim_{n,d\to\infty}\SU_j - 1}^2 = \sum_{j\in\gS}a_j^2\pts{\sqrt{\frac{2}{\pi \var}}\frac{b_j}{1+b_j} - 1}^2.
    \end{align*}
    This completes the proof of the theorem.
\end{proof}
\subsection{Closed-form expressions for spiked covariance} \label{app:diff_cov}

In this section, we provide the proof of Corollary~\ref{cor:diff_cov}.

\begin{proof} (Corollary~\ref{cor:diff_cov})

First of all, we show that the following limit holds
\begin{align*}
    \lim_{n,d\to\infty}\frac{t^2\cdot \sk \cdot \ln\pts{n}}{n}=\lim_{n,d\to\infty}\frac{t^2\cdot n \cdot \ln\pts{n}}{R_0\pts{\cov_{-\bgS}}}=0.
\end{align*}
This ensures that the assumption in Theorem~\ref{thm:zero_shot_general_vacuous} is satisfied. By the definition of spiked covariance (Definition~\ref{def:spiked}), we observe that $k^\star = s = n^r$ for $q < 1- r$ and $k^\star = 0$ for $q > 1 - r$. This implies $\lim_{n,d\to\infty}\frac{\sk \cdot \ln\pts{n}}{n} = 0$. On the other hand, we also have $R_0\pts{\cov_{-\bgS}} = d - |\bgS| = \frac{n^p}{c}$ which leads to $\lim_{n,d\to\infty}\frac{ n \cdot \ln\pts{n}}{R_0\pts{\cov_{-\bgS}}}=0$. Additionally, the corollary assumption guarantees that $t \ll \min\{\sqrt{\frac{n}{\sk\cdot\ln\pts{n}}} \sqrt{\frac{n^{p-1}}{\ln\pts{n}}}\}$. Combining these results, we conclude that $\lim_{n,d\to\infty}\frac{t^2\cdot \sk \cdot \ln\pts{n}}{n}=\lim_{n,d\to\infty}\frac{t^2\cdot n \cdot \ln\pts{n}}{R_0\pts{\cov_{-\bgS}}}=0$, and therefore, Theorem~\ref{thm:zero_shot_general_vacuous} holds.

Next, recall that $b_j \coloneqq \lim_{n,d\to\infty}\lambda_j\Tr\pts{\rmA_{-\bgS}^{-1}}$, according to Theorem~\ref{thm:zero_shot_general_vacuous}, when $n,d\to\infty$, the limiting regression risk is equal to
    \begin{align}
        \lim_{n,d\to\infty}\risk{\hvtheta} &= \sum_{j\in\gS}a_j^2\pts{\sqrt{\frac{2}{\pi \var}}\frac{b_j}{1+b_j} - 1}^2\nonumber\\
        & = \sum_{j\leq\sk,j\in\gS}a_j^2\pts{\sqrt{\frac{2}{\pi \var}}\frac{b_j}{1+b_j} - 1}^2 +\sum_{j>\sk,j\in\gS}a_j^2\pts{\sqrt{\frac{2}{\pi \var}}\frac{b_j}{1+b_j} - 1}^2,\label{eq:risk_b_j}
    \end{align}
    where we separate the summation into two sections. Therefore, it suffices to characterize $\frac{b_j}{1+b_j}$ for two sections: $j\leq \sk$ and $j>\sk$. In both sections, we have
    \begin{align}
        \frac{b_j}{1+b_j} = \frac{1}{\frac{1}{b_j} + 1} = \lim_{n,d\to\infty}\frac{1}{\pts{\frac{1}{\lambda_j\Tr\pts{\rmA_{-\bgS}^{-1}}}}+1}.\label{eq:b_j_eq}
    \end{align}
    Next, since $r_0\pts{\cov_{-\bgS}} \geq bn$, we can bound $\Tr\pts{\rmA_{-\bgS}^{-1}}$ according to Lemma 10 in~\cite{bartlett2020benign} such that
    \begin{align*}
        \Tr\pts{\rmA_{-\bgS}^{-1}} &= \sum_{i=1}^n\frac{1}{\mu_i\pts{\rmA_{-\bgS}}} \leq \frac{n}{\mu_n\pts{\rmA_{-\bgS}}} \leq \frac{cn}{\tlambda_1 r_0\pts{\cov_{-\bgS}}} = \frac{cn}{\sum_{j=\sk+1,j\in\gS^\mathsf{c}}^d\lambda_j}\\
        \Tr\pts{\rmA_{-\bgS}^{-1}} &= \sum_{i=1}^n\frac{1}{\mu_i\pts{\rmA_{-\bgS}}} \geq \frac{n}{\mu_1\pts{\rmA_{-\bgS}}} \geq \frac{n}{c\tlambda_1 r_0\pts{\cov_{-\bgS}}} = \frac{n}{c\sum_{j=\sk+1,j\in\gS^\mathsf{c}}^d\lambda_j}.
    \end{align*}
    Recall that, in the above, we denoted $\{\tlambda_j\}_{j=1}^{d - |\bgS|}$ the eigenvalues of the leave-$t$ and $\sk$-out covariance matrix $\cov_{-\bgS}$. Therefore, we can upper and lower bound~\Eqref{eq:b_j_eq} as below:
    \begin{align}
        \lim_{n,d\to\infty}\frac{1}{\pts{\frac{c\sum_{k=\sk+1,k\in\gS^\mathsf{c}}^d\lambda_k}{\lambda_jn}}+1} \leq \frac{b_j}{1+b_j} \leq \lim_{n,d\to\infty}\frac{1}{\pts{\frac{\sum_{k=\sk+1,k\in\gS^\mathsf{c}}^d\lambda_k}{c\lambda_jn}}+1}\label{eq:b_j_bounds}.
    \end{align}
    As a result, it suffices to characterize the limit of $\frac{\sum_{k=\sk+1,k\in\gS^\mathsf{c}}^d\lambda_k}{\lambda_jn}$ as $n, d \to \infty$. We discuss our evaluation of the spiked covariance ensemble in two cases.
    \begin{itemize}
        \item \textbf{Spiked covariance with $q < 1-r$ (Definition~\ref{def:spiked}):}\\
        Recall that, for this choice of parameters, the regression MNI would generalize~\citep{muthukumar2021classification}.
        Define $\bar{k}\coloneqq |\bgS|$ as shorthand.
        From Definition~\ref{def:spiked}, we have $\sk=s$.
        Therefore, for $j \in \gS \cap [\sk]$,
        \begin{align*}
            \lim_{n,d\to\infty}\frac{\sum_{k=\sk+1,k\in\gS^\mathsf{c}}^d\lambda_k}{\lambda_jn} &= \lim_{n,d\to\infty}\frac{\pts{d-\bar{k}}\frac{\pts{1-a}d}{d-s}}{\frac{ad}{s}\cdot n} \\
            &= \lim_{n,d\to\infty}\frac{\pts{n^p-\bar{k}}\frac{\pts{1-n^{-q}}n^p}{\pts{n^p-n^r}}}{n^{\pts{p-q-r+1}}}\\
            &= \lim_{n,d\to\infty}\frac{\pts{n^p-\bar{k}}\frac{\pts{1-n^{-q}}}{\pts{1-n^{r-p}}}}{n^{\pts{p-q-r+1}}} \\
            &= 0,
        \end{align*}
        where the last equality followed because $1 - q - r > 0$. As a result, from~\Eqref{eq:b_j_bounds}, we have $\frac{b_j}{1+b_j} = 1$ for $j \in \gS \cap [\sk]$. On the other hand, for $j \in \gS \cap \{\sk + 1,\ldots,d\}$, we have
        \begin{align*}
            \lim_{n,d\to\infty}\frac{\sum_{k=\sk+1,k\in\gS^\mathsf{c}}^d\lambda_k}{\lambda_jn} = \lim_{n,d\to\infty}\frac{\pts{d-\bar{k}}\frac{\pts{1-a}d}{d-s}}{\frac{\pts{1-a}d}{d-s}\cdot n} = \lim_{n,d\to\infty}\frac{\pts{n^p-\bar{k}}}{n} = \infty.
        \end{align*}
        Therefore, from~\Eqref{eq:b_j_bounds}, we have $\frac{b_j}{1+b_j} = 0$ for this case. 
        Substituting these values back into~\Eqref{eq:risk_b_j} completes the proof of the corollary for this case.
        \item \textbf{Spiked covariance with $q > 1-r$ (Definition~\ref{def:spiked}):}\\
        Recall that, for this choice of parameters, the regression MNI would \emph{not} generalize~\citep{muthukumar2021classification}.
        In this case, we have $\sk=0$. 
        Define the number of support indices that are contained within the spike as $t_1 := |\gS \cap [s]|$, and note that $0 \leq t_1 \leq \min\{s, t\}$. Therefore, for $j \in \gS \cap [s]$, we have
        \begin{align*}
            \lim_{n,d\to\infty}\frac{\sum_{k=1,k\in\gS^\mathsf{c}}^d\lambda_k}{\lambda_jn} &= \lim_{n,d\to\infty}\frac{\pts{s-t_1}\frac{ad}{s} + \pts{d-s-\pts{t-t_1}}\frac{\pts{1-a}d}{d-s}}{\frac{ad}{s}\cdot n}\\
            &= \lim_{n,d\to\infty}\frac{\pts{n^r-t_1}n^{\pts{p-q-r}} + \pts{n^p-n^r-\pts{t-t_1}}\frac{\pts{1-n^{-q}}n^p}{\pts{n^p-n^r}}}{n^{\pts{p-q-r+1}}} \\
            &= \infty,
        \end{align*}
        where the last equality follows because $1 - q - r < 0$. As a result, from~\Eqref{eq:b_j_bounds}, $\frac{b_j}{1+b_j} = 0$ for $j \in \gS\cap[s]$. Similarly, for $j \in \gS \cap \{s+1,\ldots,d\}$, we have
        \begin{align*}
            \lim_{n,d\to\infty}\frac{\sum_{k=\sk+1,k\in\gS^\mathsf{c}}^d\lambda_k}{\lambda_jn} &\geq \lim_{n,d\to\infty}\frac{\sum_{k=\sk+1,k\in\gS^\mathsf{c}}^d\lambda_k}{\lambda_1 n} = \infty,
        \end{align*}
        where the last equality follows from the preceding calculation. Substituting these values back into~\Eqref{eq:risk_b_j} completes the proof for this case.
    \end{itemize}
\end{proof}

\subsection{Closed-form expressions for polynomial decay covariance} \label{app:poly_cov}
Next, we provide corresponding expressions for the limiting regression risk for the more delicate case of the polynomial decay covariance ensemble. It is worth noting that, unlike the spiked covariance case—where regression consistency can be achieved when the true signal is confined to the top $\sk$ components and the signal magnitude is fixed at $\var = \frac{2}{\pi}$ (Corollary~\ref{cor:diff_cov})—the polynomial decay covariance case requires stricter conditions. Specifically, the true signal must be restricted to the top $\tk_1$ components, which depend on both the polynomial decay parameters and the sample size $n$.
\begin{corollary}\label{cor:poly_cov}
    Under Assumptions~\ref{asm:rank} and~\ref{asm:k_sparse} with $t \ll \min\bigg\{\sqrt{\frac{n}{\sk\cdot\ln\pts{n}}}, \sqrt{\frac{R_0\pts{\cov_{-\bgS}}}{n\cdot\ln\pts{n}}}\bigg\}$ and polynomial decay covariance (Definition~\ref{def:polynomial}),
    \begin{itemize}
        \item For $u\in[0,1)$, $v=0$, and $p\cdot\pts{1-u} > 1$, we have
            \begin{align*}
                \lim_{n,d\to\infty}\risk{\hvtheta} = \sum_{j\in\gS}a_j^2.
            \end{align*}
            In this case, we have $\risk{\hvtheta} > 0$, and regression consistency is not possible unless we have zero signal, i.e. $\svtheta = \bm{0}$.
         \item For $u\in(0, 1)$, $v=0$, and $p\cdot\pts{1-u} < 1$, we have
            \begin{align*}
                \lim_{n,d\to\infty}\risk{\hvtheta} &\leq \sum_{j\leq \tk_1,j\in\gS}a_j^2\pts{\sqrt{\frac{2}{\pi \var}} - 1}^2 + \abar{C}\sum_{\tk_1<j<\tk_2,j\in\gS}a_j^2 + \sum_{\tk_2\leq j,j\in\gS}a_j^2,\\
                \lim_{n,d\to\infty}\risk{\hvtheta} &\geq \sum_{j\leq \tk_1,j\in\gS}a_j^2\pts{\sqrt{\frac{2}{\pi \var}} - 1}^2 + \ubar{C}\sum_{\tk_1<j<\tk_2,j\in\gS}a_j^2 + \sum_{\tk_2\leq j,j\in\gS}a_j^2,
            \end{align*}
            we define $\tk_1 \coloneqq \max \left\{k\geq 0: k =o\pts{n^{\frac{1-p\cdot\pts{1-u}}{u}}}\right\}$ and $\tk_2\coloneqq \min \left\{k\geq 0: k =\omega\pts{n^{\frac{1-p\cdot\pts{1-u}}{u}}}\right\}$, and
            \begin{align*}
                \abar{C} \coloneqq \max_{\tk_1<j<\tk_2, j\in\gS} \pts{\sqrt{\frac{2}{\pi \var}}c_j - 1}^2, \quad\quad \ubar{C} \coloneqq \min_{\tk_1<j<\tk_2, j\in\gS} \pts{\sqrt{\frac{2}{\pi \var}}c_j - 1}^2.
            \end{align*}
            Recalling $b_j \coloneqq \lim_{n,d\to\infty}\lambda_j\Tr\pts{\rmA_{-\bgS}^{-1}}$, we define $c_j \coloneqq \frac{b_j}{1+b_j}$, and $\{c_j\}_{\tk_1<j<\tk_2, j\in\gS}$ is a non-increasing sequence in $\pts{0,1}$. This implies regression consistency \emph{if and only if} the signal magnitude is fixed at $\var = \frac{2}{\pi}$ and $a_j = 0$ for all $j >\tk_1, j \in \gS$.
            Note that benign overfitting of the regression MNI is attained for this choice of parameters~\citep{bartlett2020benign}.
    \end{itemize}
    
\end{corollary}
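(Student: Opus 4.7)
My plan is to reduce the corollary to Theorem~\ref{thm:zero_shot_general_vacuous} and then carry out the estimates of $\sk$ and $b_j \coloneqq \lim_{n,d\to\infty} \lambda_j \Tr(\rmA_{-\bgS}^{-1})$ specifically for the polynomial-decay eigenvalues. First I would verify that the hypothesis $\lim \frac{t^2 \sk \ln n}{n} = \lim \frac{t^2 n \ln n}{R_0(\cov_{-\bgS})} = 0$ of Theorem~\ref{thm:zero_shot_general_vacuous} holds; this is exactly what the assumption $t \ll \min\{\sqrt{n/(\sk \ln n)}, \sqrt{R_0(\cov_{-\bgS})/(n\ln n)}\}$ in the corollary encodes, once one shows $R_0(\cov_{-\bgS}) \to \infty$ (which follows from a standard integral estimate of $(\sum \lambda_j)^2/\sum \lambda_j^2$ for $\lambda_j = j^{-u}$ with $u \in [0,1)$ and $d = n^p$, $p>1$). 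Once the hypothesis is in place, the theorem reduces the task to computing $c_j = b_j/(1+b_j)$ for $j\in\gS$ and summing $a_j^2(\sqrt{2/(\pi\var)}\,c_j - 1)^2$ over $\gS$.

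Next I would estimate $\Tr(\rmA_{-\bgS}^{-1})$ and $\sk$ using the same machinery as in the proof of Corollary~\ref{cor:diff_cov}. By Lemma~10 of~\cite{bartlett2020benign} we have $\Tr(\rmA_{-\bgS}^{-1}) \asymp n/\sum_{k>\sk,\,k\in\gS^{\mathsf c}} \lambda_k$. For $\lambda_j = j^{-u}$ with $v=0$, an integral approximation gives
\begin{equation*}
    \sum_{j=k+1}^{n^p} j^{-u} \;\asymp\; \frac{n^{p(1-u)} - k^{1-u}}{1-u},
    \qquad r_k(\cov) \;\asymp\; k^{u}n^{p(1-u)} - k,
\end{equation*}
so that the defining inequality $r_k(\cov) \geq bn$ yields $\sk \asymp n^{(1-p(1-u))/u}$ when this exponent is positive, and $\sk = 0$ otherwise. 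Since $|\bgS| = t + \sk$ is negligible compared to $d = n^p$ in both cases, the tail sum satisfies $\sum_{k>\sk,\,k\in\gS^{\mathsf c}} \lambda_k \asymp n^{p(1-u)}$, and therefore
\begin{equation*}
    b_j \;\asymp\; \lambda_j \cdot n^{1 - p(1-u)} \;\asymp\; j^{-u}\, n^{1 - p(1-u)}.
\end{equation*}

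With this estimate in hand the two cases become routine. In Case~1 ($p(1-u)>1$), the exponent $1-p(1-u)$ is negative, so $b_j \to 0$ for every $j\in\gS$, hence $c_j \to 0$ and $\lim \risk{\hvtheta} = \sum_{j\in\gS} a_j^2$. In Case~2 ($p(1-u)<1$), the exponent is positive and $b_j = j^{-u} n^{1-p(1-u)}$ crosses from $\infty$ to $0$ as $j$ sweeps past order $n^{(1-p(1-u))/u}$. Precisely, for $j \leq \tk_1$ we have $j = o(n^{(1-p(1-u))/u})$, so $b_j \to \infty$ and $c_j \to 1$, contributing $a_j^2(\sqrt{2/(\pi\var)}-1)^2$; for $j \geq \tk_2$ we have $j = \omega(n^{(1-p(1-u))/u})$, so $b_j \to 0$ and $c_j \to 0$, contributing $a_j^2$; and for $\tk_1 < j < \tk_2$, $b_j$ is $\Theta(1)$, so $c_j \in (0,1)$. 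Monotonicity of $\{c_j\}_{\tk_1<j<\tk_2}$ is immediate because $\lambda_j$, hence $b_j$, hence $c_j = b_j/(1+b_j)$, is non-increasing in $j$. Bounding the intermediate contribution by the extremal values $\abar{C}$ and $\ubar{C}$ yields the stated upper and lower bounds on $\lim \risk{\hvtheta}$.

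\textbf{Main obstacle.} The genuine difficulty, as already flagged in Definition~\ref{def:polynomial}, is that $\sk$ cannot be pinned down exactly for polynomial decay — only its order $n^{(1-p(1-u))/u}$. This ambiguity propagates into $b_j$ for $j$ near $\sk$: the quantity $b_j$ depends on the multiplicative constant in $\sk$, which we cannot resolve. This is precisely why the corollary states upper and lower bounds via $\abar{C},\ubar{C}$ and introduces the two thresholds $\tk_1$ (an $o(\cdot)$-threshold) and $\tk_2$ (an $\omega(\cdot)$-threshold), rather than a single crossover index as in the spiked case. The remaining steps are integral approximations and careful bookkeeping on the three index regimes.
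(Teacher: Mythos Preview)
Your proposal is correct and follows essentially the same approach as the paper's proof: reduce to Theorem~\ref{thm:zero_shot_general_vacuous}, use Lemma~10 of \cite{bartlett2020benign} to get $\Tr(\rmA_{-\bgS}^{-1}) \asymp n/\sum_{k>\sk,\,k\in\gS^{\mathsf c}}\lambda_k$, compute $\sk$ and the tail sums via integral approximations of $\sum j^{-u}$, and then split Case~2 into the three index regimes determined by $n^{(1-p(1-u))/u}$. The only place the paper is more explicit than your sketch is in verifying $\lim \frac{n\ln n}{R_0(\cov_{-\bgS})}=0$, where it separately treats $u\in[0,0.5)$, $u=0.5$, and $u\in(0.5,1)$ because the rate of $\sum_{j>\sk}\lambda_j^2$ changes across these sub-ranges; your ``standard integral estimate'' remark covers this but you should be aware that $R_0\to\infty$ alone is not enough---you need the stronger $R_0/(n\ln n)\to\infty$.
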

\begin{proof}
    As in the proof of Corollary~\ref{cor:diff_cov}, we first need to show $\lim_{n,d\to\infty}\frac{t^2\cdot \sk \cdot \ln\pts{n}}{n}=\lim_{n,d\to\infty}\frac{t^2\cdot n \cdot \ln\pts{n}}{R_0\pts{\cov_{-\bgS}}}=0$, and therefore, Theorem~\ref{thm:zero_shot_general_vacuous} holds. Then, we can write the limit of the risk in~\Eqref{eq:risk_b_j} as
    \begin{align}
        \lim_{n,d\to\infty}\risk{\hvtheta} &= \sum_{j\in\gS}a_j^2\pts{\sqrt{\frac{2}{\pi \var}}\frac{b_j}{1+b_j} - 1}^2\nonumber\\
        &=\sum_{j\leq\tk_1,j\in\gS}a_j^2\pts{\sqrt{\frac{2}{\pi \var}}\frac{b_j}{1+b_j} - 1}^2 + \sum_{\tk_1<j<\tk_2,j\in\gS}a_j^2\pts{\sqrt{\frac{2}{\pi \var}}\frac{b_j}{1+b_j} - 1}^2\nonumber\\
        &\myquad[20]+\sum_{\tk_2\leq j,j\in\gS}a_j^2\pts{\sqrt{\frac{2}{\pi \var}}\frac{b_j}{1+b_j} - 1}^2,\label{eq:risk_b_j2}
    \end{align}
    where we separate the summation into three sections. Next, we can bound the value of $\frac{b_j}{1+b_j}$ for each section by~\Eqref{eq:b_j_bounds} as
    \begin{align}
        \lim_{n,d\to\infty}\frac{1}{\pts{\frac{c\sum_{k=\sk+1,k\in\gS^\mathsf{c}}^d\lambda_k}{\lambda_jn}}+1} \leq \frac{b_j}{1+b_j} \leq \lim_{n,d\to\infty}\frac{1}{\pts{\frac{\sum_{k=\sk+1,k\in\gS^\mathsf{c}}^d\lambda_k}{c\lambda_jn}}+1}\label{eq:b_j_bounds2}.
    \end{align}
    Hence, for each $j$, it suffices to characterize the limit of $\frac{\sum_{k=\sk+1,k\in\gS^\mathsf{c}}^d\lambda_k}{\lambda_jn}$ as $n, d \to \infty$ in order to characterize the value of $\frac{b_j}{1+b_j}$.
    The proof follows the idea of Theorem 31 in~\cite{bartlett2020benign} closely.
    \begin{itemize}
        \item \textbf{Polynomial decay covariance with $u\in[0, 1)$, $v=0$, $p\cdot \pts{1-u} > 1$ (Definition~\ref{def:polynomial}):}\\
            Firstly, we show that this choice of parameters satisfies the assumption in Theorem~\ref{thm:zero_shot_general_vacuous} such that $\lim_{n,d\to\infty}\frac{t^2\cdot \sk \cdot \ln\pts{n}}{n}=\lim_{n,d\to\infty}\frac{t^2\cdot n \cdot \ln\pts{n}}{R_0\pts{\cov_{-\bgS}}}=0$. From Definition~\ref{def:polynomial}, we have $\lambda_j = \frac{1}{j^u}$ and we can show $\sum_{j=1}^d \lambda_j \asymp d^{1-u} = n^{p\cdot\pts{1-u}}$ by showing
            \begin{subequations}
                \begin{align}
                    \sum_{j=1}^d \lambda_j &= 1 + \sum_{j=2}^d \frac{1}{j^u} \leq 1 + \int_{1}^d \frac{1}{x^u}dx = 1 + \frac{x^{1-u}}{1-u}\Big|_{1}^{d} = 1 + \frac{d^{1-u}-1}{1-u}\label{eq:poly_upperbound}\\
                    \sum_{j=1}^d \lambda_j &= \sum_{j=1}^d \frac{1}{j^u} \geq \sum_{j=1}^d\frac{1}{d^u} = d^{1-u}\label{eq:poly_lowerbound}.
                \end{align}
            \end{subequations}
            Therefore, we can derive $r_0(\cov) = \frac{\sum_{j=1}^d \lambda_j}{\lambda_1} = \sum_{j=1}^d \lambda_j \geq d^{1-u} = n^{p\cdot\pts{1-u}} > n$, so $\sk=0$. Next, we need to verify $\lim_{n,d\to\infty}\frac{t^2\cdot n \cdot \ln\pts{n}}{R_0\pts{\cov_{-\gS}}}=0$, and we have
            \begin{align*}
                \frac{1}{R_0(\cov_{-\gS})} = \frac{\sum_{j=1,j\in\gS^\mathsf{c}}^d\lambda_j^2}{\pts{\sum_{j=1,j\in\gS^\mathsf{c}}^d\lambda_j}^2} = \frac{\sum_{j=1,j\in\gS^\mathsf{c}}^d\frac{1}{j^{2u}}}{\pts{\sum_{j=1,j\in\gS^\mathsf{c}}^d\frac{1}{j^u}}^2} \leq \frac{ \pts{1 + \int_{1}^d \frac{1}{x^{2u}}dx}}{\pts{\sum_{j=1,j\in\gS^\mathsf{c}}^d\frac{1}{d^u}}^2} = \frac{\pts{1+\int_{1}^d \frac{1}{x^{2u}}dx}}{\pts{d-t}^2\cdot d^{-2u}}.
            \end{align*}
            The integral $\int_{1}^d \frac{1}{x^{2u}}dx$ varies for different values of $u$, and we have
            \begin{align*}
                \int_{1}^d \frac{1}{x^{2u}}dx = \condds{\frac{x^{1-2u}}{1-2u}\Big|_{1}^{d} = \frac{d^{1-2u}-1}{1-2u},}{\text{for $u\in[0, 0.5)$,}}{\ln\pts{x}\Big|_{1}^{d}=\ln(d),}{\text{for $u=0.5$,}}{\frac{x^{1-2u}}{1-2u}\Big|_{1}^{d} = \frac{1-d^{1-2u}}{2u-1},}{\text{for $u\in\pts{0.5, 1}$}.}
            \end{align*}
            Therefore, we can upper bound $\frac{n\ln(n)}{R_0(\cov_{-\gS})}$ as
            \begin{align*}
                \frac{n\ln(n)}{R_0(\cov_{-\gS})} = \condds{\frac{n\ln(n)\pts{1+\frac{d^{1-2u}-1}{1-2u}}}{\pts{d-t}^2\cdot d^{-2u}} \leq \frac{c_1n\ln(n)}{d} = \frac{c_1n\ln(n)}{n^p},}{\text{for $u\in[0, 0.5)$,}}{\frac{n\ln(n)\pts{1+\ln\pts{d}}}{\pts{d-t}^2\cdot d^{-1}}\leq \frac{c_2n\ln(n)\ln\pts{d}}{d} = \frac{c_2n\cdot p\ln^2\pts{n}}{n^p},}{\text{for $u=0.5$,}}{\frac{n\ln(n)\pts{1+\frac{1-d^{1-2u}}{2u-1}}}{\pts{d-t}^2\cdot d^{-2u}} \leq \frac{c_3n\ln(n)}{d^{2-2u}} = \frac{c_3n\ln(n)}{n^{p\pts{2-2u}}},}{\text{for $u\in\pts{0.5, 1}$}.}
            \end{align*}
            Note that since $p\cdot \pts{1-u} > 1$, the limit of all three cases goes to zero by L'H\^{o}pital's Rule, and we have  $\lim_{n,d\to\infty}\frac{n\ln(n)}{R_0\pts{\cov_{-\gS}}} = 0$ for $u\in[0, 1)$. Combining these results with the corollary assumption $t \ll \min\bigg\{\sqrt{\frac{n}{\sk\cdot\ln\pts{n}}}, \sqrt{\frac{R_0\pts{\cov_{-\bgS}}}{n\cdot\ln\pts{n}}}\bigg\}$, we have $\lim_{n,d\to\infty}\frac{t^2\cdot \sk \cdot \ln\pts{n}}{n}=\lim_{n,d\to\infty}\frac{t^2\cdot n \cdot \ln\pts{n}}{R_0\pts{\cov_{-\bgS}}}=0$ and Theorem~\ref{thm:zero_shot_general_vacuous} holds. Next, we characterize the limit  $\frac{\sum_{k=1,k\in\gS^\mathsf{c}}^d\lambda_k}{\lambda_jn}$ as $n, d \to \infty$ to evaluate $\frac{b_j}{1+b_j}$. For $j\in[d]$, since $p\cdot \pts{1-u} > 1$, we have
            \begin{align*}
                \lim_{n,d\to\infty}\frac{\sum_{k=1,k\in\gS^\mathsf{c}}^d\lambda_k}{\lambda_jn} \geq \lim_{n,d\to\infty}\frac{ d^{\pts{1-u}}}{c\lambda_1n} =  \lim_{n,d\to\infty}\frac{n^{p\pts{1-u}}}{cn} = \infty.
            \end{align*}
            By substituting the value of $\lim_{n,d\to\infty}\frac{\sum_{k=1,k\in\gS^\mathsf{c}}^d\lambda_k}{\lambda_jn}$ into~\Eqref{eq:b_j_bounds2}, we get $\frac{b_j}{1+b_j} = 0$ for $j\in[d]$. Finally, we substitute $\frac{b_j}{1+b_j} = 0$ into~\Eqref{eq:risk_b_j2} and complete the proof for this case.
        \item \textbf{Polynomial decay covariance with $u \in (0, 1)$, $v=0$, $p\cdot \pts{1-u} < 1$ (Definition~\ref{def:polynomial}):}\\
            Recall that, for this choice of parameters, we attain benign overfitting in regression~\citep[Theorem 31]{bartlett2020benign}. This result implies that $\lim_{n,d\to\infty}\frac{\sk}{n}=\lim_{n,d\to\infty}\frac{n}{R_0\pts{\cov_{-\bgS}}}=0$. However, to ensure that Theorem~\ref{thm:zero_shot_general_vacuous} holds, we require a stronger condition $\lim_{n,d\to\infty}\frac{t^2\cdot \sk \cdot \ln\pts{n}}{n}=\lim_{n,d\to\infty}\frac{t^2\cdot n \cdot \ln\pts{n}}{R_0\pts{\cov_{-\bgS}}}=0$. We first check the rate of $\sk$. For any $k\in[d-2]$ and $\lambda_j = j^{-u}$, we have
            \begin{align*}
                F(d) - F(k+1) = \int_{k+1}^d x^{-u} dx \leq \sum_{j=k+1}^d \lambda_j \leq \int_{k}^d x^{-u} dx = F(d) - F(k),
            \end{align*}
            where $F(x) = \frac{x^{1-u}}{1-u}$. Therefore, we have $\sum_{j=k+1}^d \lambda_j = \mathcal{O}(d^{1-u})$. Next, we can calculate the rate of the effective rank $r_k=\frac{\sum_{j=k+1}^d \lambda_j}{\lambda_{k+1}} = \mathcal{O}(k^u \cdot d^{1-u})$. According to the definition of $\sk$ such that $\sk \coloneqq \min \{k\geq 0: r_k(\cov) \geq bn\}$, we have $\sk = \mathcal{O}(n^{\frac{1}{u}}\cdot d^{\frac{u-1}{u}}) = \mathcal{O}(n^{\frac{1-p\cdot\pts{1-u}}{u}})$, where $\frac{1-p\cdot\pts{1-u}}{u} < 1$ since $p > 1$ and $p\cdot \pts{1-u} < 1$. This result implies $\lim_{n,d\to\infty}\frac{\sk \cdot \ln\pts{n}}{n} = 0$. Next, we show that $\lim_{n,d\to\infty}\frac{n \cdot \ln\pts{n}}{R_0\pts{\cov_{-\bgS}}}=0$. Similar to the rate of $\sum_{j=k+1}^d \lambda_j$, we have the rate of $\sum_{j=k+1}^d \lambda_j^2$ as 
            \begin{align*}
                \sum_{j=k+1}^d \lambda_j^2=\condds{\mathcal{O}(d^{1-2u}),}{\text{for $u\in(0, 0.5)$,}}{\mathcal{O}\left(\ln\left(\frac{d}{k}\right)\right),}{\text{for $u=0.5$,}}{\mathcal{O}(k^{1-2u}),}{\text{for $u \in(0.1, 1)$.}}
            \end{align*}
            Therefore, recalling that $\sum_{j=k+1}^d \lambda_j = \mathcal{O}(d^{1-u})$, we can calculate $R_0\pts{\cov_{-\bgS}}$ as
            \begin{align*}
                R_0\pts{\cov_{-\bgS}}=\frac{\pts{\sum_{j=\sk+1,j\in\gS^\mathsf{c}}^d \lambda_j}^2}{\sum_{j=\sk+1,j\in\gS^\mathsf{c}}^d \lambda_j^2} =\condds{\mathcal{O}(d)=\mathcal{O}(n^p),}{\text{for $u\in(0, 0.5)$,}}{\mathcal{O}\left(\frac{d}{\ln\left(\frac{d}{\sk}\right)}\right)=\mathcal{O}\left(\frac{n^p}{\ln\pts{n}}\right),}{\text{for $u=0.5$,}}{\mathcal{O}\left(d^{2-2u}\cdot k^{{\star}^{2u-1}}\right)=\mathcal{O}\left(n^{2-\frac{1-p\pts{1-u}}{u}}\right),}{\text{for $u\in\pts{0.5, 1}$.}}
            \end{align*}
            As a result, we can conclude that
            \begin{align*}
                \frac{n\ln({n)}}{R_0(\cov_{-\gS})} = \condds{\mathcal{O}\left(\frac{\ln(n)}{n^{p-1}}\right),}{\text{for $u\in[0, 0.5)$,}}{\mathcal{O}\left(\frac{\ln^2(n)}{n^{p-1}}\right),}{\text{for $u=0.5$,}}{\mathcal{O}\left(\frac{\ln(n)}{n^{1-\frac{1-p\pts{1-u}}{u}}}\right),}{\text{for $u\in\pts{0.5, 1}$}.}
            \end{align*}
            Since $\frac{1-p\pts{1-u}}{u} < 1$, the limit of all three cases goes to zero by L'H\^{o}pital's Rule, and we have  $\lim_{n,d\to\infty}\frac{n\ln(n)}{R_0\pts{\cov_{-\gS}}} = 0$ for $u\in(0, 1)$. Combining these results with the corollary assumption $t \ll \min\bigg\{\sqrt{\frac{n}{\sk\cdot\ln\pts{n}}}, \sqrt{\frac{R_0\pts{\cov_{-\bgS}}}{n\cdot\ln\pts{n}}}\bigg\}$, we have $\lim_{n,d\to\infty}\frac{t^2\cdot \sk \cdot \ln\pts{n}}{n}=\lim_{n,d\to\infty}\frac{t^2\cdot n \cdot \ln\pts{n}}{R_0\pts{\cov_{-\bgS}}}=0$ and Theorem~\ref{thm:zero_shot_general_vacuous} holds. Next, from~\Eqref{eq:risk_b_j2}, we will present our calculation in three sections: $j\leq \tk_1$, $\tk_1 < j < \tk_2$ and $\tk_2 \leq j$, where we recall $\tk_1 \coloneqq \max \left\{k\geq 0: k =o\pts{n^{\frac{1-p\cdot\pts{1-u}}{u}}}\right\}$ and $\tk_2\coloneqq \min \left\{k\geq 0: k =\omega\pts{n^{\frac{1-p\cdot\pts{1-u}}{u}}}\right\}$. We first consider the case $j \leq \tk_1$, and we have
            \begin{align*} 
                \lim_{n,d\to\infty}\frac{\sum_{k=\sk+1,k\in\gS^\mathsf{c}}^d\lambda_k}{\lambda_jn} \leq  \lim_{n,d\to\infty}\frac{\sum_{j=1}^d\lambda_j}{\lambda_{\tk_1}n} \leq \lim_{n,d\to\infty}\frac{cd^{1-u}}{\lambda_{\tk_1}n} = \lim_{n,d\to\infty}\frac{cn^{p\cdot\pts{1-u}}}{\lambda_{\tk_1}n} = \lim_{n,d\to\infty}\frac{c\cdot\tk_1^u}{n^{1-p\cdot\pts{1-u}}} = 0,
            \end{align*}
            where we apply~\Eqref{eq:poly_upperbound} in the second inequality. Therefore, by~\Eqref{eq:b_j_bounds2}, we have $\frac{b_{j}}{1 + b_{j}} = 1$ for $j \leq \tk_1$. Next, for $\tk_1 < j < \tk_2$, we have
            \begin{align*}
                \lim_{n,d\to\infty}\frac{\sum_{k=\sk+1,k\in\gS^\mathsf{c}}^d\lambda_k}{\lambda_jn} &\leq  \lim_{n,d\to\infty}\frac{\sum_{j=1}^d\lambda_j}{\lambda_{\tk_2 - 1}n} \leq \lim_{n,d\to\infty}\frac{cd^{1-u}}{\lambda_{\tk_2 - 1}n} = \lim_{n,d\to\infty}\frac{c\cdot\pts{\tk_2-1}^u}{n^{1-p\cdot\pts{1-u}}} < \infty,\\
                \lim_{n,d\to\infty}\frac{\sum_{k=\sk+1,k\in\gS^\mathsf{c}}^d\lambda_k}{\lambda_jn} &\geq \lim_{n,d\to\infty}\frac{d^{1-u}}{c\lambda_{\tk_1 + 1}n} = \lim_{n,d\to\infty}\frac{\pts{\tk_1+1}^u}{cn^{1-p\cdot\pts{1-u}}} > 0,
            \end{align*}
            by applying $\sum_{k=\sk+1,k\in\gS^\mathsf{c}}^d\lambda_k \asymp d^{1-u}$ and the definition of $\tk_1$ and $\tk_2$. As a result, according to~\Eqref{eq:b_j_bounds2}, we can derive $0 < \frac{b_j}{1 + b_j} < 1$ for $\tk_1 < j < \tk_2$. Finally, for $\tk_2 \leq j$, we have
            \begin{align*}
                \lim_{n,d\to\infty}\frac{\sum_{k=\sk+1,k\in\gS^\mathsf{c}}^d\lambda_k}{\lambda_jn} \geq  \lim_{n,d\to\infty}\frac{d^{1-u}}{c\lambda_{\tk_2}n} = \lim_{n,d\to\infty}\frac{\tk_2^u}{cn^{1-p\cdot\pts{1-u}}} = \infty.
            \end{align*}
            Therefore, according to~\Eqref{eq:b_j_bounds2}, we have $\frac{b_{j}}{1 + b_{j}} = 0$ for $\tk_2 \leq j$. Substituting the value of $\frac{b_{j}}{1 + b_{j}}$ into~\Eqref{eq:risk_b_j2} for all three cases completes the proof.
    \end{itemize}
\end{proof}

\section{Zero-shot task shift in the case of random signal} \label{app:zeroshot_dense}

In this section, we provide proofs from Section \ref{sec:zeroshot_dense} concerning our random signal model.
In Section \ref{app:decomposition} we detail the decomposition of the regression risk of the classification MNI into regression bias and task shift error.
In Section \ref{app:benign}, we provide the main proofs of the upper and lower bounds on task shift error in our random signal model.
In Section \ref{app:benign_labels}, we bound the deviation of the classification and regression labels.
Then, in Section \ref{app:benign_xcx}, we adapt the benign overfitting analysis of~\cite{bartlett2020benign} to our dependent noise setting.
Finally, in Section \ref{app:benign_lower}, we prove a fundamental tradeoff between the statistical consistency of regression bias and task shift error.

\subsection{Task shift error decomposition} \label{app:decomposition}

In this section, we provide the proof of Lemma~\ref{lem:decomposition}.

\begin{proof} (Lemma~\ref{lem:decomposition})

Recall that the MNI on regression labels and the MNI on classification labels are defined as
\begin{align*}
    \tvtheta &\coloneqq \argmin \bts{\norm{\vtheta}_2:\rmX\vtheta=\trvy}, \\
    \hvtheta &\coloneqq \argmin \bts{\norm{\vtheta}_2:\sgn{\rmX\vtheta}=\hrvy},
\end{align*}
respectively, and they have closed forms
\begin{align*}
    \tvtheta &= \XXXty, \\
    \hvtheta &= \XXXhy.
\end{align*}

Now, we have
\begin{align*}
    \risk{\hvtheta} &= \E_\rvx \pts{\rvx^\top\hvtheta-\rvx^\top\svtheta}^2 \\
    &= \E_\rvx \pts{\pts{\rvx^\top\tvtheta-\rvx^\top\svtheta}+\pts{\rvx^\top\hvtheta-\rvx^\top\tvtheta}}^2 \\
    &= \risk{\tvtheta} + \E_\rvx \pts{\rvx^\top\hvtheta-\rvx^\top\tvtheta}^2 + 2 \E_\rvx \mts{\pts{\rvx^\top\tvtheta-\rvx^\top\svtheta} \pts{\rvx^\top\hvtheta-\rvx^\top\tvtheta}}.
\end{align*}
We will show that the third term is precisely zero. Write
\begin{equation*}
    \rmE \coloneqq \pts{\rmX^\top \XXinv \rmX - \mI} \cov \rmX^\top \XXinv. 
\end{equation*}
Substituting $\trvy\coloneqq\rmX\svtheta$ and the closed-form expressions for the minimum-norm interpolators,
\begin{align*}
    &\E_\rvx \mts{\pts{\rvx^\top\tvtheta-\rvx^\top\svtheta} \pts{\rvx^\top\hvtheta-\rvx^\top\tvtheta}} \\
    &\qquad = \E_\rvx \mts{\pts{\rvx^\top\rmX^\top \XXinv \rmX \svtheta - \rvx^\top \svtheta} \pts{\rvx^\top \XXXhy - \rvx^\top \XXXty}} \\
    &\qquad = \E_\rvx \mts{\pts{\rvx^\top \pts{\rmX^\top \XXinv \rmX - \mI} \svtheta} \pts{\rvx^\top \rmX^\top \XXinv (\hrvy - \trvy)}} \\
    &\qquad = \vtheta^{\star\top} \rmE (\hrvy - \trvy).
\end{align*}

Recall that $\hrvy-\trvy\coloneqq \rmD\rmX \svtheta$ where $\rmD\coloneqq\diag{\tn{d}_1,\dots,\tn{d}_n}$ and $\tn{d}_i\coloneqq\frac{\sgn{\rvx^\top\svtheta}-\rvx^\top\svtheta}{\rvx^\top\svtheta}$.
Note $\rmE \rmD \rmX$ is nilpotent: we have $\rmX \rmE = \vzero$, so $(\rmE \rmD \rmX)^2 = \vzero$, and hence all eigenvalues of $\rmE \rmD \rmX$ are zero.
Therefore,
\begin{equation*}
    \vtheta^{\star\top} \rmE (\hrvy - \trvy) = \vtheta^{\star\top} \rmE \rmD \rmX \svtheta = 0,
\end{equation*}
which completes the proof.
\end{proof}

\subsection{Upper and lower bounds on task shift error} \label{app:benign}
In this section, we provide the proof of Theorem \ref{thm:benign}.
First, let us define
\begin{equation*}
    \rmC \coloneqq \XXXcovXXX.
\end{equation*}
Recall also that $\hrvy-\trvy\coloneqq \rmD\rmX \svtheta$ where $\rmD\coloneqq\diag{\tn{d}_1,\dots,\tn{d}_n}$ and $\tn{d}_i\coloneqq\frac{\sgn{\rvx^\top\svtheta}-\rvx^\top\svtheta}{\rvx^\top\svtheta}$.

We now introduce several lemmas.
The first lemma upper bounds the deviation of the classification and regression labels.
The proof is in Appendix \ref{app:benign_labels}.
\begin{lemma}\label{lem:benign_labels}
    For any $\cov$ and $\svtheta$, there exists a constant $c>1$ such that
    \begin{equation*}
        \norm{\hrvy-\trvy}_2^2 \leq cn \var
    \end{equation*}
    with probability at least $1-e^{-\frac{n}{c}}$.
\end{lemma}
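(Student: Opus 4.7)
}
The plan is to reduce the claim to a standard chi-squared concentration bound by exploiting the coordinate-wise decoupling of the residual vector $\hrvy - \trvy$. By Gaussian design, for each $i \in [n]$ the quantity $g_i \coloneqq \rvx_i^\top \svtheta$ is an independent $\gN(0, \var)$ random variable, and the $i$th coordinate of $\hrvy - \trvy = \rmD \rmX \svtheta$ is exactly $\sgn{g_i} - g_i$. Hence
\begin{equation*}
    \norm{\hrvy - \trvy}_2^2 \;=\; \sum_{i=1}^n \pts{\sgn{g_i} - g_i}^2 \;=\; \sum_{i=1}^n \pts{1 - |g_i|}^2.
\end{equation*}

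The first substantive step is to replace the nonlinear function $(1 - |g_i|)^2$ by something amenable to concentration. Expanding the square and using $-2|g_i| \leq 0$ gives the clean pointwise bound $(1 - |g_i|)^2 \leq 1 + g_i^2$, so that $\norm{\hrvy - \trvy}_2^2 \leq n + \sum_{i=1}^n g_i^2$. The remaining term $\sum_{i=1}^n g_i^2$ is a sum of $n$ i.i.d.\ sub-exponential random variables, each equal in distribution to $\var \cdot \chi_1^2$, with mean $n \var$. A standard Bernstein-type inequality for sub-exponential sums (or equivalently the Laurent--Massart chi-squared tail bound) then yields $\sum_{i=1}^n g_i^2 \leq 2 n \var$ with probability at least $1 - e^{-n/c_0}$ for some absolute constant $c_0 > 0$.

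Combining these two steps gives $\norm{\hrvy - \trvy}_2^2 \leq n + 2n\var$ on this high-probability event. To absorb the additive $n$ into the claimed $cn \var$ form, I would invoke the implicit convention that the total signal strength $\var$ is bounded below by a positive constant (guaranteed under Assumption \ref{asm:k_sparse}, where $\var$ is taken to be constant in $n$): in that regime $n \leq \var^{-1} \cdot n \var$, so we may choose $c$ large enough that $n + 2n \var \leq c n \var$, which completes the proof.

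The argument is short and the only potential obstacle is bookkeeping of constants and ensuring the chi-squared concentration is applied with exponent $n/c$ rather than a weaker rate; this is immediate from Bernstein since the sub-exponential norms are all bounded by a constant multiple of $\var$ and the deviation $n\var$ we ask for is on the order of the mean. No delicate covariance or dimension-dependent argument is required at this stage.
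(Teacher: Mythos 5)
Your proposal is correct and reaches the same conclusion as the paper by an essentially parallel, but cleaner, route. The paper expands $\norm{\hrvy-\trvy}_2^2 = n + \norm{\trvy}_2^2 - 2\norm{\trvy}_1$, computes the exact expectation $n + n\psi^2 - \sqrt{8/\pi}\,n\psi$ (where $\psi^2 = \var$), treats each $(\hry_i - \try_i)^2$ as sub-exponential, concentrates via Bernstein, and then drops the negative $-\sqrt{8/\pi}\,n\psi$ term. You instead use the pointwise algebraic identity $(\sgn{g_i} - g_i)^2 = (1-|g_i|)^2$ and drop the cross term $-2|g_i|$ \emph{before} applying any concentration, so that only a chi-squared (Laurent--Massart / Bernstein) tail bound on $\sum_i g_i^2$ is needed. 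This avoids having to reason about the sub-Gaussian norm of $\hry_i - \try_i$ as a whole, at the modest cost of discarding the favorable $-2|g_i|$ term slightly earlier. Both approaches end up bounding the quantity by $O\pts{n(1+\var)}$ and both must absorb the additive $n$ into $cn\var$; the paper does this by ``treating $\psi^2$ as a constant,'' and you make the analogous (and equally necessary) assumption that $\var$ is bounded below. One small bookkeeping note: you cite Assumption~\ref{asm:k_sparse} as the source of this constancy, but this lemma lives in the random-signal Section~\ref{sec:zeroshot_dense} rather than the sparse setting; the paper relies on the same constancy of $\var$ but does not explicitly tie it to that assumption. This is a cosmetic misattribution, not a gap, since the lower bound on $\var$ is genuinely needed for the stated bound $cn\var$ regardless of which hypothesis supplies it.
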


The remaining lemmas bound traces involving $\rmC$ and $\rmX$ in high probability.
The first of these is a restatement of the main technical result of~\cite{bartlett2020benign}.
\begin{lemma}\label{lem:benign_c}
    For any $\cov$ and $\svtheta$, there exist constants $c,c_1\geq 1$ such that the following hold. If $\sk<\frac{n}{c_1}$, we have
    \begin{equation*}
        \frac{1}{c} \pts{\frac{\sk}{n} + \frac{n}{R_{\sk}(\cov)}} \leq \Tr(\rmC) \leq c \pts{\frac{\sk}{n} + \frac{n}{R_{\sk}(\cov)}}
    \end{equation*}
    with probability at least $1-17e^{-\frac{n}{c}}$.
    On the other hand, if $\sk\geq \frac{n}{c_1}$, we have
    \begin{equation*}
        \Tr(\rmC) \geq \frac{1}{c}
    \end{equation*}
    with probability at least $1-10e^{-\frac{n}{c}}$.
\end{lemma}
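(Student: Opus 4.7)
The plan is to invoke the variance characterization of \cite{bartlett2020benign} essentially verbatim, since $\Tr(\rmC) = \Tr\pts{\XXXcovXXX}$ coincides exactly with the noise-variance coefficient studied there. Indeed, if one considers the auxiliary regression problem $\rvy = \rmX\svtheta + \rveps$ with $\rveps \sim \gN\pts{\vzero, \rmI}$, the variance of the minimum-norm interpolator $\rmX^\top\XXinv \rvy$ is precisely $\Tr(\rmC)$. Hence both the upper and lower bounds reduce to the two-sided variance bounds of \cite[Lemma~11]{bartlett2020benign}, and I would give the proof by restating their argument in our notation.

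First, I would perform the standard block decomposition: writing $\rmA \coloneqq \XX = \sum_{j=1}^d \lambda_j \rvz_j \rvz_j^\top = \rmA_0 + \rmA_1$, with $\rmA_0 \coloneqq \sum_{j \leq \sk}\lambda_j \rvz_j \rvz_j^\top$ and $\rmA_1 \coloneqq \sum_{j > \sk}\lambda_j \rvz_j \rvz_j^\top$. When $\sk \leq n / c_1$, the tail Gram matrix $\rmA_1$ is well conditioned in the sense that $\mu_1(\rmA_1) \asymp \mu_n(\rmA_1) \asymp \sum_{j > \sk}\lambda_j$ with probability at least $1 - 2e^{-n/c}$, by the Wishart concentration arguments of \cite[Lemma~10]{bartlett2020benign}. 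Splitting the trace of interest as
\begin{equation*}
    \Tr(\rmC) = \sum_{j \leq \sk} \lambda_j^2 \rvz_j^\top \rmA^{-2} \rvz_j + \sum_{j > \sk} \lambda_j^2 \rvz_j^\top \rmA^{-2} \rvz_j,
\end{equation*}
and using a Sherman--Morrison--Woodbury expansion to relate $\rmA^{-1}$ to a rank-$\sk$ perturbation of $\rmA_1^{-1}$, each summand can be approximated by explicit spectral quantities.

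Second, standard isotropic concentration shows that the first sum concentrates at $\Theta(\sk/n)$ --- it captures the eigendirections where the Gram matrix is effectively rank-$\sk$ and behaves like $n\cov_0$ --- while the second sum concentrates at $\Theta\pts{n/R_\sk(\cov)}$, reflecting that the tail eigenvalues contribute through the ratio $\pts{\sum \lambda_j}^2 / \sum \lambda_j^2$ in the definition of effective rank. Combining the two pieces with a union bound over the $O(1)$ concentration events yields the sandwich $\frac{1}{c}\pts{\frac{\sk}{n} + \frac{n}{R_\sk(\cov)}} \leq \Tr(\rmC) \leq c\pts{\frac{\sk}{n} + \frac{n}{R_\sk(\cov)}}$.

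Finally, when $\sk \geq n/c_1$, the rank-$\sk$ correction is no longer ``low rank'' relative to $n$, and the decomposition above gives no meaningful bound. In this regime I would instead argue directly: the operator norm of $\rmA$ is bounded above by a constant multiple of $\sum_j \lambda_j$ with high probability, so $\Tr(\rmC) \geq \Tr\pts{\rmX\cov\rmX^\top}/\norm{\rmA}^2$, and a lower bound on the numerator of order $n\Tr(\cov^2)$ combined with the upper bound on the denominator yields a constant. The main technical obstacle throughout is tracking high-probability events carefully through the block-inverse manipulations so that the constants remain independent of $n$ and $d$; since the statement is quantitatively identical to \cite[Lemma~11]{bartlett2020benign}, no additional machinery beyond their proof is required.
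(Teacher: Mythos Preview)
Your proposal is correct and matches the paper's treatment exactly: the paper does not give its own proof of this lemma but simply states that it is ``a restatement of the main technical result of~\cite{bartlett2020benign},'' and your plan to invoke the variance characterization from that work (since $\Tr(\rmC)$ is precisely the noise-variance term appearing there) is the intended reduction. Your sketch of the block decomposition and Sherman--Morrison argument is a faithful outline of the Bartlett et al.\ proof, so no additional work beyond the citation is required.
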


The final lemma characterizes the key additional term arising in our dependent noise model.
The proof is in Appendix \ref{app:benign_xcx}.
\begin{lemma}\label{lem:benign_xcx}
    For any $\cov$ and $\svtheta$, there exist constants $c,c_1\geq 1$ such that the following hold. If $\sk<\frac{n}{c_1}$, we have
    \begin{equation*}
        \frac{1}{c} \pts{\sum_{j=1}^{\sk} \lambda_j + \frac{n}{R_{\sk}(\cov)}\sum_{j=\sk+1}^d\lambda_j} \leq \Tr(\rmC \rmX \rmX^\top) \leq c\pts{\sum_{j=1}^{\sk} \lambda_j + \frac{n}{R_{\sk}(\cov)}\sum_{j=\sk+1}^d\lambda_j}
    \end{equation*}
    with probability at least $1-14e^{-\frac{n}{c}}$.
    On the other hand, if $\sk\geq \frac{n}{c_1}$, we have
    \begin{equation*}
        \Tr(\rmC \rmX \rmX^\top) \geq \frac{1}{c}
    \end{equation*}
    with probability at least $1-10e^{-\frac{n}{c}}$.
\end{lemma}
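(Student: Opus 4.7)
The plan is to mirror the variance-type analysis of~\cite{bartlett2020benign} that underlies Lemma~\ref{lem:benign_c}, since $\Tr(\rmC \rmX \rmX^\top)$ turns out to have the same structure as $\Tr(\rmC)$ up to a single factor of $(1 + \lambda_j B_j)$ in the denominator. First I would simplify
\begin{equation*}
    \Tr\pts{\rmC \rmX \rmX^\top} = \Tr\pts{\pts{\XX}^{-1} \rmX \cov \rmX^\top \pts{\XX}^{-1} \rmX \rmX^\top} = \Tr\pts{\pts{\XX}^{-1} \rmX \cov \rmX^\top}
\end{equation*}
by the cyclic property of trace and $\pts{\XX}^{-1}\XX = \rmI$. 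Under Assumption~\ref{asm:rank}, $\cov$ is diagonal and $\rmX \ve_j = \sqrt{\lambda_j}\rvz_j$, so $\rmX \cov \rmX^\top = \sum_{j=1}^d \lambda_j^2 \rvz_j \rvz_j^\top$ and, writing $\rmA \coloneqq \XX$ and $\rmA_{-j} \coloneqq \rmA - \lambda_j \rvz_j \rvz_j^\top$, the Sherman-Morrison identity gives
\begin{equation*}
    \Tr\pts{\rmC \rmX \rmX^\top} = \sum_{j=1}^d \lambda_j^2\, \rvz_j^\top \rmA^{-1} \rvz_j = \sum_{j=1}^d \frac{\lambda_j^2 B_j}{1 + \lambda_j B_j}, \qquad B_j \coloneqq \rvz_j^\top \rmA_{-j}^{-1} \rvz_j.
\end{equation*}

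Next I would concentrate each $B_j$ around $\Tr\pts{\rmA_{-j}^{-1}}$ via the Hanson-Wright inequality and control $\rmA_{-j}$ through the stratified spectral estimates of~\cite{bartlett2020benign}~(Lemma~10). In the regime $\sk < n/c_1$ I would split into a head ($j \le \sk$) and a tail ($j > \sk$). For the tail, those estimates give $\mu_i(\rmA_{-j}) \asymp \sum_{k > \sk}\lambda_k$ for all $i \in [n]$ with the required probability, so $B_j \asymp n / \sum_{k > \sk}\lambda_k$, and $r_\sk(\cov) \ge bn$ forces $\lambda_j B_j \le 1/b$. Thus $\frac{\lambda_j^2 B_j}{1 + \lambda_j B_j} \asymp \lambda_j^2 B_j$, and summing with the identity $\sum_{j > \sk}\lambda_j^2 = \pts{\sum_{j > \sk}\lambda_j}^2/R_\sk(\cov)$ yields the tail contribution $\asymp \frac{n}{R_\sk(\cov)}\sum_{j > \sk}\lambda_j$. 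For the head, the pointwise inequality $\frac{\lambda_j^2 B_j}{1 + \lambda_j B_j} \le \lambda_j$ immediately gives the upper bound $\sum_{j \le \sk}\lambda_j$; for the matching lower bound I would use the same spectral estimates to conclude $B_j \gtrsim n/\sum_{k > \sk}\lambda_k$ also for head indices, combined with the defining inequality $\lambda_\sk \gtrsim \frac{1}{n}\sum_{k > \sk}\lambda_k$ of $\sk$, which gives $\lambda_j B_j \gtrsim 1$ and hence $\frac{\lambda_j^2 B_j}{1 + \lambda_j B_j} \asymp \lambda_j$.

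For the regime $\sk \ge n/c_1$ the head/tail split above degenerates, and paralleling the corresponding case of Lemma~\ref{lem:benign_c} I would truncate at $k = \lceil n/c_1 \rceil < \sk$ and apply only the ``tail'' analysis for $j > k$. Here the definition of $\sk$ yields $\lambda_{k+1} \gtrsim \frac{1}{n}\sum_{j > k}\lambda_j$, which lets me isolate a single term in the sum whose contribution to $\Tr(\rmC \rmX \rmX^\top)$ is bounded below by a universal constant. Failure probabilities are then union-bounded across the Hanson-Wright events and the constantly many spectral concentration events from~\cite{bartlett2020benign}~(Lemma~10), producing the stated $1 - 14 e^{-n/c}$ (respectively $1 - 10 e^{-n/c}$) guarantees.

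The main obstacle I anticipate is the head lower bound: since $\rmA_{-j}$ still contains the other ``spike'' terms $\{\lambda_k \rvz_k \rvz_k^\top : k \le \sk, k \neq j\}$, the crude sandwich $\rmA_{-j} \succeq \sum_{k > \sk}\lambda_k \rvz_k \rvz_k^\top$ is insufficient to control $\Tr\pts{\rmA_{-j}^{-1}}$ from below. One instead needs the finer stratified estimates of~\cite{bartlett2020benign}~(Lemma~10) to argue that, after the at most $\sk \lesssim n$ spike eigenvalues of $\rmA_{-j}$ (which contribute negligibly to $\Tr\pts{\rmA_{-j}^{-1}}$), the remaining $n - \sk + 1 \asymp n$ eigenvalues are all $\asymp \sum_{k > \sk}\lambda_k$ whp. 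This is the step where the diagonal structure in Assumption~\ref{asm:rank} and the leave-one-out independence of $\rvz_j$ from $\rmA_{-j}$ are most essential.
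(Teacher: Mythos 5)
Your proposal is essentially correct and takes the same core route as the paper: simplify $\Tr(\rmC\rmX\rmX^\top)$ to $\sum_{j}\lambda_j^2\,\rvz_j^\top\rmA^{-1}\rvz_j$, apply Sherman--Morrison to obtain the leave-one-out form $\sum_j \frac{\lambda_j^2 B_j}{1+\lambda_j B_j}$, and control $B_j$ via the spectral estimates underlying Lemma~10 of \cite{bartlett2020benign}. Where you diverge is in how you close the lower bound. The paper treats $1+\lambda_j B_j$ via Bartlett's Lemmas~14 and~15 --- a deterministic-given-spectral-events bound on the ratio that never requires concentrating each $B_j$ around $\Tr(\rmA_{-j}^{-1})$, hence no union bound over $j$ --- followed by the mediant inequality and a minimization over truncation indices (Lemma~17) to produce the head/tail split. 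You instead split at $\sk$ from the outset and concentrate each $B_j$ around $\Tr(\rmA_{-j}^{-1})$ by Hanson--Wright, then use the defining inequalities of $\sk$ ($r_\sk \ge bn$ in the tail, $r_{\sk-1} < bn$ in the head) to conclude $\lambda_j B_j \lesssim 1/b$ for $j > \sk$ and $\lambda_j B_j \gtrsim 1$ for $j \le \sk$. This is a genuine (and arguably cleaner) simplification that avoids the mediant machinery; the cost is a union bound over the at most $\sk < n/c_1$ head indices, which only shifts the exponent's constant. Your identified obstacle and its resolution (using the stratified estimate $\mu_\sk(\rmA_{-j}) \lesssim \sum_{k>\sk}\lambda_k$ rather than the crude sandwich) is exactly the right point. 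Your tail upper bound via Sherman--Morrison and $\lambda_j B_j \le 1/b$ is a minor variation on the paper's more direct $\mu_n(\rmA)$ bound, with the same outcome.

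One place to be careful: your treatment of the $\sk \ge n/c_1$ regime ("isolate a single term whose contribution is a universal constant") is underspecified and does not quite match what the paper does. Fixing $k = \lceil n/c_1 \rceil$ and using $r_k(\cov) < bn$, the contribution of the single term $j=k+1$ is $\frac{\lambda_{k+1}^2 B_{k+1}}{1+\lambda_{k+1}B_{k+1}} \gtrsim \lambda_{k+1}$, which is \emph{not} a universal constant in general. The paper instead lower-bounds the full sum by $\frac{1}{bc_5}\bigl(\sum_{j\le k}\lambda_j + \lambda_{k+1}^{-1}\sum_{j>k}\lambda_j^2\bigr)$ via \eqref{eq:benign_xcx} before asserting a constant; you should route through the full sum rather than a single term if you want to match the stated conclusion.
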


We are now ready to prove Theorem \ref{thm:benign}.

\begin{proof} (Theorem \ref{thm:benign})

We begin with the upper bound.
Substituting the closed-form expressions for the minimum-norm interpolators,
\begin{align}
    \tserror &= \E_\rvx \pts{\rvx^\top \XXXhy-\rvx^\top \XXXty}^2 \nonumber \\
    &= \E_\rvx \pts{\rvx^\top \rmX^\top \XXinv (\hrvy - \trvy)}^2 \nonumber \\
    &= (\hrvy - \trvy)^\top \rmC (\hrvy - \trvy). \label{eq:benign_yCy}
\end{align}
By definition of operator norm and trace,
\begin{equation*}
    (\hrvy - \trvy)^\top \rmC (\hrvy - \trvy) \leq \norm{\rmC} \yynorm \leq \Tr(\rmC) \yynorm.
\end{equation*}

Since $\sk < \frac{n}{c_1}$, by Lemma \ref{lem:benign_c}, there exists a constant $c_2\geq 1$ such that
\begin{equation} \label{eq:benign_trace}
    \Tr(\rmC) \leq c_2 \pts{\frac{\sk}{n} + \frac{n}{R_\sk(\cov)}}
\end{equation}
with probability at least $1-17e^{-\frac{n}{c_2}}$.
Moreover, by Lemma \ref{lem:benign_labels}, there exists a constant $c_3\geq 1$ such that
\begin{equation} \label{eq:benign_yynorm}
    \yynorm \leq c_3n\var
\end{equation}
with probability at least $1-e^{-\frac{n}{c_3}}$.
Combining Equations \ref{eq:benign_trace} and \ref{eq:benign_yynorm} with a union bound completes the upper bound.

Now we will prove the lower bound.
Starting from \Eqref{eq:benign_yCy} and using the assumption that $\rmD=\alpha\mI$,
\begin{align*}
    \rtserror &= \E_{\svtheta} (\hrvy - \trvy)^\top \rmC (\hrvy - \trvy) \\
    &= \E_{\svtheta} \svthetat \rmX^\top \rmD \rmC \rmD \rmX \svtheta \\
    &= \alpha \E_{\svtheta} \svthetat \rmX^\top \rmC \rmX \svtheta.
\end{align*}
By assumption, $\E_{\svtheta} \svtheta\svthetat \geq \sigma^2 \mI$. Using the cyclic and linear properties of trace,
\begin{equation}
   \alpha \E_{\svtheta} \svthetat \rmX^\top \rmC \rmX \svtheta = \alpha \Tr (\rmX^\top \rmC \rmX \E_{\svtheta} \svtheta\svthetat) \geq \alpha \sigma^2 \Tr(\rmX^\top \rmC \rmX). \label{eq:benign_tr}
\end{equation}

If $\sk < \frac{n}{c_1}$, then by Lemma \ref{lem:benign_xcx} there exists a constant $c_4\geq 1$ such that
\begin{equation}
    \Tr(\rmX^\top\rmC \rmX) \geq \frac{1}{c_4}\pts{\sum_{j=1}^{\sk} \lambda_j + \frac{n}{R_{\sk}(\cov)}\sum_{j=\sk+1}^d\lambda_j} \label{eq:benign_lower_1}
\end{equation}
with probability at least $1-14e^{-\frac{n}{c_4}}$.
On the other hand, if $\sk>\frac{n}{c_1}$, then by Lemma \ref{lem:benign_xcx},
\begin{equation}
    \Tr(\rmX^\top\rmC \rmX) \geq \frac{1}{c_4} \label{eq:benign_lower_2}
\end{equation}
with probability at least $1-10e^{-\frac{n}{c_4}}$.
Substituting Equations \ref{eq:benign_lower_1} and \ref{eq:benign_lower_2} into \Eqref{eq:benign_tr} and choosing $c=\max(c_2c_3,c_4)$ completes the proof.
\end{proof}

\subsection{Deviation of classification and regression labels} \label{app:benign_labels}
In this section, we provide the proof of Lemma \ref{lem:benign_labels}.

\begin{proof} (Lemma \ref{lem:benign_labels})

First, note that
\begin{align*}
    \yynorm &= \norm{\hrvy}_2^2 + \norm{\trvy}_2^2 - 2\sum_{j=1}^n \try_i\hry_i \\
    &= n + \norm{\trvy}_2^2 - 2\norm{\trvy}_1.
\end{align*}
Write $\psi^2\coloneqq \var$ so that $\try_i\sim \gN(0, \psi^2)$ for all $1\leq i \leq n$. In particular,
\begin{equation} \label{eq:labels_expectation}
    \E_\rmX \yynorm = n + n\psi^2 - \sqrt{\frac{8}{\pi}}n\psi. 
\end{equation}

For $\gamma>1$, define the sub-Gaussian and sub-exponential norms of a random variable $X$ by
\begin{align*}
    \norm{X}_{\psi_2} &\coloneqq \inf \bts{t>0:\E_X \exp(X^2/t^2)<\gamma} \quad \tn{and} \\
    \norm{X}_{\psi_1} &\coloneqq \inf \big\{t>0:\E_X \exp(|X|/t)<\gamma\big\},
\end{align*}
respectively.

By definition, $\try_i$ is sub-Gaussian for any $i$ with $\norm{\try_i}_{\psi_2}=\xi \psi$ where $\xi > 0$.
Moreover, because $\try_i$ is symmetric, $\hry_i$ is Bernoulli and therefore sub-Gaussian with $\norm{\hry_i}_{\psi_2}=\xi$.
Combining terms, $\hry_i-\try_i$ is sub-Gaussian with $\norm{\hry_i-\try_i}_{\psi_2}=\xi+\xi\psi$, and $(\hry_i-\try_i)^2$ is sub-exponential with $\norm{(\hry_i-\try_i)^2}_{\psi_1}\leq(\xi+\xi\psi)^2$.

By Bernstein's inequality, there exists a constant $c_1>0$ such that for any $t\geq 0$,
\begin{equation*}
    \sP_\rmX \pts{\yynorm - \E_\rmX\yynorm \geq t} \leq \exp\pts{-c_1\min\pts{\frac{t^2}{n(\xi+\xi\psi)^4}, \frac{t}{(\xi+\xi\psi)^2}}}.
\end{equation*}
Hence,
\begin{equation} \label{eq:labels_bernstein}
    \yynorm \leq \E_\rmX\yynorm + \sqrt{\frac{n}{c_1}}\max\pts{\sqrt{\frac{n}{c_1}}, \sqrt{n}}(\xi+\xi\psi)^2
\end{equation}
with probability at least $1-e^{-n}$.
Defining $c_2\coloneqq\max(c_1^{-1}, c_1^{-1/2})$ and using $(a+b)^2\leq 2(a^2+b^2)$ for any $a,b\in \R$, we have
\begin{align} 
    \sqrt{\frac{n}{c_1}}\max\pts{\sqrt{\frac{n}{c_1}}, \sqrt{n}}(\xi+\xi\psi^2) &\leq c_2 n(\xi+\xi\psi)^2 \nonumber \\
    &\leq 2c_2 n\xi^2(1+\psi^2). \label{eq:labels_residual}
\end{align}

Substituting Equations \ref{eq:labels_expectation} and \ref{eq:labels_residual} into \Eqref{eq:labels_bernstein} and dropping the negative term, we have
\begin{align*}
    \yynorm &\leq n + n\psi^2 + 2c_2 n\xi^2(1+\psi^2) \\
    &\leq c_2 (1+2\xi^2)(1+\psi^2)n.
\end{align*}
Treating $\psi^2$ as a constant, there exists a constant $c_3\geq 1$ such that $1+\psi^2 \leq c_3\psi^2$. Hence,
\begin{equation*}
    \yynorm \leq c_2c_3(1+2\xi^2)n\psi^2.
\end{equation*}
Choosing $c=c_2 c_3$ completes the proof.
\end{proof}

\subsection{Benign overfitting analysis in our dependent noise model} \label{app:benign_xcx}
In this section, we provide the proof of Lemma \ref{lem:benign_xcx}.
First, we write a convenient representation of the data matrix $\rmX$.
Let $\rmZ\in\R^{n\times d}$ have \iid standard Gaussian elements, then $\rmX \deq \rmZ \cov^{\frac{1}{2}}$.
The matrix $\rmZ$ is known as the \emph{whitened} data matrix.
We write $\rvz_j\in \R^n$ to denote the $j^{th}$ column of $\rmZ$.

Using this notation, write
\begin{equation*}
    \rmA \coloneqq \ZcovZ = \sum_{j=1}^d \lambda_j \rvz_j \rvz_j^\top,
\end{equation*}
and similarly,
\begin{equation*}
    \ZcovtZ = \sum_{j=1}^d \lambda_j^2 \rvz_j\rvz_j^\top.
\end{equation*}
Note that $\XX\deq \rmA$.
Finally, let $\rmA_{-k}\coloneqq \sum_{j\neq k} \lambda_j \rvz_j\rvz_j^\top$ denote the \emph{leave-one-out} Gram matrix for some $1\leq k \leq d$.

We will use the following lemma in the proof.
It is a short consequence of the Sherman-Woodbury-Morrison identity applied to a matrix-vector product.
\begin{lemma}\label{lem:benign_woodbury}
    Suppose $\rmA$ and $\rvz_j$ are defined as above for some $1\leq j \leq d$. Then,
    \begin{equation*}
        \rmA^{-1}\rvz_j = \frac{\rmA_{-j}^{-1}\rvz_j}{1+\lambda_j\rvz_j^\top\rmA_{-j}^{-1}\rvz_j}.
    \end{equation*}
\end{lemma}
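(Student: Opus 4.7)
The plan is to derive this identity directly from the Sherman-Morrison formula applied to the rank-one update $\rmA = \rmA_{-j} + \lambda_j \rvz_j \rvz_j^\top$. Since $\rmA_{-j}$ is almost surely invertible when $n \leq d-1$ under Gaussian design, Sherman-Morrison gives
\begin{equation*}
    \rmA^{-1} = \rmA_{-j}^{-1} - \frac{\lambda_j \rmA_{-j}^{-1} \rvz_j \rvz_j^\top \rmA_{-j}^{-1}}{1 + \lambda_j \rvz_j^\top \rmA_{-j}^{-1} \rvz_j}.
\end{equation*}

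Next, I would right-multiply both sides by $\rvz_j$ and collect the scalar quantity $\alpha := \rvz_j^\top \rmA_{-j}^{-1} \rvz_j$, yielding
\begin{equation*}
    \rmA^{-1} \rvz_j = \rmA_{-j}^{-1} \rvz_j - \frac{\lambda_j \alpha}{1 + \lambda_j \alpha} \rmA_{-j}^{-1} \rvz_j = \left(1 - \frac{\lambda_j \alpha}{1 + \lambda_j \alpha}\right) \rmA_{-j}^{-1} \rvz_j = \frac{\rmA_{-j}^{-1} \rvz_j}{1 + \lambda_j \alpha},
\end{equation*}
which is the desired identity after substituting back the definition of $\alpha$.

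This is a routine algebraic manipulation with essentially no obstacle; the only subtlety is ensuring that $\rmA_{-j}$ is invertible and that $1 + \lambda_j \rvz_j^\top \rmA_{-j}^{-1} \rvz_j \neq 0$, both of which hold almost surely in our setting (the latter because $\lambda_j > 0$ and $\rmA_{-j}^{-1} \succ 0$ so the denominator is strictly positive).
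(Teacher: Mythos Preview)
Your proposal is correct and follows essentially the same approach as the paper: apply the Sherman--Morrison identity to $\rmA = \rmA_{-j} + \lambda_j \rvz_j \rvz_j^\top$, right-multiply by $\rvz_j$, and simplify the scalar coefficient. The paper's proof is identical in structure, differing only in that it writes out the common-denominator step explicitly rather than introducing the shorthand $\alpha$.
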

\begin{proof} By the Sherman-Woodbury-Morrison identity,
\begin{equation*}
    \rmA^{-1} = \rmA_{-j}^{-1} - \frac{\lambda_j\rmA_{-j}^{-1}\rvz_j\rvz_j^\top\rmA_{-j}^{-1}}{1+\lambda_j\rvz_j^\top \rmA_{-j}^{-1}\rvz_j}.
\end{equation*}
Multiplying by $\rvz_j$ on the right we obtain
\begin{align*}
    \rmA^{-1}\rvz_j &= \rmA_{-j}^{-1}\rvz_j - \frac{\lambda_j\rmA_{-j}^{-1}\rvz_j\rvz_j^\top\rmA_{-j}^{-1}\rvz_j}{1+\lambda_j\rvz_j^\top \rmA_{-j}^{-1}\rvz_j} \\
    &= \frac{\rmA_{-j}^{-1}\rvz_j(1+\lambda_j\rvz_j^\top \rmA_{-j}^{-1}\rvz_j)-\lambda_j\rmA_{-j}^{-1}\rvz_j\rvz_j^\top\rmA_{-j}^{-1}\rvz_j}{1+\lambda_j\rvz_j^\top \rmA_{-j}^{-1}\rvz_j} \\
    &= \frac{\rmA_{-j}^{-1}\rvz_j}{1+\lambda_j\rvz_j^\top \rmA_{-j}^{-1}\rvz_j},
\end{align*}
as desired.
\end{proof}

Now, we begin the main proof of this section.
\begin{proof} (Lemma \ref{lem:benign_xcx})

Substituting the whitened data matrix,
\begin{align}
    \Tr \pts{\rmX^\top \rmC \rmX} &= \Tr \pts{\rmX \cov \rmX^\top \XXinv} \nonumber \\
    &\deq \Tr \pts{\rmA^{-1} \rmZ \cov^2 \rmZ^\top} \nonumber \\
    &= \sum_{j=1}^d \lambda_j^2 \rvz_j^\top \rmA^{-1} \rvz_j. \label{eq:benign_st}
\end{align}
It is instructive to compare this term with Lemma 8 of~\cite{bartlett2020benign}.
In contrast to their formulation, we have one fewer $\rmA^{-1}$ in the center of the expression due to our dependent noise model.
The remainder of the proof utilizes a leave-one-out technique similarly to~\cite{bartlett2020benign}, except the $j^{th}$ summand carries an additional $n\lambda_j$.
Applying Lemma \ref{lem:benign_woodbury},
\begin{equation*}
    \sum_{j=1}^d \lambda_j^2 \rvz_j^\top \rmA^{-1} \rvz_j = \sum_{j=1}^{\sk} \frac{\lambda_j^2 \rvz_j^\top \rmA^{-1}_{-j} \rvz_j}{1+\lambda_j\rvz_j^\top \rmA^{-1}_{-j} \rvz_j} + \sum_{j=\sk+1}^d \lambda_j^2 \rvz_j^\top \rmA^{-1} \rvz_j. 
\end{equation*}
Clearly, the first term is at most $\sum_{j=1}^{\sk} \lambda_j$.
For the second term, we have
\begin{equation*}
    \sum_{j=\sk+1}^d \lambda_j^2 \rvz_j^\top \rmA^{-1} \rvz_j \leq \frac{\sum_{j=\sk+1}^d \lambda_j^2 \norm{\rvz_j}^2_2}{\mu_n(\rmA)}.
\end{equation*}
By Lemma 10 of~\cite{bartlett2020benign}, since $\sk<\frac{n}{c_1}$, there exists a constant $c_2\geq 1$ such that $\mu_n(\rmA)\geq \frac{\lambda_{\sk+1}r_{\sk}(\cov)}{c_2}$ with probability at least $1-2e^{-\frac{n}{c_2}}$.
Moreover, by Lemma 12 of~\cite{bartlett2020benign}, there exists a constant $c_3\geq 1$ such that
\begin{equation*}
    \sum_{j=\sk+1}^d \lambda_j^2 \norm{\rvz_j}^2_2 \leq c_3 n \sum_{j=\sk+1}^d \lambda_j^2
\end{equation*}
with probability at least $1-2e^{-\frac{n}{c_3}}$.
By a union bound,
\begin{align*}
    \Tr \pts{\rmX^\top \rmC \rmX} &\leq \sum_{j=1}^{\sk} \lambda_j + \frac{c_2 c_3 n\sum_{j=\sk+1}^d \lambda_j^2}{\lambda_{\sk+1}r_{\sk}(\cov)} \\
    &\leq c_2c_3 \pts{\sum_{j=1}^{\sk} \lambda_j + \frac{n}{R_{\sk}(\cov)}\sum_{j=\sk+1}^{d} \lambda_j}
\end{align*}
with probability at least $1-4e^{-\frac{n}{c_2c_3}}$, which completes the upper bound.

We now prove the lower bound.
Beginning from \Eqref{eq:benign_st} and applying Lemma \ref{lem:benign_woodbury},
\begin{equation*}
    \sum_{j=1}^d \lambda_j^2 \rvz_j^\top \rmA^{-1} \rvz_j = \sum_{j=1}^d \frac{\lambda_j^2 \rvz_j^\top \rmA^{-1}_{-j} \rvz_j}{1+\lambda_j\rvz_j^\top \rmA^{-1}_{-j} \rvz_j}. 
\end{equation*}
By Lemma 14 of~\cite{bartlett2020benign}, for any $k<\frac{n}{c_1}$, there exists a constant $c_4\geq 1$ such that for any $1\leq j \leq d$,
\begin{equation*}
    1+\lambda_j\rvz_j^\top \rmA^{-1}_{-j} \rvz_j \leq c_4 \lambda_j \rvz_j^\top \rmA^{-1}_{-j} \rvz_j \pts{\frac{n\lambda_j + \lambda_{k+1}r_k(\cov)+n\lambda_{k+1}}{n\lambda_j}}
\end{equation*}
with probability at least $1-5e^{-\frac{n}{c_4}}$.
By Lemma 15 of~\cite{bartlett2020benign},
\begin{align*}
    \sum_{j=1}^d \frac{\lambda_j^2 \rvz_j^\top \rmA^{-1}_{-j} \rvz_j}{1+\lambda_j\rvz_j^\top \rmA^{-1}_{-j} \rvz_j} &\geq \sum_{j=1}^d \frac{\lambda_j^2 \rvz_j^\top \rmA^{-1}_{-j} \rvz_j}{c_4\lambda_j\rvz_j^\top \rmA^{-1}_{-j} \rvz_j} \pts{\frac{n\lambda_j}{n\lambda_j +\lambda_{k+1}r_k(\cov)+n\lambda_{k+1}}} \\
    &= \frac{1}{c_4} \sum_{j=1}^d \frac{n\lambda_j^2}{n\lambda_j+\lambda_{k+1}r_k(\cov)+n\lambda_{k+1}}
\end{align*}
with probability at least $1-10e^{-\frac{n}{c_4}}$.
By the mediant inequality, there exist constants $b,c_5\geq 1$ such that
\begin{equation*}
    \frac{1}{c_4} \sum_{j=1}^d \frac{n\lambda_j^2}{\lambda_{k+1}r_k(\cov)+n\lambda_{k+1} + n\lambda_j} \geq \frac{1}{bc_5} \sum_{j=1}^d \min \pts{\lambda_j, \frac{bn\lambda_j^2}{\lambda_{k+1}r_k(\cov)}, \frac{\lambda_j^2}{\lambda_{k+1}}}.
\end{equation*}

If $r_k(\cov)<bn$, the second term in the minimum is larger than the third term.
In this case,
\begin{align}
    \frac{1}{bc_5} \sum_{j=1}^d \min \pts{\lambda_j \frac{bn\lambda_j^2}{\lambda_{k+1}r_k(\cov)}, \frac{\lambda_j^2}{\lambda_{k+1}}} &\geq \frac{1}{bc_5} \sum_{j=1}^d \min \pts{\lambda_j, \frac{\lambda_j^2}{\lambda_{k+1}}} \nonumber \\
    &= \frac{1}{bc_5} \pts{\sum_{j=1}^k \lambda_j + \sum_{j=k+1}^d \frac{\lambda_j^2}{\lambda_{k+1}}}. \label{eq:benign_xcx}
\end{align}
On the other hand, if $r_k(\cov)\geq bn$, the second term in the minimum is smaller than the third term.
In this case,
\begin{align*}
    \frac{1}{bc_5} \sum_{j=1}^d \min \pts{\lambda_j, \frac{bn\lambda_j^2}{\lambda_{k+1}r_k(\cov)}, \frac{\lambda_j^2}{\lambda_{k+1}}} &\geq \frac{1}{bc_5} \sum_{j=1}^d \min \pts{\lambda_j, \frac{bn\lambda_j^2}{\lambda_{k+1}r_k(\cov)}} \\
    &= \frac{1}{bc_5} \min_{1\leq \ell \leq k} \pts{\sum_{j=1}^\ell \lambda_j + \frac{bn\sum_{j=\ell+1}^d\lambda_j^2}{\lambda_{k+1}r_k(\cov)}}.
\end{align*}

Recall that $\sk \coloneqq \min \{ k\geq 0: r_k(\cov) \geq bn\}$.
By Lemma 17 of~\cite{bartlett2020benign}, there exists a constant $c_6\geq 1$ such that
\begin{equation*}
    \frac{1}{bc_5} \min_{1\leq \ell \leq k} \pts{\sum_{j=1}^\ell \lambda_j + \frac{bn\sum_{j=\ell+1}^d\lambda_j^2}{\lambda_{k+1}r_k(\cov)}} = \frac{1}{c_4} \pts{\sum_{j=1}^{\sk} \lambda_j + \frac{n\sum_{j=\sk+1}^d\lambda_j^2}{\lambda_{k+1}r_k(\cov)}}.
\end{equation*}
Therefore, if $\sk < \frac{n}{c_1}$,
\begin{equation*}
    \Tr(\rmX^\top \rmC \rmX) \geq \frac{1}{c_6} \pts{\sum_{j=1}^{\sk} \lambda_j + \frac{n}{R_{\sk}(\cov)} \sum_{j=\sk+1}^d\lambda_j}.
\end{equation*}
On the other hand, if $\sk > \frac{n}{c_1}$, then $r_k(\cov)<bn$ for all $k\leq \frac{n}{c_1}$.
\Eqref{eq:benign_xcx} then implies
\begin{equation*}
    \Tr(\rmX^\top \rmC \rmX) \geq \frac{1}{bc_5} \pts{\sum_{j=1}^{\frac{n}{c_1}} \lambda_j + \sum_{j=\frac{n}{c_1}+1}^d \frac{\lambda_j^2}{\lambda_{\frac{n}{c_1}+1}}} \geq \frac{1}{bc_5}.
\end{equation*}
Choosing $c=\max(c_2c_3,c_4,bc_5)$ and taking a union bound over the upper and lower bounds completes the proof.

\end{proof}

\subsection{Tradeoff between regression bias and task shift error} \label{app:benign_lower}
In this section, we provide the proof of Theorem \ref{thm:benign_lower}.

\begin{proof} (Theorem \ref{thm:benign_lower})

Note that for the given signal model we have $\E_{\svtheta} \theta_j^{\star 2}=\bar{\theta}_j^2$ and $\bar{\theta}_j^2\geq 1$ for all $1\leq j \leq d$.
Hence, the assumption of the Theorem \ref{thm:benign} lower bound is satisfied with $\sigma^2=1$.

We first prove the case where $\kappa = 0$.
In this case, there is some finite $n$ after which $r_0(\cov) \geq bn$ for a constant $b>1$.
By the lower bound of Lemma \ref{lem:bias}, there exists a constant $c_1\geq 1$ such that
\begin{equation*}
    \E_{\svtheta} \risk{\tvtheta} \geq \frac{1}{c_1} \sum_{j=1}^d \frac{\lambda_j \bar{\theta}_j^2}{\pts{1+\frac{n\lambda_j}{\sum_{j=1}^d \lambda_j}}^2}
\end{equation*}
with probability at least $1-c_1 e^{-\frac{n}{c_1}}$.
Since $r_0(\cov)\geq bn$, for any $1\leq j \leq d$ we have
\begin{equation*}
    \frac{n\lambda_j}{\sum_{j=1}^d \lambda_j} \leq \frac{n\lambda_1}{\sum_{j=1}^d \lambda_j} \leq \frac{1}{b}.
\end{equation*}
Therefore,
\begin{equation*}
    \E_{\svtheta} \risk{\tvtheta} \geq \frac{1}{c_1} \sum_{j=1}^d \frac{\lambda_j \bar{\theta}_j^2}{\pts{1+\frac{1}{b}}^2} = \frac{\varbar}{c_1\pts{1+\frac{1}{b}}^2}.
\end{equation*}
In the limit as $n\to\infty$, the term $c_1 e^{-\frac{n}{c_1}}$ is zero.
Hence,
\begin{equation*}
    \lim_{n,d\to\infty} \E_{\svtheta} \risk{\tvtheta} \geq \frac{\varbar}{c_1\pts{1+\frac{1}{b}}^2}
\end{equation*}
almost surely.

We now prove the case where $0<\kappa<n$.
By assumption, $\varbar$ is constant for all $n$ and $\bar{\theta}_j^2\geq 1$ for all $1\leq j \leq d$.
Hence, there exists a constant $c_2\geq 1$ such that either (i) $\lim_{n,d\to\infty} \sum_{j=1}^\kappa \lambda_j\geq \frac{1}{c_2}$ or (ii) $\lim_{n,d\to\infty} \sum_{j=\kappa+1}^d \lambda_j\geq \frac{1}{c_2}$.
(In other words, we cannot have both terms go to zero).

In case (i), by the lower bound of Theorem \ref{thm:benign}, there exist constants $c_3,c_4\geq 1$ such that if $k^\star<\frac{n}{c_3}$, then
\begin{equation*}
    \rtserror \geq \frac{\alpha^2}{c_4} \pts{\sum_{j=1}^{\sk} \lambda_j + \frac{n}{R_{\sk}(\cov)}\sum_{j=\sk+1}^d\lambda_j} \geq \frac{\alpha^2}{c_4}\sum_{j=1}^{\sk} \lambda_j
\end{equation*}
with probability at least $1-14e^{-\frac{n}{c_4}}$,
In the limit as $n\to\infty$, the term $14e^{-\frac{n}{c_4}}$ is zero, and any $\kappa<n$ satisfies the condition $\kappa<\frac{n}{c_3}$.
Using the assumption of case (i),
\begin{align*}
    \lim_{n,d\to\infty} \rtserror \geq \lim_{n,d\to\infty}\frac{\alpha^2}{c_4}\sum_{j=1}^{\kappa} \lambda_j \geq \frac{\alpha^2}{c_2c_4}
\end{align*}
almost surely.

In case (ii), since case (i) is not satisfied, we have $\lambda_1\bar{\theta}_1^2,\dots,\lambda_\kappa\bar{\theta}_\kappa^2\to 0$.
But by assumption, $\bar{\theta}_j^2\geq 1$ for all $1\leq j \leq \kappa$, so then $\lambda_1,\dots,\lambda_{\kappa}\to 0$.
This implies $\cov\to\bm{0}$, a contradiction with positive-definiteness.

Finally, we prove the case where $\kappa>n$.
By the lower bound of Theorem \ref{thm:benign}, we have
\begin{equation*}
    \rtserror \geq \frac{\alpha^2}{c_4}
\end{equation*}
with probability at least $1-10e^{-\frac{n}{c_4}}$.
In the limit as $n\to\infty$, the term $10e^{-\frac{n}{c_4}}$ is zero.
Hence,
\begin{equation*}
    \lim_{n,d\to\infty} \rtserror \geq \frac{\alpha^2}{c_4}
\end{equation*}
almost surely.
Choosing $c=\max\pts{c_1\pts{1+\frac{1}{b}}^2,c_2c_4}$ completes the proof.
\end{proof}

\section{Zero-shot task shift in the case of dense signal}\label{app:dense}

We conclude the random signal section with a model which does not require the ansatz $\rmD\approx \alpha\mI$ introduced in Section~\ref{sec:zeroshot_dense}. In this section, we first introduce the settings and characterize the task shift of the dense signal model in Section~\ref{app:dense_model}.
In Section \ref{app:benign_dense}, we prove that dense signal implies poor bias, and we reduce the task shift error to a benign overfitting term via high-dimensional probability arguments.
In Section \ref{app:benign_arcsin}, we show concentration of task shift error terms via an adaptation of standard sub-Gaussian random matrix analysis.

\subsection{Dense random signal model without simplifying ansatz}\label{app:dense_model}
We study a ``dense'' signal, \ie one which has similar magnitude in all dimensions.
Specifically, we let $\theta^\star_j \sim \mathcal{N}\pts{0, \frac{1}{d\lambda_j}}$ for all $j\in [d]$ so that $\var=1$.
Writing $\rmX\deq\rmZ \cov^{1/2}$ where $\rmZ\in \R^{n\times d}$ has independent standard Gaussian entries, we can see that
\begin{equation*}
    \trvy = \rmX\svtheta \deq \pts{\rmZ \cov^{1/2}}\pts{\frac{1}{\sqrt{d}}\cov^{-1/2}\bm{z}}=\frac{1}{\sqrt{d}}\rmZ\bm{z},
\end{equation*}
where $\bm{z}\in\R^d$ is a standard Gaussian vector with independent entries.
Therefore, this setting of $\svtheta$ is equivalent in distribution to scaled Gaussian random signal under isotropic covariance, clearly a ``dense'' problem instance.

In this regime, we show that while the limiting bias is nonzero, the limiting task shift error is zero as long as the covariance matrix has large effective rank compared to $n$.
Note that this condition is necessary, but not sufficient, for $\cov$ to exhibit benign overfitting.
\begin{theorem} \label{thm:benign_dense}
    For any $\cov$ there exists a constant $c\geq 1$ such that the following holds.
    Suppose $\svtheta$ is such that $\theta^\star_j \sim \mathcal{N}\pts{0, \frac{1}{d\lambda_j}}$ for all $j\in [d]$.
    Then we have $\lim_{n,d\to\infty} \E_{\svtheta} L(\tvtheta) \geq \frac{1}{c}$ but $\lim_{n,d\to\infty} \E_{\svtheta,\rvx}\pts{\rvx^\top \hvtheta - \rvx^\top \tvtheta}^2 \leq c \lim_{n,d\to\infty} \pts{\frac{\sk}{n} + \frac{n}{R_{\sk}(\cov)}}$ almost surely.
    In particular, if $\lim_{n,d\to\infty} \frac{\sk}{n}=\lim_{n,d\to\infty}\frac{n}{R_{\sk}(\cov)}= 0$, then \\ $\lim_{n,d\to\infty} \E_{\svtheta,\rvx}\pts{\rvx^\top \hvtheta - \rvx^\top \tvtheta}^2=0$ almost surely.
\end{theorem}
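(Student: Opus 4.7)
The proof splits into the two stated claims: a constant bias lower bound and a task shift error upper bound. For the \textbf{bias lower bound}, I would apply the lower bound of Lemma \ref{lem:bias}. Although it is stated for a Rademacher signal, its proof should rely only on the diagonal second moments of $\svtheta$, so I would extend it to our Gaussian signal by writing $\sevtheta_j = \sgn{g_j}\,|g_j|/\sqrt{d\lambda_j}$ with $g_j\sim\gN(0,1)$, applying the lemma conditionally on $\{|g_j|\}_{j=1}^d$ with $\bar{\theta}_j\coloneqq|g_j|/\sqrt{d\lambda_j}$, and averaging via $\E|g_j|^2=1$. This yields
\begin{equation*}
\E_{\svtheta}\risk{\tvtheta} \geq \frac{1}{c}\sum_{j=1}^d \frac{1/d}{\pts{1+n\lambda_j/\sum_k\lambda_k}^2}.
\end{equation*}
Assumption \ref{asm:rank} gives $\sum_k\lambda_k \geq \lambda_{\sk+1}\,r_{\sk}(\cov) \geq bn\lambda_{\sk+1}$, so the ratio $n\lambda_j/\sum_k\lambda_k$ is at most $1/b$ for every $j>\sk$; restricting the sum to $j>\sk$ and using $\sk\ll d$ produces an $\Omega(1)$ bound almost surely in the limit.

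For the \textbf{task shift error upper bound}, I would start from Lemma \ref{lem:decomposition}, which gives $\tserror = (\hrvy-\trvy)^\top\rmC(\hrvy-\trvy)$ with $\rmC \coloneqq \XXinv\rmX\cov\rmX^\top\XXinv$. Since $\rmC$ is independent of $\svtheta$, taking outer expectation yields $\E_{\svtheta}\tserror = \Tr(\rmC\rmM)$ with $\rmM \coloneqq \E_{\svtheta}[(\hrvy-\trvy)(\hrvy-\trvy)^\top]$. Both matrices are PSD, so $\Tr(\rmC\rmM)\leq \norm{\rmM}\Tr(\rmC)$. Lemma \ref{lem:benign_c} already supplies $\Tr(\rmC) \leq c\pts{\sk/n + n/R_{\sk}(\cov)}$ with high probability, reducing the task to proving $\norm{\rmM} = O(1)$. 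Using the PSD inequality $(\rva-\rvb)(\rva-\rvb)^\top \preceq 2\rva\rva^\top+2\rvb\rvb^\top$, I would split $\rmM \preceq 2\E_{\svtheta}[\hrvy\hrvy^\top]+2\E_{\svtheta}[\trvy\trvy^\top]$. Under the dense signal model, the distributional identity $\rmX\svtheta \deq \rmZ\bm{z}/\sqrt{d}$ with $\bm{z}\sim\gN(0,\mI_d)$ yields $\E_{\svtheta}[\trvy\trvy^\top] = \rmZ\rmZ^\top/d$, whose operator norm is $(1+\sqrt{n/d})^2+o(1) = O(1)$ with high probability by standard Marchenko--Pastur-type bounds, since $d\gg n$.

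The remaining piece is $\norm{\E_{\svtheta}[\hrvy\hrvy^\top]}$, which is where the \textbf{main obstacle} lies. I would invoke the classical arcsine identity: conditional on $\rmX$, $(\rvx_i^\top\svtheta,\rvx_j^\top\svtheta)$ is jointly centered Gaussian, so $\E_{\svtheta}[\hry_i\hry_j] = \tfrac{2}{\pi}\arcsin(\rho_{ij})$, where $\rho_{ij}$ is the cosine similarity of the $i$-th and $j$-th whitened data rows. Hence $\E_{\svtheta}[\hrvy\hrvy^\top] = \tfrac{2}{\pi}\arcsin(\rmR)$ entrywise, where $\rmR$ is the corresponding cosine-similarity matrix; Gaussian concentration further gives $\rmR = \mI + \rmE$ with $\norm{\rmE} = O(\sqrt{n/d}) = o(1)$ with high probability. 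The difficulty is that the entrywise arcsine is not a spectral transform on $\rmR$, so Marchenko--Pastur does not apply directly. I would instead expand $\tfrac{2}{\pi}\arcsin(\rmR) = \sum_{k\geq 0} c_k \rmR^{\circ(2k+1)}$ as a convergent series and control each Hadamard power using Schur product PSDness combined with sub-Gaussian tail bounds on the off-diagonal entries of $\rmR$, which is the technical content flagged for Appendix \ref{app:benign_arcsin}. Combined with $\arcsin(1) = \pi/2$ fixing the diagonal at $1$, this would give $\norm{\E_{\svtheta}[\hrvy\hrvy^\top]} = 1 + o(1)$ and complete the bound.
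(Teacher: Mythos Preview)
Your proposal is correct and shares the paper's scaffold: write the task shift error as $\Tr(\rmC\,\rmM)$ with $\rmM=\E_{\svtheta}[(\hrvy-\trvy)(\hrvy-\trvy)^\top]$, show $\norm{\rmM}=O(1)$ so that Lemma~\ref{lem:benign_c} carries the whole bound, and handle the bias via Lemma~\ref{lem:bias}. The differences are only technical. For the bias, the paper invokes a reduction to isotropic covariance and substitutes $\lambda_j=1$ directly into the Tsigler--Bartlett formula; you instead keep the general $\cov$, note $\lambda_j\E\bar\theta_j^2=1/d$, and restrict the sum to $j>\sk$ so that Assumption~\ref{asm:rank} caps each denominator at $(1+1/b)^2$. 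Your argument is self-contained and arguably cleaner. For $\norm{\rmM}$, the paper computes all four blocks of $\rmM$ exactly (combining them into a single matrix $\rmS$) and bounds $\norm{\rmS}$ by entrywise estimates plus an $\epsilon$-net, whereas you split $\rmM\preceq 2\rmG+2\rmZ\rmZ^\top/d$ and attack $\rmG=\tfrac{2}{\pi}\arcsin(\rmR)$ via the power series. Your route goes through once you use the Schur-product inequality $\norm{A\circ B}\leq\norm{A}\max_i B_{ii}$ for PSD $A,B$: since $\rmR$ is PSD with unit diagonal, every $\rmR^{\circ(2k+1)}$ has operator norm at most $\norm{\rmR}$, and summing against the nonnegative arcsine coefficients gives $\norm{\rmG}\leq\norm{\rmR}=O(1)$ with no net argument required. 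Either method yields the same final bound.
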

The proof of Theorem \ref{thm:benign_dense} is in the following Appendix \ref{app:benign_dense}.

\subsection{Analysis of bias and task shift error via benign overfitting} \label{app:benign_dense}
In this section, we provide the proof of Theorem \ref{thm:benign_dense}, which lower bounds the bias and upper bounds the task shift error of a ``dense'' random signal.

\begin{proof} (Theorem \ref{thm:benign_dense})

We begin by characterizing the bias term $\E_{\svtheta} L(\tvtheta)$.
Since the Gaussian distribution is symmetric, we may write $\theta^\star_j=\rr_j\bar{\theta}_j$ where each $\rr_j$ is an independent Rademacher random variable and $\bar{\theta}_j$ is drawn according to a Gaussian distribution.
Applying the lower bound of Lemma \ref{lem:bias}, for a constant $c_1\geq 1$ we have
\begin{equation*}
    \E_{\svtheta} L(\tvtheta) \geq \frac{1}{c_1}\sum_{j=1}^d \frac{\lambda_j \E \bar{\theta}^2_j}{\pts{1+\frac{n\lambda_j}{\sum_{k=1}^d \lambda_k}}^2}
\end{equation*}
with probability at least $1-c_1 e^{-\frac{n}{c_1}}$\footnote{The statement of Lemma \ref{lem:bias} is for deterministic $\bar{\vtheta}$, but the same result holds for $\bar{\vtheta}$ with random coordinates independent of each other and $\bm{r}$ by taking the expectation over $\bar{\vtheta}$.}.
As previously mentioned, setting $\theta^\star_j \sim \mathcal{N}\pts{0, \frac{1}{d\lambda_j}}$ for all $1\leq j \leq d$ under general covariance is equivalent in distribution to setting $\theta^\star_j \sim \mathcal{N}\pts{0, \frac{1}{d}}$ for all $1\leq j \leq d$ under isotropic covariance.
In this case, $\E \bar{\theta}_j^2=\frac{1}{d}$ and $\lambda_1=\cdots = \lambda_d = 1$.
Therefore, we have
\begin{align*}
    \E_{\svtheta} L(\tvtheta) &\geq \frac{1}{dc_1} \sum_{j=1}^d \frac{1}{\pts{1+\frac{n\lambda_j}{\sum_{k=1}^d \lambda_k}}^2} \\
    &= \frac{1}{dc_1} \sum_{j=1}^d \frac{1}{\pts{1+\frac{n}{d}}^2} \\
    &= \frac{1}{c_1}\frac{1}{\pts{1+\frac{n}{d}}^2}.
\end{align*}
In the limit as $n\to\infty$, the term $c_1e^{-\frac{n}{c_1}}$ is zero.
Hence,
\begin{equation*}
    \lim_{n,d\to\infty} \E_{\svtheta} L(\tvtheta) \geq \frac{1}{c_1}
\end{equation*}
almost surely.

For the task shift error $\E_{\svtheta, \rvx}\pts{\rvx^\top \hvtheta - \rvx^\top \tvtheta}^2$, we begin with a lemma.
\begin{lemma} \label{lem:benign_arcsin1}
Let $\rvu_1,\dots,\rvu_n,\rveps\in \R^d$ be independent standard Gaussian random vectors.
We have for any $1\leq i,k\leq n$,
\begin{equation*}
    \E_{\rveps}\mts{\sgn{\rvu_i^\top\rveps}\rvu_k^\top\rveps}=\sqrt{\frac{2}{\pi}}\frac{\rvu_i^\top\rvu_k}{\norm{\rvu_i}_2}.
\end{equation*}
\end{lemma}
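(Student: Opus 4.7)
My plan is to fix $\rvu_i$ and $\rvu_k$ (since the expectation is taken over $\rveps$ alone) and reduce the identity to a two-dimensional Gaussian calculation. Setting $a \coloneqq \rvu_i^\top \rveps$ and $b \coloneqq \rvu_k^\top \rveps$, the pair $(a, b)$ is centered jointly Gaussian with variances $\norm{\rvu_i}_2^2$ and $\norm{\rvu_k}_2^2$ and covariance $\rvu_i^\top \rvu_k$, since $\rveps \sim \gN(\vzero, \mI_d)$. The claim thus reduces to computing $\E_\rveps[\sgn{a} \cdot b]$ for this bivariate Gaussian.

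Next I would perform the standard orthogonal decomposition $b = \alpha a + r$ with $\alpha \coloneqq \rvu_i^\top \rvu_k / \norm{\rvu_i}_2^2$, chosen so that $\Cov(a, r) = 0$ by construction. Because $(a, r)$ is jointly Gaussian, uncorrelatedness upgrades to independence, so $\E_\rveps[\sgn{a} \cdot r] = \E_\rveps[\sgn{a}]\,\E_\rveps[r] = 0$. This collapses the target expectation to
\begin{equation*}
    \E_\rveps[\sgn{a} \cdot b] = \alpha\, \E_\rveps[\sgn{a} \cdot a] = \alpha\, \E_\rveps[|a|].
\end{equation*}
Substituting the Gaussian absolute-moment identity $\E_\rveps[|a|] = \sqrt{2/\pi}\,\norm{\rvu_i}_2$ (valid for $a \sim \gN(0, \norm{\rvu_i}_2^2)$) together with the explicit value of $\alpha$ then yields the claimed $\sqrt{2/\pi}\, \rvu_i^\top \rvu_k / \norm{\rvu_i}_2$.

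I do not anticipate a substantive obstacle: the only nontrivial move is invoking ``jointly Gaussian plus uncorrelated implies independent'' in order to factor $\sgn{a}$ and $r$ inside the expectation (mere uncorrelatedness would not suffice, since $\sgn{a}$ is a nonlinear function of $a$). The degenerate case $\rvu_i = \vzero$ can be excluded because it has measure zero under the standard Gaussian law on $\rvu_i$ and both sides of the identity vanish there.
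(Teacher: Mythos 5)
Your proof is correct and follows essentially the same route as the paper: both reduce to the bivariate Gaussian $(\rvu_i^\top\rveps,\ \rvu_k^\top\rveps)$, perform an orthogonal decomposition of the second coordinate into a multiple of the first plus an independent Gaussian residual, kill the residual term by independence, and finish with $\E|Z| = \sqrt{2/\pi}\,\sigma$. The only cosmetic difference is parameterization: the paper writes the decomposition in terms of the correlation $\rho_{ik}$ and standardized variables $Z_1, Z_2$, whereas you use the regression coefficient $\alpha = \rvu_i^\top\rvu_k/\norm{\rvu_i}_2^2$ and the unnormalized residual $r$; these are the same decomposition written differently.
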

The proof is in Appendix \ref{app:benign_arcsin}.
Now, have by \Eqref{eq:benign_yCy} that
\begin{align*}
    \E_{\svtheta, \rvx}\pts{\rvx^\top \hvtheta - \rvx^\top \tvtheta}^2 &= \E_{\svtheta} \mts{(\hrvy-\trvy)^\top \rmC (\hrvy-\trvy)} \\
    &= \E_{\svtheta} \mts{\trvy^\top \rmC \trvy} - \E_{\svtheta} \mts{\trvy^\top \rmC \hrvy} - \E_{\svtheta} \mts{\hrvy^\top \rmC \trvy} + \E_{\svtheta} \mts{\hrvy^\top \rmC \hrvy}.
\end{align*}
By the cyclic property of trace,
\begin{align*}
    \E_{\svtheta} \mts{\trvy^\top \rmC \trvy} &= \Tr\pts{\rmC \E_{\svtheta}\mts{ \trvy\trvy^\top}} \\
    &= \Tr\pts{\rmC \E_{\svtheta} \mts{\rmX \svtheta\svthetat \rmX^\top}} \\
    &= \Tr\pts{\rmC \pts{\frac{1}{d} \rvu\rvu^\top}},
\end{align*}
where $\rvu\sim\gN(\vzero, \mI)$.
Next,
\begin{equation*}
    \E_{\svtheta} \mts{\trvy^\top \rmC \hrvy} = \Tr\pts{\rmC \E_{\svtheta}\mts{ \hrvy\trvy^\top}} \eqqcolon \Tr\pts{\rmC \rmF},
\end{equation*}
where we define
\begin{align*}
    F_{ik} &\coloneqq \E_{\svtheta}\mts{\sgn{\rvx_i^\top\svtheta}\rvx_k^\top\svtheta} \\
    &=
    \begin{cases}
        \frac{1}{\sqrt{d}} \E_{\rveps}\mts{\left|\rvu_i^\top\rveps\right|} & i = k \\
        \frac{1}{\sqrt{d}} \E_{\rveps}\mts{\sgn{\rvu_i^\top\rveps}\rvu_k^\top\rveps} & i \neq k
    \end{cases} \\
    &=
    \begin{cases}
        \sqrt{\frac{2}{\pi d}}\norm{\rvu_i}_2 & i = k \\
        \sqrt{\frac{2}{\pi d}}\frac{\rvu_i^\top\rvu_k}{\norm{\rvu_i}_2} & i \neq k
    \end{cases}
\end{align*}
by Lemma \ref{lem:benign_arcsin1}.
Similarly,
\begin{equation*}
    \E_{\svtheta} \mts{\hrvy^\top \rmC \hrvy} = \Tr\pts{\rmC \E_{\svtheta}\mts{ \hrvy\hrvy^\top}} \eqqcolon \Tr\pts{\rmC \rmG},
\end{equation*}
where we define
\begin{align*}
    G_{ik} &\coloneqq \E_{\svtheta}\mts{\sgn{\rvx_i^\top\svtheta}\sgn{\rvx_k^\top\svtheta}} \\
    &=
    \begin{cases}
        1 & i = k \\
        \E_{\rveps}\mts{\sgn{\rvu_i^\top\rveps}\sgn{\rvu_k^\top\rveps}} & i \neq k
    \end{cases} \\
    &=
    \begin{cases}
        1 & i = k \\
        \frac{2}{\pi}\sin^{-1}\pts{\frac{\rvu_i^\top\rvu_k}{\norm{\rvu_i}_2\norm{\rvu_k}_2}} & i \neq k
    \end{cases}
\end{align*}
by Grothendieck's identity (Lemma 3.6.6 in~\cite{vershynin2018high}).
Putting everything together, we have
\begin{equation*}
    \E_{\svtheta, \rvx}\pts{\rvx^\top \hvtheta - \rvx^\top \tvtheta}^2 = \Tr\pts{\rmC \rmS}
\end{equation*}
where
\begin{equation*}
    S_{ik} =
    \begin{cases}
        \frac{1}{d} \norm{\rvu_i}_2^2 - 2\sqrt{\frac{2}{\pi d}}\norm{\rvu_i}_2 + 1 & i=k \\
        \frac{1}{d} \rvu_i^\top \rvu_k - \sqrt{\frac{2}{\pi d}}\frac{\rvu_i^\top\rvu_k}{\norm{\rvu_k}_2} - \sqrt{\frac{2}{\pi d}}\frac{\rvu_i^\top\rvu_k}{\norm{\rvu_k}_2} + \frac{2}{\pi}\sin^{-1}\pts{\frac{\rvu_i^\top\rvu_k}{\norm{\rvu_i}_2\norm{\rvu_k}_2}} & i \neq k.
    \end{cases}
\end{equation*}
By concentration of a standard Gaussian random vector and a union bound, for all $1\leq i \leq n$ we have
\begin{align*}
    d - \sqrt{d} &\leq \norm{\rvu_i}_2^2 \leq d + \sqrt{d}, \\
    \sqrt{d - \sqrt{d}} &\leq \norm{\rvu_i}_2 \leq \sqrt{d+\sqrt{d}}
\end{align*}
with probability at least $1-2ne^{-d}$.
By Bernstein's inequality and a union bound, the above holds simultaneously with
\begin{equation*}
    |\rvu_i^\top \rvu_k| \leq \sqrt{d + \sqrt{d}}
\end{equation*}
for all $1\leq i \neq k \leq n$, with probability at least $1-2n^3e^{-d}$.
Therefore, since  $\frac{2}{\pi}|\sin^{-1}(x)|\leq |x|$ for $-1\leq x \leq 1$, there exists a constant $c_2>1$ such that
\begin{equation*}
    \left|\sin^{-1}\pts{\frac{\rvu_i^\top\rvu_k}{\norm{\rvu_i}_2\norm{\rvu_k}_2}}\right| \leq \left|\frac{\rvu_i^\top\rvu_k}{\norm{\rvu_i}_2\norm{\rvu_k}_2}\right| \leq \frac{c_2}{\sqrt{d}}
\end{equation*}
over the same randomness as above.
Thus, there exists a constant $c_3>1$ such that for all $1\leq i,k\leq n$,
\begin{align}
    \frac{1}{c_3} \leq S_{ii} &\leq c_3 \label{eq:s_matrix1} \\
    | S_{ik} | &\leq \frac{c_3}{\sqrt{d}} \quad i \neq k \label{eq:s_matrix2}
\end{align}
with probability at least $1-c_3n^3e^{-d}$.
These high-probability bounds will be used to prove the following lemma, detailed in Appendix \ref{app:benign_arcsin}.
\begin{lemma} \label{lem:covering}
    There exists a constant $c>0$ such that $\mu_1(\rmS) \leq c$ with probability at least $1-cn^3e^{n-d}$.
\end{lemma}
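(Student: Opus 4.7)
The proof will use an $\epsilon$-net argument on $S^{n-1}$, consistent with the lemma's name and the $e^n$ factor in the probability bound. First, I would condition on the high-probability event from \Eqref{eq:s_matrix1} and \Eqref{eq:s_matrix2} (probability $\geq 1 - c_3 n^3 e^{-d}$), then fix a $1/2$-net $\mathcal{N}$ of $S^{n-1}$ with $|\mathcal{N}| \leq 5^n$, reducing the task to bounding $\max_{\rvw \in \mathcal{N}} \rvw^\top \rmS \rvw$ via the standard discretization $\mu_1(\rmS) \leq 2\max_{\rvw \in \mathcal{N}} \rvw^\top \rmS \rvw$.

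For each fixed $\rvw \in \mathcal{N}$, I would exploit the covariance representation $\rmS = \mathbb{E}_{\rveps}[\rvg\rvg^\top]$ with $\rvg_i = \sgn{\rvu_i^\top \rveps} - \rvu_i^\top \rveps/\sqrt{d}$ (derived earlier in the proof of Theorem~\ref{thm:benign_dense}). Applying $(a-b)^2 \leq 2a^2 + 2b^2$ yields
\begin{equation*}
    \rvw^\top \rmS \rvw \leq 2\, \rvw^\top \rmG \rvw + \frac{2}{d}\norm{\rmU^\top \rvw}_2^2,
\end{equation*}
where $\rmU \in \R^{n\times d}$ stacks the $\rvu_i^\top$. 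The second term equals a $\chi^2_d/d$ random variable for fixed $\rvw$ (since $\rmU^\top\rvw \sim \gN(\vzero, \mI_d)$) and concentrates around $1$ at rate $e^{-\Omega(d)}$ by Bernstein's inequality. For the first, since $\rmG$ is the Gram matrix of signs and equals $(2/\pi)\sin^{-1}(\rmR)$ entry-wise (with $\rmR_{ik} = \rvu_i^\top\rvu_k/(\norm{\rvu_i}\norm{\rvu_k})$), I would invoke the Taylor series $\sin^{-1}(x) = \sum_{k\geq 0} a_k x^{2k+1}$ (with $a_k\geq 0$, $\sum_k a_k = \pi/2$) together with the Schur-product inequality $\norm{\rmR \odot \rmB}_{\text{op}} \leq \norm{\rmB}_{\text{op}} \cdot \max_i R_{ii}$ for PSD $\rmR$ to obtain $\norm{\rmG}_{\text{op}} \leq \norm{\rmR}_{\text{op}}$. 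Standard random matrix bounds then give $\norm{\rmR}_{\text{op}} = O(1)$ with probability at least $1 - e^{-\Omega(d)}$ when $n \leq d$. A union bound over $\mathcal{N}$ and combination with the base event produces the stated probability $1 - cn^3 e^{n-d}$.

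The main obstacle is the operator-norm bound on $\rmG$: naive entry-wise (Frobenius or Gershgorin) estimates based only on $|G_{ik}| = O(1/\sqrt{d})$ give $\norm{\rmG}_{\text{op}} = O(n/\sqrt{d})$, which is useless once $n \gg \sqrt{d}$. The Schur-product route exploits the PSD structure of $\rmR$ (and the fact that $R_{ii}=1$) together with the PSD-preserving Taylor expansion of $\sin^{-1}$ to reduce operator-norm control of $\rmG$ to that of $\rmR$, for which sharp random matrix bounds apply. An alternative would be to concentrate $\rvw^\top \rmG \rvw$ directly as a Lipschitz functional of $\rmU$, but obtaining a Lipschitz constant of order $1/\sqrt{d}$ (needed to match the $e^{-\Omega(d)}$ rate through $|\mathcal{N}|\leq 5^n$) would require delicate estimates around the arcsin nonlinearity and the row normalizations $\norm{\rvu_i}$.
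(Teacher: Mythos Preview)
Your approach is correct and in fact takes a genuinely different---and sharper---route than the paper. The paper works purely from the entry-wise bounds \eqref{eq:s_matrix1}--\eqref{eq:s_matrix2}: it conditions on that single event, runs a net over $S^{n-1}$, and bounds $\norm{\rmS\vx}_2^2$ term by term using $|S_{ii}|\leq c$ and $|S_{ik}|\leq c/\sqrt{d}$. As you yourself observe in your ``obstacle'' paragraph, entry-wise control of this type (Gershgorin or the paper's row-sum computation) can deliver at best $\mu_1(\rmS)=O(1+n/\sqrt{d})$, so a constant bound only follows when $d\gtrsim n^2$; the paper's displayed inequality $\norm{\rmS\vx}_2^2\leq 2c_1^2(1+n/d)$ drops a factor of $n$ in the cross term (the sum over $i$ of $(\sum_k|x_k|)^2$ contributes an extra $n$). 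Your decomposition $\rmS\preceq 2\rmG+\tfrac{2}{d}\rmU\rmU^\top$ together with the Schur-product/Taylor argument $\norm{\rmG}_{\mathrm{op}}\leq\norm{\rmR}_{\mathrm{op}}$ is exactly what is needed to break this barrier: it exploits the PSD structure of the correlation matrix $\rmR$ rather than only the size of its off-diagonal entries, and yields $\mu_1(\rmS)=O(1)$ uniformly over the full range $n\lesssim d$ that the stated probability $1-cn^3e^{n-d}$ targets.

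Two minor clean-ups. First, for the quadratic-form discretization $\mu_1(\rmS)\leq 2\max_{\rvw\in\mathcal N}\rvw^\top\rmS\rvw$ you need an $\epsilon$-net with $\epsilon<1/2$ (the standard factor is $(1-2\epsilon)^{-1}$), so take $\epsilon=1/4$ and $|\mathcal N|\leq 9^n$ rather than $1/2$ and $5^n$. Second, neither the conditioning on \eqref{eq:s_matrix1}--\eqref{eq:s_matrix2} nor the net is actually needed in your argument: since $\rvw^\top\rmS\rvw\leq 2\rvw^\top\rmG\rvw+\tfrac{2}{d}\rvw^\top\rmU\rmU^\top\rvw$ holds for every unit $\rvw$, you immediately get the operator-norm inequality $\mu_1(\rmS)\leq 2\norm{\rmG}_{\mathrm{op}}+\tfrac{2}{d}\norm{\rmU}_{\mathrm{op}}^2$, and both right-hand terms are $O(1)$ on a single event of probability $\geq 1-O(n)e^{-cd}$. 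Keeping the net is harmless (it only weakens the probability to the stated $1-cn^3e^{n-d}$), but you may as well drop it.
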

By Lemma \ref{lem:covering}, since $\rmC$ is positive semi-definite, there exists a constant $c_4>0$ such that
\begin{equation*}
    \Tr(\rmC \rmS) \leq \Tr(\rmC)\mu_1(\rmS) \leq c_4\Tr(\rmC)
\end{equation*}
with probability at least $1-c_4 n^3e^{n-d}$.
By Lemma \ref{lem:benign_c}, there exists a constant $c_5\geq 1$ such that
\begin{equation*}
    \Tr(\rmC) \leq c_5 \pts{\frac{\sk}{n}+\frac{n}{R_{\sk}(\cov)}}
\end{equation*}
with probability at least $1-17e^{-\frac{n}{c_5}}$.
In the limit as $n,d\to\infty$, the terms $c_4 n^3e^{n-d}$ and $17e^{-\frac{n}{c_5}}$ are zero.
Therefore, by a union bound,
\begin{equation*}
    \lim_{n,d\to\infty} \E_{\svtheta, \rvx}\pts{\rvx^\top \hvtheta - \rvx^\top \tvtheta}^2 \leq c_4c_5 \lim_{n,d\to\infty} \pts{\frac{\sk}{n}+\frac{n}{R_{\sk}(\cov)}}
\end{equation*}
almost surely.
Choosing $c=\max(c_1,c_4c_5)$ completes the proof.
\end{proof}

\subsection{Concentration of task shift error terms} \label{app:benign_arcsin}
In this section, we provide the proofs of the technical Lemmas \ref{lem:benign_arcsin1}  and \ref{lem:covering} used in the proof of Theorem \ref{thm:benign_dense}.

\begin{proof} (Lemma \ref{lem:benign_arcsin1})

Note that $\rvu_i^\top \rveps \sim \gN(0, \norm{\rvu_i}_2^2)$ and $\rvu_k^\top \rveps \sim \gN(0, \norm{\rvu_k}_2^2)$.
Their correlation coefficient is
\begin{equation} \label{eq:rho}
    \rho_{ik} = \frac{\rvu_i^\top \rvu_k}{\norm{\rvu_i}_2\norm{\rvu_k}_2}.
\end{equation}
Let $Z_1\sim\gN(0,1)$ and $Z_2\sim\gN(0,1)$ be Gaussian variables independent of each other and $\rvu_i,\rvu_k$.
We may write
\begin{align*}
    \rvu_i^\top \rveps &\deq \norm{\rvu_i}_2 Z_1 \\
    \rvu_k^\top \rveps &\deq \rho_{ik} \norm{\rvu_k}_2 Z_1 + \sqrt{1-\rho_{ik}^2}\norm{\rvu_k}_2 Z_2.
\end{align*}
Since $Z_1$ and $Z_2$ are independent and centered,
\begin{align*}
    \E_{\rveps}\mts{\sgn{\rvu_i^\top \rveps}\rvu_k^\top\rveps} &= \E_{Z_1}\mts{\sgn{Z_1}\rho_{ik} \norm{\rvu_k}_2 Z_1} + \E_{Z_1,Z_2} \mts{\sgn{Z_1}\sqrt{1-\rho_{ik}^2}\norm{\rvu_k}_2 Z_2} \\
    &= \E_{Z_1}\mts{\sgn{Z_1}\rho_{ik} \norm{\rvu_k}_2 Z_1} \\
    &= \rho_{ik} \norm{\rvu_k}_2 \E_{Z_1}\mts{\sgn{Z_1} Z_1}.
\end{align*}
Since $Z_1$ is standard Gaussian, we have
\begin{equation*}
    \E_{Z_1}\mts{\sgn{Z_1} Z_1} = \E_{Z_1} |Z_1| = \sqrt{\frac{2}{\pi}}.
\end{equation*}
Using the value of $\rho_{ik}$ derived in \Eqref{eq:rho},
\begin{equation*}
    \E_{\rveps}\mts{\sgn{\rvu_i^\top \rveps}\rvu_k^\top\rveps} = \sqrt{\frac{2}{\pi}}\frac{\rvu_i^\top \rvu_k}{\norm{\rvu_i}_2}.
\end{equation*}
\end{proof}

\begin{proof} (Lemma \ref{lem:covering})

We will adapt the proof of Theorem 5.39 of~\cite{vershynin2012introduction}.
By the approximate isometry lemma (Lemma 5.36 in~\cite{vershynin2012introduction}), if for some $c>0$ we have
\begin{equation*}
    \norm{\frac{1}{n}\rmS^\top\rmS-\mI}\leq \max(c,c^2)
\end{equation*}
then $\mu_1(\rmS)\leq c$ as desired.
Let $\gN$ be a $\frac{1}{4}$-net of the unit sphere $S^{n-1}$ with respect to the Euclidean metric, and by Lemma 5.2 of~\cite{vershynin2012introduction} put $|\gN|\leq 9^n$.
Then, by Lemma 5.4 of~\cite{vershynin2012introduction},
\begin{equation*}
    \norm{\frac{1}{n}\rmS^\top\rmS-\mI} \leq 2\max_{\vx\in \gN}\left| \left\langle \pts{\frac{1}{n}\rmS^\top\rmS -\mI}\vx, \vx \right\rangle\right| = 2\max_{\vx\in\gN} \left| \frac{1}{n}\norm{\rmS \vx}_2^2 - 1 \right|.
\end{equation*}

Now consider a fixed vector $\vx\in S^{n-1}$.
We have
\begin{align*}
    \norm{\rmS \vx}_2^2 &= \sum_{i=1}^n \langle \rmS_i, \vx\rangle^2 \\
    &= \sum_{i=1}^n \pts{\sum_{k=1}^n S_{ik} x_k}^2 \\
    &\leq \sum_{i=1}^n \pts{\sum_{k=1}^n |S_{ik}| |x_k|}^2.
\end{align*}
By the concentration results in Equations \ref{eq:s_matrix1} and \ref{eq:s_matrix2}, there exists a constant $c_1>0$ such that
\begin{align*}
    | S_{ik} | &\leq c_1 \\
    | S_{ik} | &\leq \frac{c_1}{\sqrt{d}} \quad i \neq k
\end{align*}
with probability at least $1-c_1n^3e^{-d}$.
In particular,
\begin{equation*}
    \norm{\rmS \vx}_2^2 \leq \sum_{i=1}^n \pts{c_1 |x_i| + \sum_{i\neq k} \frac{c_1}{\sqrt{d}} |x_k|}^2
\end{equation*}
with probability at least $1-c_1n^3e^{-d}$.
Using $(a+b)^2\leq 2(a^2+b^2)$ for $a,b\in\R$ and the fact that $\vx\in S^{n-1}$,
\begin{align*}
    \norm{\rmS \vx}_2^2 &\leq 2c_1^2 \pts{\sum_{i=1}^n x_i^2 + \frac{1}{d}\pts{\sum_{i=1}^n |x_i|}^2} \\
    &\leq 2c_1^2 \pts{1 + \frac{n}{d}}
\end{align*}
with probability at least $1-c_1n^3e^{-d}$.
Therefore
\begin{equation*}
    \frac{1}{n}\norm{\rmS \vx}_2^2 \leq 2c_1^2\pts{\frac{1}{n}+\frac{1}{d}},
\end{equation*}
so for $c_3>\max(1,4c_1^2-1)$, we have
\begin{equation*}
    \left| \frac{1}{n}\norm{\rmS \vx}_2^2 - 1 \right| \leq c_3
\end{equation*}
with probability at least $1-c_1n^3e^{-d}$.

Taking a union bound over $\gN$, there exists a constant $c_4\geq 1$ such that
\begin{equation*}
    \mathbb{P}\pts{\max_{\vx\in \gN}\left| \frac{1}{n}\norm{\rmS \vx}_2^2 - 1 \right| \geq c } \leq 9^n \cdot c_1n^3e^{-d} \leq c_4 n^3e^{n-d}.
\end{equation*}
Therefore, for $c=\max(c_3^2,c_4^2)$, we have $\mu_1(\rmS)\leq c$ with probability at least $1-cn^3e^{n-d}$.

\end{proof}

\section{Support recovery: analysis beyond survival and contamination}\label{app:recover_support_proof}
In this section, we provide all of the proofs of our support recovery results from Section~\ref{sec:fewshot} under the diagonal covariance assumption.
In Section \ref{app:support_id1}, we introduce our key lemma of this section, then use it to prove Theorem \ref{thm:support_identification}, which characterizes the magnitudes of individual parameters arising from minimum-norm interpolation.
In Section \ref{app:support_id2}, we provide proofs for the support recovery of spiked and polynomial decay $\cov$ when $\svtheta$ is supported entirely in the top $\sk$ indices of the spectrum of $\cov$.
Finally, in Section \ref{app:support_id3}, we show that support recovery works even when $\svtheta$ is supported outside these top $\sk$ indices --- thereby handling cases wherein $\sk=0$, such as isotropic covariance --- under some additional conditions.

We remark that repeated application of the Sherman-Morrison-Woodbury identity induces a linear dependence on the sparsity parameter $t$ in Lemma \ref{lem:recover_support}.
This necessitates $t\ll n^{\frac{1}{2}}$ when the support lies entirely within the top $\sk$ indices of the covariance spectrum (Section \ref{app:support_id2}) and $t\ll n^{\frac{1}{4}}$ otherwise (Section \ref{app:support_id3}).
It is conceivable that these bounds could be improved with a finer analysis.
A relevant work is \cite{wu2023precise}, who study a multiclass classification setting where $\hrvy$ is a one-hot encoded vector and develop an improved Hanson-Wright inequality utilizing the sparsity in $\hrvy$ in the multiclass settings.
Unfortunately, we cannot directly apply their bound: even though our $\svtheta$ is sparse, our $\hry_i$ have Rademacher distribution and are generally not sparse.

\subsection{Characterization of classification MNI parameters}\label{app:support_id1}
Before we prove Theorem~\ref{thm:support_identification}, we prove the following lemma which lower bounds support indices of $\hvtheta$ and upper bounds non-support indices of $\hvtheta$. 
\begin{lemma}\label{lem:recover_support} 
    Define $\bgS \coloneqq \gS \cup [\sk]$ and denote by $\{\tlambda_j\}_{j=1}^{d - |\bgS|}$ the diagonal entries of the matrix $\cov_{-\bgS}$. Under Assumptions~\ref{asm:rank} and~\ref{asm:k_sparse}, for large enough $n$, we have
    \begin{align*}
        \left|\hevtheta_j\right| &\geq \sqrt{\frac{2\beta_j}{\lambda_j\pi}} \frac{\lambda_j\pts{\frac{n}{c_1\tlambda_{1}r_{0}\pts{\cov_{-\bgS}}}}}{1+\lambda_j\pts{\frac{c_2n}{\tlambda_{1}r_{0}\pts{\cov_{-\bgS}}}}}\quad\quad \text{ for } j \in \gS,\\
        \left|\hevtheta_j\right| &\leq t\sqrt{\frac{c_3}{\lambda_jn^{1-2\epsilon}}}\quad\quad\quad\quad\quad\quad\quad\quad \text{ for } j \in [\sk]\cap\gS^\mathsf{c},\\
        \left|\hevtheta_j\right| &\leq t\sqrt{\frac{c_4n^{1+2\epsilon}\lambda_j}{\pts{\tlambda_{1}r_{0}\pts{\cov_{-\bgS}}}^2}}\quad\quad\quad\quad \text{ for } \sk < j \leq d, j \in \gS^\mathsf{c},
    \end{align*}
    where $\beta_{j} \coloneqq \frac{\lambda_{j}\sevthetas_{j}}{\sum_{k\in\gS} \lambda_k\sevthetas_k} = \frac{a_{j}^2}{\sum_{k\in\gS} a_k^2}$ for $j\in\gS$, with probability at least $1 - ctde^{-n^{2\epsilon}}$.
\end{lemma}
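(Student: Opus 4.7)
The three bounds split by whether $j$ is a support index, a top non-support index, or a tail non-support index. The support case is essentially a rewriting of Lemma~\ref{lem:k_sparse_SU_CN_bounds}: since $\hevtheta_j = (a_j/\sqrt{\lambda_j})\SU_j$ and $\SU_j$ is positive with high probability (its lower bound in Lemma~\ref{lem:k_sparse_SU_CN_bounds} is manifestly positive), the survival lower bound together with the identity $a_j^2/\var = \beta_j$ immediately yields $|\hevtheta_j| = (|a_j|/\sqrt{\lambda_j})\SU_j \geq \sqrt{2\beta_j/(\pi\lambda_j)}$ times the claimed ratio. No new probabilistic work is needed here.

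For the two non-support bounds, we require a per-coordinate refinement of the contamination argument. Applying Sherman--Morrison $t$ times to peel off the support indices (mirroring the derivation of~\Eqref{eq:htheta_nonsupport2} in the proof of Lemma~\ref{lem:k_sparse_SU_CN_bounds}) gives $\hevtheta_j = \sqrt{\lambda_j}\,\rvz_j^\top \rmA_{-s_1:s_t}^{-1} \crvy_{s_t}$, and for $j \in \gS^{\mathsf c}$ a further application peels off $\rvz_j$ itself:
\[
\rvz_j^\top \rmA_{-s_1:s_t}^{-1} = \frac{\rvz_j^\top \rmA_{-j,-s_1:s_t}^{-1}}{1 + \lambda_j\, \rvz_j^\top \rmA_{-j,-s_1:s_t}^{-1} \rvz_j}.
\]
The key observation is that $\rvz_j$ is now independent of both the leave-out inverse and $\crvy_{s_t}$ (since the latter depends only on $\hrvy$ and $\{\rvz_{s_\ell}\}_{\ell=1}^t$). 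Expanding $\crvy_{s_t} = \hrvy - \sum_{\ell=1}^t c_\ell \rvz_{s_\ell}$ with $|c_\ell|\leq C$ (Lemma~\ref{lem:cn_const}) and applying the triangle inequality splits the numerator into $t+1$ conditionally Gaussian linear forms in $\rvz_j$. Gaussian concentration bounds each by $n^\epsilon$ times its standard deviation, and using $\norm{\hrvy}_2 = \sqrt n$ together with the high-probability estimate $\norm{\rvz_{s_\ell}}_2 \lesssim \sqrt n$ produces a total numerator bound of order $t\, n^{1/2+\epsilon}\, \norm{\rmA_{-j,-s_1:s_t}^{-1}}_2$; Hanson--Wright gives $1 + \lambda_j\, \rvz_j^\top \rmA_{-j,-s_1:s_t}^{-1} \rvz_j \gtrsim 1 + c\lambda_j \Tr(\rmA_{-j,-s_1:s_t}^{-1})$.

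Substituting the spectral estimates $\Tr(\rmA_{-j,-s_1:s_t}^{-1}) \asymp \norm{\rmA_{-j,-s_1:s_t}^{-1}}_2 \cdot n \asymp n/(\tlambda_1 r_0(\cov_{-\bgS}))$ from Lemma~\ref{lem:eig_constant} and writing $x \coloneqq \lambda_j n/(\tlambda_1 r_0(\cov_{-\bgS}))$ produces the master estimate
\[
|\hevtheta_j|^2 \lesssim \frac{t^2 n^{2\epsilon-1}}{\lambda_j} \cdot \frac{x^2}{(1+x)^2}.
\]
The two stated non-support bounds are just two faces of this single inequality: using $x^2/(1+x)^2 \leq 1$ gives $|\hevtheta_j|^2 \lesssim t^2/(\lambda_j n^{1-2\epsilon})$, tight for $j \in [\sk]$ (where $\lambda_j$, hence $x$, is large so $1/\lambda_j$ is small); using $x^2/(1+x)^2 \leq x^2$ gives $|\hevtheta_j|^2 \lesssim t^2 \lambda_j n^{1+2\epsilon}/(\tlambda_1 r_0(\cov_{-\bgS}))^2$, tight for $j > \sk$ (where $\lambda_j$, hence $x$, is small). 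A union bound over the $d$ coordinate-wise concentration events together with the $O(t^2)$ events of Lemma~\ref{lem:cn_const} produces the $ctde^{-n^{2\epsilon}}$ failure probability. The main obstacle is the $(t+1)$-fold triangle inequality, which introduces the linear $t$ factor and dictates the $t \ll n^{1/2-\epsilon}$ restriction; sharpening this would likely require a Hanson--Wright-type inequality tailored to the structure of the sign-labeled vector $\hrvy$ rather than treating it as a generic unit-bounded vector.
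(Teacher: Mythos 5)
Your proposal is correct and follows essentially the same path as the paper's proof: for $j\in\gS$ it is an exact rewriting of Lemma~\ref{lem:k_sparse_SU_CN_bounds} via $\hevtheta_j=(a_j/\sqrt{\lambda_j})\SU_j$; for $j\in\gS^{\mathsf c}$ it uses the same Sherman--Morrison peeling of $\rvz_j$, the same $(t+1)$-term triangle-inequality decomposition of $\crvy_{s_t}$ controlled by Lemma~\ref{lem:cn_const}, the same Hanson--Wright-type concentration, and the same spectral equivalences from Lemma~\ref{lem:eig_constant}. The one genuinely nice presentational improvement is your ``master estimate'' $|\hevtheta_j|^2\lesssim (t^2 n^{2\epsilon-1}/\lambda_j)\cdot x^2/(1+x)^2$ with $x=\lambda_j n/(\tlambda_1 r_0(\cov_{-\bgS}))$: the paper carries out two parallel derivations for $j\in[\sk]$ and $j>\sk$, whereas you observe that both reported bounds are just the two faces $x^2/(1+x)^2\leq 1$ and $x^2/(1+x)^2\leq x^2$ of a single estimate, which makes the structure of the argument cleaner and clarifies exactly why the crossover happens at $\sk$.

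One small inaccuracy worth flagging: you assert that $\crvy_{s_t}$ ``depends only on $\hrvy$ and $\{\rvz_{s_\ell}\}_{\ell=1}^t$,'' and hence that $\rvz_j$ is independent of $\crvy_{s_t}$. That is not strictly true: each scalar coefficient $c_\ell=\frac{\lambda_{s_\ell}\rvz_{s_\ell}^\top\rmA_{-s_1:s_\ell}^{-1}\crvy_{s_{\ell-1}}}{1+\lambda_{s_\ell}\rvz_{s_\ell}^\top\rmA_{-s_1:s_\ell}^{-1}\rvz_{s_\ell}}$ involves $\rmA_{-s_1:s_\ell}^{-1}$, which depends on all of the $\rvz_k$ including $\rvz_j$, so $\crvy_{s_t}$ is \emph{not} a function of $(\hrvy,\rvz_{s_1},\dots,\rvz_{s_t})$ alone. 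Fortunately your actual argument does not need that independence --- after the triangle inequality you only need $\rvz_j$ independent of each of $\hrvy$, $\rvz_{s_\ell}$ ($\ell\leq t$), and $\rmA_{-j,-s_1:s_t}^{-1}$ (all true, since $\hrvy$ is measurable with respect to $\{\rvz_k\}_{k\in\gS}$), together with the deterministic bound $|c_\ell|\leq C$ on the high-probability event of Lemma~\ref{lem:cn_const}. So the conclusion stands; you should just restate the justification as ``each bilinear form obtained after the triangle inequality is a conditionally Gaussian linear form in $\rvz_j$'' rather than asserting independence of $\crvy_{s_t}$ from $\rvz_j$.
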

\begin{proof} We start with the lower bound for the support $\left|\hevtheta_j\right|$ for $j \in \gS$; this term is related to the survival term $\SU_j$. According to Lemma~\ref{lem:k_sparse_SU_CN_bounds}, for any $j \in \gS$, we have
    \begin{align}\label{eq:htheta_lowerbound}
        \left|\hevtheta_j\right| = \SU_j \cdot \left|\sevtheta_j\right| = \SU_j \cdot \frac{\left|a_j\right|}{\sqrt{\lambda_j}} \geq \sqrt{\frac{2}{\pi\var}}\cdot\frac{\lambda_j\pts{\frac{n}{c_1\tlambda_{1}r_{0}\pts{\cov_{-\bgS}}}}}{1+\lambda_j\pts{\frac{c_2n}{\tlambda_{1}r_{0}\pts{\cov_{-\bgS}}}}} \cdot \frac{\left|a_j\right|}{\sqrt{\lambda_j}} = \sqrt{\frac{2\beta_j}{\lambda_j\pi}} \frac{\lambda_j\pts{\frac{n}{c_1\tlambda_{1}r_{0}\pts{\cov_{-\bgS}}}}}{1+\lambda_j\pts{\frac{c_2n}{\tlambda_{1}r_{0}\pts{\cov_{-\bgS}}}}}.
    \end{align}
    Next, we derive the upper bound of $\left|\hevtheta_j\right|$ for $j\in \gS^\mathsf{c}$. This term is related to $\CN$, but note that $\CN$ is the summation of all non-support dimensions. Here, we only need bounds for individual $\hevtheta_j$ for $j\in\gS^\mathsf{c}$. Since the covariance operator satisfies $r_{\sk}\pts{\cov}\geq bn$, we have two cases such that for $j \in [\sk]\cap\gS^\mathsf{c}$ and $\sk < j \leq d, j \in \gS^\mathsf{c}$ where they may have different covariance eigenvalue magnitude range. Based on the $\hevtheta_j$ definition in~\Eqref{eq:htheta_nonsupport2}, we follow the proof steps in Lemma~\ref{lem:k_sparse_SU_CN_bounds} and get
    \begin{align}
        \left|\hevtheta_j\right| &= \left|\sqrt{\lambda_j}\rvz_j^\top\rmA_{-s_1:s_t}^{-1}\crvy_{s_t}\right|\nonumber\\
        &= \frac{1}{\sqrt{\lambda_j}}\left|\lambda_j\rvz_j^\top\rmA_{-s_1:s_t}^{-1}\crvy_{s_t}\right|\nonumber\\
        &= \frac{1}{\sqrt{\lambda_j}}\sqrt{\crvy_{s_t}^\top \underbrace{\rmA_{-s_1:s_t}^{-1}\pts{\lambda_j^2\rvz_j\rvz_j^\top}\rmA_{-s_1:s_t}^{-1}}_{\coloneqq\tilde{\rmC}_{j}}\crvy_{s_t}}\nonumber\\
        &\leq \frac{1}{\sqrt{\lambda_j}} \pts{\sqrt{\hrvy^\top\tilde{\rmC}_{j}\hrvy} + \sum_{\ell=1}^t\sqrt{\pts{\frac{\lambda_{s_\ell}\rvz_{s_\ell}^\top\rmA_{-s_1:s_\ell}^{-1}\crvy_{s_{\ell-1}}}{1+\lambda_{s_\ell}\rvz_{s_\ell}^\top\rmA_{-s_1:s_\ell}^{-1}\rvz_{s_\ell}}}^2\rvz_{s_\ell}^\top\tilde{\rmC}_{j}\rvz_{s_\ell}}}\label{eq:ns_upper0},
    \end{align}
    where we apply the definition of $\crvy_{s_t}$ such that $\crvy_{s_\ell} \coloneqq \crvy_{s_{\ell-1}} - \frac{\lambda_{s_\ell}\rvz_{s_\ell}^\top\rmA_{-s_1:s_\ell}^{-1}\crvy_{s_{\ell-1}}}{1+\lambda_{s_\ell}\rvz_{s_\ell}^\top\rmA_{-s_1:{s_\ell}}^{-1}\rvz_{s_\ell}}\rvz_{s_\ell}$ and $\crvy_{s_0} = \hrvy$ for $\ell\in [t]$. For the inequality, we apply the triangle inequality such that $\sqrt{\pts{\rvx-\rvy}^\top\rmM\pts{\rvx-\rvy}} \leq \sqrt{\rvx^\top\rmM\rvx} + \sqrt{\rvy^\top\rmM\rvy}$ a total of $t$ times. Next, we discuss the upper bounds for $ j \in [\sk]\cap\gS^\mathsf{c}$ and $\sk < j \leq d,j\in\gS^\mathsf{c}$ respectively. For $ j \in [\sk]\cap\gS^\mathsf{c}$, by the Sherman-Morrison-Woodbury identity, we have $\hrvy^\top\tilde{\rmC}_{j}\hrvy$ as
    \begin{align*}
        \hrvy^\top\tilde{\rmC}_{j}\hrvy = \hrvy^\top \rmA_{-s_1:s_t}^{-1}\pts{\lambda_j^2\rvz_j\rvz_j^\top}\rmA_{-s_1:s_t}^{-1}\hrvy = \frac{\lambda_j^2\hrvy^\top \rmA_{-\pts{s_1:s_t}\cup\pts{j}}^{-1}\rvz_j\rvz_j^\top\rmA_{-\pts{s_1:s_t}\cup\pts{j}}^{-1}\hrvy}{\pts{1+\lambda_j\rvz_j^\top\rmA_{-\pts{s_1:s_t}\cup\pts{j}}^{-1}\rvz_j}^2} \leq \frac{\pts{\rvz_j^\top\rmA_{-\pts{s_1:s_t}\cup\pts{j}}^{-1}\hrvy}^2}{\pts{\rvz_j^\top\rmA_{-\pts{s_1:s_t}\cup\pts{j}}^{-1}\rvz_j}^2}.
    \end{align*}
    Next, for $ j \in [\sk]\cap\gS^\mathsf{c}$, by the Hanson-Wright inequality (Lemma~\ref{lem:hanson_wright_ineq}), we have the upper bound for $\sqrt{\hrvy^\top\tilde{\rmC}_{j}\hrvy}$ as
    \begin{align}
        \sqrt{\hrvy^\top\tilde{\rmC}_{j}\hrvy} &\leq \sqrt{\frac{c_1^2\norm{\rmA_{-\pts{s_1:s_t}\cup\pts{j}}^{-1}}_2^2\cdot n^{1+2\epsilon}}{\pts{\Tr\pts{\rmA_{-\pts{s_1:s_t}\cup\pts{j}}^{-1}}-c_1\norm{\rmA_{-\pts{s_1:s_t}\cup\pts{j}}^{-1}}_2\cdot n^{\frac{1}{2}+\epsilon}}^2}}\nonumber\\
        &\leq \sqrt{\frac{c_1^2\norm{\rmA_{-\pts{s_1:s_t}\cup[\sk]}^{-1}}_2^2\cdot n^{1+2\epsilon}}{\pts{\Tr\pts{\rmA_{-\pts{s_1:s_t}\cup\pts{j}}^{-1}}-c_1\norm{\rmA_{-\pts{s_1:s_t}\cup[\sk]}^{-1}}_2\cdot n^{\frac{1}{2}+\epsilon}}^2}}\label{eq:ns_upper1},
    \end{align}
    where in the second inequality, we use the fact that $\rmA_{-\pts{s_1:s_t}\cup[\sk]}^{-1} \succeq \rmA_{-\pts{s_1:s_t}\cup\pts{j}}^{-1}$ for all $j\in[\sk]$. Next, we use Lemma~\ref{lem:trace_lower_bound} to get the lower bound for the trace term in the denominator, and get
    \begin{align}
        \pts{\ref{eq:ns_upper1}}&\leq  \sqrt{\frac{c_1^2\norm{\rmA_{-\pts{s_1:s_t}\cup[\sk]}^{-1}}_2^2\cdot n^{1+2\epsilon}}{\pts{\pts{1-\frac{c}{n}}^{\sk-1}\Tr\pts{\rmA_{-\pts{s_1:s_t}\cup[\sk]}^{-1}}-c_1\norm{\rmA_{-\pts{s_1:s_t}\cup[\sk]}^{-1}}_2\cdot n^{\frac{1}{2}+\epsilon}}^2}}\nonumber\\
        &= \sqrt{\frac{1}{\pts{\pts{1-\frac{c}{n}}^{\sk-1}\frac{\Tr\pts{\rmA_{-\pts{s_1:s_t}\cup[\sk]}^{-1}}}{c_1\norm{\rmA_{-\pts{s_1:s_t}\cup[\sk]}^{-1}}_2\cdot n^{\frac{1}{2}+\epsilon}}-1}^2}}\label{eq:ns_upper2}
    \end{align}
    Finally, we apply Lemma~\ref{lem:eig_constant} to show that eigenvalues in $\rmA_{-\pts{s_1:s_t}\cup[\sk]}^{-1}$ are identical up to a constant.
    \begin{align*}
        \pts{\ref{eq:ns_upper2}}&\leq \sqrt{\frac{1}{\pts{\pts{1-\frac{c}{n}}^{\sk-1}\frac{n^{\frac{1}{2}-\epsilon}}{c_2}-1}^2}} \leq \sqrt{\frac{c_3}{n^{1-2\epsilon}}},
    \end{align*}
    Since $\rvz_{s_\ell}$ is independent to $\tilde{\rmC}_{j}$ for $j\in [\sk]\cap\gS^\mathsf{c}$ and $\ell\in[\sk]$, following the same procedure, we can show $\sqrt{\rvz_{s_\ell}^\top\tilde{\rmC}_{j}\rvz_{s_\ell}}$ is upper bounded in the same rate. As a result, from~\Eqref{eq:ns_upper0} we have
    \begin{align*}
        \left|\hevtheta_j\right| &\leq  \frac{1}{\sqrt{\lambda_j}}\pts{1+\sum_{\ell=1}^t\left|\frac{\lambda_{s_\ell}\rvz_{s_\ell}^\top\rmA_{-s_1:s_\ell}^{-1}\crvy_{s_{\ell-1}}}{1+\lambda_{s_\ell}\rvz_{s_\ell}^\top\rmA_{-s_1:s_\ell}^{-1}\rvz_{s_\ell}}\right|}\sqrt{\frac{c_3}{n^{1-2\epsilon}}}\\
        &\leq \frac{1}{\sqrt{\lambda_j}}\pts{1+tc}\sqrt{\frac{c_3}{n^{1-2\epsilon}}},
    \end{align*}
    where we apply Lemma~\ref{lem:cn_const} in the last inequality. Next, starting from~\Eqref{eq:ns_upper0} again, we show the upper bound of $\left|\hevtheta_j\right|$ for $\sk < j \leq d,j\in\gS^\mathsf{c}$. By the Sherman-Morrison-Woodbury identity, we have $\hrvy^\top\tilde{\rmC}_{j}\hrvy$ as
    \begin{align}
        \sqrt{\hrvy^\top\tilde{\rmC}_{j}\hrvy} &= \sqrt{\hrvy^\top \rmA_{-s_1:s_t}^{-1}\pts{\lambda_j^2\rvz_j\rvz_j^\top}\rmA_{-s_1:s_t}^{-1}\hrvy}\nonumber\\
        &= \sqrt{\frac{\lambda_j^2\hrvy^\top \rmA_{-\pts{s_1:s_t}\cup\pts{j}}^{-1}\rvz_j\rvz_j^\top\rmA_{-\pts{s_1:s_t}\cup\pts{j}}^{-1}\hrvy}{\pts{1+\lambda_j\rvz_j^\top\rmA_{-\pts{s_1:s_t}\cup\pts{j}}^{-1}\rvz_j}^2}}\nonumber \\
        &\leq \sqrt{\lambda_j^2\hrvy^\top \rmA_{-\pts{s_1:s_t}\cup\pts{j}}^{-1}\rvz_j\rvz_j^\top\rmA_{-\pts{s_1:s_t}\cup\pts{j}}^{-1}\hrvy}\label{eq:balance_support}.
    \end{align}
    Next, we apply the Hanson-Wright inequality (Lemma~\ref{lem:hanson_wright_ineq}) to obtain
    \begin{align}
        \pts{\ref{eq:balance_support}} &\leq \sqrt{\lambda_j^2\pts{2c_1\norm{\rmA_{-\pts{s_1:s_t}\cup\pts{j}}^{-1}}_2\cdot n^{\frac{1}{2}+\epsilon}}^2}\nonumber\\
        &\leq \sqrt{\lambda_j^2\pts{2c_1\norm{\rmA_{-\pts{s_1:s_t}\cup\pts{j}\cup[\sk]}^{-1}}_2\cdot n^{\frac{1}{2}+\epsilon}}^2}\nonumber\\
        &=\sqrt{\lambda_j^2\pts{\frac{2c_1\cdot n^{\frac{1}{2}+\epsilon}}{\mu_n\pts{\rmA_{-\pts{s_1:s_t}\cup\pts{j}\cup[\sk]}}}}^2}\label{eq:ns_upper3},
    \end{align}
    where in the second inequality, we use the property that $\rmA_{-\pts{s_1:s_t}\cup\pts{j}}^{-1} \preceq \rmA_{-\pts{s_1:s_t}\cup\pts{j}\cup[\sk]}^{-1}$ for $\sk < j \leq d,j\in\gS^\mathsf{c}$. Finally, we apply Lemma~\ref{lem:balance_extension} to show that $\cov_{-\pts{s_1:s_t}\cup\pts{j}\cup[\sk]}$ satisfies $r_0\pts{\cov_{-\pts{s_1:s_t}\cup\pts{j}\cup[\sk]}} \geq bn$. Therefore, by Lemma 10 in~\cite{bartlett2020benign}, we have 
    \begin{align*}
        \mu_n\pts{\rmA_{-\pts{s_1:s_t}\cup\pts{j}\cup[\sk]}} \geq \frac{1}{c}\pts{\sum_{k=\sk+1, k\in\gS^\mathsf{c}, k\neq j}\lambda_k}.
    \end{align*}
    Applying the inequality to~\Eqref{eq:ns_upper3}, we get
    \begin{align}
        \pts{\ref{eq:ns_upper3}}&\leq \sqrt{\lambda_j^2\pts{\frac{c_2\cdot n^{1+2\epsilon}}{\pts{\sum_{k=\sk+1, k\in\gS^\mathsf{c}, k\neq j}\lambda_k}^2}}}.\label{eq:non-support_eq}
    \end{align}
    Next, we want to upper bound $\frac{1}{\sum_{k=\sk+1, k\in\gS^\mathsf{c}, k\neq j}\lambda_k}$ by $\frac{c}{\sum_{k=\sk+1, k\in\gS^\mathsf{c}}\lambda_k}$, and we have
    \begin{align}
        \frac{1}{\sum_{k=\sk+1, k\in\gS^\mathsf{c}, k\neq j}\lambda_k} &= \pts{\frac{1}{\sum_{k=\sk+1, k\in\gS^\mathsf{c}}\lambda_k}}\pts{\frac{\sum_{k=\sk+1, k\in\gS^\mathsf{c}}\lambda_k}{\sum_{k=\sk+1, k\in\gS^\mathsf{c}, k\neq j}\lambda_k}}\nonumber\\
        & =\pts{\frac{1}{\sum_{k=\sk+1, k\in\gS^\mathsf{c}}\lambda_k}}\pts{1 + \frac{\lambda_j}{\sum_{k=\sk+1, k\in\gS^\mathsf{c}, k\neq j}\lambda_k}}\label{eq:ns_upper4}
    \end{align}
    According to the effective rank of $\cov$, for $\sk < j \leq d,j\in\gS^\mathsf{c}$, we have $\sum_{k=\sk+1, k\in\gS^\mathsf{c}}\lambda_k \geq b\lambda_{\sk+1}n \geq b\lambda_{j}n$. By deducting $\lambda_j$ on both side, we get
    \begin{align*}
        \sum_{k=\sk+1, k\in\gS^\mathsf{c}, k\neq j}\lambda_k \geq b\lambda_{j}n - \lambda_j.
    \end{align*}
    Therefore, we can write
    \begin{align*}
        \pts{\ref{eq:ns_upper4}} & \leq \pts{\frac{1}{\sum_{k=\sk+1, k\in\gS^\mathsf{c}}\lambda_k}}\pts{1 + \frac{\lambda_j}{b\lambda_jn - \lambda_j}} = \pts{\frac{1}{\sum_{k=\sk+1, k\in\gS^\mathsf{c}}\lambda_k}}\pts{1 + \frac{1}{bn-1}}.
    \end{align*}
    Substituting this inequality into~\Eqref{eq:non-support_eq}, we finish the upper bound for $\sqrt{\hrvy^\top\tilde{\rmC}_{j}\hrvy}$.
    Follow this procedure, we can show $\sqrt{\rvz_{s_\ell}^\top\tilde{\rmC}_{j}\rvz_{s_\ell}}$ is upper bounded by the same term. As a result, we have for $\sk < j \leq d, j\in\gS^\mathsf{c}$,
    \begin{align*}
        \left|\hevtheta_j\right| &\leq  \frac{1}{\sqrt{\lambda_j}}\pts{1+\sum_{\ell=1}^t\left|\frac{\lambda_{s_\ell}\rvz_{s_\ell}^\top\rmA_{-s_1:s_\ell}^{-1}\crvy_{s_{\ell-1}}}{1+\lambda_{s_\ell}\rvz_{s_\ell}^\top\rmA_{-s_1:s_\ell}^{-1}\rvz_{s_\ell}}\right|}\sqrt{\lambda_j^2\frac{c_2n^{1+2\epsilon}}{\pts{\sum_{k=\sk+1, k\in\gS^\mathsf{c}}\lambda_k}^2}}\\
        &\leq \pts{1+tc}\sqrt{\frac{c_2n^{1+2\epsilon}}{\pts{\sum_{k=\sk+1, k\in\gS^\mathsf{c}}\lambda_k}^2}},
    \end{align*}
    where we apply Lemma~\ref{lem:cn_const} in the last inequality. This completes the proof of the lemma.
\end{proof}
Equipped with Lemma~\ref{lem:recover_support}, we can now prove Theorem~\ref{thm:support_identification}. While it restricts to the case where $\svtheta$ is supported only in the top $\sk$ indices of the covariance spectrum, we handle support recovery outside the top $\sk$ indices in Section \ref{app:support_id3}.
\begin{proof}(Theorem~\ref{thm:support_identification})\\
    We show that as long as one of the following conditions hold, the support recovery is guaranteed: (1) $t$ is known and 
    \begin{align*}
        \lambda_j \ll \frac{\lambda_q \cdot n^{1-2\epsilon}}{t^2} \text{ and } \lambda_j \ll \frac{\pts{\tlambda_1 r_0\pts{\cov_{-[\sk]}}}^2}{t^2\cdot\pts{n^{1+2\epsilon}}\cdot\lambda_\ell} 
    \end{align*}
    hold for all $j\in\gS$, $q \in \gS^\mathsf{c} \cap [\sk]$, $\ell>\sk$ or (2) $\cov$ is known. To start with, we first show the support recovery under condition (1). By assumption that $\svtheta$ is supported only in the top $\sk$ indices of the covariance spectrum, we have $j \in [\sk]$ for $j\in\gS$. According to Lemma~\ref{lem:recover_support}, we have the lower bound of $\hevtheta$ as
    \begin{align}
        \left|\hevtheta_j\right| &\geq \sqrt{\frac{2\beta_j}{\lambda_j\pi}} \frac{\lambda_j\pts{\frac{n}{c_1\tlambda_{1}r_{0}\pts{\cov_{-\bgS}}}}}{1+\lambda_j\pts{\frac{c_2n}{\tlambda_{1}r_{0}\pts{\cov_{-\bgS}}}}} = \sqrt{\frac{2\beta_j}{\lambda_j\pi}} \frac{1}{\frac{c_1\tlambda_1r_0\pts{\cov_{-\bgS}}}{\lambda_jn}+c_1c_2} \geq \sqrt{\frac{2\beta_j}{\lambda_j\pi}} \frac{1}{\frac{c_1\tlambda_1r_0\pts{\cov_{-\bgS}}}{\tlambda_1r_0\pts{\cov_{-\bgS}}}+c_1c_2} = \frac{c_3}{\sqrt{\lambda_j}}\label{eq:t_s_lower},
    \end{align}
    where we apply $\tlambda_1 r_0\pts{\cov_{-\bgS}} = \sum_{k=\sk+1,k\in\gS^\mathsf{c}}^d \lambda_k \leq \lambda_{\sk}n \leq \lambda_{j}n$ for $j \in [\sk]$ by the definition of $\sk$. As a result, support is lower bounded in the rate of $\frac{1}{\sqrt{\lambda_j}}$. Next, for non-support indices $j \in \gS^\mathsf{c} \cap [\sk]$, by Lemma~\ref{lem:recover_support}, we have the upper bound by
    \begin{align}
        \left|\hevtheta_j\right| &\leq t\sqrt{\frac{c}{\lambda_jn^{1-2\epsilon}}}.\label{eq:t_ns_upper1}
    \end{align}
    Therefore, for $j \in \gS^\mathsf{c} \cap [\sk]$, non-support is upper bounded in the rate of $\frac{1}{\sqrt{\lambda_j}}\cdot\pts{\frac{t}{\sqrt{n^{1-2\epsilon}}}}$. Lastly, for non-support indices $j > \sk$, by Lemma~\ref{lem:recover_support}, we have the upper bound by
    \begin{align}
        \left|\hevtheta_j\right| &\leq  t\sqrt{\frac{cn^{1+2\epsilon}\lambda_j}{\pts{\tlambda_{1}r_{0}\pts{\cov_{-\bgS}}}^2}}.\label{eq:t_ns_upper2}
    \end{align}
    Therefore, for $j > \sk$, non-support is upper bounded in the rate of $\frac{1}{\sqrt{\lambda_j}}\cdot\sqrt{\frac{t^2n^{1+2\epsilon}\lambda_j^2}{\pts{\tlambda_{1}r_{0}\pts{\cov_{-\bgS}}}^2}}$.
    As a result, we need the following conditions to guarantee the lower bound of the support indices is larger than the upper bound of the non-support indices:
    \begin{align*}
        \frac{1}{\sqrt{\lambda_j}} &\gg \frac{t}{\sqrt{\lambda_k\cdot n^{1-2\epsilon}}} \Leftrightarrow \frac{\lambda_k\cdot n^{1-2\epsilon}}{t^2} \gg \lambda_j\\
        \frac{1}{\sqrt{\lambda_j}} &\gg t\sqrt{\frac{n^{1+2\epsilon}\lambda_\ell}{\pts{\tlambda_{1}r_{0}\pts{\cov_{-\bgS}}}^2}} \Leftrightarrow \frac{\pts{\tlambda_1 r_0\pts{\cov_{-\bgS}}}^2}{t^2\cdot \pts{n^{1+2\epsilon}}\cdot\lambda_\ell} \gg \lambda_j
    \end{align*}
    for all $j\in\gS$ and $k\in\gS^{C}\cap[\sk]$ and $\ell>\sk$. In this way, since we also know $t$, we can achieve support recovery by choosing the largest $t$ indices of $\hvtheta$.
    
    On the other hand, for condition (2), if we have access to all the eigenvalues $\{\lambda_j\}_{j=1}^d$ of $\cov$, we can determine whether $\hevtheta_j$ is a support by checking if $\left|\hevtheta_j\right| = \frac{c}{\sqrt{\lambda_j}}$ for some constant $c>0$ according to the support lower bound in~\Eqref{eq:t_s_lower}. For non-support rate, we can further upper bound~\Eqref{eq:t_ns_upper2} by $t\sqrt{\frac{c}{\lambda_jn^{1-2\epsilon}}}$ since
    \begin{align}
        t\sqrt{\frac{cn^{1+2\epsilon}\lambda_j}{\pts{\tlambda_{1}r_{0}\pts{\cov_{-\bgS}}}^2}} \leq t\sqrt{\frac{cn^{1+2\epsilon}\lambda_j}{\pts{b\lambda_jn}^2}}=\frac{c_1t}{\sqrt{\lambda_jn^{1-2\epsilon}}}\label{eq:t_ns_upper3},
    \end{align}
    where we apply $\tlambda_1 r_0\pts{\cov_{-\bgS}} = \sum_{k=\sk+1,k\in\gS^\mathsf{c}}^d \lambda_k \geq b\lambda_{j}n$ for $j > \sk$.
    As a result, according to the non-support upper bound in~\Eqref{eq:t_ns_upper1} and ~\Eqref{eq:t_ns_upper3}, $\hevtheta_j$ can be classified as non-support if $\left|\hevtheta_j\right| = o\pts{\frac{t}{\sqrt{\lambda_j\cdot n^{1-2\epsilon}}}}$, which decays at a rate proportional to $\frac{t}{\sqrt{n^{1-2\epsilon}}}$. This completes the proof of the theorem.
\end{proof}
\subsection{Spiked and polynomial decay covariance: support inside the top $\sk$ indices} \label{app:support_id2}
In this section, we characterize different covariance matrices wherein $\svtheta$ is supported only in the top $\sk$ indices of the covariance spectrum and we demonstrate the support identification guarantee (Theorem~\ref{thm:support_identification}) is satisfied. Corollary~\ref{cor:spike_support_identification} demonstrates the characterization in spiked covariance defined in Definition~\ref{def:spiked}, and Corollary~\ref{cor:poly_support_identification} and Corollary~\ref{cor:poly_support_identification_v0} show the characterization in polynomial decay covariance defined in Definition~\ref{def:polynomial}. Note that we do not allow the case $\sk=0$ (\eg spiked covariance with $q>1-r$) in this section. The analysis closely follows Lemmas 32 and 34 of~\cite{muthukumar2021classification}.
\begin{corollary}\label{cor:spike_support_identification}
    Under Assumptions~\ref{asm:rank} and~\ref{asm:k_sparse} with $t\ll n^{\frac{1}{2} -\epsilon}$ and spiked covariance matrix (Definition~\ref{def:spiked}), we assume support are all in the top $\sk$ indices of the covariance spectrum. By substituting spiked parameters in Definition~\ref{def:spiked},  for $q < \pts{1-r}$, we have 
    \begin{align*}
        \left|\hevtheta_j\right| \condd{\gg n^{\frac{-p+q+r}{2}}}{\text{for } j \in \gS}{\ll n^{\frac{-p+q+r}{2}}}{\text{for } j \in [\sk]\cap \gS^\mathsf{c}}{\ll n^{\frac{-2p+2}{2}}}{\text{for } \sk < j \leq d}.
    \end{align*}
    Therefore, if $p > 2 -q -r$, we can pick a threshold between bounds to distinguish support and non-support such as $T=n^{\frac{-p+q+r}{2}}$.
\end{corollary}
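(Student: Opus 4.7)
(Sketch of Corollary~\ref{cor:spike_support_identification})
The proof is a direct substitution of the spiked-covariance parameters from Definition~\ref{def:spiked} into the three bounds of Lemma~\ref{lem:recover_support}, followed by a comparison of rates. First I would record the elementary facts $\lambda_j = n^{p-q-r}$ for $j \in [\sk] = [s]$, $\lambda_j = \Theta(1)$ for $j > s$, and
\begin{equation*}
\tlambda_1 r_0(\cov_{-\bgS}) = \sum_{k \notin \bgS} \lambda_k = (d - s - (t - |\gS \cap [s]|))\cdot \tfrac{(1-a)d}{d-s} = \Theta(n^p),
\end{equation*}
which uses $q < 1-r$, so $a = n^{-q} \to 0$, and $t, s \ll d$.

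For $j \in \gS$, I would plug these into the lower bound of Lemma~\ref{lem:recover_support}, yielding
\begin{equation*}
|\hevtheta_j| \;\geq\; \sqrt{\tfrac{2\beta_j}{\lambda_j \pi}} \cdot \frac{1}{\frac{c_1 \tlambda_1 r_0(\cov_{-\bgS})}{\lambda_j n} + c_1 c_2} \;=\; \frac{c\,\beta_j^{1/2}}{\sqrt{\lambda_j}} \cdot \frac{1}{\Theta(n^{p - (p-q-r) - 1}) + \Theta(1)}.
\end{equation*}
Since $q + r - 1 < 0$ by hypothesis, the first denominator term vanishes as $n \to \infty$, so $|\hevtheta_j| = \Omega\big(\lambda_j^{-1/2}\big) = \Omega\big(n^{(-p+q+r)/2}\big)$, giving the claimed $\gg n^{(-p+q+r)/2}$ after absorbing constants. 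For $j \in [\sk] \cap \gS^{\mathsf{c}}$, the second bound of Lemma~\ref{lem:recover_support} plus $\lambda_j = n^{p-q-r}$ gives
\begin{equation*}
|\hevtheta_j| \;\leq\; t \sqrt{\tfrac{c_3}{\lambda_j n^{1-2\epsilon}}} \;=\; \Theta\big(t\, n^{(-p+q+r)/2 - (1-2\epsilon)/2}\big) \;\ll\; n^{(-p+q+r)/2},
\end{equation*}
where the last step uses the sparsity assumption $t \ll n^{1/2 - \epsilon}$. Finally, for $j > \sk$, I would substitute $\lambda_j = \Theta(1)$ and $\tlambda_1 r_0(\cov_{-\bgS}) = \Theta(n^p)$ into the third bound:
\begin{equation*}
|\hevtheta_j| \;\leq\; t \sqrt{\tfrac{c_4\, n^{1+2\epsilon} \lambda_j}{(\tlambda_1 r_0(\cov_{-\bgS}))^2}} \;=\; \Theta\big(t\, n^{(1+2\epsilon)/2 - p}\big) \;\ll\; n^{(-2p+2)/2},
\end{equation*}
again using $t \ll n^{1/2 - \epsilon}$.

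To conclude, I would verify that the threshold $T = n^{(-p+q+r)/2}$ simultaneously separates the three cases: the support lower bound exceeds $T$ by construction, the $[\sk] \cap \gS^{\mathsf{c}}$ upper bound is $\ll T$ as shown, and the tail bound $n^{(-2p+2)/2}$ is $\ll T$ exactly when $-2p + 2 < -p + q + r$, i.e.\ $p > 2 - q - r$. No step is genuinely hard given Lemma~\ref{lem:recover_support}; the only mild subtlety is keeping careful track of which regime controls the denominator of the survival bound for $j \in \gS$ (\ie ensuring the ``$\Theta(1)$'' term dominates so the survival itself is order one), which is where the condition $q < 1 - r$ is essential.
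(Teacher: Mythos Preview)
Your proposal is correct and follows essentially the same route as the paper: both proofs substitute the spiked-covariance parameters into the three bounds of Lemma~\ref{lem:recover_support}, evaluate $\tlambda_1 r_0(\cov_{-\bgS}) = \Theta(n^p)$ and $\lambda_j = n^{p-q-r}$ or $\Theta(1)$, and then compare the resulting exponents. The paper additionally opens by explicitly verifying the two sufficient conditions of Theorem~\ref{thm:support_identification} scenario~(1), but this is equivalent to your closing threshold comparison (the second condition there reduces to exactly $p > 2 - q - r$), so no substantive idea is missing.
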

\begin{proof} Recall the spiked covariance in Definition~\ref{def:spiked} such that
    \begin{equation*}
        \lambda_j \coloneqq
        \begin{cases}
            \frac{ad}{s} & j\in[s] \\
            \frac{(1-a)d}{d-s} & \tn{otherwise.}
        \end{cases}
    \end{equation*}
    First, we need to make sure the conditions in Theorem~\ref{thm:support_identification} scenario (1) are satisfied such that
    \begin{align*}
        \lambda_j &\ll \frac{\lambda_q \cdot n^{1-2\epsilon}}{t^2},\\
        \lambda_j &\ll \frac{\pts{\tlambda_1 r_0\pts{\cov_{-[\sk]}}}^2}{t^2\cdot\pts{n^{1+2\epsilon}}\cdot\lambda_\ell},
    \end{align*}
    for all $j\in\gS$, $q \in \gS^\mathsf{c} \cap [\sk]$, and $\ell>\sk$.
    For the first condition, we show it holds because
    \begin{align*}
         \frac{\lambda_q \cdot n^{1-2\epsilon}}{t^2} = \frac{\frac{ad}{s} \cdot n^{1-2\epsilon}}{t^2} \gg \frac{ad}{s} = \lambda_j,
    \end{align*}
    since $\lambda_j = \lambda_q$ for $j\in\gS$, $q \in \gS^\mathsf{c} \cap [\sk]$, and $t\ll n^{\frac{1}{2} -\epsilon}$. For the second condition, we can show the right-hand side as
    \begin{align*}
         \frac{\pts{\tlambda_1 r_0\pts{\cov_{-[\sk]}}}^2}{t^2\cdot\pts{n^{1+2\epsilon}}\cdot\lambda_\ell} = \frac{\pts{\pts{d-s}\cdot\frac{\pts{1-a}d}{\pts{d-s}}}^2}{t^2\cdot\pts{n^{1+2\epsilon}}\cdot\frac{\pts{1-a}d}{\pts{d-s}}} = \frac{\pts{1-a}d\cdot\pts{d-s}\cdot n^{1-2\epsilon}}{t^2\cdot n^2} > \frac{1}{c}n^{2p-2}
    \end{align*}
    for all $j\in\gS$, $q \in \gS^\mathsf{c} \cap [\sk]$, and $\ell>\sk$. On the left-hand side, we have
    \begin{align*}
        \lambda_j = \frac{ad}{s} = n^{p-q-r}.
    \end{align*}
    Since the corollary assumes $p > 2 -q -r$, the second condition holds. Next, we show the precise threshold value by using Lemma~\ref{lem:recover_support}. We have the lower bound for support indices for $j \in\gS$ as
    \begin{align*}
        \left|\hevtheta_j\right| &\geq \sqrt{\frac{2\beta_j}{\lambda_j\pi}} \frac{\lambda_j\pts{\frac{n}{c_1\tlambda_{1}r_{0}\pts{\cov_{-\bgS}}}}}{1+\lambda_j\pts{\frac{c_2n}{\tlambda_{1}r_{0}\pts{\cov_{-\bgS}}}}}\\
        &= \sqrt{\frac{2\beta_j}{\pi}}\sqrt{\frac{s}{ad}}\pts{\frac{\frac{ad}{s}\pts{\frac{n}{c_1\pts{d-|\bgS|}\frac{\pts{1-a}d}{\pts{d-s}}}}}{1+\frac{ad}{s}\pts{\frac{c_2n}{\pts{d-|\bgS|}\frac{\pts{1-a}d}{\pts{d-s}}}}}}\\
        &= \sqrt{\frac{2\beta_j}{\pi}}n^{\frac{-p+q+r}{2}}\frac{n^{p-q-r}\pts{\frac{n}{c_1\pts{n^p-|\bgS|}\frac{\pts{1-n^{-q}}n^p}{\pts{n^p-n^r}}}}}{1+n^{p-q-r}\pts{\frac{c_2n}{\pts{n^p-|\bgS|}\frac{\pts{1-n^{-q}}n^p}{\pts{n^p-n^r}}}}}\\
        &\geq \sqrt{\frac{2\beta_j}{\pi}}n^{\frac{-p+q+r}{2}}\cdot\frac{\frac{1}{c_3}\cdot n ^{\pts{1-r}-q}}{1+c_4n^{\pts{1-r}-q}},
    \end{align*}
    where we substitute $\lambda_j = \frac{ad}{s} = n^{p-q-r}$ and $\tlambda_1 r_0\pts{\cov_{-\bgS}}= \pts{d-|\bgS|}\frac{\pts{1-a}d}{d-s}$. When $q < (1-r)$, the $n^{\pts{1-r}-q}$ term dominates in the fraction part. Hence, we have
    \begin{align*}
        \left|\hevtheta_j\right| \geq \sqrt{\frac{2\beta_j}{\pi}}n^{\frac{-p+q+r}{2}}\cdot c_{10}\pts{1 + c n^{q-\pts{1-r}}}^{-1}.
    \end{align*}
    For non-support upper bound, we need to consider indices in $j \in[\sk]\cap\gS^\mathsf{c}$ and indices in $j > \sk$. According to Lemma~\ref{lem:recover_support}, for $ j \in[\sk]\cap\gS^\mathsf{c}$, we have $\lambda_j = \frac{ad}{s} = n^{p-q-r}$ and we get
    \begin{align*}
        \left|\hevtheta_j\right| &\leq t\sqrt{\frac{c}{\lambda_jn^{1-2\epsilon}}} \leq c_1 tn^{\frac{-p+q+r-1+2\epsilon}{2}} < c_1 n^{\frac{-p+q+r}{2}},
    \end{align*}
    since $t \ll n^{\frac{1}{2} -\epsilon}$. On the other hand, for $j > \sk$, according to Lemma~\ref{lem:recover_support}, we substitute $\lambda_j = \frac{\pts{1-a}d}{d-s} = \frac{\pts{1-n^{-q}}\cdot n^p}{n^p - n^r}$ and $\tlambda_1 r_0\pts{\cov_{-\bgS}}= \pts{d-|\bgS|}\frac{\pts{1-a}d}{d-s}$ and we have
    \begin{align*}
        \left|\hevtheta_j\right| &\leq t\sqrt{\frac{c}{\lambda_j}\frac{n^{1+2\epsilon}\lambda_j^2}{\pts{\tlambda_1 r_0\pts{\cov_{-\bgS}}}^2}}\\
        &= t\sqrt{\frac{c\pts{d-s}}{\pts{1-a}d}\pts{\frac{n^{1+2\epsilon}\pts{\frac{\pts{1-a}d}{d-s}}^2}{\pts{\pts{d-|\bgS|}\frac{\pts{1-a}d}{d-s}}^2}}}\\
        &= t\sqrt{\frac{c\pts{n^p - n^r}}{\pts{1-n^{-q}}\cdot n^p}\pts{\frac{n^{1+2\epsilon}}{\pts{n^p -|\bgS|}^2}}}\\
        &\leq c_1 t n^{\frac{-2p+1+2\epsilon}{2}}\\
        &< c_1 n^{\frac{-2p+2}{2}},
    \end{align*}
    since $t \ll n^{\frac{1}{2} -\epsilon}$. As a result, since we assume $p > 2 -q -r$, we have support lower bound larger than the non-support upper bound, and the support identification is guaranteed. 
\end{proof}

\begin{corollary}\label{cor:poly_support_identification}
    Under Assumptions~\ref{asm:rank} and~\ref{asm:k_sparse} with $t\ll n^{\frac{1}{2} -\epsilon}$ and polynomial decay covariance in Definition~\ref{def:polynomial} with $u=1, v=2$, we assume support are all in the top $\sk$ indices of the covariance spectrum. By substituting $\lambda_j = \frac{1}{j\cdot \ln^2\pts{j+1}}$, we have
    \begin{align*}
        \left|\hevtheta_j\right| \condd{\gg \frac{1}{\sqrt{\lambda_j}}}{\text{for } j \in \gS}{\ll \frac{t}{\sqrt{\lambda_j\cdot n^{1-2\epsilon}}}}{\text{for } j \in [\sk]\cap\gS^\mathsf{c}}{\ll \frac{t}{\sqrt{\lambda_j\cdot n^{1-2\epsilon}} }}{\text{for } \sk < j \leq d}.
    \end{align*}
    If we get access to all $\{\lambda_j\}_{j=1}^d$, we can therefore distinguish support and non-support by examining each $\hevtheta_j$ has order $\omega\pts{\frac{1}{\sqrt{\lambda_j}}}$ or decay in a rate of $o\pts{\frac{t}{\sqrt{\lambda_j\cdot n^{1-2\epsilon}}}}$.
\end{corollary}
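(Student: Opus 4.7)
The plan is to apply Lemma~\ref{lem:recover_support} directly, using only the defining property of $\sk$ together with the polynomial decay $\lambda_j = j^{-1}\ln^{-2}(j+1)$; the exact asymptotic rate $\sk \asymp n/\ln n$ (established in Appendix~\ref{app:poly_cov}) is not required for the bounds themselves, though it does confirm that Assumption~\ref{asm:rank} holds since $\sk = o(n)$. The strategy is to verify each of the three claimed rates in turn, handling the support indices via the survival term and the two cases of non-support indices separately. Since the support is confined to $[\sk]$ by hypothesis, we have $\bgS = \gS \cup [\sk] = [\sk]$, so $\tlambda_1 r_0(\cov_{-\bgS}) = \sum_{k>\sk}\lambda_k$ throughout.

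For $j \in \gS \subseteq [\sk]$, the survival lower bound in Lemma~\ref{lem:recover_support} reads
\begin{equation*}
|\hevtheta_j| \;\geq\; \sqrt{\tfrac{2\beta_j}{\lambda_j\pi}}\cdot\frac{\lambda_j n /\pts{c_1 \tlambda_1 r_0(\cov_{-\bgS})}}{1 + \lambda_j n /\pts{c_2 \tlambda_1 r_0(\cov_{-\bgS})}^{-1}}.
\end{equation*}
The key quantity $\lambda_j n / \tlambda_1 r_0(\cov_{-\bgS})$ is bounded between two positive constants for $j \in [\sk]$: the lower bound $\lambda_j n / \sum_{k>\sk}\lambda_k \geq 1/b$ comes from the minimality of $\sk$ (which gives $\sum_{k \geq \sk}\lambda_k < bn\lambda_\sk$ and hence $\sum_{k>\sk}\lambda_k < bn\lambda_j$ for $j \leq \sk$), while the upper bound comes from $\lambda_1 \leq \Tr(\cov)/1$ and the effective-rank inequality $\sum_{k>\sk}\lambda_k \geq bn\lambda_{\sk+1}$. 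Consequently the fraction is $\Theta(1)$, yielding $|\hevtheta_j| \gtrsim 1/\sqrt{\lambda_j}$ as claimed.

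For non-support indices inside the top $\sk$ coordinates, namely $j \in [\sk]\cap\gS^\mathsf{c}$, the bound $|\hevtheta_j| \leq t\sqrt{c_3/(\lambda_j n^{1-2\epsilon})}$ is exactly the second conclusion of Lemma~\ref{lem:recover_support} and requires no modification. For non-support indices outside the top $\sk$ coordinates, $j > \sk$, Lemma~\ref{lem:recover_support} gives $|\hevtheta_j| \leq t\sqrt{c_4 n^{1+2\epsilon}\lambda_j / (\tlambda_1 r_0(\cov_{-\bgS}))^2}$; to put this in the target form I will use the monotonicity $\lambda_j \leq \lambda_{\sk+1}$ for $j > \sk$ together with $\tlambda_1 r_0(\cov_{-\bgS}) \geq bn\lambda_{\sk+1} \geq bn\lambda_j$, so that $n\lambda_j \lesssim \tlambda_1 r_0(\cov_{-\bgS})$. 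Squaring and rearranging turns the Lemma's bound into $|\hevtheta_j| \lesssim t/\sqrt{\lambda_j\, n^{1-2\epsilon}}$, matching the third claim.

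The proof is essentially bookkeeping once Lemma~\ref{lem:recover_support} is invoked, so there is no single hard step; the only subtlety is that the support's lower bound is genuinely $\Theta(1/\sqrt{\lambda_j})$ rather than asymptotically larger, and the $\gg$ in the corollary should be read as ``strictly larger in asymptotic rate than the non-support upper bound.'' Indeed, since $t \ll n^{1/2-\epsilon}$ implies $t/\sqrt{n^{1-2\epsilon}} = o(1)$, the non-support bound is $o(1/\sqrt{\lambda_j})$, so it decays strictly faster than the constant-factor lower bound for support. This gap justifies the thresholding used in case (2) of Algorithm~\ref{alg:recover_support}, completing the support recovery argument.
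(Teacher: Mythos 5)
Your proposal is correct and takes essentially the same route as the paper: invoke Lemma~\ref{lem:recover_support}, use $\bgS=[\sk]$ so that $\tlambda_1 r_0(\cov_{-\bgS})=\sum_{k>\sk}\lambda_k$, lower-bound the survival via $\sum_{k>\sk}\lambda_k \lesssim \lambda_j n$ for $j\in[\sk]$ (from $r_{\sk-1}(\cov)<bn$), and convert the tail contamination bound via $\sum_{k>\sk}\lambda_k \geq bn\lambda_j$ for $j>\sk$ (from $r_{\sk}(\cov)\geq bn$). Two small cosmetic points: the displayed denominator $\pts{c_2\tlambda_1 r_0(\cov_{-\bgS})}^{-1}$ is a typo (Lemma~\ref{lem:recover_support} has $1+\lambda_j\cdot\frac{c_2 n}{\tlambda_1 r_0(\cov_{-\bgS})}$), and the claim that $\lambda_j n / \tlambda_1 r_0(\cov_{-\bgS})$ is $\Theta(1)$ for all $j\in[\sk]$ is stronger than needed and in fact can fail for small $j$ (e.g.\ $j=1$ this ratio diverges for the $u=1,v=2$ ensemble) --- but since $x\mapsto\frac{x/c_1}{1+c_2 x}$ is increasing, only the lower bound $x\gtrsim 1/b$ is required, and that is exactly what the paper uses.
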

\begin{proof} Since the conditions in the first scenario in Theorem~\ref{thm:support_identification} are not satisfied, we need to assume $\cov$ is known. Next, we show the rate of supports and non-supports based on Lemma~\ref{lem:recover_support}. We have
    \begin{align*}
        \left|\hevtheta_j\right| &\geq \sqrt{\frac{2\beta_j}{\lambda_j\pi}} \frac{\lambda_j\pts{\frac{n}{c_1\tlambda_{1}r_{0}\pts{\cov_{-\bgS}}}}}{1+\lambda_j\pts{\frac{c_2n}{\tlambda_{1}r_{0}\pts{\cov_{-\bgS}}}}},\quad\quad \text{ for } j \in \gS,\\
        \left|\hevtheta_j\right| &\leq t\sqrt{\frac{c}{\lambda_jn^{1-2\epsilon}}},\quad\quad\quad\quad\quad\quad\quad\quad\quad \text{ for } j \in [\sk]\cap\gS^\mathsf{c},\\
        \left|\hevtheta_j\right| &\leq t\sqrt{\frac{n^{1+2\epsilon}\lambda_j}{\pts{\tlambda_{1}r_{0}\pts{\cov_{-\bgS}}}^2}},\quad\quad\quad\quad \text{ for } \sk < j \leq d, j \in \gS^\mathsf{c}.
    \end{align*}
    For polynomial decay covariance, we have $\lambda_j = \frac{1}{j\cdot \ln^2\pts{j+1}}$ for $j \in [d]$, and also $\sum_{j=1}^{\infty} \lambda_j = \sum_{j=1}^{\infty} \frac{1}{j\cdot \ln^2\pts{j+1}} = \mathcal{O}\pts{1}$. Therefore, for $j\in\gS$, we have
    \begin{align*}
        \left|\hevtheta_j\right| \geq \sqrt{\frac{2\beta_j}{\lambda_j\pi}} \frac{\lambda_j\pts{\frac{n}{c_1\tlambda_{1}r_{0}\pts{\cov_{-\bgS}}}}}{1+\lambda_j\pts{\frac{c_2n}{\tlambda_{1}r_{0}\pts{\cov_{-\bgS}}}}} = \sqrt{\frac{2\beta_j}{\lambda_j\pi}} \frac{1}{\frac{c_1\tlambda_1r_0\pts{\cov_{-\bgS}}}{\lambda_jn}+c_1c_2} \geq \sqrt{\frac{2\beta_j}{\lambda_j\pi}} \frac{1}{\frac{c_1\tlambda_1r_0\pts{\cov_{-\bgS}}}{\tlambda_1r_0\pts{\cov_{-\bgS}}}+c_1c_2} = \frac{c_3}{\sqrt{\lambda_j}}
    \end{align*}
    where we apply $\tlambda_1 r_0\pts{\cov_{-\bgS}} = \sum_{j=\sk+1}^d \lambda_j \leq \lambda_{\sk}n \leq \lambda_{j}n$ for $j \in [\sk]$ by the definition of $\sk$.
    For non-support, $j\in [\sk]\cap\gS^\mathsf{c}$, we have
    \begin{align*}
        \left|\hevtheta_j\right| \leq t\sqrt{\frac{c}{\lambda_jn^{1-2\epsilon}}}.
    \end{align*}
    For $j > \sk, j\in \gS^\mathsf{c}$, we have
    \begin{align*}
        \left|\hevtheta_j\right| &\leq t\sqrt{\frac{c}{\lambda_j}\frac{n^{1+2\epsilon}\lambda_j^2}{\pts{\tlambda_{1}r_{0}(\cov_{-\bgS})}^2}} \leq t\sqrt{\frac{c}{\lambda_j}\frac{n^{1+2\epsilon}\lambda_j^2}{\pts{b\lambda_j n}^2}} =  t\sqrt{\frac{c_1}{\lambda_jn^{1-2\epsilon}}},
    \end{align*}
    where we apply $\tlambda_1 r_0\pts{\cov_{-\bgS}} = \sum_{j=\sk+1}^d \lambda_j \geq b\lambda_j n$ for $j > \sk$.
\end{proof}
\begin{corollary}\label{cor:poly_support_identification_v0}
    Under Assumptions~\ref{asm:rank} and~\ref{asm:k_sparse} with $t\ll n^{\frac{1}{2} -\epsilon}$ and polynomial decay covariance matrix (Definition~\ref{def:polynomial}) with $u\in(0, 1), v = 0$, we assume support are all in the top $\sk$ indices of the covariance spectrum. By substituting $d=n^p$ and $\lambda_j = \frac{1}{j^u}$, for $p\cdot(1-u) < 1$, we have 
    \begin{align*}
        \left|\hevtheta_j\right| \condd{\gg \frac{1}{\sqrt{\lambda_j}}}{\text{for } j \in \gS}{\ll \frac{t}{\sqrt{\lambda_j\cdot n^{1-2\epsilon}}}}{\text{for } j \in [\sk]\cap\gS^\mathsf{c}}{\ll \frac{t}{\sqrt{\lambda_j\cdot n^{1-2\epsilon}} }}{\text{for } \sk < j \leq d}.
    \end{align*}
    If we get access to all $\{\lambda_j\}_{j=1}^d$, we can therefore distinguish support and non-support by examining whether each $\hevtheta_j$ has order $\omega\pts{\frac{1}{\sqrt{\lambda_j}}}$ or decays in a rate of $o\pts{\frac{t}{\sqrt{\lambda_j\cdot n^{1-2\epsilon}}}}$.
\end{corollary}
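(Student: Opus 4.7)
The plan is to mirror the structure of Corollary~\ref{cor:poly_support_identification}, invoking Lemma~\ref{lem:recover_support} in each of the three regimes ($j\in\gS$, $j\in[\sk]\cap\gS^\mathsf{c}$, and $j>\sk$) and then simplifying the resulting bounds using the polynomial-decay eigenvalue formula $\lambda_j = j^{-u}$. Since the additional conditions in scenario (1) of Theorem~\ref{thm:support_identification} are not automatically satisfied for this ensemble, we are in the ``$\cov$ is known'' regime and just need to establish the stated asymptotic rates.

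First I would compute the two covariance-dependent quantities that feed into Lemma~\ref{lem:recover_support}: the order of $\sum_{j=k+1}^d\lambda_j$ and the order of $\sk$. Comparison with the integral $\int_k^d x^{-u}dx$ yields $\sum_{j=k+1}^d \lambda_j \asymp (d^{1-u}-k^{1-u})/(1-u)$, and substituting $d=n^p$ gives $\sum_{j=1}^d\lambda_j \asymp n^{p(1-u)}$. Plugging this into the definition $\sk=\min\{k\ge 0: r_k(\cov)\ge bn\}$ and solving $k^u\cdot d^{1-u}\gtrsim n$ gives $\sk \asymp n^{(1-p(1-u))/u}$, which is strictly positive precisely because of the hypothesis $p(1-u)<1$. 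This same hypothesis also ensures $\sk^{1-u}\ll d^{1-u}$, so that after subtracting at most $|\bgS|=\sk+t$ indices, the tail sum $\tlambda_1 r_0(\cov_{-\bgS}) = \sum_{j=\sk+1,\,j\in\gS^\mathsf{c}}^d \lambda_j$ is still $\asymp n^{p(1-u)}$.

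Next I would establish each of the three bounds. For $j\in\gS\subseteq[\sk]$, the key observation is that $\lambda_j \geq \lambda_{\sk} \asymp n^{p(1-u)-1}$, so the ratio $\lambda_j n/(\tlambda_1 r_0(\cov_{-\bgS}))$ is bounded below by a positive constant; substituting into the survival lower bound of Lemma~\ref{lem:recover_support} collapses the fraction to a constant and yields $|\hevtheta_j|\gtrsim 1/\sqrt{\lambda_j}$. For $j\in[\sk]\cap\gS^\mathsf{c}$, the corresponding bound from Lemma~\ref{lem:recover_support} reads $|\hevtheta_j|\le t\sqrt{c/(\lambda_j n^{1-2\epsilon})}$ directly. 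For $j>\sk$, the definition of $\sk$ forces $\tlambda_1 r_0(\cov_{-\bgS}) \ge b\lambda_{\sk+1}n \ge b\lambda_j n$, and substituting this into the third bound of Lemma~\ref{lem:recover_support} simplifies $\sqrt{n^{1+2\epsilon}\lambda_j/(\tlambda_1 r_0(\cov_{-\bgS}))^2}$ to $(1/b)\sqrt{1/(\lambda_j n^{1-2\epsilon})}$, giving the stated rate. Given knowledge of $\{\lambda_j\}_{j=1}^d$, Algorithm~\ref{alg:recover_support} then correctly separates support from non-support.

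The main obstacle is really just the bookkeeping around $\sk$: we need the condition $p(1-u)<1$ precisely to guarantee $\sk$ is finite (otherwise $r_0(\cov)\ge bn$ already and $\sk=0$, making the ``support in $[\sk]$'' hypothesis vacuous), and simultaneously to guarantee $\sk\ll d$ so that the tail sum $\sum_{j=\sk+1}^d\lambda_j$ retains its $d^{1-u}$ scaling. Beyond this, the calculation is routine substitution into Lemma~\ref{lem:recover_support}, analogous to Corollary~\ref{cor:poly_support_identification}, and no new technical machinery is required.
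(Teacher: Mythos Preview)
Your proposal is correct and follows essentially the same approach as the paper's proof: both invoke Lemma~\ref{lem:recover_support} in each of the three regimes, use the definition of $\sk$ to show $\lambda_j n \gtrsim \tlambda_1 r_0(\cov_{-\bgS})$ for $j\in[\sk]$ (collapsing the survival fraction to a constant), and bound the $j>\sk$ case via $\tlambda_1 r_0(\cov_{-\bgS})\ge b\lambda_j n$. Your explicit computation of the order of $\sk$ is extra bookkeeping the paper defers to its earlier Corollary~\ref{cor:poly_cov}, but it is not a different argument.
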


\begin{proof} Since the conditions in the first scenario in Theorem~\ref{thm:support_identification} are not satisfied, we need to assume $\cov$ is known. Next, we show the rate of supports and non-supports based on Lemma~\ref{lem:recover_support}. We have
    \begin{align*}
        \left|\hevtheta_j\right| &\geq \sqrt{\frac{2\beta_j}{\lambda_j\pi}} \frac{\lambda_j\pts{\frac{n}{c_1\tlambda_{1}r_{0}\pts{\cov_{-\bgS}}}}}{1+\lambda_j\pts{\frac{c_2n}{\tlambda_{1}r_{0}\pts{\cov_{-\bgS}}}}},\quad\quad \text{ for } j \in \gS,\\
        \left|\hevtheta_j\right| &\leq t\sqrt{\frac{c}{\lambda_jn^{1-2\epsilon}}},\quad\quad\quad\quad\quad\quad\quad\quad\quad \text{ for } j \in [\sk]\cap\gS^\mathsf{c},\\
        \left|\hevtheta_j\right| &\leq t\sqrt{\frac{c}{\lambda_j}\frac{n^{1+2\epsilon}\lambda_j^2}{\pts{\tlambda_{1}r_{0}\pts{\cov_{-\bgS}}}^2}},\quad\quad\quad\quad \text{ for } \sk < j \leq d, j \in \gS^\mathsf{c}.
    \end{align*}
    For polynomial decay covariance, we have $\lambda_j = \frac{1}{j^u}$ for $j \in [d]$. We have the lower bound for support indices for $j \in\gS$ as
    \begin{align*}
        \left|\hevtheta_j\right| &\geq \sqrt{\frac{2\beta_j}{\lambda_j\pi}} \frac{\lambda_j\pts{\frac{n}{c_1\tlambda_{1}r_{0}\pts{\cov_{-\bgS}}}}}{1+\lambda_j\pts{\frac{c_2n}{\tlambda_{1}r_{0}\pts{\cov_{-\bgS}}}}}\\
        &= \frac{1}{\sqrt{\lambda_j}}\sqrt{\frac{2\beta_j}{\pi}}\pts{\frac{\frac{1}{j^u}\pts{\frac{n}{c_1\sum_{k > \sk, k\in \gS^\mathsf{c}}\frac{1}{k^u}}}}{1+\frac{1}{j^u}\pts{\frac{c_2n}{\sum_{k > \sk, k\in \gS^\mathsf{c}}\frac{1}{k^u}}}}}\\
        &= \frac{1}{\sqrt{\lambda_j}}\sqrt{\frac{2\beta_j}{\pi}}\pts{\frac{1}{\frac{c_1\sum_{k > \sk, k\in \gS^\mathsf{c}}\frac{1}{k^u}}{\frac{1}{j^u}n}+c_1c_2}}\\
        &\geq \frac{1}{\sqrt{\lambda_j}}\sqrt{\frac{2\beta_j}{\pi}}\pts{\frac{1}{\frac{c_1\sum_{k > \sk, k\in \gS^\mathsf{c}}\frac{1}{k^u}}{\sum_{k > \sk, k\in \gS^\mathsf{c}}\frac{1}{k^u}}+c_1c_2}}\\
        &= \frac{c_{3}}{\sqrt{\lambda_j}},
    \end{align*}
    where we substitute $\lambda_j = \frac{1}{j^u}$ and $\tlambda_1 r_0\pts{\cov_{-\bgS}}= \sum_{j > \sk, j\in \gS^\mathsf{c}}\frac{1}{j^u}$, and by the definition of $\sk$, we apply $\frac{1}{j^u}n \geq \sum_{k > \sk, k\in \gS^\mathsf{c}}\frac{1}{k^u}$ for $j \in [\sk]$ in the last inequality.
    For non-support, $j\in [\sk]\cap\gS^\mathsf{c}$, we have
    \begin{align*}
        \left|\hevtheta_j\right| \leq t\sqrt{\frac{c}{\lambda_jn^{1-2\epsilon}}}.
    \end{align*}
    For $j > \sk, j\in \gS^\mathsf{c}$, we have
    \begin{align*}
        \left|\hevtheta_j\right| &\leq t\sqrt{\frac{c}{\lambda_j}\frac{n^{1+2\epsilon}\lambda_j^2}{\pts{\tlambda_{1}r_{0}(\cov_{-\bgS})}^2}} \leq t\sqrt{\frac{c}{\lambda_j}\frac{n^{1+2\epsilon}\lambda_j^2}{\pts{b\lambda_j n}^2}} =  t\sqrt{\frac{c_1}{\lambda_jn^{1-2\epsilon}}},
    \end{align*}
    where we apply $\tlambda_1 r_0\pts{\cov_{-\bgS}} = \sum_{k=\sk+1, k\in\gS^\mathsf{c}}^d \lambda_k \geq b\lambda_j n$ for $j > \sk$.
\end{proof}
\subsection{Spiked and polynomial decay covariance: support outside the top $\sk$ indices}\label{app:support_id3}
In this section, we show corresponding results to Corollary~\ref{cor:spike_support_identification} and Corollary~\ref{cor:poly_support_identification_v0} in the case where $\svtheta$ is supported outside the top $\sk$ indices of the covariance spectrum. Note that this is necessary to handle scenarios where $\sk=0$, including spiked covariance with $q>1-r$ (Corollary~\ref{cor:spike_support_identification2}) and polynomial decay covariance with $u\in[0, 1)$, $v=0$ and $p \cdot \pts{1-u} > 1$ (Corollary~\ref{cor:poly_support_identification2}). We will see that the results in this section require $t\ll n^{\frac{1}{4}-\epsilon}$ and stronger conditions in $p$ than those in Section \ref{app:support_id2}.

\begin{corollary}\label{cor:spike_support_identification2}
    Under Assumptions~\ref{asm:rank} and~\ref{asm:k_sparse} with $t \ll n^{\frac{1}{4}-\epsilon}$ and spiked covariance matrix (Definition~\ref{def:spiked}), for $q < \pts{1-r}$, if we have $\max\{1.5 -q - r, 1\} < p < 2.5 -q-r$, we can recover support outside of the top $\sk$ indices of the covariance spectrum such that
    \begin{align*}
        \left|\hevtheta_j\right| \conddd{\gg n^{\frac{-p+q+r}{2}}}{\text{for } j \in [\sk]\cap\gS}{\gg n^{\frac{-2p+2}{2}}}{\text{for } j >\sk, j\in\gS}{\ll n^{\frac{-p+q+r-0.5}{2}}}{\text{for } j \in [\sk]\cap \gS^\mathsf{c}}{\ll n^{\frac{-2p+1.5}{2}}}{\text{for } \sk < j \leq d, j\in \gS^\mathsf{c}}.
    \end{align*}
    Therefore, we can pick a threshold between bounds to distinguish support and non-support such as $T=\min\{n^{\frac{-p+q+r}{2}},  n^{\frac{-2p+2}{2}}\}$. On the other hand, for $q > \pts{1-r}$ and $\max\{-0.5 + q + r, 1\}< p < 0.5+q+r$, we have
    \begin{align*}
        \left|\hevtheta_j\right| \conddd{\gg n^{\frac{-p+\pts{1-r}-q+1}{2}}}{\text{for } j \in [s]\cap\gS}{\gg n^{\frac{-2p+2}{2}}}{\text{for } j > s, j \in \gS}{\ll n^{\frac{-p+\pts{1-r}-q + 0.5}{2}}}{\text{for } j \in [s]\cap\gS^\mathsf{c}}{\ll n^{\frac{-2p+1.5}{2}}}{\text{for } s < j \leq d, j\in \gS^\mathsf{c}}.
    \end{align*}
    Therefore, we can pick a threshold between bounds to distinguish support and non-support such as $T=\min\{n^{\frac{-p+\pts{1-r}-q+1}{2}},  n^{\frac{-2p+2}{2}}\}$. 
\end{corollary}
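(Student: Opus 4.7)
The plan is to apply Lemma~\ref{lem:recover_support} termwise under the spiked ensemble of Definition~\ref{def:spiked}, compute the four asymptotic rates, and then identify the parameter window for $p$ in which the minimum support lower bound exceeds the maximum non-support upper bound. Unlike Corollary~\ref{cor:spike_support_identification}, the signal is allowed outside $[\sk]$, so the ``support'' side of the analysis splits into two regimes with very different magnitudes (one at the spike level $\lambda_1 = n^{p-q-r}$, the other at the tail level $\lambda_d\asymp 1$), and we must ensure both survive above the non-support noise floor.

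First I would substitute the spiked eigenvalues into the survival lower bound for $j\in\gS$. Writing $\tlambda_1 r_0(\cov_{-\bgS}) = (d-|\bgS|)\cdot (1-a)d/(d-s)\asymp n^p$ when $q>1-r$, and checking the relevant cancellations when $q<1-r$ (so that $\sk=s$ and $\tlambda_1 r_0(\cov_{-\bgS})\asymp n^{p}\cdot(1-a)$), the lower bound $\sqrt{2\beta_j/(\lambda_j\pi)}\cdot \lambda_j\cdot(n/\tlambda_1 r_0)/(1+\lambda_j\cdot n/\tlambda_1 r_0)$ reduces to $n^{(-p+q+r)/2}$ for spike coordinates (large $b_j$ branch when $q<1-r$, small $b_j$ branch when $q>1-r$ producing $n^{(2-p-q-r)/2}$) and to $n^{1-p}$ for tail coordinates (since $\lambda_j\asymp 1$ yields $b_j\asymp n^{1-p}\ll 1$). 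Second I would substitute into the two non-support bounds of Lemma~\ref{lem:recover_support}: the $[\sk]\cap\gS^\mathsf{c}$ bound $t\sqrt{c/(\lambda_j n^{1-2\epsilon})}$ gives $o(n^{(-p+q+r-0.5)/2})$ in case $q<1-r$ (using $t\ll n^{1/4-\epsilon}$) and $o(n^{(1.5-p-q-r)/2})$ when $q>1-r$ after replacing $[\sk]$ by $[s]$ and redoing the trace bound with $\tlambda_1 r_0\asymp n^p$; the second bound $t\sqrt{cn^{1+2\epsilon}\lambda_j/(\tlambda_1 r_0)^2}$ gives $o(n^{(1.5-2p)/2})$ in both cases.

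Third I would read off the admissibility conditions by requiring each support rate to dominate each non-support rate. Within the same eigenvalue level the gap is $n^{1/4}$ (free from $t\ll n^{1/4-\epsilon}$), so the only binding constraints are the cross-level comparisons: in case $q<1-r$, matching the spike-support to tail-non-support gives $p > 1.5-q-r$, and matching the tail-support to spike-non-support gives $p<2.5-q-r$; in case $q>1-r$ the analogous calculations give $p>-0.5+q+r$ and $p<0.5+q+r$. Combined with the overparameterization requirement $p>1$, these yield the stated windows, and the explicit threshold $T$ is the geometric mean of the two support lower bounds (or any value strictly separating them from the non-support upper bounds).

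The main obstacle I anticipate is bookkeeping $\tlambda_1 r_0(\cov_{-\bgS})$ and the attendant trace quantity in the regime $q>1-r$ with $\sk=0$: the leave-$\bgS$-out operator now has both a residual spike contribution of size $(s-|\gS\cap[s]|)\cdot n^{p-q-r}=n^{p-q}$ and a tail contribution of size $n^p$, so one must verify that the tail dominates (it does, since $q>0$) in order to reuse the $\asymp n^p$ estimate uniformly, and that Lemma~\ref{lem:balance_extension}-style effective-rank conditions still hold after removing a sublinear support. Once that is done, the four rate calculations and the interval arithmetic are mechanical.
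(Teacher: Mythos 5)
Your proposal takes essentially the same route as the paper: apply Lemma~\ref{lem:recover_support} to the spiked ensemble, compute the four rates (spike/tail $\times$ support/non-support), observe that the same-level comparisons are automatic from $t\ll n^{1/4-\epsilon}$, and read off the $p$-window from the two cross-level comparisons. Your binding constraints ($p>1.5-q-r$ and $p<2.5-q-r$ for $q<1-r$, and $p>-0.5+q+r$, $p<0.5+q+r$ for $q>1-r$) and the four exponents all match the paper's, and you correctly anticipate the key subtlety the paper handles explicitly: when $q>1-r$, $\sk=0$ and $\bgS=\gS$, so $\cov_{-\bgS}$ retains residual spike coordinates of total mass $\asymp n^{p-q}$ which are dominated by the tail's $\asymp n^{p}$ because $q>0$. (Your opening line writes $\tlambda_1 r_0(\cov_{-\bgS}) = (d-|\bgS|)(1-a)d/(d-s)$ as if $\cov_{-\bgS}$ were tail-only; that is inaccurate in the $q>1-r$ case, but you self-correct this in your ``main obstacle'' paragraph, so the conclusion $\asymp n^p$ is right.) One small slip: the threshold is \emph{not} the geometric mean of the two support lower bounds --- that would lie above the smaller of the two support rates and would misclassify the weaker support coordinate. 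The paper takes $T=\min$ of the two support rates; any value strictly between $\max$(non-support) and $\min$(support) works, as your parenthetical correctly states, but the ``geometric mean of the two support lower bounds'' phrasing is off.
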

\begin{proof} In the first part, for $q < (1-r)$, we already showed the support lower bound for $j \in [\sk]$ and non-support upper bound for $j \in [\sk]\cap\gS^\mathsf{c}$ and $\sk < j \leq d, j\in \gS^\mathsf{c}$ in Corollary~\ref{cor:spike_support_identification}. Note that we apply a different upper bound of $t$ for non-support upper bound and get different rates. We still need to show the support lower bound for $j > \sk, j \in \gS$. Hence, for $j > \sk, j\in\gS$, according to Lemma~\ref{lem:recover_support}, by substituting $\lambda_j = \frac{\pts{1-a}d}{d-s} = \frac{\pts{1-n^{-q}}n^p}{n^p-n^r}$ and $\tlambda_1 r_0\pts{\cov_{-\bgS}}= \pts{d-|\bgS|}\frac{\pts{1-a}d}{d-s}$, we have 
    \begin{align*}
        \left|\hevtheta_j\right| &\geq \sqrt{\frac{2\beta_j}{\lambda_j\pi}} \frac{\lambda_j\pts{\frac{n}{c_1\tlambda_{1}r_{0}\pts{\cov_{-\bgS}}}}}{1+\lambda_j\pts{\frac{c_2n}{\tlambda_{1}r_{0}\pts{\cov_{-\bgS}}}}}\\
        &= \sqrt{\frac{2\beta_j}{\pi}}\sqrt{\frac{d-s}{\pts{1-a}d}}\pts{\frac{\frac{\pts{1-a}d}{d-s}\pts{\frac{n}{c_1\pts{d-|\bgS|}\frac{\pts{1-a}d}{\pts{d-s}}}}}{1+\frac{\pts{1-a}d}{d-s}\pts{\frac{c_2n}{\pts{d-|\bgS|}\frac{\pts{1-a}d}{\pts{d-s}}}}}}\\
        &= \sqrt{\frac{2\beta_j}{\pi}}\sqrt{\frac{n^p-n^r}{\pts{1-n^{-q}}n^p}}\pts{\frac{\frac{n}{c_1\pts{n^p-|\bgS|}}}{1+\pts{\frac{c_2n}{\pts{n^p-|\bgS|}}}}}\\
        &\geq \sqrt{\frac{2\beta_j}{\pi}}c_3n^{\frac{2-2p}{2}}.
    \end{align*}
    As a result, to recover support outside of top $\sk$ indices of the covariance spectrum, we need that the lower bound of the support is larger than the upper bound of non-support. Hence, we need the following conditions
    \begin{align*}
        n^{\frac{-2p+2}{2}} \gg n^{\frac{-p+q+r-0.5}{2}} &\Leftrightarrow 2.5 - q - r > p,\\
        n^{\frac{-p+q+r}{2}} \gg n^{\frac{-2p+1.5}{2}} &\Leftrightarrow p > 1.5 -q -r.
    \end{align*}
    The proof of the first part is done.
    
    For the second part, we have the support lower bound from Corollary~\ref{cor:spike_support_identification} as
    \begin{equation*}
        \left|\hevtheta_j\right| \geq \sqrt{\frac{2\beta_j}{\pi}}n^{\frac{-p+q+r}{2}}\cdot\frac{\frac{1}{c_8}\cdot n ^{\pts{1-r}-q}}{1+c_9n^{\pts{1-r}-q}}.
    \end{equation*}
    Since $q>(1-r)$, the numerator part decays to zero as
    \begin{align*}
        \left|\hevtheta_j\right| \geq \sqrt{\frac{2\beta_j}{\pi}}cn^{\frac{-p + \pts{1-r}-q + 1}{2}}.
    \end{align*}
    Moreover, we have $r_0\pts{\cov} > bn$ and $\sk = 0$, therefore; for all $j\in\gS^\mathsf{c}$, according to Lemma~\ref{lem:recover_support}, we have the same non-support upper bound as
    \begin{align*}
        \left|\hevtheta_j\right| &\leq t\sqrt{\frac{c}{\lambda_j}\frac{n^{1+2\epsilon}\lambda_j^2}{\pts{\tlambda_1 r_0\pts{\cov_{-\bgS}}}^2}}.
    \end{align*}
    Next, we substitute different values of $\lambda_j$ for $j\in [s]$ and $j > s$. We also assume there are $t_1$ support in the top $s$ indices of the covariance spectrum, where $t_1\leq \min\{s, t\}$. For $j \in [s]\cap\gS^\mathsf{c}$, we have
    \begin{align*}
        \left|\hevtheta_j\right| &\leq t\sqrt{\frac{c}{\lambda_j}\frac{n^{1+2\epsilon}\lambda_j^2}{\pts{\tlambda_1 r_0\pts{\cov_{-\bgS}}}^2}}\\
        &= t\sqrt{c\frac{s}{ad}\pts{\frac{n^{1+2\epsilon}\pts{\frac{ad}{s}}^2}{\pts{\pts{s-t_1}\frac{ad}{s} + \pts{d - s - \pts{t-t_1}}\frac{\pts{1-a}d}{d-s}}^2}}}\\
        &= t\sqrt{cn^{-p+q+r}\pts{\frac{n^{1+2\epsilon}\pts{n^{p-q-r}}^2}{\pts{\pts{n^r-t_1}n^{p-q-r} + \pts{n^p-n^r-\pts{t-t_1}}\frac{\pts{1-n^{-q}}n^p}{n^p-n^r}}^2}}}\\
        &\leq t\sqrt{c_1n^{-p+q+r}\pts{\frac{n^{1+2\epsilon}\pts{n^{p-q-r}}^2}{n^{2p}}}}\\
        &= c_2 tn^{\frac{-p+\pts{1-r}-q+2\epsilon}{2}}\\
        &< c_2 tn^{\frac{-p+\pts{1-r}-q+0.5}{2}},
    \end{align*}
    where we substitute $t\ll n^{\frac{1}{4}-\epsilon}$.
    For $j > s, j\in\gS^\mathsf{c}$, we have the same upper bound as $q < (1-r)$ case in Corollary~\ref{cor:spike_support_identification} such that 
    \begin{align*}
        \left|\hevtheta_j\right| &\leq c_1 tn^{\frac{-2p+1+2\epsilon}{2}} < c_1 n^{\frac{-2p+1.5}{2}},
    \end{align*}
    where we apply $t\ll n^{\frac{1}{4}-\epsilon}$.
    To recover support outside of top $\sk$ indices of the covariance spectrum, we need that the lower bound of the support be larger than the upper bound of non-support. Hence, we need the following conditions
    \begin{align*}
        n^{\frac{-2p+2}{2}} \gg n^{\frac{-p+(1-r)-q+0.5}{2}} &\Leftrightarrow 0.5 + q + r > p,\\
        n^{\frac{-p+(1-r)-q+1}{2}} \gg n^{\frac{-2p+1.5}{2}} &\Leftrightarrow p > -0.5 +q +r.
    \end{align*}
    The proof is done.
\end{proof}

\begin{corollary}\label{cor:poly_support_identification2}
    Under Assumptions~\ref{asm:rank} and~\ref{asm:k_sparse} with $t \ll n^{\frac{1}{4}-\epsilon}$ and polynomial decay covariance in Definition~\ref{def:polynomial} with $u \in [0, 1)$ and $v=0$, for $p\cdot \pts{1-u} > 1$, by substituting $\lambda_j = j^{-u}$, we have
    \begin{align*}
        \left|\hevtheta_j\right| \cond{\gg n\cdot d^{\frac{u-2}{2}}}{\text{for } j \in \gS}{\ll n^{\frac{1.5}{2}}\cdot d^{\frac{2u-2}{2}}}{\text{for } j \in \gS^\mathsf{c}}.
    \end{align*}
    Therefore, if $p\cdot \pts{1-u} > \max\{1, p-\frac{1}{2}\}$, we can pick $T=n\cdot d^{\frac{u-2}{2}}$ as threshold to distinguish support and non-support. Note that when $u=0$, $\cov$ degenerates to isotropic covariance.
\end{corollary}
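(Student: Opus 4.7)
The plan is to apply Lemma~\ref{lem:recover_support} (the second scenario, which requires only knowledge of $\cov$) with $\sk = 0$. That $\sk = 0$ in this regime follows from $r_0(\cov) \asymp d^{1-u} = n^{p(1-u)} \gg n$ (as already observed in the proof of Corollary~\ref{cor:poly_cov}). With $\sk = 0$ we have $\bgS = \gS$, so every $j \in [d]$ falls under either the first clause ($j \in \gS$, lower bound) or the third clause ($j > \sk = 0$ with $j \in \gS^\mathsf{c}$, upper bound) of Lemma~\ref{lem:recover_support}.

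First, I would estimate the tail quantity $\tlambda_1 r_0(\cov_{-\gS}) = \sum_{k \in \gS^\mathsf{c}} k^{-u} \asymp d^{1-u}$, using that $|\gS| = t$ is negligible compared to $d$. Because $p(1-u) > 1$, the ratio $\lambda_j n / (\tlambda_1 r_0(\cov_{-\gS})) \lesssim n / d^{1-u} = n^{1 - p(1-u)} \to 0$, so the survival-lower-bound denominator $1 + c_2 \lambda_j n / (\tlambda_1 r_0(\cov_{-\gS}))$ is $\Theta(1)$.

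For $j \in \gS$, substitution into Lemma~\ref{lem:recover_support} gives
$$|\hevtheta_j| \;\gtrsim\; \sqrt{\lambda_j} \cdot \frac{n}{d^{1-u}} \;=\; j^{-u/2} \cdot n \cdot d^{u-1} \;\geq\; d^{-u/2} \cdot n \cdot d^{u-1} \;=\; n \cdot d^{(u-2)/2},$$
where the binding worst case is $j$ near $d$. For $j \in \gS^\mathsf{c}$, the third clause of the lemma gives
$$|\hevtheta_j| \;\leq\; t \sqrt{\frac{n^{1+2\epsilon} \lambda_j}{(\tlambda_1 r_0(\cov_{-\gS}))^2}} \;\lesssim\; t \cdot n^{1/2 + \epsilon} \cdot d^{u-1},$$
where the worst case is $j = 1$ (largest $\lambda_j$). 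Invoking the hypothesis $t \ll n^{1/4 - \epsilon}$ yields $|\hevtheta_j| \ll n^{3/4} \cdot d^{u-1} = n^{3/4} \cdot d^{(2u-2)/2}$, matching the stated upper bound.

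Finally, separating support from non-support requires $n \cdot d^{(u-2)/2} \gg n^{3/4} \cdot d^{(2u-2)/2}$, i.e.\ $n^{1/4} \gg d^{u/2} = n^{pu/2}$, i.e.\ $pu < 1/2$. Rearranging, this is $p(1-u) > p - 1/2$, which together with the benign-regime condition $p(1-u) > 1$ reproduces the hypothesis $p(1-u) > \max\{1, p - 1/2\}$. Thus the threshold $T = n \cdot d^{(u-2)/2}$ suffices. There is no serious obstacle; the main point to track is that the worst-case support index is the \emph{largest} $j \in \gS$ (which forces the $\sqrt{\lambda_j}$ factor to its minimum $d^{-u/2}$) while the worst-case non-support index is $j = 1$, and it is precisely these opposing directions that produce the condition $pu < 1/2$.
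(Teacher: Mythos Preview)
Your proposal is correct and follows essentially the same approach as the paper: establish $\sk=0$ via $r_0(\cov)\asymp d^{1-u}\gg n$, apply the support and non-support clauses of Lemma~\ref{lem:recover_support}, take the worst-case $\sqrt{\lambda_j}\geq d^{-u/2}$ for $j\in\gS$ and $\lambda_j\leq 1$ for $j\in\gS^\mathsf{c}$, and combine $t\ll n^{1/4-\epsilon}$ with the separation inequality to obtain $pu<1/2$. The only cosmetic difference is that the paper bounds the survival denominator by $c\sum_{k=1}^d\lambda_k$ rather than arguing it is $\Theta(1)$ directly, but the effect is the same.
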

\begin{proof} First, we already showed that polynomial decay covariance (Definition~\ref{def:polynomial}) with $u \in [0, 1)$, $v = 0$ and $p\cdot\pts{1-u} > 1$ implies $\sk=0$ in Corollary~\ref{cor:poly_cov}. Next, we show the lower bound for support indices $j\in\gS$ and upper bound for non-support indices $j\in\gS^\mathsf{c}$. Based on Lemma~\ref{lem:recover_support}, we have
\begin{align*}
    \left|\hevtheta_j\right| &\geq \sqrt{\frac{2\beta_j}{\lambda_j\pi}} \frac{\lambda_j\pts{\frac{n}{c_1\tlambda_{1}r_{0}\pts{\cov_{-\gS}}}}}{1+\lambda_j\pts{\frac{c_2n}{\tlambda_{1}r_{0}\pts{\cov_{-\gS}}}}},\quad\quad \text{ for } j \in \gS\\
    \left|\hevtheta_j\right| &\leq t\sqrt{\frac{c}{\lambda_j}\frac{n^{1+2\epsilon}\lambda_j^2}{\pts{\tlambda_{1}r_{0}\pts{\cov_{-\gS}}}^2}},\quad\quad\quad\quad \text{ for } j \in \gS^\mathsf{c},
\end{align*}
where $\cov_{-\gS}$ has $\sk=0$ still by Lemma~\ref{lem:balance_extension}. Therefore, for $j\in\gS$, we have
\begin{align*}
    \left|\hevtheta_j\right| \geq \sqrt{\frac{2\beta_j}{\lambda_j\pi}} \frac{\lambda_j\pts{\frac{n}{c_1\tlambda_{1}r_{0}\pts{\cov_{-\gS}}}}}{1+\lambda_j\pts{\frac{c_2n}{\tlambda_{1}r_{0}\pts{\cov_{-\gS}}}}} = \sqrt{\frac{2\beta_j}{\lambda_j\pi}} \frac{\lambda_jn}{c_1\tlambda_1r_0\pts{\cov_{-\gS}}+c_1c_2\lambda_jn} = \sqrt{\frac{2\beta_j}{\pi}} \frac{\sqrt{\lambda_j}n}{c_1\sum_{k=1, k\in\gS^\mathsf{c}}^d\lambda_k+c_1c_2\lambda_jn}.
\end{align*}
Since $\sum_{k=1, k\in\gS^\mathsf{c}}^d\lambda_k \leq \sum_{k=1}^d\lambda_k$ and $\lambda_j n \leq \sum_{k=1}^d\lambda_k$, we can upper bound the denominator by $\sum_{k=1}^d\lambda_k$ and get
\begin{align*}
    \sqrt{\frac{2\beta_j}{\pi}} \frac{\sqrt{\lambda_j}n}{c_1\sum_{k=1, k\in\gS^\mathsf{c}}^d\lambda_k+c_1c_2\lambda_jn} \geq \sqrt{\frac{2\beta_j}{\pi}} \frac{\sqrt{\lambda_j}n}{c_3 \sum_{k=1}^d\lambda_k} \geq \sqrt{\frac{2\beta_j}{\pi}} \frac{\sqrt{\lambda_d}n}{c_3 d^{1-u}} = c_4n\cdot d^{\frac{u-2}{2}}, 
\end{align*}
where we apply~\eqref{eq:poly_upperbound} in the last inequality. For non-support, $j\in \gS^\mathsf{c}$, we have
\begin{align*}
    \left|\hevtheta_j\right| \leq t\sqrt{\frac{c}{\lambda_j}\frac{n^{1+2\epsilon}\lambda_j^2}{\pts{\tlambda_{1}r_{0}(\cov_{-\gS})}^2}} = t\sqrt{c\frac{n^{1+2\epsilon}\lambda_j}{\pts{\sum_{k=1,k\in\gS^\mathsf{c}}^d\lambda_k}^2}} \leq t\sqrt{c\frac{n^{1+2\epsilon}\lambda_1}{\pts{\pts{d-t}\lambda_d}^2}} = t\sqrt{\frac{c\cdot n^{1+2\epsilon} d^{2u}}{\pts{d-t}^2}}
    <c_1n^{\frac{1.5}{2}}d^{u-1},
\end{align*}
where we apply $\lambda_1 \geq \lambda_j$ for all $j\in [d]$ and $\sum_{k=1,k\in\gS^\mathsf{c}}^d\lambda_k \geq \pts{d-t}\lambda_d$. In the last inequality, we apply $t\ll n^{\frac{1}{4}-\epsilon}$. As a result, we need 
\begin{align*}
    n\cdot d^{\frac{u-2}{2}} \gg n^{\frac{1.5}{2}}d^{u-1} \Leftrightarrow n^{\frac{1}{2}} \gg d^u \Leftrightarrow  n^{\frac{1}{2}} \gg n^{p\cdot u} \Leftrightarrow \frac{1}{2} > p\cdot u
\end{align*}
to ensure the lower bound for support indices $j\in\gS$ is larger than the upper bound for non-support indices $j\in\gS^\mathsf{c}$. Ultimately, we can combine the condition $p\cdot \pts{1-u} > 1$ and $\frac{1}{2} > p\cdot u$ into $p\cdot \pts{1-u} > \max\{1, p-\frac{1}{2}\}$. The proof is done.
\end{proof}

\section{Survival and contamination auxiliary lemmas}\label{app:su_cn_auxiliary}
In this section, we provide proofs of Lemmas~\ref{lem:Q_k_bounds}, \ref{lem:Q_1_bounds}, and~\ref{lem:cn_const} from Appendix~\ref{app:su_cn_bound_proof}.
While these are essentially extensions of results of~\cite{wang2023benign}, they are not entirely straightforward due to our analysis of $t$-sparsity.
For example, in Lemma~\ref{lem:Q_k_bounds},~\cite{wang2023benign} analyze $\E\mts{\sgn{\rvz}\rvz^\top}=\sqrt{\frac{2}{\pi}}\mI$ where $\rvz$ is an independent standard Gaussian vector representing the regression labels.
On the other hand, given a support set $\gS$, we must analyze the more complicated term $\E\mts{\sgn{\sum_{j\in \gS} a_j\rvz_j}\rvz_{s_1}^\top}$; this leads to an interesting quantification of the relative contribution of each index of $\svtheta$, which we denote by $\beta$.

\begin{proof}(Lemma~\ref{lem:Q_k_bounds})\\
    According to the definition of $Q_{s_t}$ in~\Eqref{eq:Q_def}, we have
    \begin{align*}
        Q_{s_t} = \rvz_{s_1}^\top\rmA_{-s_1:s_t}^{-1}\hrvy = \rvz_{s_1}^\top\rmA_{-s_1:s_t}^{-1}\sgn{\sum_{j\in\gS} a_j\rvz_j}
    \end{align*}
    Since $\rvz_{s_1}$ and $\sgn{\sum_{j\in\gS} a_j\rvz_j}$ are independent to $\rmA_{-s_1:s_t}^{-1}$, by applying the parallelogram rule and the Hanson-Wright inequality in Lemma~\ref{lem:hanson_wright_ineq}, we get
    \begin{align*}
        Q_{s_t} &\leq \E\mts{\rvz_{s_1}^\top\rmA_{-s_1:s_t}^{-1}\hrvy} + 2c_1\norm{\rmA_{-s_1:s_t}^{-1}}_2 \cdot n^{\frac{1}{2}+\epsilon},\\
        Q_{s_t} &\geq \E\mts{\rvz_{s_1}^\top\rmA_{-s_1:s_t}^{-1}\hrvy} - 2c_1\norm{\rmA_{-s_1:s_t}^{-1}}_2 \cdot n^{\frac{1}{2}+\epsilon},
    \end{align*}
    with probability at least $1-2e^{-n^{2\epsilon}}$.
    Next, we calculate the value of $\E\mts{\rvz_{s_1}^\top\rmA_{-s_1:s_t}^{-1}\hrvy}$.
    \begin{align*}
    \E\mts{\rvz_{s_1}^\top\rmA_{-s_1:s_t}^{-1}\hrvy} &= \Tr\pts{\rmA_{-s_1:s_t}^{-1}\E\mts{\hrvy\rvz_{s_1}^\top}}\\
    &= \Tr\pts{\rmA_{-s_1:s_t}^{-1}\E\mts{\sgn{\sum_{j\in\gS} a_j z_{j,1}}z_{s_1,1}}\cdot \mI}\\
    &=\E\mts{\sgn{\sum_{j\in\gS} a_j z_{j,1}}z_{s_1,1}} \Tr\pts{\rmA_{-s_1:s_t}^{-1}}.
    \end{align*}
    We now derive the value of $\E\mts{\sgn{\sum_{j\in\gS} a_j z_{j,1}}z_{s_1,1}}$. We denote $\rp \coloneqq z_{s_1,1} \sim \gN(0, 1)$ and $\rq \coloneqq \sum_{j\in\gS, j \neq s_1} a_j z_{j,1} \sim \gN(0, \sigma_{\rq}^2)$ where $\sigma_{\rq}^2 \coloneqq \sum_{j\in\gS, j \neq s_1} a_j^2$. We then have
    \begin{align*}
        \E\mts{\sgn{\sum_{j\in\gS} a_j z_{j,1}}z_{s_1,1}} &= \E\mts{\sgn{a_{s_1} z_{s_1, 1} + \sum_{j\in\gS, j\neq s_1} a_j z_{j,1}}z_{s_1,1}}\\
        &= \E_\rp\mts{\E_\rq\mts{\sgn{a_{s_1}\rp + \rq}} \cdot \rp}\\
        &= \E_\rp\mts{\pts{1-\Phi_\rq\pts{-a_{s_1}\rp} - \Phi_\rq\pts{-a_{s_1}\rp}}\cdot \rp} \\
        &= 2 \E_\rp\mts{\Phi_\rq\pts{a_{s_1}\rp} \cdot \rp},
    \end{align*}
    where $\Phi_{\rq}$ is the cdf of $\rq$ and we use the property that $1 - \Phi_\rq\pts{-a_{s_1}\rp} = \Phi_\rq\pts{a_{s_1}\rp}$. Then, we have
    \begin{align*}
        \E_\rp\mts{\Phi_\rq\pts{a_{s_1}\rp}\cdot \rp} &= \E_\rp\mts{  \E_\rq\mts{\1_{q \leq a_{s_1}\rp }| \rp}\cdot \rp}\\
        &= \E_{\rp, \rq}\mts{\1_{q \leq a_{s_1}\rp } \cdot \rp}\\
        &= \sgn{a_{s_1}}\int_{-\infty}^{+\infty}\frac{1}{\sqrt{2\pi\sigma_{\rq}^2}}e^{\frac{-\rq^2}{2\sigma_{\rq}^2}}\pts{\int_{\frac{\rq}{|a_{s_1}|}}^{+\infty}\frac{1}{\sqrt{2\pi}}\rp e^{\frac{-\rp^2}{2}}d\rp}d\rq\\
        &= \sgn{a_{s_1}}\int_{-\infty}^{+\infty}\frac{1}{\sqrt{2\pi\sigma_{\rq}^2}}e^{\frac{-\rq^2}{2\sigma_{\rq}^2}}\pts{\frac{1}{\sqrt{2\pi}} e^{\frac{-\rq^2}{2a_{s_1}^2}}}d\rq\\
        &= \frac{\sgn{a_{s_1}}}{\sqrt{2\pi}}\int_{-\infty}^{+\infty}\frac{1}{\sqrt{2\pi\sigma_{\rq}^2}}e^{\frac{-\pts{\sum_{j\in\gS}a_j^2}\rq^2}{2\sigma_{\rq}^2a_{s_1}^2}}d\rq\\
    &= \frac{a_{s_1}}{\sqrt{2\pi\pts{\sum_{j\in\gS}a_j^2}}}\\
    &=\sgn{a_{s_1}}\sqrt{\frac{\beta_{s_1}}{2\pi}},
    \end{align*}
    where the second equality uses the law of total expectation, and we substitute $\beta_{s_\ell} \coloneqq \frac{\lambda_{s_\ell}\sevthetas_{s_\ell}}{\sum_{j\in\gS} \lambda_j\sevthetas_j} = \frac{a_{s_\ell}^2}{\sum_{j\in\gS} a_j^2}$ in the last equality. The proof is done by substituting the expectation values in the Hanson-Wright inequalities.
\end{proof}

\begin{proof}(Lemma~\ref{lem:Q_1_bounds})\\
    Recall the definition of $Q_{s_\ell}$ in~\Eqref{eq:Q_def} such that $Q_{s_\ell} \coloneqq \rvz_{s_1}^\top \rmA_{-s_1:s_\ell}^{-1}\hrvy$ for $1 \leq \ell \leq t$. Therefore, we can write $Q_{s_1}$ as
    \begin{align*}
        Q_{s_1} &= \rvz_{s_1}^\top\rmA_{-s_1:s_1}^{-1}\hrvy\\
            &= \rvz_{s_1}^\top\pts{\rmA_{-s_1:s_2} + \lambda_{s_2}\rvz_{s_2}\rvz_{s_2}^\top}^{-1}\hrvy\\
            &= \rvz_{s_1}^\top\pts{\rmA_{-s_1:s_2}^{-1} - \frac{\lambda_{s_2}\rmA_{-s_1:s_2}^{-1}\rvz_{s_2}\rvz_{s_2}^\top\rmA_{-s_1:s_2}^{-1}}{1+ \lambda_{s_2}\rvz_{s_2}^\top\rmA_{-s_1:s_2}^{-1}\rvz_{s_2}}}\hrvy\\
            &= \rvz_{s_1}^\top\pts{\rmA_{-s_1:s_t}^{-1} -\sum_{\ell=2}^t \frac{\lambda_{s_\ell}\rmA_{-s_1:s_\ell}^{-1}\rvz_{s_\ell}\rvz_{s_\ell}^\top\rmA_{-s_1:s_\ell}^{-1}}{1+ \lambda_{s_\ell}\rvz_{s_\ell}^\top\rmA_{-s_1:s_\ell}^{-1}\rvz_{s_\ell}}}\hrvy\\
            &= Q_{s_t} - \sum_{\ell=2}^t \frac{\lambda_{s_\ell}\rvz_{s_1}^\top\rmA_{-s_1:s_\ell}^{-1}\rvz_{s_\ell}\rvz_{s_\ell}^\top\rmA_{-s_1:s_\ell}^{-1}\hrvy}{1+ \lambda_{s_\ell}\rvz_{s_\ell}^\top\rmA_{-s_1:s_\ell}^{-1}\rvz_{s_\ell}}\\
            &= Q_{s_t} \pts{1 - \sum_{\ell=2}^t \frac{\lambda_{s_\ell}\rvz_{s_1}^\top\rmA_{-s_1:s_\ell}^{-1}\rvz_{s_\ell}\rvz_{s_\ell}^\top\rmA_{-s_1:s_\ell}^{-1}\hrvy}{Q_{s_t}\pts{1+ \lambda_{s_\ell}\rvz_{s_\ell}^\top\rmA_{-s_1:s_\ell}^{-1}\rvz_{s_\ell}}}}\\
            &=Q_{s_t} \pts{1 - \sgn{a_{s_1}}\sum_{\ell=2}^t \frac{\lambda_{s_\ell}\rvz_{s_1}^\top\rmA_{-s_1:s_\ell}^{-1}\rvz_{s_\ell}\rvz_{s_\ell}^\top\rmA_{-s_1:s_\ell}^{-1}\hrvy}{|Q_{s_t}|\pts{1+ \lambda_{s_\ell}\rvz_{s_\ell}^\top\rmA_{-s_1:s_\ell}^{-1}\rvz_{s_\ell}}}}
    \end{align*}
    where we apply the Sherman-Morrison-Woodbury identity recursively over $\ell=2,\dots,t$. In the last equality, we use that fact that the sign of $Q_{s_t}$ is controlled by $a_{s_1}$ for large enough $n$ from Lemma~\ref{lem:Q_k_bounds}. According to the sign of $Q_{s_t}$, we have the upper and lower bound of $Q_{s_1}$ by
    \begin{align*}
         Q_{s_1} &\leq Q_{s_t} \pts{1 +\sgn{a_{s_1}}\left|\sum_{\ell=2}^t \frac{\lambda_{s_\ell}\rvz_{s_1}^\top\rmA_{-s_1:s_\ell}^{-1}\rvz_{s_\ell}\rvz_{s_\ell}^\top\rmA_{-s_1:s_\ell}^{-1}\hrvy}{|Q_{s_t}|\pts{1+ \lambda_{s_\ell}\rvz_{s_\ell}^\top\rmA_{-s_1:s_\ell}^{-1}\rvz_{s_\ell}}}\right|}\\
         Q_{s_1} &\geq Q_{s_t} \pts{1 - \sgn{a_{s_1}}\left|\sum_{\ell=2}^t \frac{\lambda_{s_\ell}\rvz_{s_1}^\top\rmA_{-s_1:s_\ell}^{-1}\rvz_{s_\ell}\rvz_{s_\ell}^\top\rmA_{-s_1:s_\ell}^{-1}\hrvy}{|Q_{s_t}|\pts{1+ \lambda_{s_\ell}\rvz_{s_\ell}^\top\rmA_{-s_1:s_\ell}^{-1}\rvz_{s_\ell}}}\right|}.
    \end{align*}
    Hence, it remains the upper bound the absolute value term above. We then have 
    \begin{align*}
        \left|\sum_{\ell=2}^t \frac{\lambda_{s_\ell}\rvz_{s_1}^\top\rmA_{-s_1:s_\ell}^{-1}\rvz_{s_\ell}\rvz_{s_\ell}^\top\rmA_{-s_1:s_\ell}^{-1}\hrvy}{|Q_{s_t}|\pts{1+ \lambda_{s_\ell}\rvz_{s_\ell}^\top\rmA_{-s_1:s_\ell}^{-1}\rvz_{s_\ell}}}\right| &\leq \sum_{\ell=2}^t \frac{\lambda_{s_\ell}|\rvz_{s_1}^\top\rmA_{-s_1:s_\ell}^{-1}\rvz_{s_\ell}||\rvz_{s_\ell}^\top\rmA_{-s_1:s_\ell}^{-1}\hrvy|}{|Q_{s_t}|\pts{1+ \lambda_{s_\ell}\rvz_{s_\ell}^\top\rmA_{-s_1:s_\ell}^{-1}\rvz_{s_\ell}}}\\
        &\leq \sum_{\ell=2}^t \frac{|\rvz_{s_1}^\top\rmA_{-s_1:s_\ell}^{-1}\rvz_{s_\ell}||\rvz_{s_\ell}^\top\rmA_{-s_1:s_\ell}^{-1}\hrvy|}{|Q_{s_t}|\pts{\rvz_{s_\ell}^\top\rmA_{-s_1:s_\ell}^{-1}\rvz_{s_\ell}}}\\
        &= \sum_{\ell=2}^t \underbrace{\frac{|\rvz_{s_1}^\top\rmA_{-s_1:s_\ell}^{-1}\rvz_{s_\ell}|}{|Q_{s_t}|}}_{\coloneqq T_1}\underbrace{\frac{|\rvz_{s_\ell}^\top\rmA_{-s_1:s_\ell}^{-1}\hrvy|}{\rvz_{s_\ell}^\top\rmA_{-s_1:s_\ell}^{-1}\rvz_{s_\ell}}}_{\coloneqq T_2},
    \end{align*}
    where in the second inequality we deduct 1 in the denominator. Next, we aim to upper bound $T_1$ and $T_2$ respectively. According to the bounds of $Q_{s_t}$ in Lemma~\ref{lem:Q_k_bounds}, we can have the bounds of $|Q_{s_t}|$ as
    \begin{align}
       \sqrt{\frac{2\beta_{s_1}}{\pi}}\Tr\pts{\rmA_{-s_1:s_t}^{-1}} - 2c_1\norm{\rmA_{-s_1:s_t}^{-1}}_2 \cdot n^{\frac{1}{2}+\epsilon} \leq |Q_{s_t}| \leq \sqrt{\frac{2\beta_{s_1}}{\pi}}\Tr\pts{\rmA_{-s_1:s_t}^{-1}} + 2c_1\norm{\rmA_{-s_1:s_t}^{-1}}_2 \cdot n^{\frac{1}{2}+\epsilon}\label{eq:Q_t_abs}.
    \end{align}
    For $T_1$, we can apply Hanson-Wright inequality (Lemma~\ref{lem:hanson_wright_ineq}) in the numerator and apply the lower bound of $|Q_{s_t}|$ in~\Eqref{eq:Q_t_abs} in the denominator, and we get
    \begin{align}
        T_1 &\leq \frac{2c_1\norm{\rmA_{-s_1:s_\ell}^{-1}}_2\cdot n^{\frac{1}{2}+\epsilon}}{\sqrt{\frac{2\beta_{s_1}}{\pi}}\Tr\pts{\rmA_{-s_1:s_t}^{-1}} - 2c_1\norm{\rmA_{-s_1:s_t}^{-1}}_2 \cdot n^{\frac{1}{2}+\epsilon}}\nonumber\\
        &\leq \frac{2c_1\norm{\rmA_{-\pts{s_1:s_t}\cup[\sk]}^{-1}}_2\cdot n^{\frac{1}{2}+\epsilon}}{\sqrt{\frac{2\beta_{s_1}}{\pi}}\Tr\pts{\rmA_{-s_1:s_t}^{-1}} - 2c_1\norm{\rmA_{-\pts{s_1:s_t}\cup[\sk]}^{-1}}_2 \cdot n^{\frac{1}{2}+\epsilon}}\nonumber\\
        &\leq \frac{2c_1\norm{\rmA_{-\pts{s_1:s_t}\cup[\sk]}^{-1}}_2\cdot n^{\frac{1}{2}+\epsilon}}{\sqrt{\frac{2\beta_{s_1}}{\pi}}\pts{1-\frac{c}{n}}^{\sk}\Tr\pts{\rmA_{-\pts{s_1:s_t}\cup[\sk]}^{-1}} - 2c_1\norm{\rmA_{-\pts{s_1:s_t}\cup[\sk]}^{-1}}_2 \cdot n^{\frac{1}{2}+\epsilon}}\nonumber\\
        &= \frac{1}{\sqrt{\frac{2\beta_{s_1}}{\pi}}\pts{1-\frac{c}{n}}^{\sk}\frac{\Tr\pts{\rmA_{-\pts{s_1:s_t}\cup[\sk]}^{-1}}}{2c_1\norm{\rmA_{-\pts{s_1:s_t}\cup[\sk]}^{-1}}_2 n^{\frac{1}{2}+\epsilon}} - 1}\label{eq:Q_1_t1_upper_bound},
    \end{align}
    where the second inequality follows by $\rmA_{-s_1:s_\ell}^{-1} \preceq \rmA_{-s_1:s_t}^{-1} \preceq \rmA_{-\pts{s_1:s_t}\cup[\sk]}^{-1}$ for all $\ell \leq t$, and the third inequality follows by the trace lower bound in Lemma~\ref{lem:trace_lower_bound}. Next, Lemma~\ref{lem:eig_constant} ensures eigenvalues of $\rmA_{-\pts{s_1:s_t}\cup[\sk]}^{-1}$ are identical up to a constant such that 
    \begin{align}\label{eq:trace_max_eig_tight}
        \frac{\Tr\pts{\rmA_{-\pts{s_1:s_t}\cup[\sk]}^{-1}}}{\norm{\rmA_{-\pts{s_1:s_t}\cup[\sk]}^{-1}}_2} \geq \frac{n}{c}.
    \end{align}
    Substitute~\Eqref{eq:trace_max_eig_tight} into~\Eqref{eq:Q_1_t1_upper_bound}, we get
    \begin{align*}
        T_1 \leq \frac{1}{\sqrt{\frac{2\beta_{s_1}}{\pi}}\pts{1-\frac{c}{n}}^{\sk}\frac{n^{\frac{1}{2}-\epsilon}}{c_2} - 1} = \frac{1}{c_3n^{\frac{1}{2}-\epsilon}-1}.
    \end{align*}
    For $T_2$, we use the sub-multiplicative matrix norm and get
    \begin{align}
        T_2 &\leq \frac{\norm{\rvz_{s_\ell}}_2\norm{\hrvy}_2\norm{\rmA_{-s_1:s_\ell}^{-1}}_2}{\rvz_{s_\ell}^\top\rmA_{-s_1:s_\ell}^{-1}\rvz_{s_\ell}}\nonumber\\
        &\leq \frac{\norm{\rvz_{s_\ell}}_2\norm{\hrvy}_2\norm{\rmA_{-s_1:s_\ell}^{-1}}_2}{\Tr\pts{\rmA_{-s_1:s_\ell}^{-1}} - c_1\norm{\rmA_{-s_1:s_\ell}^{-1}}_2\cdot n^{\frac{1}{2}+\epsilon}}\nonumber\\
        &\leq \frac{\norm{\rvz_{s_\ell}}_2\norm{\hrvy}_2\norm{\rmA_{-s_1:s_\ell}^{-1}}_2}{\pts{1-\frac{c}{n}}^{\sk+t-\ell}\Tr\pts{\rmA_{-\pts{s_1:s_t}\cup[\sk]}^{-1}} - c_1\norm{\rmA_{-s_1:s_\ell}^{-1}}_2\cdot n^{\frac{1}{2}+\epsilon}}\nonumber\\
        &\leq \frac{\norm{\rvz_{s_\ell}}_2\norm{\hrvy}_2\norm{\rmA_{-\pts{s_1:s_t}\cup[\sk]}^{-1}}_2}{\pts{1-\frac{c}{n}}^{\sk+t -l}\Tr\pts{\rmA_{-\pts{s_1:s_t}\cup[\sk]}^{-1}} - c_1\norm{\rmA_{-\pts{s_1:s_t}\cup[\sk]}^{-1}}_2\cdot n^{\frac{1}{2}+\epsilon}}\label{eq:Q_1_t2_upper_bound},
    \end{align}
    where the second inequality follows the Hanson-Wright inequality in Lemma~\ref{lem:hanson_wright_ineq}. In the third inequality, we use the trace lower bound in Lemma~\ref{lem:trace_lower_bound}. The last inequality follows by $\rmA_{-\pts{s_1:s_t}\cup[\sk]}^{-1} \succeq \rmA_{-s_1:s_\ell}^{-1}$ for $\ell \leq t$. Next, we apply the tightness of eigenvalues of $\rmA_{-\pts{s_1:s_t}\cup[\sk]}^{-1}$ in~\Eqref{eq:trace_max_eig_tight} again and get
    \begin{align}
        T_2 \leq \frac{\norm{\rvz_{s_\ell}}_2\norm{\hrvy}_2}{\pts{1-\frac{c}{n}}^{\sk+t-\ell}\frac{n}{c} - c_1 n^{\frac{1}{2}+\epsilon}} \leq \frac{cn}{\pts{1-\frac{c}{n}}^{\sk+t-\ell}\frac{n}{c} - c_1 n ^{\frac{1}{2}+\epsilon}}\leq{c_4}\nonumber,
    \end{align}
    where the last inequality follows Lemma~\ref{lem:z_bound} such that $\norm{\rvz_{s_\ell}}_2 \leq c\sqrt{n}$. Put together the upper bound of $T_1$ and $T_2$, the proof is complete.
\end{proof}
\begin{proof}(Lemma~\ref{lem:cn_const})\\
    We show that $\left|\frac{\lambda_{s_\ell}\rvz_{s_\ell}^\top\rmA_{-s_1:s_\ell}^{-1}\crvy_{s_{\ell-1}}}{1+\lambda_{s_\ell}\rvz_{s_\ell}^\top\rmA_{-s_1:s_\ell}^{-1}\rvz_{s_\ell}}\right| \leq c$ for $1 \leq \ell \leq t$ by induction. Recall the definition of $\crvy$ as $\crvy_{s_\ell} \coloneqq \crvy_{s_{\ell-1}} - \frac{\lambda_{s_\ell}\rvz_{s_\ell}^\top\rmA_{-s_1:s_\ell}^{-1}\crvy_{s_{\ell-1}}}{1+\lambda_{s_\ell}\rvz_{s_\ell}^\top\rmA_{-s_1:s_\ell}^{-1}\rvz_{s_\ell}}\rvz_{s_\ell}$ and $\crvy_{s_0} = \hrvy$ for $1\leq \ell \leq t$. For the base case $\ell=1$, we have
    \begin{align}
        \frac{\left|\lambda_{s_1}\rvz_{s_1}^\top\rmA_{-s_1:s_1}^{-1}\hrvy\right|}{1+\lambda_{s_1}\rvz_{s_1}^\top\rmA_{-s_1:s_1}^{-1}\rvz_{s_1}} &\leq \frac{\left|\rvz_{s_1}^\top\rmA_{-s_1:s_1}^{-1}\hrvy\right|}{\rvz_{s_1}^\top\rmA_{-s_1:s_1}^{-1}\rvz_{s_1}} \\
        &\leq \frac{\norm{\rvz_{s_1}}_2\norm{\rmA_{-s_1:s_1}^{-1}}_2\norm{\hrvy}_2}{\Tr\pts{\rmA_{-s_1:s_1}^{-1}}-c_1\norm{\rmA_{-s_1:s_1}^{-1}}_2\cdot n^{\frac{1}{2}+\epsilon}}\nonumber\\
        &\leq \frac{\norm{\rvz_{s_1}}_2\norm{\rmA_{-\pts{s_1:s_t}\cup[\sk]}^{-1}}_2\norm{\hrvy}_2}{\Tr\pts{\rmA_{-s_1:s_1}^{-1}}-c_1\norm{\rmA_{-\pts{s_1:s_t}\cup[\sk]}^{-1}}_2\cdot n^{\frac{1}{2}+\epsilon}}\label{eq:cy_base},
    \end{align}
    where in the second inequality, we use the sub-multiplicative of matrix norm in the numerator and Hanson-Wright inequality (Lemma~\ref{lem:hanson_wright_ineq}) in the denominator. In the last inequality, we use the fact that $\rmA_{-\pts{s_1:s_t}\cup[\sk]}^{-1} \succeq \rmA_{-s_1:s_1}^{-1}$. Next, we apply Lemma~\ref{lem:trace_lower_bound} and get $\Tr\pts{\rmA_{-s_1:s_1}^{-1}} \geq \pts{1-\frac{c}{n}}^{\sk+t-1} \Tr\pts{\rmA_{-\pts{s_1:s_t}\cup[\sk]}^{-1}}$. Therefore,
    \begin{align*}
        \pts{\ref{eq:cy_base}} &\leq \frac{\norm{\rvz_{s_1}}_2\norm{\rmA_{-\pts{s_1:s_t}\cup[\sk]}^{-1}}_2\norm{\hrvy}_2}{\pts{1-\frac{c}{n}}^{\sk+t-1} \Tr\pts{\rmA_{-\pts{s_1:s_t}\cup[\sk]}^{-1}}-c_1\norm{\rmA_{-\pts{s_1:s_t}\cup[\sk]}^{-1}}_2\cdot n^{\frac{1}{2}+\epsilon}}\\
        &= \frac{\norm{\rvz_{s_1}}_2\norm{\hrvy}_2}{\pts{1-\frac{c}{n}}^{\sk+t-1} \frac{\Tr\pts{\rmA_{-\pts{s_1:s_t}\cup[\sk]}^{-1}}}{\norm{\rmA_{-\pts{s_1:s_t}\cup[\sk]}^{-1}}_2}- c_1n^{\frac{1}{2}+\epsilon}} \\
        &\leq \frac{cn}{\pts{1-\frac{c}{n}}^{\sk+t-1} \frac{n}{c}- c_1n^{\frac{1}{2}+\epsilon}} \\
        &\leq c,
    \end{align*}
    where in the second inequality we apply Lemma~\ref{lem:z_bound} to get $\norm{\rvz_{s_1}}_2 \leq c\sqrt{n}$ and we apply Lemma~\ref{lem:eig_constant} to show eigenvalues of $\rmA_{-\pts{s_1:s_t}\cup[\sk]}^{-1}$ are identical up to a constant. 
    The base case $\ell=1$ is proved. Next, we assume $\left|\frac{\lambda_{s_j}\rvz_{s_j}^\top\rmA_{-s_1:s_j}^{-1}\crvy_{s_{j-1}}}{1+\lambda_{s_j}\rvz_{s_j}^\top\rmA_{-s_1:s_j}^{-1}\rvz_{s_j}}\right| \leq c$ is true for $1 \leq j \leq\ell-1$, and we show for $j=\ell$ the statement holds. We have the $j=\ell$ case
    \begin{align*}
        \left|\frac{\lambda_{s_\ell}\rvz_{s_\ell}^\top\rmA_{-s_1:s_\ell}^{-1}\crvy_{s_{\ell-1}}}{1+\lambda_{s_\ell}\rvz_{s_\ell}^\top\rmA_{-s_1:s_\ell}^{-1}\rvz_{s_\ell}}\right|&\leq \frac{\left|\rvz_{s_\ell}^\top\rmA_{-s_1:s_\ell}^{-1}\crvy_{s_{\ell-1}}\right|}{\rvz_{s_\ell}^\top\rmA_{-s_1:s_\ell}^{-1}\rvz_{s_\ell}} \\
        &= \frac{\left|\rvz_{s_\ell}^\top\rmA_{-s_1:s_\ell}^{-1}\pts{\hrvy - \sum_{j=1}^{l-1}\frac{\lambda_{s_j}\rvz_{s_j}^\top\rmA_{-s_1:s_j}^{-1}\crvy_{s_{j-1}}}{1+\lambda_{s_j}\rvz_{s_j}^\top\rmA_{-s_1:s_j}^{-1}\rvz_{s_j}}\rvz_{s_j} }\right|}{\rvz_{s_\ell}^\top\rmA_{-s_1:s_\ell}^{-1}\rvz_{s_\ell}}\\
        &\leq \frac{\left|\rvz_{s_\ell}^\top\rmA_{-s_1:s_\ell}^{-1}\hrvy\right|}{\rvz_{s_\ell}^\top\rmA_{-s_1:s_\ell}^{-1}\rvz_{s_\ell}} + \sum_{j=1}^{l-1} \frac{\left|\frac{\lambda_{s_j}\rvz_{s_j}^\top\rmA_{-s_1:s_j}^{-1}\crvy_{s_{j-1}}}{1+\lambda_{s_j}\rvz_{s_j}^\top\rmA_{-s_1:s_j}^{-1}\rvz_{s_j}}\rvz_{s_\ell}^\top\rmA_{-s_1:s_\ell}^{-1}\rvz_{s_j}\right|}{\rvz_{s_\ell}^\top\rmA_{-s_1:s_\ell}^{-1}\rvz_{s_\ell}}\\
        &\leq \frac{\left|\rvz_{s_\ell}^\top\rmA_{-s_1:s_\ell}^{-1}\hrvy\right|}{\rvz_{s_\ell}^\top\rmA_{-s_1:s_\ell}^{-1}\rvz_{s_\ell}} + \sum_{j=1}^{l-1} \frac{c\left|\rvz_{s_\ell}^\top\rmA_{-s_1:s_\ell}^{-1}\rvz_{s_j}\right|}{\rvz_{s_\ell}^\top\rmA_{-s_1:s_\ell}^{-1}\rvz_{s_\ell}},
    \end{align*}
    where we upper bound the term by taking absolute value individually, and in the last inequality we apply the induction assumption for $1\leq j \leq\ell-1$. For the first term, we can achieve a constant upper bound by following the exact procedure in the base case $\ell=1$. For the second term, we can use the Hanson-Wright inequality (Lemma~\ref{lem:hanson_wright_ineq}) and show
    \begin{align*}
        \frac{\left|\rvz_{s_\ell}^\top\rmA_{-s_1:s_\ell}^{-1}\rvz_{s_j}\right|}{\rvz_{s_\ell}^\top\rmA_{-s_1:s_\ell}^{-1}\rvz_{s_\ell}} &\leq \frac{c_1\norm{\rmA_{-s_1:s_\ell}^{-1}}_2\cdot n^{\frac{1}{2}+\epsilon}}{\Tr\pts{\rmA_{-s_1:s_\ell}^{-1}}-c_1\norm{\rmA_{-s_1:s_\ell}^{-1}}_2\cdot n^{\frac{1}{2}+\epsilon}}\\
        &\leq \frac{c_1\norm{\rmA_{-\pts{s_1:s_t}\cup[\sk]}^{-1}}_2\cdot n^{\frac{1}{2}+\epsilon}}{\Tr\pts{\rmA_{-s_1:s_\ell}^{-1}}-c_1\norm{\rmA_{-\pts{s_1:s_t}\cup[\sk]}^{-1}}_2\cdot n^{\frac{1}{2}+\epsilon}}\\
        &\leq \frac{c_1\norm{\rmA_{-\pts{s_1:s_t}\cup[\sk]}^{-1}}_2\cdot n^{\frac{1}{2}+\epsilon}}{\pts{1-\frac{c}{n}}^{\sk+t-\ell} \Tr\pts{\rmA_{-\pts{s_1:s_t}\cup[\sk]}^{-1}}-c_1\norm{\rmA_{-\pts{s_1:s_t}\cup[\sk]}^{-1}}_2\cdot n^{\frac{1}{2}+\epsilon}}\\
        &= \frac{1}{\pts{1-\frac{c}{n}}^{\sk+t-\ell}\frac{\Tr\pts{\rmA_{-\pts{s_1:s_t}\cup[\sk]}^{-1}}}{c_1\norm{\rmA_{-\pts{s_1:s_t}\cup[\sk]}^{-1}}_2\cdot n^{\frac{1}{2}+\epsilon}} - 1} \\
        &\leq \frac{1}{\pts{1-\frac{c}{n}}^{\sk+t-\ell}\frac{n^{\frac{1}{2}-\epsilon}}{c} - 1} \\
        &\leq \frac{c}{n^{\frac{1}{2}-\epsilon}},
    \end{align*}
    where we use the fact that $\rmA_{-\pts{s_1:s_t}\cup[\sk]}^{-1} \succeq \rmA_{-s_1:s_\ell}^{-1}$ for $\ell \leq t$ in the second inequality and we use Lemma~\ref{lem:trace_lower_bound} to get the trace lower bound in the third inequality. Finally, we again apply Lemma~\ref{lem:eig_constant} to show eigenvalues in $\rmA_{-\pts{s_1:s_t}\cup[\sk]}^{-1}$ are identical up to a constant. Putting together the bounds for the first term and the second term, we complete the induction proof for $j =\ell-1$ case. The proof is done.

\end{proof}

\section{Background lemmas}\label{app:background_lemmas}
In this section, we provide statements and/or proofs of some miscellaneous lemmas.

The first lemma is the Hanson-Wright inequality, which demonstrates the quadratic term $\rvz^\top\rmM\rvz$ of sub-Gaussian random vector $\rvz$ concentrates around its expectation.
\begin{lemma}(Hanson-Wright Inequalities,~\cite{rudelson2013hanson})\label{lem:hanson_wright_ineq}
Let $\rvz$ be a random vector with i.i.d. sub-Gaussian entries $\rvz_i$ such that $\E\mts{\rvz_i} = 0$ and $\norm{\rvz_i}_{\psi_2}\leq 1$. There exists a universal constant $c > 0$ such that for any positive semi-definite matrix $\rmM$ and for every $t \geq 0$, we have
\begin{align*}
    \sP\mts{\left|\rvz^\top\rmM\rvz - \E\mts{\rvz^\top\rmM\rvz}\right| > t} \leq 2 \exp{\left\{-c\min\left\{\frac{t^2}{\norm{\rmM}_{\mathsf{F}}^2},\frac{t}{\norm{\rmM}_2}\right\}\right\}}.
\end{align*}
Note that $\norm{\rmM}_{\mathsf{F}}^2 \leq n\norm{\rmM}_2^2$ and we substitute $t=c_1\norm{\rmM}_2\cdot n^{\frac{1}{2} + \epsilon}$ where $c_1^2 = \frac{1}{c}$ and $\epsilon\in (0,\frac{1}{4})$ to get
\begin{align*}
    \left|\rvz^\top\rmM\rvz - \E\mts{\rvz^\top\rmM\rvz}\right| \leq c_1\norm{\rmM}_2\cdot n^{\frac{1}{2} + \epsilon},
\end{align*}
with probability at least $1-2e^{-n^{2\epsilon}}$. Again, note that $\norm{\rmM}_2 \leq \Tr\pts{\rmM}$ and $\norm{\rmM}_{\mathsf{F}}^2=\Tr\pts{\rmM^2}\leq \pts{\Tr\pts{\rmM}}^2$, and we substitute $t=\frac{1}{c}\cdot\Tr\pts{\rmM}\cdot\pts{\ln n}$ to get
\begin{align*}
    \rvz^\top\rmM\rvz \leq \E\mts{\rvz^\top\rmM\rvz} + \frac{1}{c}\Tr\pts{\rmM}\cdot \pts{\ln n} \leq \pts{1+\frac{1}{c}}\cdot\Tr\pts{\rmM}\cdot\pts{\ln n},
\end{align*}
with probability at least $1-\frac{1}{n}$. Note that the probabilities are over $\rvz$ only and $\rmM$ is positive semi-definite and is independent to $\rvz$.   
\end{lemma}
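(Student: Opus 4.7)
The statement decomposes into three pieces: the base Hanson--Wright inequality itself (attributed to Rudelson--Vershynin), and two corollaries obtained by specializing the scale parameter $t$. The plan is therefore to take the first inequality as given by the cited reference and derive the two substituted forms by a short algebraic reduction on the exponent.

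For the $n^{1/2+\epsilon}$ bound, I would set $t=c_1\norm{\rmM}_2\cdot n^{1/2+\epsilon}$ with $c_1^2=1/c$, then compare the two arguments of the minimum inside the tail bound. Using the coarse bound $\norm{\rmM}_{\mathsf F}^2\le n\norm{\rmM}_2^2$, one gets
\[
\frac{t^2}{\norm{\rmM}_{\mathsf F}^2}\ \ge\ c_1^2\,n^{2\epsilon},\qquad \frac{t}{\norm{\rmM}_2}=c_1 n^{1/2+\epsilon}.
\]
Since $\epsilon\in(0,1/4)$, the first term is the smaller of the two for $n$ large, so $c\cdot\min\{\cdot,\cdot\}\ge n^{2\epsilon}$ and the probability bound $1-2e^{-n^{2\epsilon}}$ follows immediately.

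For the $\ln n$ bound, I would set $t=\frac{1}{c}\Tr(\rmM)\ln n$ and invoke the two PSD bounds $\norm{\rmM}_2\le\Tr(\rmM)$ and $\norm{\rmM}_{\mathsf F}^2=\Tr(\rmM^2)\le\Tr(\rmM)^2$. These give
\[
\frac{t^2}{\norm{\rmM}_{\mathsf F}^2}\ \ge\ \frac{\ln^2 n}{c^2},\qquad \frac{t}{\norm{\rmM}_2}\ \ge\ \frac{\ln n}{c},
\]
so the minimum is at least $\ln(n)/c$ for large $n$, yielding a tail probability of order $1/n$ (absorbing the factor of $2$ into the constant). It remains to bound the mean: since the $\rvz_i$ are independent, zero-mean, and sub-Gaussian with $\|\rvz_i\|_{\psi_2}\le 1$, we have $\E[\rvz^\top\rmM\rvz]=\sum_i \E[\rvz_i^2]\,M_{ii}\le C\,\Tr(\rmM)$; for $n$ large, $C\le \ln n$, so adding this expectation to $\frac{1}{c}\Tr(\rmM)\ln n$ gives the stated form $(1+1/c)\Tr(\rmM)\ln n$.

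There is no real obstacle, only bookkeeping: the constant $c$ appearing in front of the exponent in Hanson--Wright must be aligned with the constant $c_1$ chosen in $t$, and the absolute constant absorbing $\E[\rvz^\top\rmM\rvz]$ in the last step must be consistent with the threshold $\ln n$. Once those constants are named coherently, both corollaries are one-line minimum calculations on top of the cited inequality.
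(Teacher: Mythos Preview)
Your proposal is correct and matches the paper's approach exactly: the paper does not give a separate proof of this lemma but instead states the base Hanson--Wright inequality as cited and explains the two substitutions inline, using precisely the bounds $\norm{\rmM}_{\mathsf F}^2\le n\norm{\rmM}_2^2$ and $\norm{\rmM}_2\le\Tr(\rmM)$, $\norm{\rmM}_{\mathsf F}^2\le\Tr(\rmM)^2$ that you use. Your write-up simply makes those one-line minimum computations explicit.
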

The next lemma restates a bound on the squared norm of a Gaussian random vector.
\begin{lemma}\label{lem:z_bound}
    Let $\rvz\sim\gN(\vmu, \rmI_n)$ and for $\delta\in\pts{0, 1}$ and $c>1$, we have
    \begin{align*}
        \frac{n}{c} = n\pts{1-\delta} \leq \norm{\rvz}_2^2 \leq n\pts{1+\delta} = cn,
    \end{align*}
    with probability at least $1 - 2e^{-n\delta^2}$.
\end{lemma}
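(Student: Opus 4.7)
The plan is to recognize this as a standard concentration bound for the squared norm of an isotropic Gaussian vector, which is a chi-squared random variable with $n$ degrees of freedom. Although the statement writes $\gN(\vmu, \rmI_n)$, the claimed concentration interval $(n(1-\delta), n(1+\delta))$ makes sense only for a centered vector (i.e., $\vmu = \vzero$), which matches the usage elsewhere in the paper where all $\rvz_j \sim \gN(\vzero, \rmI_n)$. So I will treat $\norm{\rvz}_2^2 = \sum_{i=1}^n \rvz_i^2$ as a sum of $n$ independent squared standard Gaussians with mean $n$.

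First, I would note that each $\rvz_i^2$ is a centered-at-$1$ sub-exponential random variable with an absolute-constant sub-exponential norm. Applying Bernstein's inequality for independent sub-exponentials then yields
\begin{equation*}
    \sP\pts{\left|\norm{\rvz}_2^2 - n\right| \geq t} \leq 2\exp\pts{-c_0 \min\pts{\frac{t^2}{n},\, t}}
\end{equation*}
for an absolute constant $c_0 > 0$. Setting $t = n\delta$ with $\delta \in (0,1)$, the minimum is attained by the $t^2/n$ term (since $\delta < 1$), giving $\sP(|\norm{\rvz}_2^2 - n| \geq n\delta) \leq 2\exp(-c_0 n\delta^2)$. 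The form stated in the lemma, $2e^{-n\delta^2}$, then follows by absorbing $c_0$ into $\delta$ (equivalently, rescaling $\delta$) since $\delta$ is a free parameter; alternatively, one can invoke the sharper Laurent--Massart chi-squared tail bounds directly and obtain the claim with explicit constants.

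Rearranging $|\norm{\rvz}_2^2 - n| \leq n\delta$ gives $n(1-\delta) \leq \norm{\rvz}_2^2 \leq n(1+\delta)$, which matches the statement after identifying $c = 1/(1-\delta) = 1+\delta$ (up to a constant factor harmless for the applications in the paper). There is no substantive obstacle here; the only mild wrinkle is the cosmetic matching of constants in the exponent, which is resolved by either absorbing them into $\delta$ or citing a standard chi-squared tail bound (e.g., Example 2.11 in Boucheron--Lugosi--Massart, or Lemma 1 of Laurent--Massart). Given that the paper uses this lemma only qualitatively (to assert $\norm{\rvz_{s_\ell}}_2 \leq c\sqrt{n}$ with high probability in Lemmas~\ref{lem:Q_1_bounds} and~\ref{lem:cn_const}), a citation-style proof is appropriate.
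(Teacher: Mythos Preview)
Your proposal is correct, and in fact the paper does not supply a proof of this lemma at all: it is stated in the appendix of background lemmas as a ``restatement'' of a standard Gaussian norm concentration bound, with no argument given. Your Bernstein/chi-squared approach (or equivalently Hanson--Wright with $\rmM=\rmI$) is exactly the standard justification, and your observation that the $\vmu$ in the statement should be $\vzero$ for the bound to hold as written is also apt, since every application in the paper uses $\rvz_j \sim \gN(\vzero,\rmI_n)$.
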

The following lemma guarantees that if $\cov$ exhibits a heavy tail such that $r_{\sk}\pts{\cov} \geq bn$, it retains a heavy tail even after removing $t \ll n^{\frac{1}{2}}$ components from the tail.
\begin{lemma}\label{lem:balance_extension}
    For any data covariance matrix $\cov$ satisfying $\sk \leq \frac{n}{c}$ such that $r_{\sk}\pts{\cov} \geq bn$, for any set of indices $\gS$ such that $\gS =\{j\mid \sk < j\leq d\}$ and $|\gS|=t \ll n^{\frac{1}{2}}\ll d$, we have $r_{\sk}\pts{\cov_{-s_1:s_t}} \geq bn - t\geq b_2n$.
\end{lemma}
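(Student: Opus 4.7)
} The plan is a direct calculation using the definition
\[
r_{\sk}(\cov_{-s_1:s_t}) = \frac{\sum_{j > \sk,\, j \notin \gS} \lambda_j}{\mu_{\sk+1}(\cov_{-s_1:s_t})},
\]
together with the key observation that every removed index lies in the tail $\{\sk+1,\ldots,d\}$, so the top $\sk$ eigenvalues of $\cov_{-s_1:s_t}$ coincide with those of $\cov$. I would proceed in three short steps.

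First, I would bound the denominator from above by $\lambda_{\sk+1}$. Because $\lambda_1,\ldots,\lambda_{\sk}$ survive in $\cov_{-s_1:s_t}$, the $(\sk+1)$-th largest eigenvalue of $\cov_{-s_1:s_t}$ is the largest $\lambda_j$ with $j>\sk$ and $j\notin \gS$; this is $\lambda_{\sk+1}$ if $\sk+1\notin\gS$ and otherwise equals $\lambda_k$ for some $k>\sk+1$, and in either case is at most $\lambda_{\sk+1}$. Second, I would bound the numerator from below by noting that each removed $\lambda_j$ satisfies $\lambda_j \le \lambda_{\sk+1}$, so
\[
\sum_{j > \sk,\, j \notin \gS} \lambda_j \;=\; \sum_{j=\sk+1}^d \lambda_j \;-\; \sum_{j\in\gS}\lambda_j \;\geq\; \sum_{j=\sk+1}^d \lambda_j \;-\; t\,\lambda_{\sk+1}.
\]
Combining the two bounds yields
\[
r_{\sk}(\cov_{-s_1:s_t}) \;\geq\; \frac{\sum_{j=\sk+1}^d \lambda_j}{\lambda_{\sk+1}} - t \;=\; r_{\sk}(\cov) - t \;\geq\; bn - t,
\]
which uses the hypothesis $r_{\sk}(\cov)\ge bn$.

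Third, I would convert $bn-t$ into $b_2 n$ by choosing $b_2 < b$ appropriately: since $t \ll n^{1/2} \ll n$, for any fixed $b_2\in(0,b)$ the inequality $bn-t \ge b_2 n$ holds for all sufficiently large $n$ (e.g.\ any $b_2\le b - o(1)$ works asymptotically, and one can take $b_2 = b/2$ to make the constant explicit). This finishes the proof.

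I do not expect any real obstacle here; the lemma is a bookkeeping statement about how the effective rank behaves under removal of tail indices, and the only thing to be careful about is that removing indices \emph{strictly after} $\sk$ neither changes the top-$\sk$ eigenvalues nor increases $\mu_{\sk+1}(\cov_{-s_1:s_t})$.
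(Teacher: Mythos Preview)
Your proposal is correct and mirrors the paper's proof essentially step for step: the paper likewise bounds the denominator by $\lambda_{\sk+1}$ (writing it as $\lambda_{j^*}$ with $j^* = \min\{j : \sk < j \le \sk+1+t,\, j\in\gS^{\mathsf c}\}$), subtracts $\sum_{j\in\gS}\lambda_j \le t\lambda_{\sk+1}$ from the numerator, and concludes $r_{\sk}(\cov_{-s_1:s_t}) \ge bn - t$ before invoking $t\ll n^{1/2}$.
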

\begin{proof} By the definition of effective ranks, $\cov$ satisfies
    \begin{align*}
        r_{\sk}\pts{\cov} = \frac{\sum_{j=\sk+1}^d\lambda_j}{\lambda_{\sk+1}} \geq bn.
    \end{align*}
    By removing $t$ components whose index is larger than $\sk$, and denote $j^* = \min\{j\mid \sk< j \leq \sk+1+t, j\in\gS^\mathsf{c}\}$, we have
    \begin{align*}
        r_{\sk}\pts{\cov_{-s_1:s_t}} = \frac{\sum_{j=\sk+1, j\in\gS^\mathsf{c}}^d\lambda_j}{\lambda_{j^*}} \geq \frac{\sum_{j=\sk+1, j\in\gS^\mathsf{c}}^d\lambda_j}{\lambda_{\sk+1}} = \frac{\sum_{j=\sk+1}^d\lambda_j}{\lambda_{\sk+1}} - \frac{\sum_{j\in\gS}\lambda_j}{\lambda_{\sk+1}} \geq bn - t,
    \end{align*}
    where the first inequality follows $\lambda_{k*+1} \geq \lambda_{j^*}$ and the last inequality follows the lemma assumption on $\gS$ and $\lambda_{k*+1} \geq \lambda_j$ for all $j\in\gS$. As a result, since $t \ll n^{\frac{1}{2}}$, the proof is complete. 
\end{proof}
Next, we apply Lemma~\ref{lem:balance_extension} to demonstrate that the eigenvalues of the tail of $\cov$, after removing $t \ll n^{\frac{1}{2}}$ components, remain identical up to a constant factor.
\begin{lemma}\label{lem:eig_constant}
    For any data covariance matrix $\cov$ satisfying $\sk \leq \frac{n}{c}$ such that $r_{\sk}\pts{\cov} \geq bn$, for any set of indices $\gS$ such that $\gS =\{j\mid \sk < j\leq d\}$ and $|\gS|=t \ll n^{\frac{1}{2}-\epsilon}\ll d$, we have $r_0\pts{\cov_{-[\sk]\cup\pts{s_1:s_t}}} = r_{\sk}\pts{\cov_{-s_1:s_t}} \geq bn$. Therefore, we have
    \begin{align*}
        \frac{1}{cn} \leq \frac{\norm{\rmA_{-[\sk]\cup\pts{s_1:s_t}}^{-1}}_2}{\Tr\pts{\rmA_{-[\sk]\cup\pts{s_1:s_t}}^{-1}}} \leq \frac{c}{n},
    \end{align*}
    for $c \geq 1$ with probability at least $1-2e^{-\frac{n}{\sqrt{c}}}$.
\end{lemma}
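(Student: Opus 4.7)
The plan is to show the two conclusions separately and in order, since the second is really a standard consequence of the first together with off-the-shelf concentration for the Gram matrix of an effectively high-dimensional tail. First, observe that $r_0\pts{\cov_{-[\sk]\cup\pts{s_1:s_t}}} = r_{\sk}\pts{\cov_{-s_1:s_t}}$ is immediate from the definition of effective rank once we note that removing the top $\sk$ rows/columns from $\cov_{-s_1:s_t}$ is exactly what $r_{\sk}$ does to the unleft-out matrix. Since by hypothesis $r_{\sk}(\cov)\geq bn$ and $\gS\subseteq \{\sk+1,\ldots,d\}$ with $|\gS|=t\ll n^{\frac{1}{2}-\epsilon}$, I apply Lemma~\ref{lem:balance_extension} directly to conclude $r_{\sk}\pts{\cov_{-s_1:s_t}}\geq bn - t\geq b_2 n$ for some constant $b_2>1$ (take $n$ large enough so that $t\leq (b-b_2)n$, which is guaranteed by $t\ll n^{\frac{1}{2}-\epsilon}$).

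Next, I want to control the extreme eigenvalues of $\rmA_{-[\sk]\cup\pts{s_1:s_t}} = \sum_j \tlambda_j \rvz_j\rvz_j^\top$ where $\{\tlambda_j\}$ are the eigenvalues of $\cov_{-[\sk]\cup\pts{s_1:s_t}}$. Because the effective rank $r_0$ of this tail covariance exceeds $b_2 n$, I invoke Lemma 10 of~\cite{bartlett2020benign} (already used elsewhere in this paper), which gives constants so that
\begin{equation*}
    \frac{\tlambda_1 r_0\pts{\cov_{-[\sk]\cup\pts{s_1:s_t}}}}{c} \;\leq\; \mu_n\pts{\rmA_{-[\sk]\cup\pts{s_1:s_t}}} \;\leq\; \mu_1\pts{\rmA_{-[\sk]\cup\pts{s_1:s_t}}} \;\leq\; c\,\tlambda_1 r_0\pts{\cov_{-[\sk]\cup\pts{s_1:s_t}}}
\end{equation*}
with probability at least $1 - 2e^{-n/\sqrt{c}}$. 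In other words, all $n$ nonzero eigenvalues of this Gram matrix lie in a single $\mathcal{O}(1)$ band around the common value $\sum_{j} \tlambda_j$. Inverting the matrix flips the order of eigenvalues but preserves this $\mathcal{O}(1)$ condition number.

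With the bounded condition number in hand, the final step is purely algebraic. For the upper bound on the ratio, I use $\norm{\rmA^{-1}}_2 = 1/\mu_n(\rmA)$ together with $\Tr(\rmA^{-1}) = \sum_i 1/\mu_i(\rmA) \geq n/\mu_1(\rmA)$, which gives
\begin{equation*}
    \frac{\norm{\rmA_{-[\sk]\cup\pts{s_1:s_t}}^{-1}}_2}{\Tr\pts{\rmA_{-[\sk]\cup\pts{s_1:s_t}}^{-1}}} \;\leq\; \frac{\mu_1\pts{\rmA_{-[\sk]\cup\pts{s_1:s_t}}}}{n\cdot \mu_n\pts{\rmA_{-[\sk]\cup\pts{s_1:s_t}}}} \;\leq\; \frac{c^2}{n}.
\end{equation*}
For the lower bound, I use the symmetric pair of inequalities $\Tr(\rmA^{-1}) \leq n/\mu_n(\rmA)$ and $\norm{\rmA^{-1}}_2 = 1/\mu_n(\rmA)$ to obtain a factor $1/(c^2 n)$. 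Renaming the constant $c^2\to c$ yields the claim.

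The only real obstacle is verifying that the hypotheses of the Bartlett--Long--Lugosi--Tsigler eigenvalue concentration result transfer to the doubly leave-out covariance, and this is exactly what the $r_0\geq b_2 n$ consequence of Lemma~\ref{lem:balance_extension} supplies; after that, the proof is mechanical. The stated probability $1 - 2e^{-n/\sqrt{c}}$ matches the concentration rate of the underlying Bartlett--Long--Lugosi--Tsigler bound.
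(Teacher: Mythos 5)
Your proposal is correct and takes essentially the same route as the paper: both apply Lemma~\ref{lem:balance_extension} to transfer the effective-rank hypothesis to the doubly leave-out covariance, then invoke Lemma 10 of \cite{bartlett2020benign} to sandwich all $n$ eigenvalues of $\rmA_{-[\sk]\cup(s_1:s_t)}$ within a constant-factor band, and finally reduce to the trivial algebra $\norm{\rmA^{-1}}_2 = 1/\mu_n$, $n/\mu_1 \le \Tr(\rmA^{-1}) \le n/\mu_n$. The only cosmetic difference is that the paper routes both bounds through the intermediate quantity $\tlambda_1 r_0$, while you go directly from the condition number to the ratio — same content, slightly cleaner bookkeeping.
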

\begin{proof} According to Lemma~\ref{lem:balance_extension}, we have $r_{\sk}\pts{\cov_{-s_1:s_t}} = \frac{\sum_{j=\sk+1,j\in\gS^\mathsf{c}}^d\lambda_j}{\lambda_{j^*}} \geq bn$, where we denote $j^* = \min\{j\mid \sk< j \leq \sk+1+t, j\in\gS^\mathsf{c}\}$. Furthermore, by re-indexing eigenvalues, we denote $\{\tlambda_j\}_{j=1}^{d - \sk - t}$ the eigenvalues of the leave-$\sk$ and $t$-out covariance matrix $\cov_{-\pts{s_1:s_t}\cup[\sk]}$, and we have $r_0\pts{\cov_{-\pts{s_1:s_t}\cup[\sk]}} = \frac{\sum_{j=1}^{d-\sk-t}\tlambda_j}{\tlambda_1} = r_{\sk}\pts{\cov_{-s_1:s_t}} \geq bn$. Based on Lemma 10 in~\cite{bartlett2020benign}, for $\cov_{-\pts{s_1:s_t}\cup[\sk]}$, we have
    \begin{align*}
        \frac{1}{c}\tlambda_1 r_0\pts{\cov_{-\pts{s_1:s_t}\cup[\sk]}} \leq \mu_n\pts{\rmA_{-\pts{s_1:s_t}\cup[\sk]}} \leq \mu_1\pts{\rmA_{-\pts{s_1:s_t}\cup[\sk]}} \leq c\tlambda_1 r_0\pts{\cov_{-\pts{s_1:s_t}\cup[\sk]}},
    \end{align*}
    with probability at least $1-2e^{-\frac{n}{c}}$. Therefore, we have the bounds for $\norm{\rmA_{-[\sk]\cup\pts{s_1:s_t}}^{-1}}_2$ as
    \begin{align*}
        \norm{\rmA_{-[\sk]\cup\pts{s_1:s_t}}^{-1}}_2 &= \frac{1}{\mu_n\pts{\rmA_{-[\sk]\cup\pts{s_1:s_t}}}} \leq \frac{c}{\tlambda_1 r_0\pts{\cov_{-\pts{s_1:s_t}\cup[\sk]}}}\\
        \norm{\rmA_{-[\sk]\cup\pts{s_1:s_t}}^{-1}}_2 &= \frac{1}{\mu_n\pts{\rmA_{-[\sk]\cup\pts{s_1:s_t}}}}\geq \frac{1}{\mu_1\pts{\rmA_{-[\sk]\cup\pts{s_1:s_t}}}} \geq \frac{1}{c\tlambda_1 r_0\pts{\cov_{-\pts{s_1:s_t}\cup[\sk]}}}.
    \end{align*}
    Similarly, for $\Tr\pts{\rmA_{-[\sk]\cup\pts{s_1:s_t}}^{-1}}$, we have
    \begin{align*}
        \Tr\pts{\rmA_{-[\sk]\cup\pts{s_1:s_t}}^{-1}} &= \sum_{i=1}^n\frac{1}{\mu_i\pts{\rmA_{-[\sk]\cup\pts{s_1:s_t}}}} \leq \frac{n}{\mu_n\pts{\rmA_{-[\sk]\cup\pts{s_1:s_t}}}}\leq \frac{cn}{\tlambda_1 r_0\pts{\cov_{-\pts{s_1:s_t}\cup[\sk]}}}\\
        \Tr\pts{\rmA_{-[\sk]\cup\pts{s_1:s_t}}^{-1}} &= \sum_{i=1}^n\frac{1}{\mu_i\pts{\rmA_{-[\sk]\cup\pts{s_1:s_t}}}}\geq \frac{n}{\mu_1\pts{\rmA_{-[\sk]\cup\pts{s_1:s_t}}}}\geq \frac{n}{c\tlambda_1 r_0\pts{\cov_{-\pts{s_1:s_t}\cup[\sk]}}}.
    \end{align*}
    By substituting these bounds into $\frac{\norm{\rmA_{-[\sk]\cup\pts{s_1:s_t}}^{-1}}_2}{\Tr\pts{\rmA_{-[\sk]\cup\pts{s_1:s_t}}^{-1}}}$, the proof is done.
\end{proof}
The following lemma extends Lemma 25 from~\cite{wang2023benign} to show the trace bounds when removing distinct components in $\cov$, whereas the original result only provided the lower bound for removing the top $k$ components.
\begin{lemma}[From Lemma 25 in~\cite{wang2023benign}]\label{lem:trace_lower_bound} For any data covariance matrix $\cov$ satisfying $\sk \leq \frac{n}{c}$ such that $r_{\sk}\pts{\cov} \geq bn$, for any set of indices $\gS$ such that $\gS =\{j\mid \sk < j\leq d\}$ and $|\gS|=t \ll n^{\frac{1}{2}}\ll d$, for any $0 \leq \ell_1 \leq \ell_2 \leq t$ and $0 \leq k_1 \leq k_2 \leq \sk$ and sufficiently large $n$, we have
    \begin{align*}
        \Tr\pts{\rmA_{-\pts{s_1:s_{\ell_2}}\cup[k_2]}^{-1}} \geq \Tr\pts{\rmA_{-\pts{s_1:s_{\ell_1}}\cup[k_1]}^{-1}} \geq \pts{1-\frac{c}{n}}^{k_2-k_1 +\ell_2 -\ell_1}\Tr\pts{\rmA_{-\pts{s_1:s_{\ell_2}}\cup[k_2]}^{-1}},
    \end{align*}
    with a probability at least $1-2\pts{k_2-k_1+\ell_2-\ell_1}e^{-\frac{n}{c}}$. 
\end{lemma}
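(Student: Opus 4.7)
The plan is to prove the two inequalities separately. The upper bound follows immediately from monotonicity under the PSD order: since $\pts{s_1:s_{\ell_1}}\cup[k_1] \subseteq \pts{s_1:s_{\ell_2}}\cup[k_2]$, we have
\[
\rmA_{-\pts{s_1:s_{\ell_1}}\cup[k_1]} - \rmA_{-\pts{s_1:s_{\ell_2}}\cup[k_2]} = \sum_{j \in \pts{[k_2]\setminus[k_1]} \cup \{s_{\ell_1+1},\dots,s_{\ell_2}\}} \lambda_j \rvz_j \rvz_j^\top \succeq \vzero,
\]
so inverting (which reverses PSD order) and taking traces yields the desired upper inequality deterministically.

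For the lower bound, I will induct on $m \coloneqq (k_2-k_1) + (\ell_2-\ell_1)$, the number of rank-one terms separating the two matrices. Fix any ordering $j_1,\dots,j_m$ of the indices to add back and define $\gT_0 \coloneqq \pts{s_1:s_{\ell_2}} \cup [k_2]$ and $\gT_{i+1} \coloneqq \gT_i \setminus \{j_{i+1}\}$. Applying the Sherman-Morrison identity to $\rmA_{-\gT_{i+1}} = \rmA_{-\gT_i} + \lambda_{j_{i+1}} \rvz_{j_{i+1}} \rvz_{j_{i+1}}^\top$ gives the per-step decrement
\[
\Tr\pts{\rmA_{-\gT_i}^{-1}} - \Tr\pts{\rmA_{-\gT_{i+1}}^{-1}} = \frac{\lambda_{j_{i+1}} \rvz_{j_{i+1}}^\top \rmA_{-\gT_i}^{-2} \rvz_{j_{i+1}}}{1 + \lambda_{j_{i+1}} \rvz_{j_{i+1}}^\top \rmA_{-\gT_i}^{-1} \rvz_{j_{i+1}}}.
\]
The elementary spectral inequality $\rvv^\top \rmM^2 \rvv \leq \norm{\rmM} \rvv^\top \rmM \rvv$ for PSD $\rmM$, applied with $\rmM = \rmA_{-\gT_i}^{-1}$, uniformly bounds this decrement by $\norm{\rmA_{-\gT_i}^{-1}}$ regardless of $\lambda_{j_{i+1}}$.

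The main work is to show $\norm{\rmA_{-\gT_i}^{-1}} \leq (c/n) \Tr\pts{\rmA_{-\gT_i}^{-1}}$, which completes the induction via $\Tr\pts{\rmA_{-\gT_{i+1}}^{-1}} \geq (1-c/n)\Tr\pts{\rmA_{-\gT_i}^{-1}}$. The subtlety is that $\gT_i$ need not contain $[\sk]$ (since $k_1,k_2 \leq \sk$), so $\rmA_{-\gT_i}$ may retain up to $\sk$ large top eigenvalues and its spectrum is not uniformly tight. I will handle this via eigenvalue interlacing: writing $\rmA_{-\gT_i} = \rmA_{-(\gT_i \cup [\sk])} + \rmB_i$ where $\rmB_i$ is PSD of rank $r \leq \sk$, the classical interlacing theorem for low-rank PSD perturbations gives $\mu_j(\rmA_{-\gT_i}) \leq \mu_{j-r}(\rmA_{-(\gT_i \cup [\sk])})$ for all $j > r$, while Weyl's inequality gives $\mu_n(\rmA_{-\gT_i}) \geq \mu_n(\rmA_{-(\gT_i \cup [\sk])})$. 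Since $\cov_{-(\gT_i \cup [\sk])}$ retains effective rank $\geq bn$ (Lemma~\ref{lem:balance_extension}), Lemma~10 of~\cite{bartlett2020benign} ensures all eigenvalues of $\rmA_{-(\gT_i \cup [\sk])}$ are comparable in magnitude. Chaining these three facts, at least $n - \sk \geq n(1 - 1/c)$ eigenvalues of $\rmA_{-\gT_i}$ lie within a constant factor of $\mu_n(\rmA_{-\gT_i})$, delivering $\Tr\pts{\rmA_{-\gT_i}^{-1}} \geq (n/c)\norm{\rmA_{-\gT_i}^{-1}}$ as required.

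Iterating the per-step inequality $m$ times yields the claimed factor $(1-c/n)^m$, and a union bound over the $m$ events (each holding with probability $1 - 2e^{-n/c}$ from Lemma~10 of~\cite{bartlett2020benign}) gives failure probability at most $2m\, e^{-n/c}$. I expect the main technical obstacle to be the interlacing argument in the third paragraph: the paper's existing lemmas (Lemmas~\ref{lem:balance_extension} and~\ref{lem:eig_constant}) are phrased for removed sets that already contain $[\sk]$, so extending them to arbitrary intermediate $\gT_i$ requires careful bookkeeping of how the unabsorbed top eigenvalues interact with the tail spectrum, but no substantively new tools beyond Weyl-type inequalities.
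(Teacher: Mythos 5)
Your proposal is correct and reaches the same bound, but via a genuinely different route from the paper. The upper inequality is handled identically (PSD monotonicity). For the lower inequality the paper first invokes Wang et al.'s Lemma~25 as a black box to move from $[k_1]$ to $[k_2]$, and then runs the Sherman--Morrison induction only over the $s$-indices; crucially, inside each $s$-step the paper bounds the ratio $\norm{\rmA_{-\gT}^{-1}}_2/\Tr(\rmA_{-\gT}^{-1})$ by further dominating the norm by $\norm{\rmA_{-\pts{s_1:s_{\ell_1+1}}\cup[\sk]}^{-1}}_2$ and invoking Wang's Lemma~25 a \emph{second} time to lower-bound the trace $\Tr(\rmA_{-\pts{s_1:s_{\ell_1+1}}\cup[k_2]}^{-1})$ by $(1-c/n)^{\sk-k_2}\Tr(\rmA_{-\pts{s_1:s_{\ell_1+1}}\cup[\sk]}^{-1})$, which then connects to Lemma~\ref{lem:eig_constant}. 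You instead run a single unified Sherman--Morrison induction over all $m=(k_2-k_1)+(\ell_2-\ell_1)$ rank-one perturbations, and replace the double appeal to Wang's Lemma~25 with a self-contained interlacing argument: writing $\rmA_{-\gT_i}=\rmA_{-(\gT_i\cup[\sk])}+\rmB_i$ with $\rmB_i$ PSD of rank $r\le\sk$, Weyl gives $\mu_n(\rmA_{-\gT_i})\ge\mu_n(\rmA_{-(\gT_i\cup[\sk])})$ while interlacing gives $\mu_j(\rmA_{-\gT_i})\le\mu_{j-r}(\rmA_{-(\gT_i\cup[\sk])})$ for $j>r$, so at least $n-\sk$ eigenvalues are within a constant factor of the smallest, whence $\Tr(\rmA_{-\gT_i}^{-1})\ge(n-\sk)/(c\tlambda_1 r_0)$ and the ratio is $\mathcal{O}(1/n)$ exactly as needed (using $\sk\le n/c$). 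This is arguably a cleaner derivation: it treats the $k$-indices and the $s$-indices uniformly and avoids a recursive reliance on the cited external lemma, at the modest cost of importing standard Weyl/Cauchy-interlacing facts that the paper does not otherwise use. Your union bound over the $m$ steps and the resulting failure probability $2m\,e^{-n/c}$ match the statement.
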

\begin{proof} For the first inequality, it directly holds since $\rmA_{-\pts{s_1:s_{\ell_1}}\cup[k_1]} \succeq \rmA_{-\pts{s_1:s_{\ell_2}}\cup[k_2]}$ implies $\rmA_{-\pts{s_1:s_{\ell_2}}\cup[k_2]}^{-1} \succeq \rmA_{-\pts{s_1:s_{\ell_1}}\cup[k_1]}^{-1}$. Next, according to Lemma 25 in~\cite{wang2023benign}, by removing the top component in $\cov$, we have
    \begin{align*}
        \Tr\pts{\rmA_{-\pts{s_1:s_{\ell_1}}\cup[k_1]}^{-1}} \geq \pts{1-\frac{c}{n}}^{k_2-k_1}\Tr\pts{\rmA_{-\pts{s_1:s_{\ell_1}}\cup[k_2]}^{-1}}.
    \end{align*}
    Next, following the proof steps in Lemma 25 in~\cite{wang2023benign}, by the Sherman-Morrison-Woodbury identity, we have
    \begin{align}
        \Tr\pts{\rmA_{-\pts{s_1:s_{\ell_1}}\cup[k_2]}^{-1}} &= \Tr\pts{\rmA_{-\pts{s_1:s_{\ell_1+1}}\cup[k_2]}^{-1}} - \frac{\lambda_{s_{\ell_1+1}}\Tr\pts{\rmA_{-\pts{s_1:s_{\ell_1+1}}\cup[k_2]}^{-1}\rvz_{s_{\ell_1+1}}\rvz_{s_{\ell_1+1}}^\top\rmA_{-\pts{s_1:s_{\ell_1+1}}\cup[k_2]}^{-1}}}{1+\lambda_{s_{\ell_1+1}}\rvz_{s_{\ell_1+1}}^\top\rmA_{-\pts{s_1:s_{\ell_1+1}}\cup[k_2]}^{-1}\rvz_{s_{\ell_1+1}}}\nonumber\\
        &= \Tr\pts{\rmA_{-\pts{s_1:s_{\ell_1+1}}\cup[k_2]}^{-1}} - \frac{\lambda_{s_{\ell_1+1}}\rvz_{s_{\ell_1+1}}^\top\rmA_{-\pts{s_1:s_{\ell_1+1}}\cup[k_2]}^{-2}\rvz_{s_{\ell_1+1}}}{1+\lambda_{s_{\ell_1+1}}\rvz_{s_{\ell_1+1}}^\top\rmA_{-\pts{s_1:s_{\ell_1+1}}\cup[k_2]}^{-1}\rvz_{s_{\ell_1+1}}}\nonumber\\
        &\geq \Tr\pts{\rmA_{-\pts{s_1:s_{\ell_1+1}}\cup[k_2]}^{-1}} - \norm{\rmA_{-\pts{s_1:s_{\ell_1+1}}\cup[k_2]}^{-1}}_2 \cdot \frac{\lambda_{s_{\ell_1+1}}\rvz_{s_{\ell_1+1}}^\top\rmA_{-\pts{s_1:s_{\ell_1+1}}\cup[k_2]}^{-1}\rvz_{s_{\ell_1+1}}}{1+\lambda_{s_{\ell_1+1}}\rvz_{s_{\ell_1+1}}^\top\rmA_{-\pts{s_1:s_{\ell_1+1}}\cup[k_2]}^{-1}\rvz_{s_{\ell_1+1}}}\nonumber\\
        &\geq \Tr\pts{\rmA_{-\pts{s_1:s_{\ell_1+1}}\cup[k_2]}^{-1}} - \norm{\rmA_{-\pts{s_1:s_{\ell_1+1}}\cup[k_2]}^{-1}}_2,\label{eq:trace_lower_bound}
    \end{align}
    where in the first inequality, we use the property that $\rvx^\top\rmM^2\rvx \leq \norm{\rmM}_2\cdot\rvx^\top\rmM\rvx$. Next, we have
    \begin{align*}
        \frac{\norm{\rmA_{-\pts{s_1:s_{\ell_1+1}}\cup[k_2]}^{-1}}_2}{\Tr\pts{\rmA_{-\pts{s_1:s_{\ell_1+1}}\cup[k_2]}^{-1}}} \leq \frac{\norm{\rmA_{-\pts{s_1:s_{\ell_1+1}}\cup[\sk]}^{-1}}_2}{\Tr\pts{\rmA_{-\pts{s_1:s_{\ell_1+1}}\cup[k_2]}^{-1}}} \leq \frac{\norm{\rmA_{-\pts{s_1:s_{\ell_1+1}}\cup[\sk]}^{-1}}_2}{\pts{1-\frac{c}{n}}^{\sk-k_2}\Tr\pts{\rmA_{-\pts{s_1:s_{\ell_1+1}}\cup[\sk]}^{-1}}} \leq \frac{c}{\pts{1-\frac{c}{n}}^{\sk-k_2}n},
    \end{align*}
    where in the first inequality, we apply the property that $\rmA_{-\pts{s_1:s_{\ell_1+1}}\cup[k_2]}^{-1} \preceq \rmA_{-\pts{s_1:s_{\ell_1+1}}\cup[\sk]}^{-1}$, in the second inequality, we use Lemma 25 in~\cite{wang2023benign}, and in the last inequality, we apply Lemma~\ref{lem:eig_constant}. As a result, from~\Eqref{eq:trace_lower_bound}, by dividing $\Tr\pts{\rmA_{-\pts{s_1:s_{\ell_1+1}}\cup[k_2]}^{-1}}$ on both sides, we have
    \begin{align*}
        \frac{\Tr\pts{\rmA_{-\pts{s_1:s_{\ell_1}}\cup[k_2]}^{-1}}}{\Tr\pts{\rmA_{-\pts{s_1:s_{\ell_1+1}}\cup[k_2]}^{-1}}} \geq 1 - \frac{\norm{\rmA_{-\pts{s_1:s_{\ell_1+1}}\cup[k_2]}^{-1}}_2}{\Tr\pts{\rmA_{-\pts{s_1:s_{\ell_1+1}}\cup[k_2]}^{-1}}}\geq 1 - \frac{c}{\pts{1-\frac{c}{n}}^{\sk-k_2}n} \geq 1 -\frac{c_1}{n},
    \end{align*}
    By applying the steps $\ell_2-\ell_1$ times, the proof is complete.
\end{proof}

\section{Additional simulations} \label{app:simulation}
In this section, we present additional simulations of our few-shot postprocessing algorithm. In general, we find that our postprocessing algorithm can recover $t$-sparse signal in cases where the classification and regression tasks fail --- even including the worst-case scenario of isotropic covariance.

Our simulations were run on an Nvidia A5000 GPU with 24GB VRAM, but this level of compute is not necessary. Our code is available at \url{https://github.com/tmlabonte/taskshift}.
\begin{figure*}[h]
    \centering
    \begin{subfigure}[b]{0.32\textwidth}
        \includegraphics[scale=0.33]{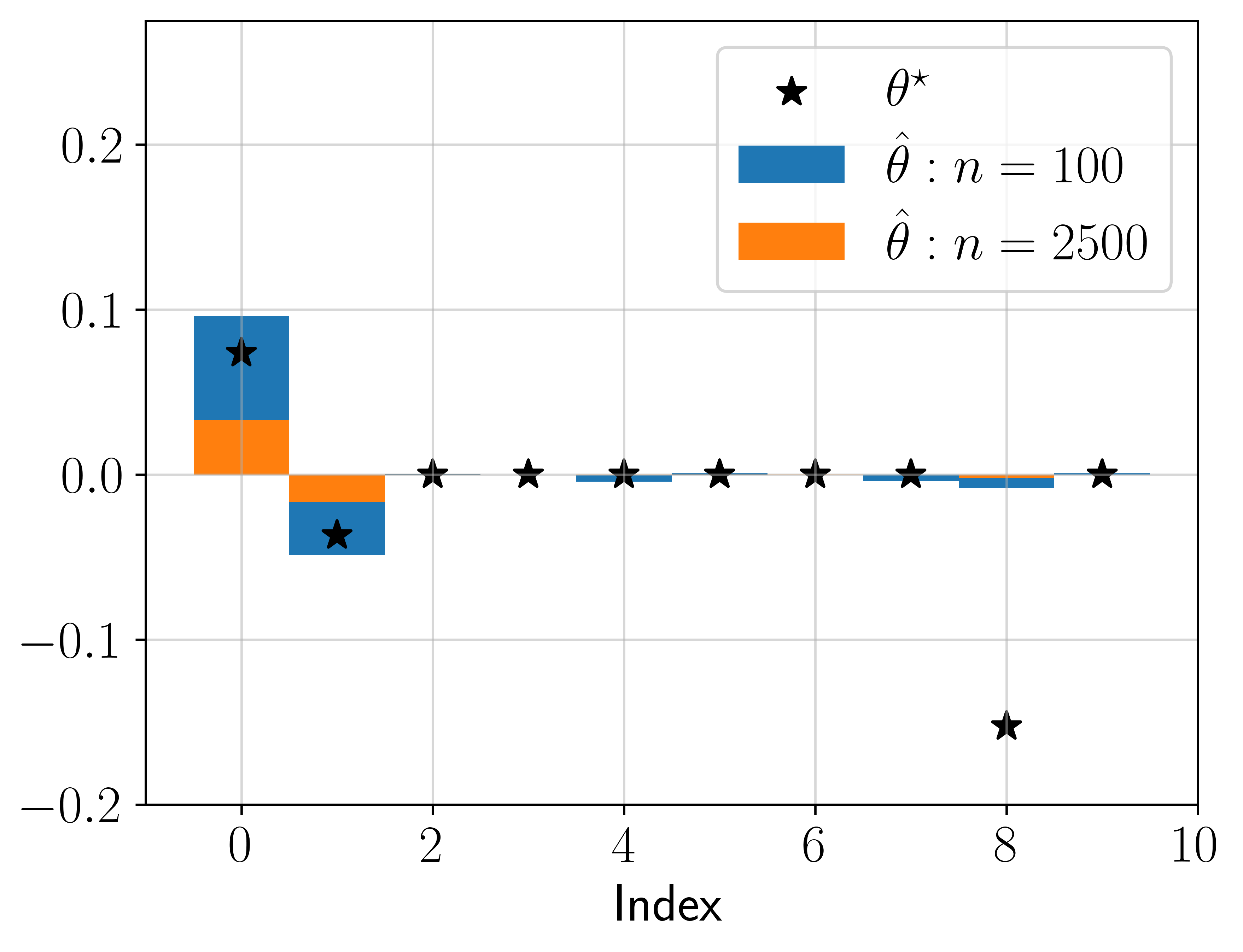}
        \subcaption{Support recovery}
    \end{subfigure}
    \hfill
    \begin{subfigure}[b]{0.32\textwidth}
        \includegraphics[scale=0.33]{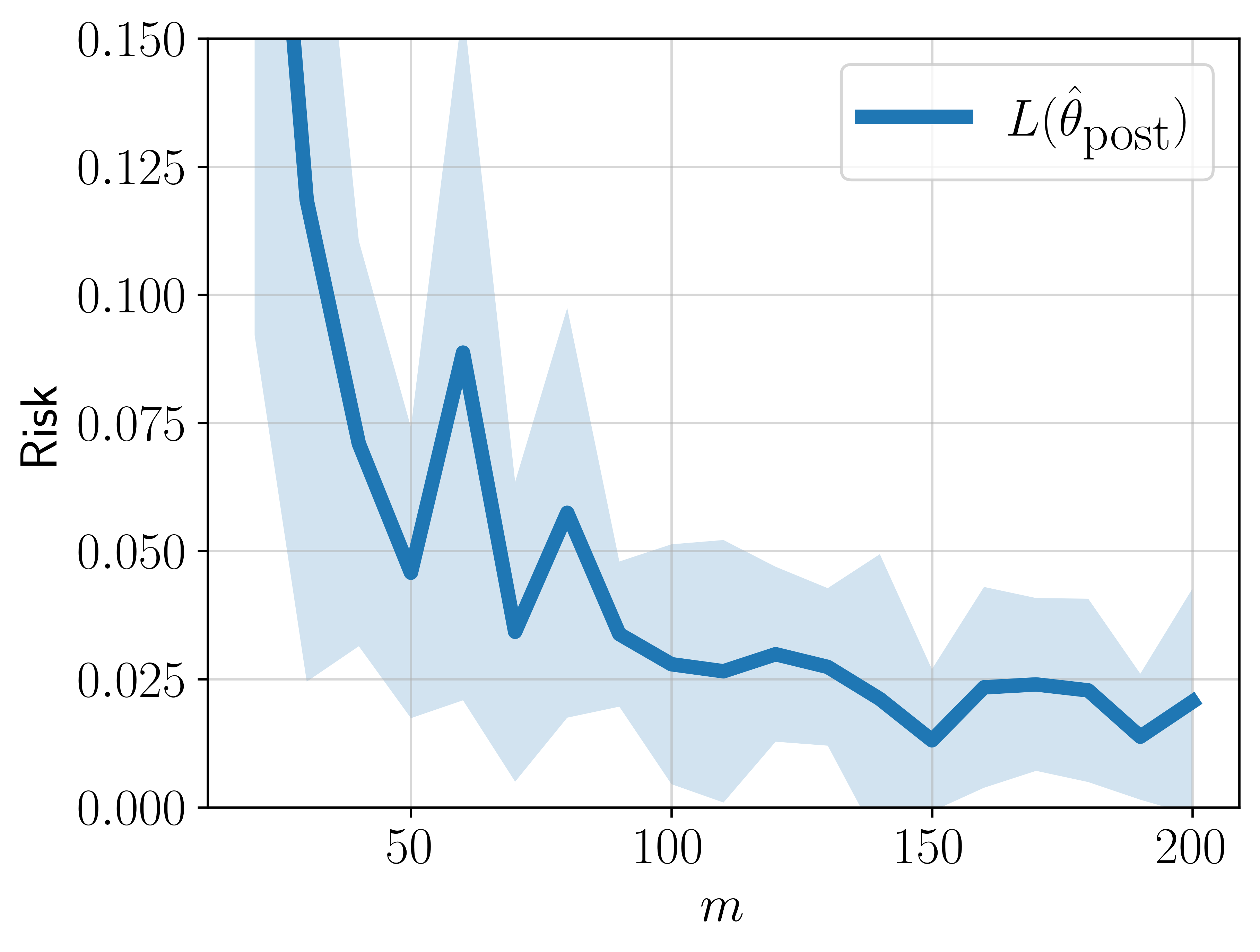}
        \subcaption{Few-shot postprocessing}
    \end{subfigure}
    \hfill
    \begin{subfigure}[b]{0.32\textwidth}
        \includegraphics[scale=0.33]{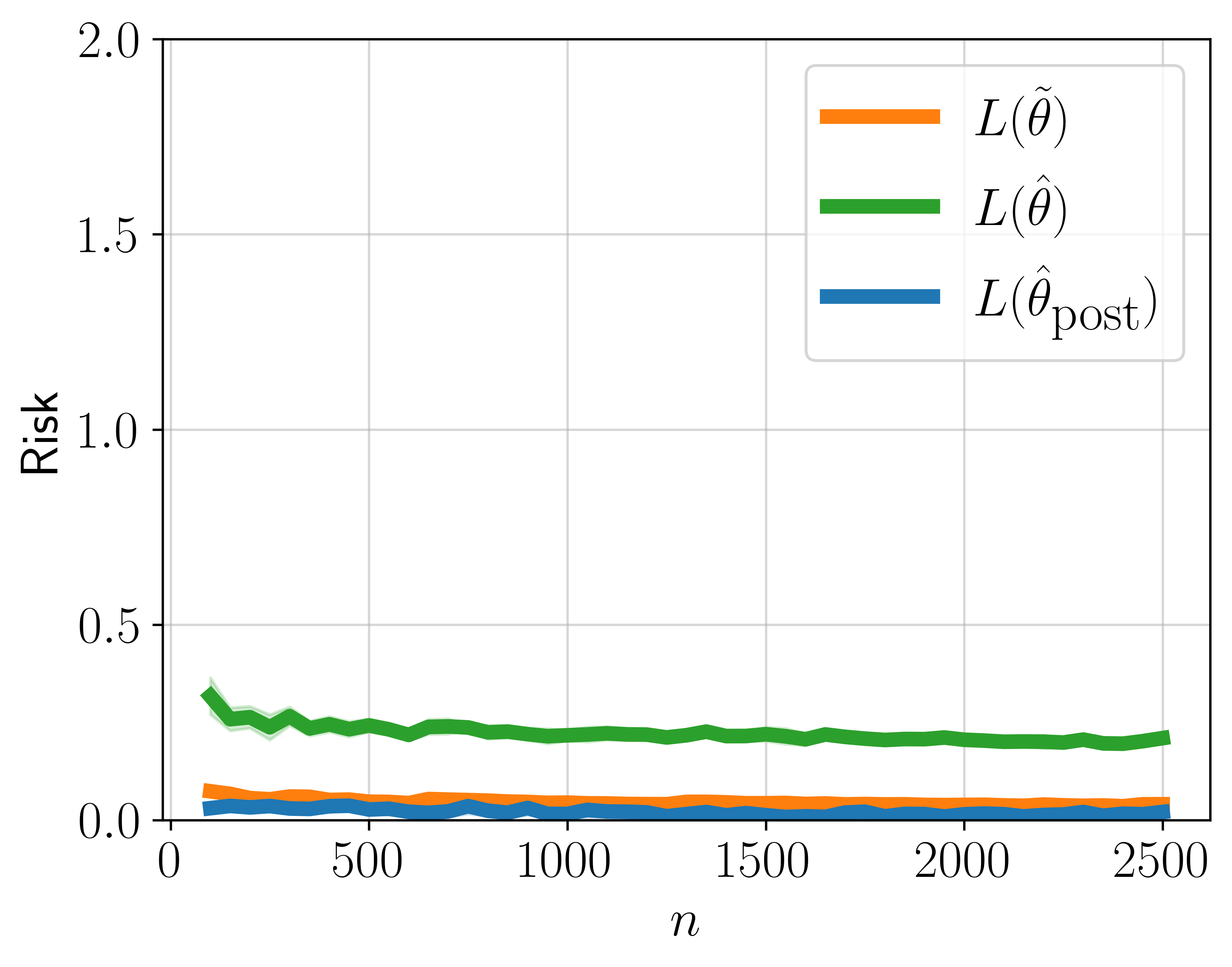}
        \subcaption{Risk curves}
    \end{subfigure}
    \hfill
    \caption{\textbf{Task shift for spiked covariance with $q < 1-r$ and signal outside the spike.} We set $p=1.5$, $q=0.5$, and $r=0.25$ so that $q < 1-r$. Moreover, we add an additional signal component which lies outside the covariance spike for any $n\leq 2500$.  Our task shift algorithm correctly recovers the support and generalizes well; note that the decay of the component outside the spike (index 8) is faster than those in the spike (indices 1-2), but still slower than those outside the support (indices 3-7 and 9-10). The true signal $\svtheta$ is $3$-sparse with $a_1=1$, $a_2=-0.5$, and $a_8=-0.15$ (see Assumption~\ref{asm:k_sparse}). Our postprocessing algorithm uses top-$t$ support recovery and least-squares on noisy $m$-shot regression data. We plot the mean and standard deviation over $10$ draws of the training dataset $\rmX$.}
    \label{fig:spiked_outside}
\end{figure*}

\begin{figure*}[h]
    \centering
    \begin{subfigure}[b]{0.32\textwidth}
        \includegraphics[scale=0.33]{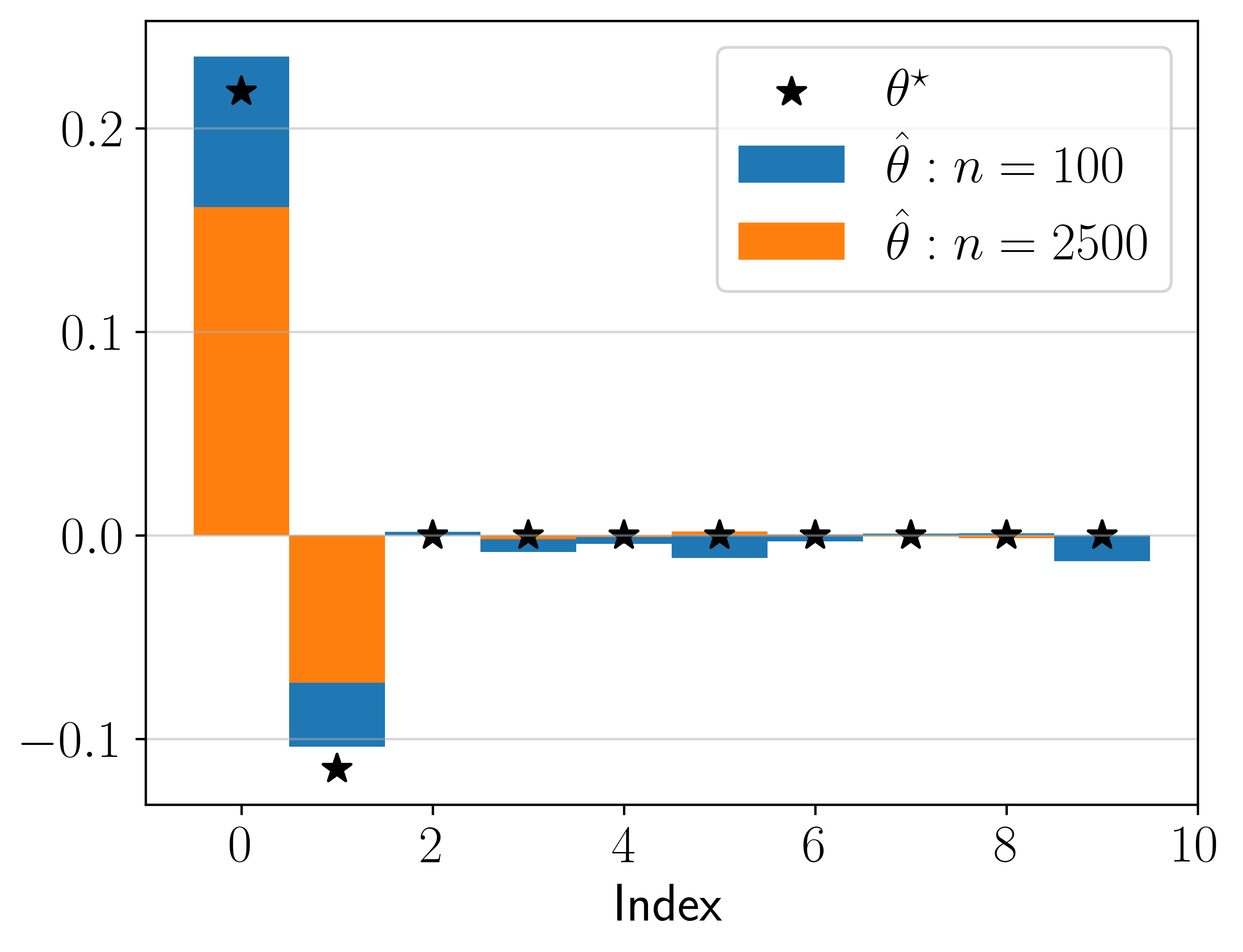}
        \subcaption{Support recovery}
    \end{subfigure}
    \hfill
    \begin{subfigure}[b]{0.32\textwidth}
        \includegraphics[scale=0.33]{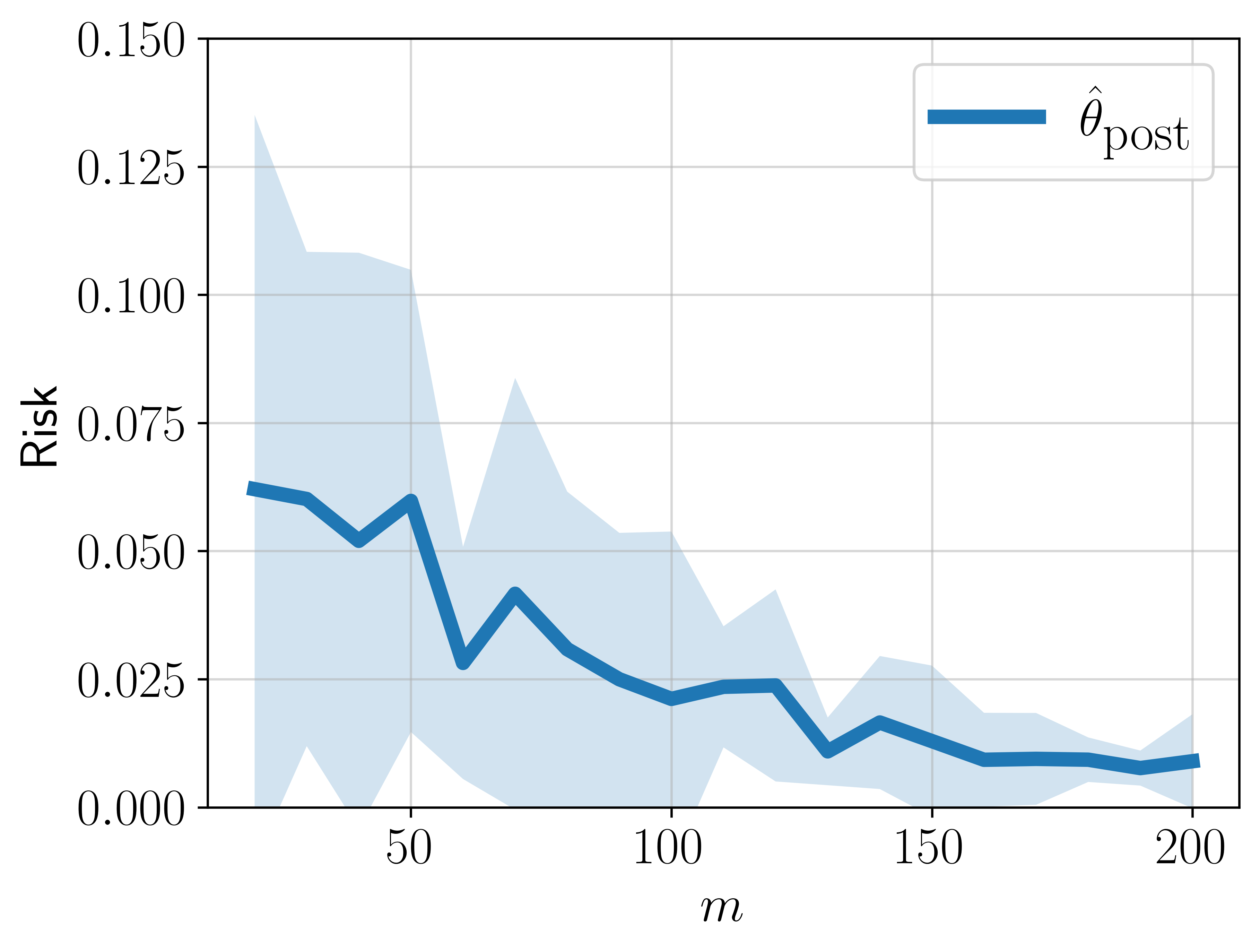}
        \subcaption{Few-shot postprocessing}
    \end{subfigure}
    \hfill
    \begin{subfigure}[b]{0.32\textwidth}
        \includegraphics[scale=0.33]{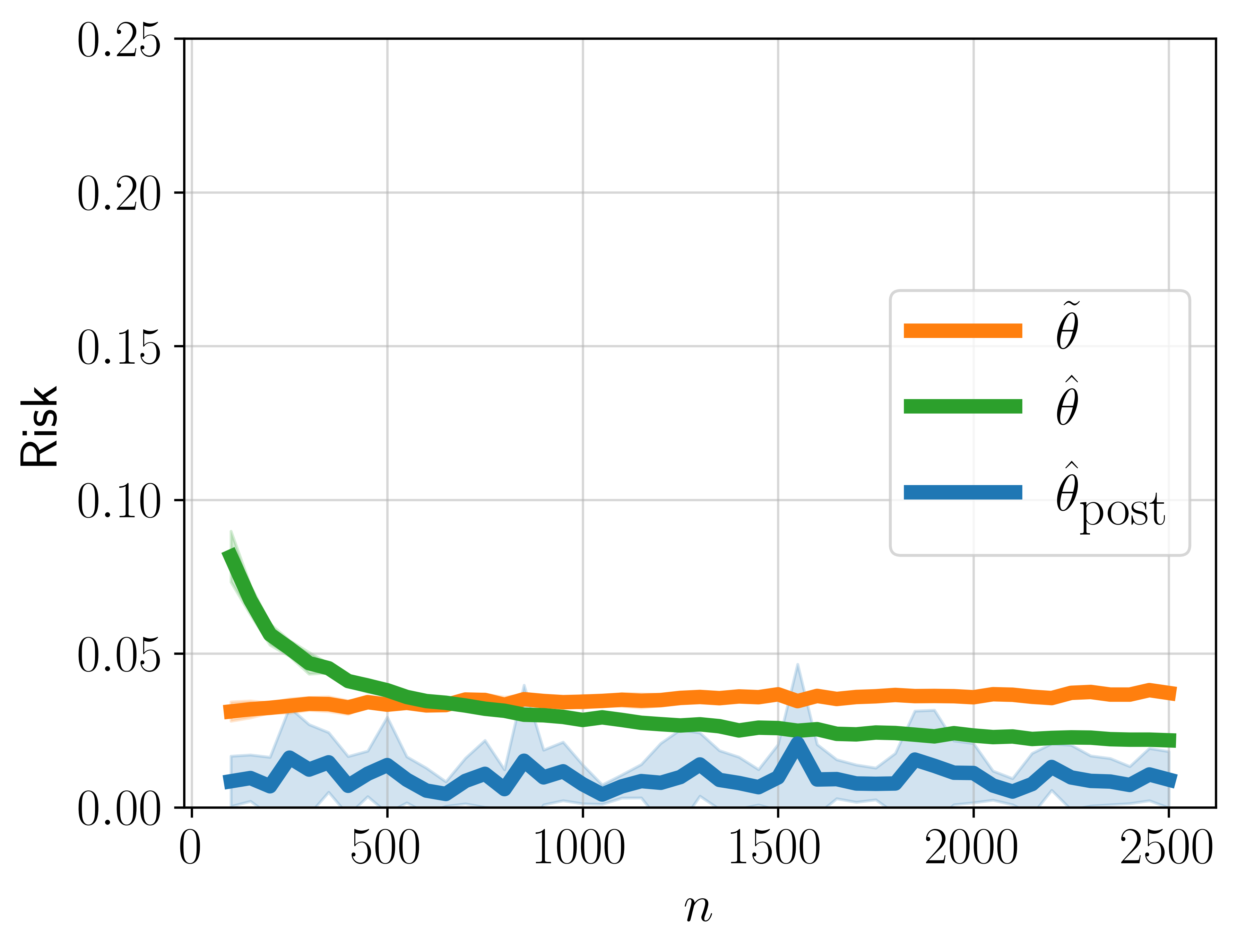}
        \subcaption{Risk curves}
    \end{subfigure}
    \hfill
    \caption{\textbf{Task shift for polynomial covariance with $u=0.25,v=0$}. Our task shift estimator generalizes for polynomial covariance models. The true signal $\svtheta$ is $2$-sparse with $a_1=0.2$ and $a_2=-0.1$ (see Assumption~\ref{asm:k_sparse}), and we set $d=n^{1.5}$. Note that this parameterization satisfies the conditions of Corollary~\ref{cor:poly_support_identification2}. Our postprocessing algorithm uses top-$t$ support recovery and least-squares on noisy $m$-shot regression data. We plot the mean and standard deviation over $10$ draws of the training dataset $\rmX$.}
    \label{fig:poly}
\end{figure*}

\begin{figure*}[h]
    \centering
    \begin{subfigure}[b]{0.32\textwidth}
        \includegraphics[scale=0.33]{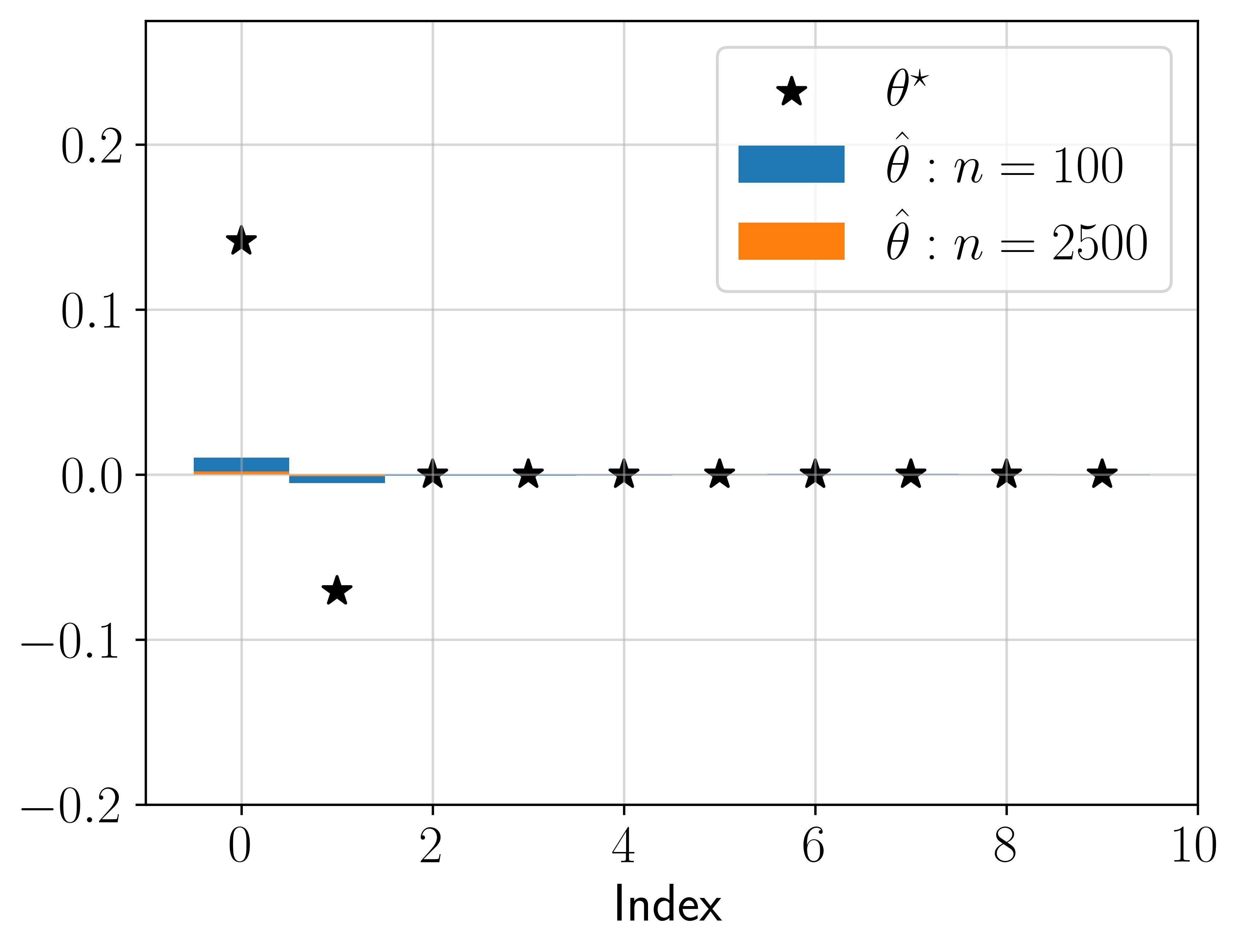}
        \subcaption{Support recovery}
    \end{subfigure}
    \hfill
    \begin{subfigure}[b]{0.32\textwidth}
        \includegraphics[scale=0.33]{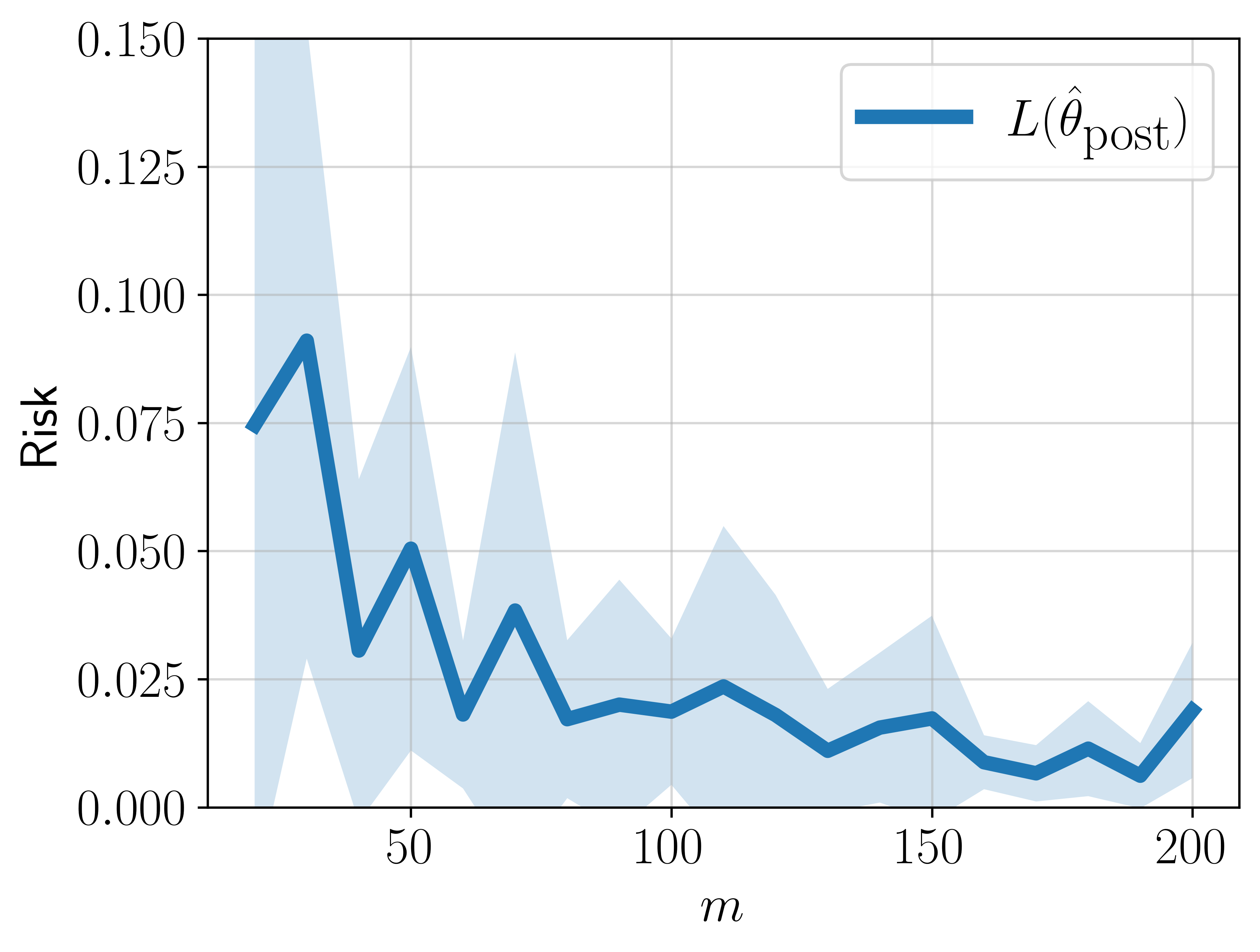}
        \subcaption{Few-shot postprocessing}
    \end{subfigure}
    \hfill
    \begin{subfigure}[b]{0.32\textwidth}
        \includegraphics[scale=0.33]{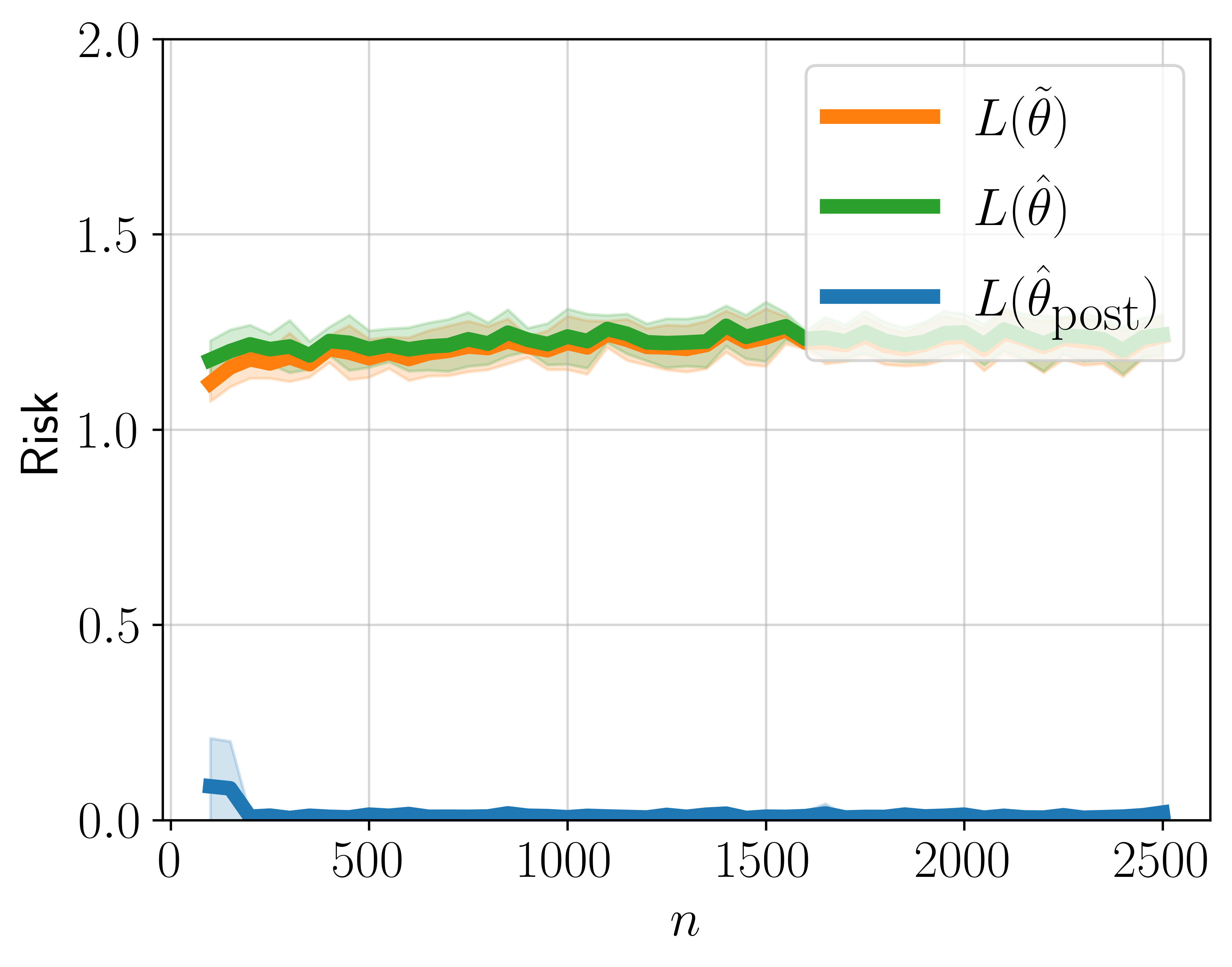}
        \subcaption{Risk curves}
    \end{subfigure}
    \hfill
    \caption{\textbf{Task shift for isotropic covariance $\cov=50\mI$}. Our task shift estimator generalizes even in worst-case scenarios for minimum $\ell_2$-norm interpolation such as isotropic covariance. The true signal $\svtheta$ is $2$-sparse with $a_1=1$ and $a_2=-0.5$ (see Assumption~\ref{asm:k_sparse}), and we set $d=n^{1.5}$. Our postprocessing algorithm uses top-$t$ support recovery and least-squares on noisy $m$-shot regression data. We plot the mean and standard deviation over $10$ draws of the training dataset $\rmX$.}
    \label{fig:isotropic}
\end{figure*}

\begin{figure*}[ht]
    \begin{subfigure}[b]{\textwidth}
        \centering
        \begin{subfigure}[b]{0.32\textwidth}
            \includegraphics[scale=0.33]{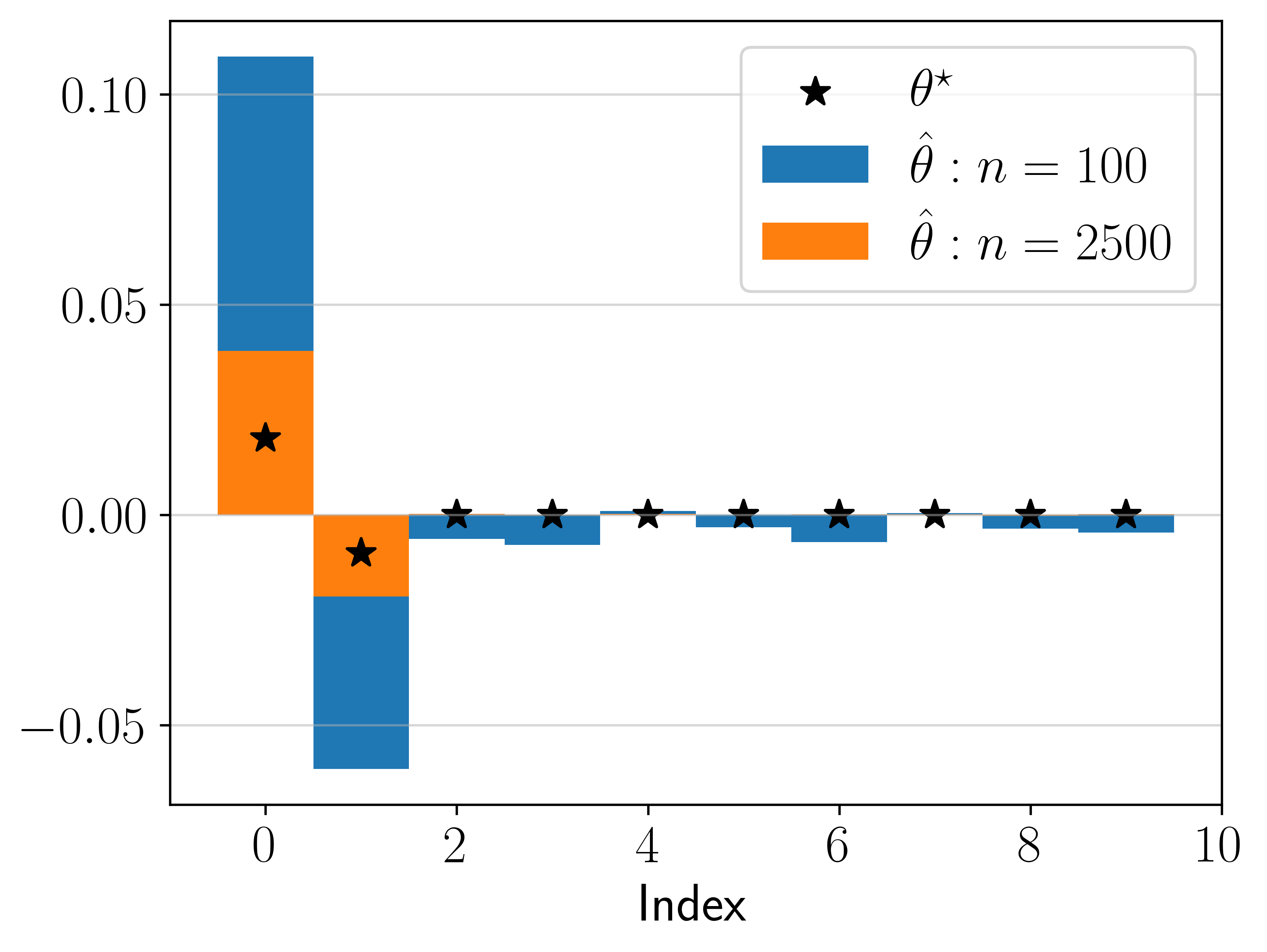}
            \subcaption{Support recovery}
        \end{subfigure}
        \hfill
        \begin{subfigure}[b]{0.32\textwidth}
            \includegraphics[scale=0.33]{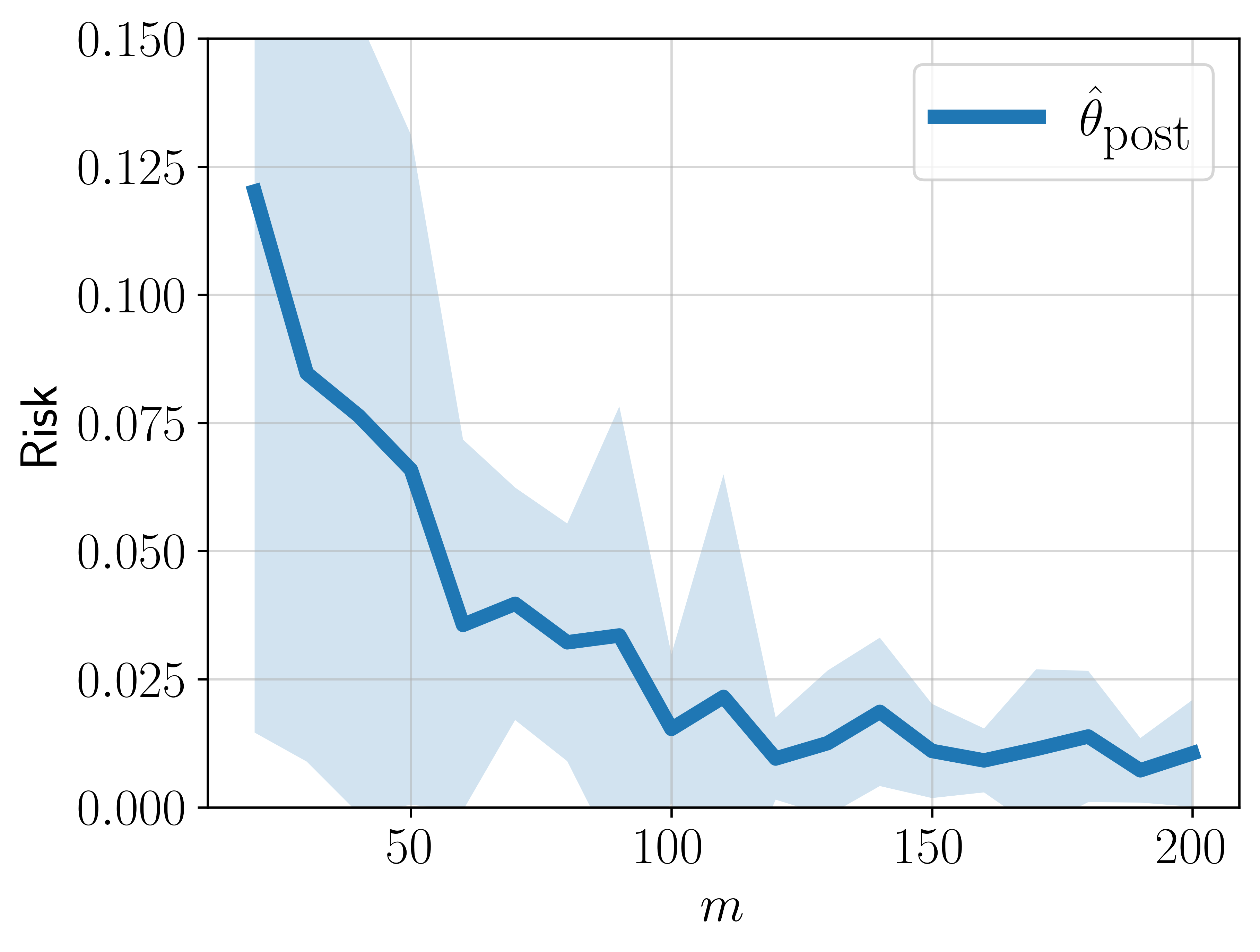}
            \subcaption{Least-squares with dimension reduction}
        \end{subfigure}
        \hfill
        \begin{subfigure}[b]{0.32\textwidth}
            \includegraphics[scale=0.33]{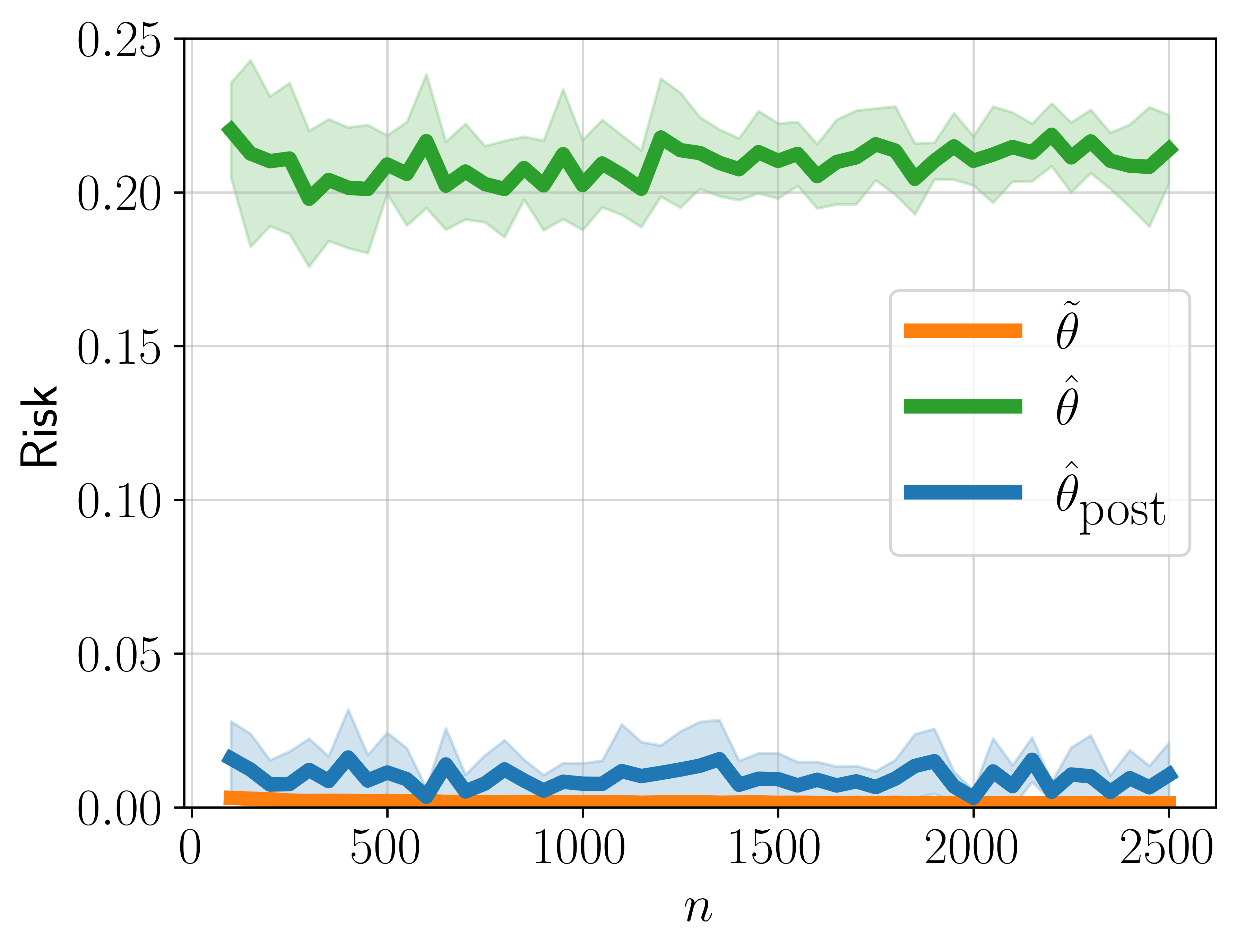}
            \subcaption{Regression risk}
        \end{subfigure}
        \hfill
        \caption*{\textbf{(i) Task shift for spiked covariance when classification and regression generalize.} We set $p=1.5$, $q=0.3$, and $r=0.5$ so that $0<q< 1-r$. In this regime, the classification MNI $\hvtheta$ generalizes on the original classification problem and the regression MNI $\tvtheta$ generalizes on the original regression problems.}
        \label{fig:spiked_q3}
    \end{subfigure}
    \addtocounter{subfigure}{-3}
    \begin{subfigure}[b]{\textwidth}
        \centering
        \begin{subfigure}[b]{0.32\textwidth}
            \includegraphics[scale=0.33]{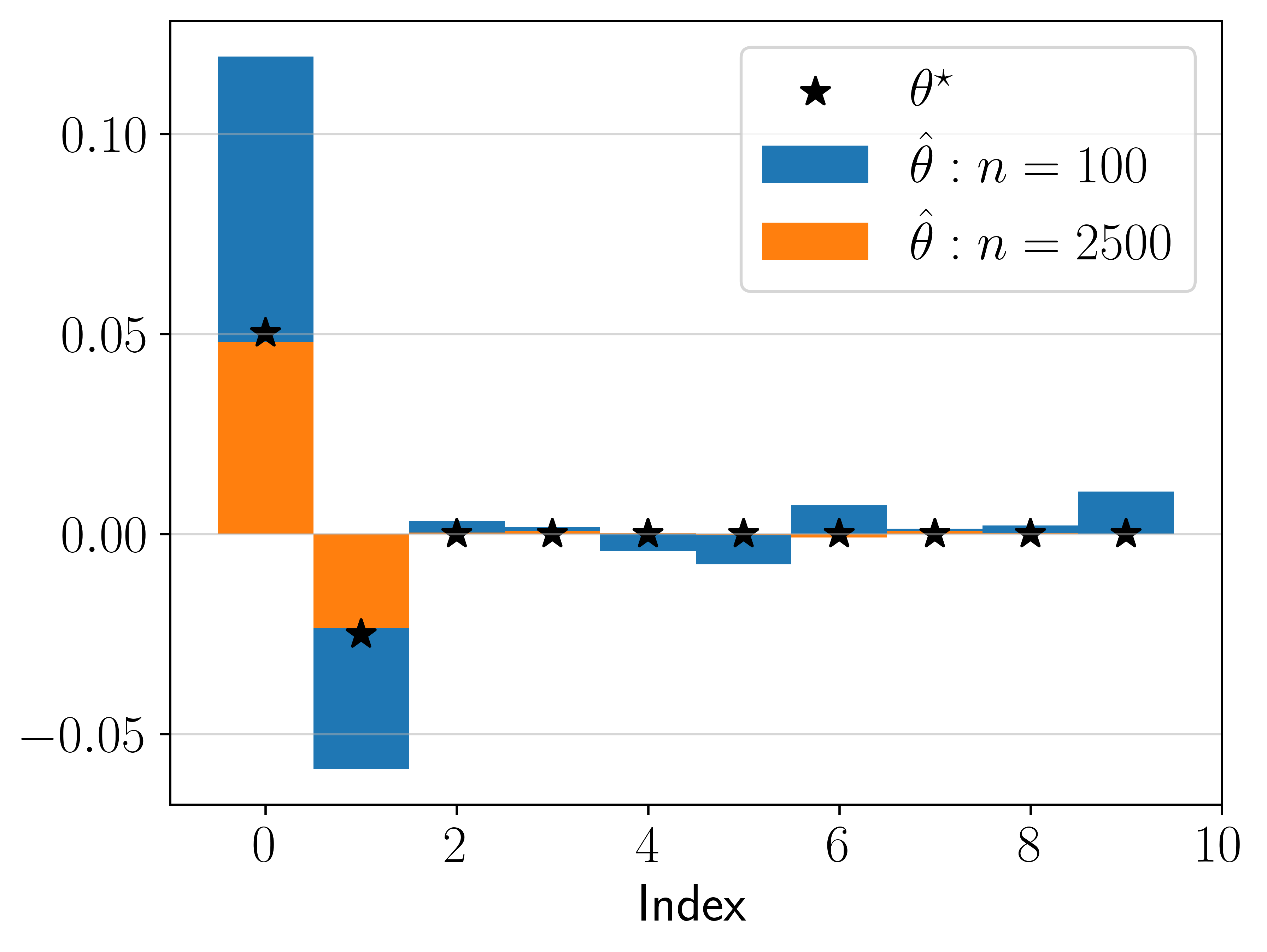}
            \subcaption{Support recovery}
        \end{subfigure}
        \hfill
        \begin{subfigure}[b]{0.32\textwidth}
            \includegraphics[scale=0.33]{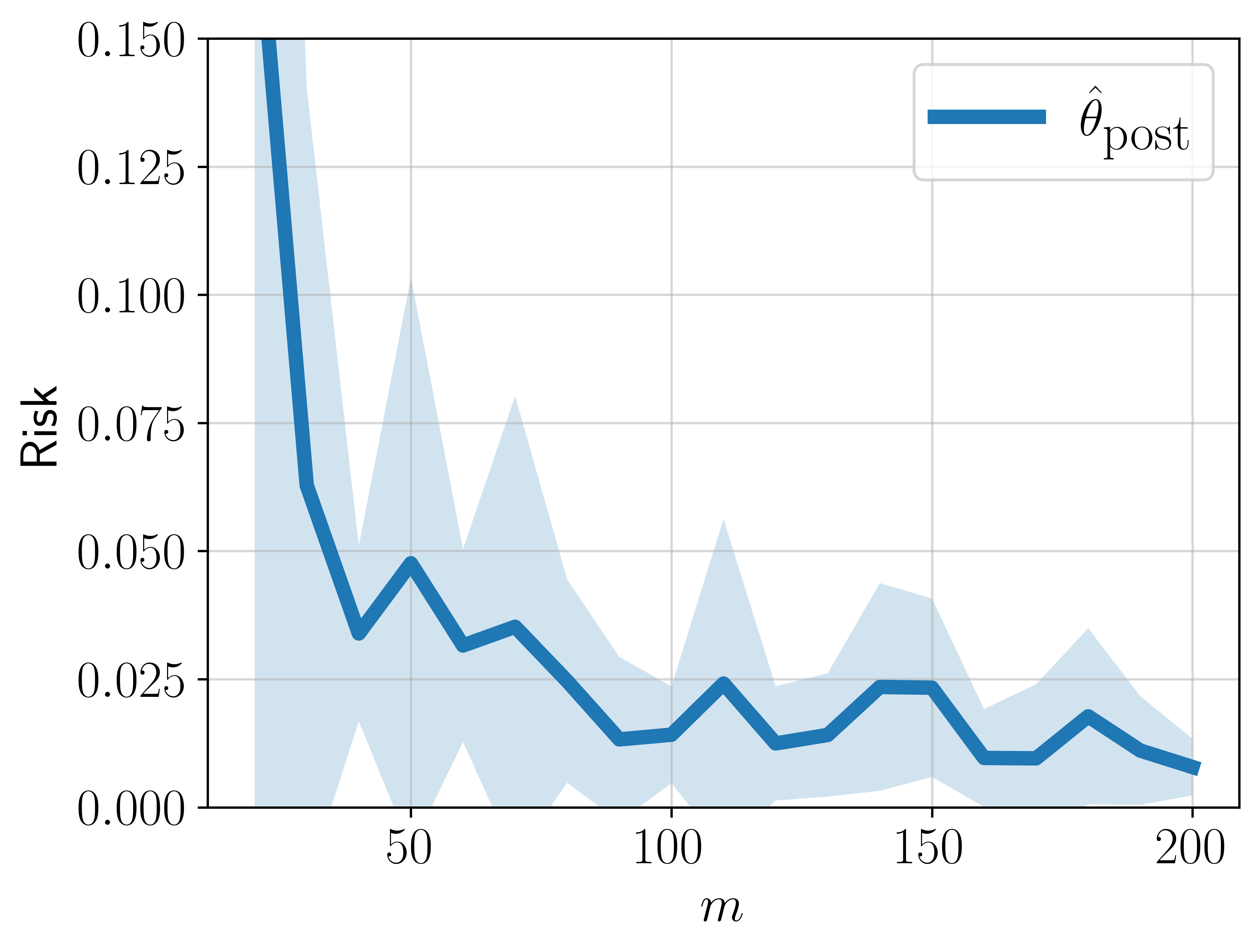}
            \subcaption{Least-squares with dimension reduction}
        \end{subfigure}
        \hfill
        \begin{subfigure}[b]{0.32\textwidth}
            \includegraphics[scale=0.33]{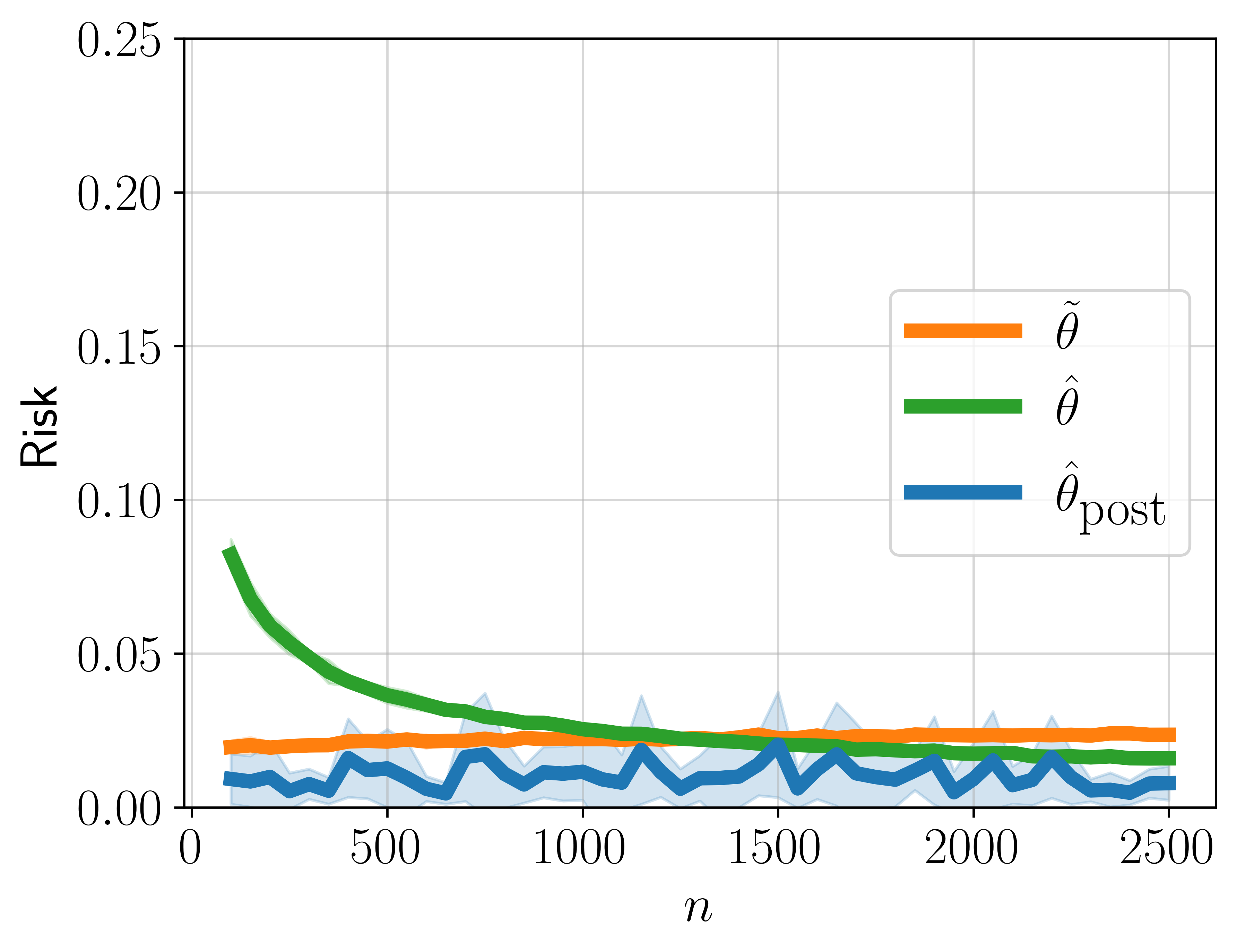}
            \subcaption{Regression risk}
        \end{subfigure}
        \hfill
        \caption*{\textbf{(ii) Task shift for spiked covariance when classification generalizes but regression does not.} We set $p=1.5$, $q=0.6$, and $r=0.5$ so that $1-r<q<(1-r)+(p-1)/2$. In this regime, the classification MNI $\hvtheta$ generalizes on the original classification problem, but the regression MNI $\tvtheta$ does not generalize on the original regression problems.}
        \label{fig:spiked_q6}
    \end{subfigure}
    \begin{subfigure}[b]{\textwidth}
        \centering
        \begin{subfigure}[b]{0.32\textwidth}
            \includegraphics[scale=0.33]{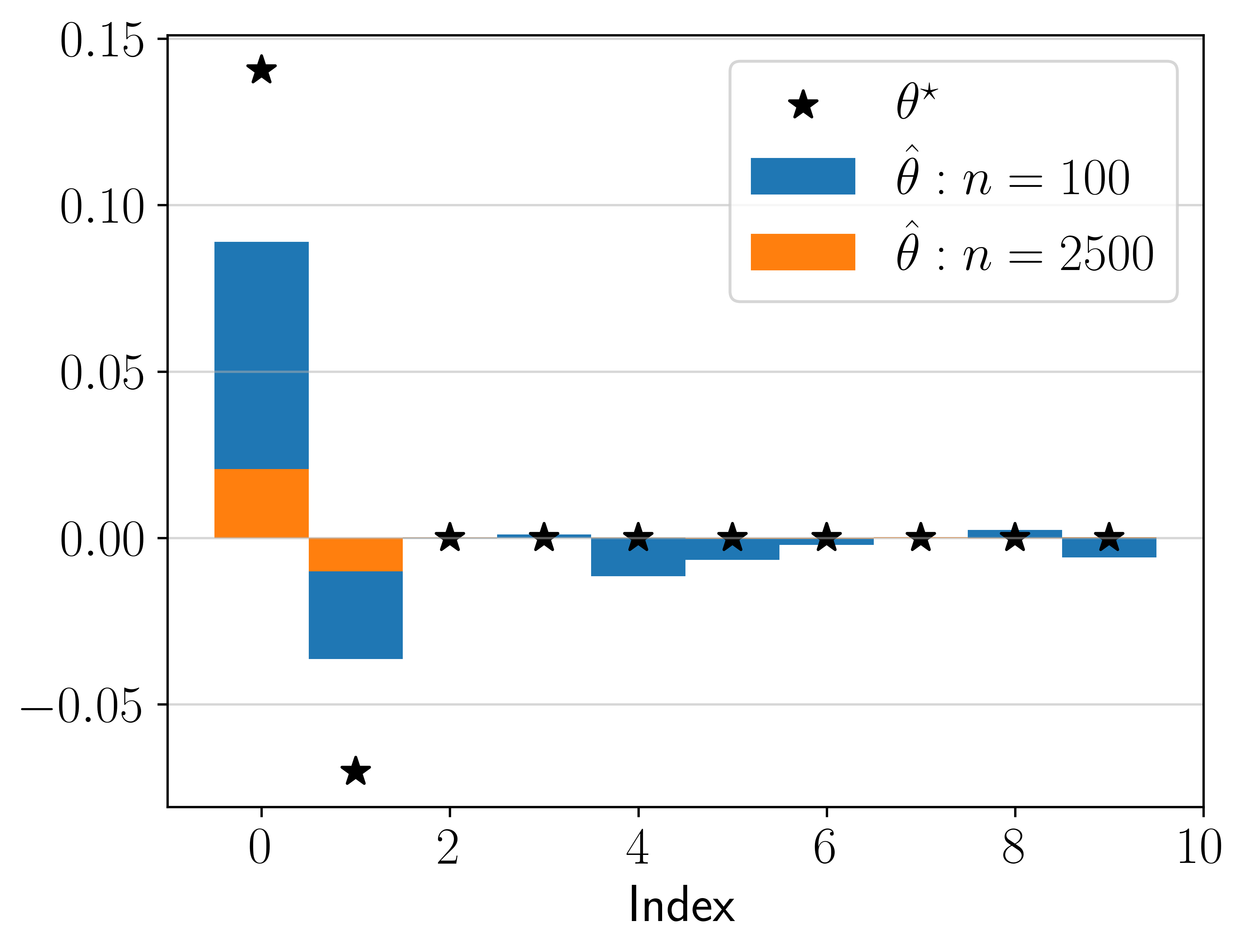}
            \subcaption{Support recovery}
        \end{subfigure}
        \hfill
        \begin{subfigure}[b]{0.32\textwidth}
            \includegraphics[scale=0.33]{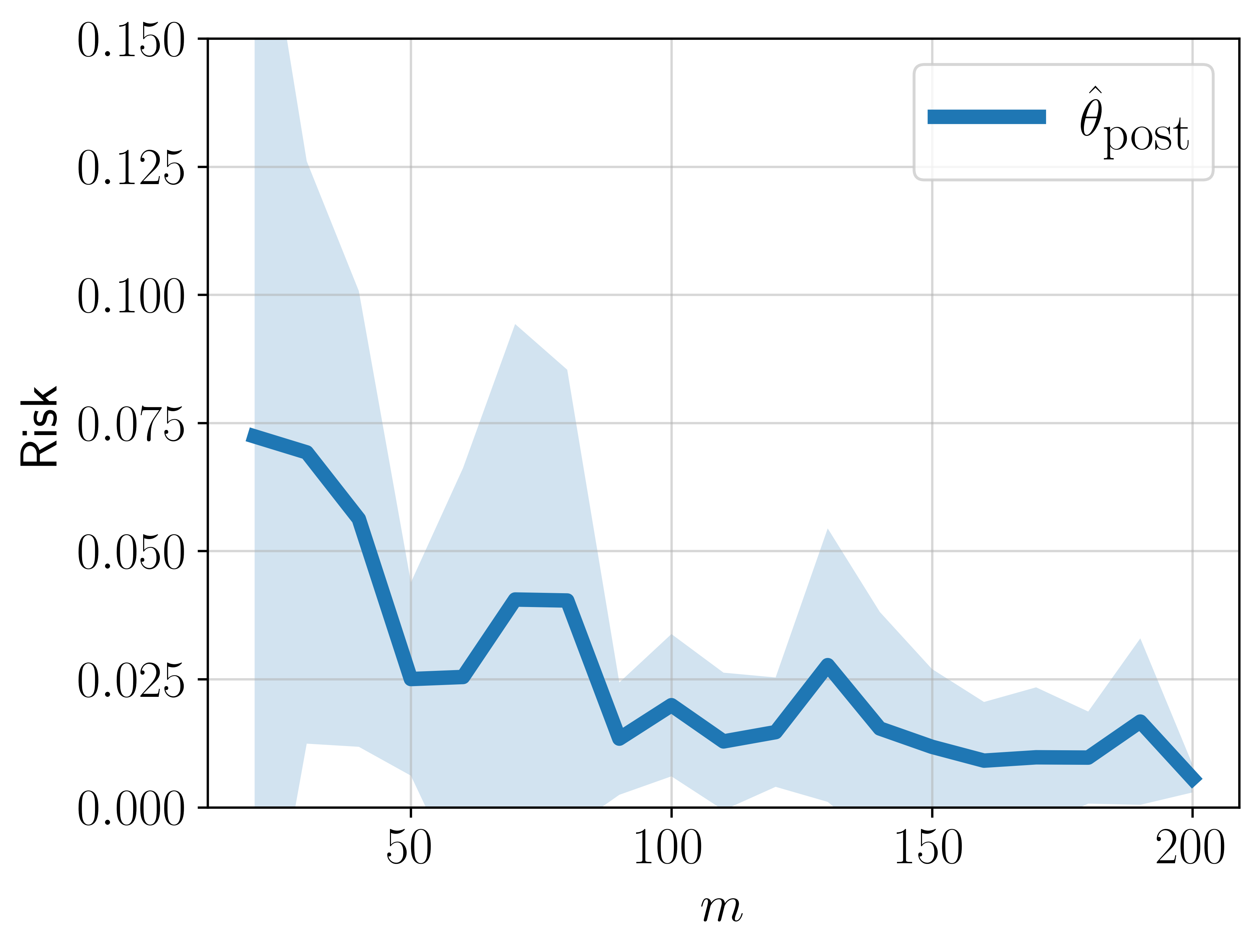}
            \subcaption{Least-squares with dimension reduction}
        \end{subfigure}
        \hfill
        \begin{subfigure}[b]{0.32\textwidth}
            \includegraphics[scale=0.33]{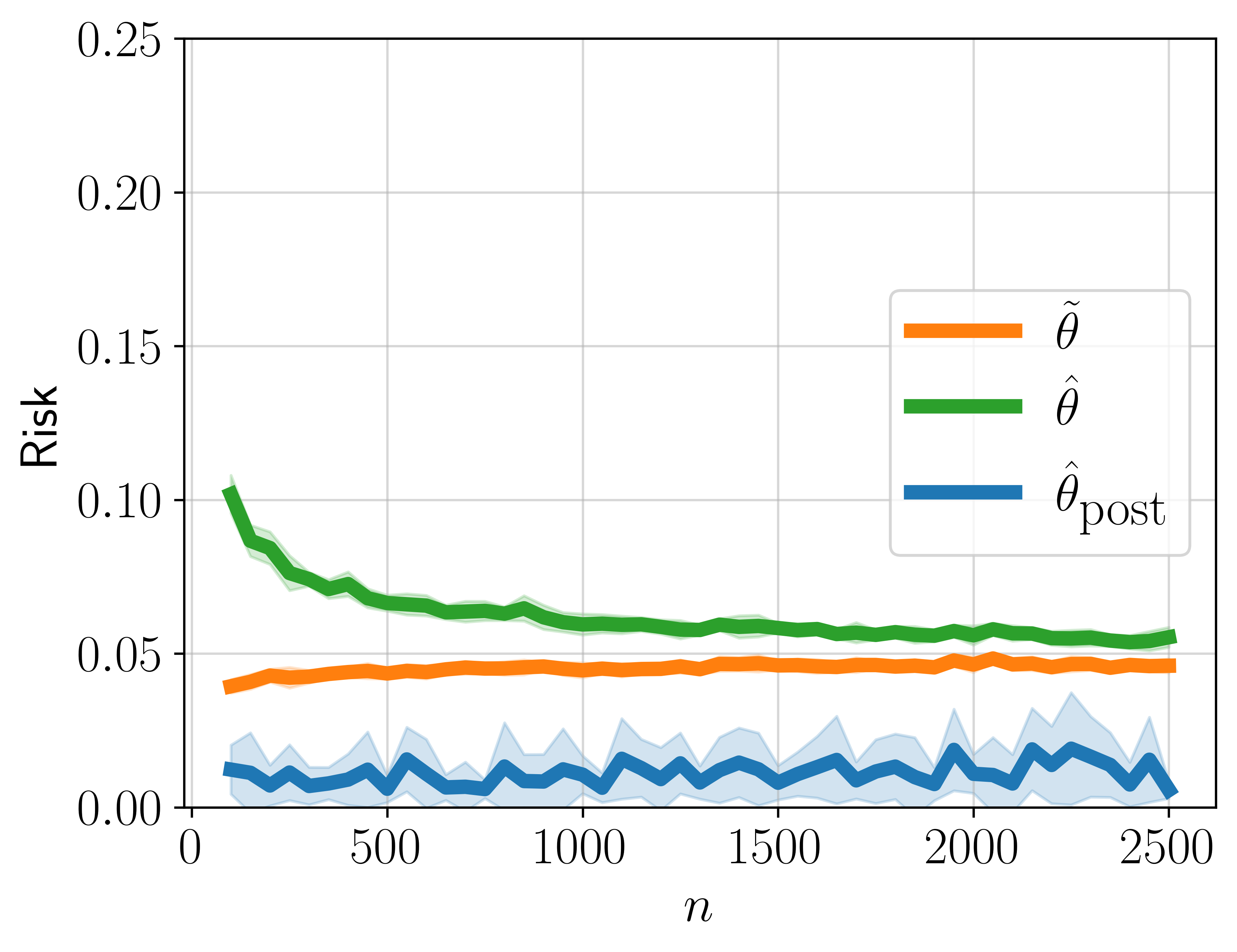}
            \subcaption{Regression risk}
        \end{subfigure}
        \hfill
        \caption*{\textbf{(iii) Task shift for spiked covariance when neither classification nor regression generalize.} We set $p=1.5$, $q=0.9$, and $r=0.5$ so that $(1-r)+(p-1)/2<q<p-r$. In this regime, the classification MNI $\hvtheta$ does not on the original classification problem, and the regression MNI $\tvtheta$ does not generalize on the original regression problems.}
        \label{fig:spiked_q9}
    \end{subfigure}
    \caption{\textbf{Postprocessing achieves task shift in the three regimes of~\cite{muthukumar2021classification}.} The left column demonstrates the survival of $t$-sparse signal support components in the classification MNI $\hvtheta$ while non-support components decay quickly. The middle column shows the $O(\frac{t}{m})$ regression error of least-squares with reduction to $t$ dimensions using $m$ regression samples under standard Gaussian noise. Finally, the right column displays the regression risk of the classification MNI, regression MNI, and our postprocessed predictor. The signal $\svtheta$ is $2$-sparse with $a_1=0.2$ and $a_2=-0.1$ (see Assumption~\ref{asm:k_sparse}). The middle column fixes $n=2500$. We plot the mean and standard deviation over $10$ draws of the training dataset $\rmX$.}
    \label{fig:spike2}
\end{figure*}

\end{document}